\title{A Blackbox Approach to Best of Both Worlds in Bandits and Beyond}
\newcommand{\pref}[1]{\prettyref{#1}}
\newcommand{\savehyperref}[2]{\texorpdfstring{\hyperref[#1]{#2}}{#2}}
\definecolor{Green}{rgb}{0.13, 0.65, 0.3}
\newcommand{\barq}{\bar{q}}
\newcommand{\fillcell}{\cellcolor{blue!25}}
\DeclareMathOperator*{\argmin}{argmin} 
\newcommand{\Reg}{\text{\rm Reg}}
\newcommand{\tildepi}{\widetilde{\pi}}
\newcommand{\calT}{\mathcal{T}}
\newcommand{\calG}{\mathcal{G}}
\newcommand{\calA}{\mathcal{A}}
\newcommand{\calB}{\mathcal{B}}
\newcommand{\calE}{\mathcal{E}}
\newcommand{\calZ}{\mathcal{Z}}
\newcommand{\calS}{\mathcal{S}}
\newcommand{\calN}{\mathcal{N}}
\newcommand{\calI}{\mathcal{I}}
\newcommand{\calP}{\mathcal{P}}
\newcommand{\ind}{\mathbb{I}}
\newcommand{\hatpi}{\widehat{\pi}}
\newcommand{\hatw}{\widehat{w}}
\newcommand{\tildep}{\widetilde{p}}
\newcommand{\hatx}{\widehat{x}}
\newcommand{\E}{\mathbb{E}}
\newcommand{\bbR}{\mathbb{R}}
\newcommand{\tildeA}{\widetilde{A}}
\newcommand{\hatell}{\widehat{\ell}}
\newcommand{\norm}[1]{\left\|#1\right\|}
\newcommand{\tr}{\operatorname{Tr}}
\newcommand{\calX}{\mathcal{X}}
\newcommand{\calY}{\mathcal{Y}}
\newcommand{\one}{\mathbf{1}}
\newcommand{\inner}[1]{\langle#1\rangle}
\newcommand{\term}{\textbf{term}}
\newcommand{\baralpha}{\widetilde{\alpha}}
\newcommand{\upd}{\scalebox{0.9}{\textsf{upd}}}
\newcommand{\stab}{\textit{stab}}
\newcommand{\varH}[1]{H^{\mathbb{V}}_{#1}}
\newcommand{\varHinv}[1]{[H^{\mathbb{V}}_{#1}]^{-1}}
\newcommand{\varG}[1]{G^{\mathbb{V}}_{#1}}
\newcommand{\varGinv}[1]{[G^{\mathbb{V}}_{#1}]^{+}}
\newcommand{\mH}[1]{\mu_{#1}}
\newcommand{\mG}[1]{m_{#1}}
\newcommand{\nonl}{\renewcommand{\nl}{\let\nl}}
\newcommand{\bfell}{\bm{\ell}}
\newcommand{\bfhatell}{\bm{\hatell}}
\newcommand{\bfm}{\bm{m}}
\newcommand{\tildex}{\tilde{x}}
\newcommand{\ts}{\scalebox{0.5}{\textsf{\textup{Ts}}}}
\newcommand{\lo}{\scalebox{0.5}{\textsf{\textup{Lo}}}}
  \newtheorem{theorem}{Theorem}
  \newtheorem{lemma}[theorem]{Lemma}
  \newtheorem{proposition}[theorem]{Proposition}
  \newtheorem{corollary}[theorem]{Corollary}
  \newtheorem{definition}[theorem]{Definition}
\date{}
\author{%
    Christoph Dann~\thanks{Google Research. Email: \texttt{cdann@cdann.net}.}\and Chen-Yu Wei~\thanks{MIT Institute for Data, Systems, and Society. Email: \texttt{chenyuw@mit.edu}.}\and Julian Zimmert~\thanks{Google Research. Email: \texttt{zimmert@google.com}.}
}
\begin{document}

\maketitle

\begin{abstract}%
Best-of-both-worlds algorithms for online learning which achieve near-optimal regret in both the adversarial and the stochastic regimes have received growing attention recently. Existing techniques often require careful adaptation to every new problem setup, including specialised potentials and careful tuning of algorithm parameters. Yet, in domains such as linear bandits, it is still unknown if there exists an algorithm that can simultaneously obtain $O(\log(T))$ regret in the stochastic regime and $\tilde{O}(\sqrt{T})$ regret in the adversarial regime. In this work, we resolve this question positively and present a general reduction from best of both worlds to a wide family of follow-the-regularized-leader (FTRL) and online-mirror-descent (OMD) algorithms. We showcase the capability of this reduction by transforming existing algorithms that are only known to achieve worst-case guarantees into new algorithms with best-of-both-worlds guarantees in contextual bandits, graph bandits and tabular Markov decision processes.
\end{abstract}

\section{Introduction}\label{sec: intro}
Multi-armed bandits and its various extensions have a long history  \citep{lai1985asymptotically,auer2002finite,auer2002nonstochastic}.
Traditionally, the stochastic regime in which all losses/rewards are i.i.d.\ and the adversarial regime in which an adversary chooses the losses in an arbitrary fashion have been studied in isolation.
However, it is often unclear in practice whether an environment is best modelled by the stochastic regime, a slightly contaminated regime, or a fully adversarial regime.
This is why the question of automatically adapting to the hardness of the environment, also called achieving \emph{best-of-both-worlds} guarantees, has received growing attention \sloppy\citep{bubeck2012best,seldin2014one,auer2016algorithm,seldin2017improved, wei2018more,zimmert2019optimal,ito2021parameter,ito2022adversarially,masoudian2021improved,gaillard2014second,mourtada2019optimality,ito2021optimal,lee2021achieving,rouyer2021algorithm,amirbetter,huang2022adaptive,tsuchiya2022best,erez2021towards,rouyernear,ito2022nearly,kong2022simultaneously,jin2020simultaneously,jin2021best,chen2022simultaneously,saha2022versatile,masoudianbest2022best,honda2023perturbed}. 
One of the most successful approaches has been to derive carefully tuned FTRL algorithms, which are canonically suited for the adversarial regime, and then show that these algorithms are also close to optimal in the stochastic regime as well.
This is achieved via proving a crucial self-bounding property, which then translates into stochastic guarantees. 
This type of algorithms have been proposed for multi-armed bandits \citep{wei2018more,zimmert2019optimal,ito2021parameter,ito2022adversarially}, combinatorial semi-bandits \citep{zimmert2019beating}, bandits with graph feedback \citep{ito2022nearly}, tabular MDPs \citep{jin2020simultaneously,jin2021best} and others.
Algorithms with self-bounding properties also automatically adapt to intermediate regimes of stochastic losses with adversarial corruptions \citep{lykouris2018stochastic,gupta2019better,zimmert2019optimal,ito2021optimal}, which highlights the strong robustness of this algorithm design.
However, every problem variation required careful design of potential functions and learning rate schedules.
For linear bandits, it has been still unknown whether self-bounding is possible and therefore the state-of-the-art best-of-both-worlds algorithm \citep{lee2021achieving} neither obtains optimal $\log T$ stochastic rate, nor canonically adapts to corruptions.

In this work, the make the following contributions: 1)  We propose a general reduction from best of both worlds to typical FTRL/OMD algorithms, sidestepping the need for customized potentials and learning rates. 2) We derive the first best-of-both-worlds algorithm for linear bandits that obtains $\log T$ regret in the stochastic regime, optimal adversarial worst-case regret and adapts canonically to corruptions. 3) We derive the first best-of-both-worlds algorithm for linear bandits and tabular MDPs with first-order guarantees in the adversarial regime. 4) We obtain the first best-of-both-worlds algorithms for bandits with graph feedback and bandits with expert advice with optimal $\log T$ stochastic regret.

\begin{table}[t]
\small
 
    \centering
    
\begin{adjustwidth}{-.25in}{-.25in}  
    \begin{tabular}{|c|c|c|c|c|c|c|}
        \hline
        Setting & Algorithm & Adversarial & Stochastic &  $o(C)$ & \makecell{Rd. 1} & \makecell{Rd. 2} \\
        \hline
        \multirow{4}{*}{\makecell{Linear  bandit}} & \cite{lee2021achieving} & $\sqrt{dT\log(|\calX|T)\log T}$ & $\frac{d\log(|\calX|T)\log(T)}{\Delta}$ &  & & \\
         \cline{2-7}
        & \fillcell \pref{thm: VR scrible} & $d\sqrt{\nu L_\star\log T}$ & $\frac{ d^2\nu\log T}{\Delta}$ & \checkmark &  & \\
        \cline{2-7}
        & \fillcell \pref{cor: linear bandit log det thm} & $d\sqrt{T\log T}$ & $\frac{d^2\log T}{\Delta}$ & \checkmark  & \checkmark & \\
        \cline{2-7}
        & \fillcell \pref{cor: linear bandit exponential thm} & $\sqrt{dT \log|\calX|}$ & $\frac{d\log|\calX|\log T}{\Delta}$ & \checkmark & \checkmark & \checkmark
        \\
        \hline
        \multirow{1}{*}{ \makecell{Contextual bandit}} & \fillcell \pref{cor: exp4}& $\sqrt{KT\log |\calX|}$ &$\frac{K\log|\calX|\log T}{\Delta}$  & \checkmark& \checkmark&\checkmark\\ 
        \hline
        \multirow{3}{*}{\makecell{Graph bandit\\Strongly observable}} &\cite{ito2022nearly} & $\sqrt{\alpha T\log T}\log(KT)$&$\frac{\alpha \log^2(KT)\log T}{\Delta}$ &\checkmark & & \\
        \cline{2-7} 
         & \cite{rouyernear}
        & $\sqrt{\widetilde\alpha T\log K}$ & $\frac{\widetilde\alpha\log(KT)\log T}{\Delta}$& &  &\\
        \cline{2-7} 
        & \fillcell\pref{cor: graph bandit tsallis}
        
        & $\sqrt{\min\{\baralpha,\alpha\log K\} T\log K}$ &$\frac{\min\{\baralpha,\alpha\log K\}\log K\log T}{\Delta}$ & \checkmark&\checkmark &\checkmark\\ 
        \hline
         \multirow{2}{*}{\makecell{Graph bandit\\Weakly observable}}
        &\cite{ito2022nearly} & $(\delta\log(KT)\log T)^\frac{1}{3}T^\frac{2}{3}$&$\frac{\delta \log(KT) \log T}{\Delta^2}$ &\checkmark & & 
        \\
        \cline{2-7} 
        &\fillcell\pref{cor: graph bandit weak exp3} & $(\delta\log K)^\frac{1}{3}T^\frac{2}{3}$ & $\frac{\delta\log K \log T}{\Delta^2}$& \checkmark&\checkmark & \checkmark 
        \\
        \hline
         \multirow{2}{*}{\makecell{Tabular MDP}} & \cite{jin2021best} & $\sqrt{H^2S^2AT\log^2 T}$ & $\frac{H^6S^2A\log^2 T}{\Delta'}$ & \checkmark & & \\
         \cline{2-7} 
         & \fillcell \pref{thm: MDP main theorem bound} & $\sqrt{HS^2AL_\star \ \log^4 T}$ & $\frac{H^2S^2A\ \log^3 T}{\Delta}$ & \checkmark & \checkmark &\checkmark  \\
        \hline
    \end{tabular}
\end{adjustwidth} 
\caption{Overview of regret bounds. The $o(C)$ column specifies whether the algorithm achieves the optimal dependence on the amount of corruption ($\sqrt{C}$ or $C^{\frac{2}{3}}$, depending on the setting). ``Rd. 1'' and ``Rd. 2'' indicate whether the result leverages the first and the second reductions described in \pref{sec: main tech}, respectively. $L_\star$ is the cumulative loss of the best action or policy. $\nu$ in \pref{thm: VR scrible} is the self-concordance parameter; it holds that $\nu\leq d$.  $\Delta'$ in \cite{jin2021best} is the gap in the $Q^\star$ function, which is different from the policy gap $\Delta$ we use; it holds that $\Delta\leq \Delta'$. $\alpha, \widetilde{\alpha}, \delta$ are complexity measures of feedback graphs defined in \pref{sec: graph bandits}.  }\label{tab:summary}
\end{table}

\section{Related Work}
Our reduction procedure is related to a class of model selection algorithms that uses a meta bandit algorithm to learn over a set of base bandit algorithms, and the goal is to achieve a comparable performance as if the best base algorithm is run alone. For the adversarial regime, \cite{agarwal2017corralling} introduced the Corral algorithm to learn over adversarial bandits algorithm that satisfies the stability condition. This framework is further improved by \cite{foster2020adapting,luo2022corralling}. For the stochastic regime, \cite{arora2021corralling,abbasi2020regret,cutkosky2021dynamic} introduced another set of techniques to achieve similar guarantees, but without explicitly relying on the stability condition. While most of these results focus on obtaining a $\sqrt{T}$ regret, \cite{arora2021corralling,cutkosky2021dynamic,wei2022model,pacchiano2022best} made some progress in obtaining $\text{polylog}(T)$ regret in the stochastic regime. Among them, \cite{wei2022model,pacchiano2022best} are most related to us since they also pursue a best-of-both-worlds guarantee across adversarial and stochastic regimes. However, their regret bounds, when applied to our problems, all suffer from highly sub-optimal dependence on $\log(T)$ and the amount of corruptions.

\section{Preliminaries}
We consider sequential decision making problems where the learning interacts with the environment in $T$ rounds. In each round $t$, the environment generates a loss vector $(\ell_{t, u})_{u \in \calX}$ and the learner generates a distribution over actions $p_t$ and chooses an action $A_t\sim p_t$. The learner then suffers loss $\ell_{t, A_t}$ and receives some information about $(\ell_{t, u})_{u \in \calX}$ depending on the concrete setting. In all settings but MDPs, we assume $\ell_{t,u}\in[-1,1]$. 

In the adversarial regime, $(\ell_{t,u})_{u\in\calX}$ is generated arbitrarily subject to the structure of the concrete setting (e.g., in linear bandits, $\E[\ell_{t,u}]=\inner{u,\ell_t}$ for arbitrary $\ell_t\in\mathbb{R}^d$). In the stochastic regime, we further assume that there exists an action $x^\star\in\calX$ and a gap $\Delta>0$ such that $\E[\ell_{t,x}-\ell_{t,x^\star}]\geq \Delta$ for all $x\neq x^\star$. We also consider the corrupted stochastic regime, where the assumption is relaxed to $\E[\ell_{t,x}-\ell_{t,x^\star}]\geq \Delta-C_t$ for some $C_t\geq 0$. We define $C=\sum_{t=1}^T C_t$.

\section{Main Techniques}\label{sec: main tech}

\subsection{The standard global self-bounding condition and a new linear bandit algorithm}
To obtain a best-of-both-world regret bound, previous works show  the following property for their algorithm: 
\begin{definition}[$\alpha$-global-self-bounding condition, or $\alpha$-GSB]  \label{def: previous condition}
\begin{align*}
    &\forall u\in\calX:\,\E\left[\sum_{t=1}^{T}\left(\ell_{t,A_t}-\ell_{t,u}\right)\right] \leq 
    \min\left\{c_0^{1-\alpha} T^\alpha, \ \ (c_1\log T)^{1-\alpha}\E\left[ \sum_{t=1}^T (1-p_{t,u})\right]^\alpha\right\} + c_2\log T
\end{align*}   
where $c_0, c_1, c_2$ are problem-dependent constants and $p_{t,u}$ is the probability of the learner choosing $u$ in round~$t$.   
\end{definition}
With this condition, one can use the standard \emph{self-bounding} technique to obtain a best-of-both-world regret bounds: 

\begin{proposition}\label{prop: standard algorithm}
    \sloppy
    If an algorithm satisfies $\alpha$-GSB, then its pseudo-regret is bounded by  $O\left(c_0^{1-\alpha}T^{\alpha} + c_2\log T\right)$ in the adversarial regime, and by $O\big( c_1\log(T)\Delta^{-\frac{\alpha}{1-\alpha}}+(c_1\log T)^{1-\alpha}\left(C\Delta^{-1}\right)^{\alpha}+c_2\log(T)\big)$ in the corrupted stochastic regime. 
\end{proposition}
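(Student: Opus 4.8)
The plan is to read off the adversarial bound directly from one branch of the $\alpha$-GSB inequality, and to obtain the stochastic bound by combining the other branch with a gap-based lower bound on the regret and then closing the resulting self-referential inequality via Young's inequality.

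First, for the adversarial regime I would instantiate \pref{def: previous condition} with $u$ taken to be a fixed comparator minimizing $\E[\sum_{t=1}^T\ell_{t,u}]$ and keep only the first term of the minimum; this immediately gives $\E[\sum_{t=1}^T(\ell_{t,A_t}-\ell_{t,u})]\le c_0^{1-\alpha}T^\alpha+c_2\log T$, i.e.\ the claimed $\order(c_0^{1-\alpha}T^\alpha+c_2\log T)$ bound on the pseudo-regret. Nothing subtle happens here.

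Second, for the corrupted stochastic regime, let $x^\star,\Delta$ be as in the definition of the regime, write $\Reg=\sum_{t=1}^T(\ell_{t,A_t}-\ell_{t,x^\star})$ and $Q=\E[\sum_{t=1}^T(1-p_{t,x^\star})]$. I would first derive a lower bound on $\E[\Reg]$ in terms of $Q$: since $p_t$ is determined by the history $\calF_{t-1}$ and $\E[\ell_{t,x}-\ell_{t,x^\star}\mid\calF_{t-1}]\ge\Delta-C_t$, we get $\E[\ell_{t,A_t}-\ell_{t,x^\star}]=\E[\sum_{x\ne x^\star}p_{t,x}\,\E[\ell_{t,x}-\ell_{t,x^\star}\mid\calF_{t-1}]]\ge\Delta\,\E[1-p_{t,x^\star}]-C_t$, using $\sum_{x\ne x^\star}p_{t,x}\le 1$ and $C_t\ge 0$; summing over $t$ yields $\E[\Reg]\ge\Delta Q-C$, hence $Q\le(\E[\Reg]+C)/\Delta$. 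Next I would apply the second branch of $\alpha$-GSB with $u=x^\star$ to get $\E[\Reg]\le(c_1\log T)^{1-\alpha}Q^\alpha+c_2\log T$, substitute the bound on $Q$, and use subadditivity $(a+b)^\alpha\le a^\alpha+b^\alpha$ to split off the corruption term, arriving at
\[
\E[\Reg]\le(c_1\log T)^{1-\alpha}\Big(\tfrac{\E[\Reg]}{\Delta}\Big)^{\alpha}+(c_1\log T)^{1-\alpha}\Big(\tfrac{C}{\Delta}\Big)^{\alpha}+c_2\log T .
\]
Finally I would absorb the $\E[\Reg]$ on the right-hand side: applying the weighted AM--GM/Young inequality $a^\alpha b^{1-\alpha}\le\alpha\eta\,a+(1-\alpha)\eta^{-\alpha/(1-\alpha)}b$ (valid for any $\eta>0$) with $a=\E[\Reg]/\Delta$, $b=c_1\log T$, $\eta=\Delta/(2\alpha)$ bounds the first term by $\tfrac12\E[\Reg]+(1-\alpha)(2\alpha/\Delta)^{\alpha/(1-\alpha)}c_1\log T$; rearranging and treating the fixed $\alpha\in(0,1)$ as an absolute constant gives exactly $\E[\Reg]=\order\big(c_1\log(T)\,\Delta^{-\alpha/(1-\alpha)}+(c_1\log T)^{1-\alpha}(C\Delta^{-1})^\alpha+c_2\log T\big)$.

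The main thing to be careful about — the only real obstacle — is the legitimacy of the self-referential step: one needs $\E[\Reg]<\infty$ a priori before moving it across the inequality (which holds since the losses are bounded, e.g.\ $\E[\Reg]\le 2T$ when $\ell_{t,u}\in[-1,1]$), and one must peel off the corruption term via subadditivity of $x\mapsto x^\alpha$ \emph{before} applying Young's inequality, since otherwise the $C$-dependence degrades from the optimal $(c_1\log T)^{1-\alpha}(C/\Delta)^{\alpha}$ to a bare $\order(C)$.
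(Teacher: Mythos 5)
Your proposal is correct, and both halves land on the stated bounds. Note that the paper never writes out a proof of this proposition (it is invoked as the ``standard self-bounding technique''); the closest analogue is the proof of \pref{thm: candidate aware to bob} together with \pref{lem: simple lem}. There the argument is phrased as $\E[\Reg]=(1+\lambda)\E[\Reg]-\lambda\E[\Reg]$, upper-bounding the first copy by the GSB/LSB branch in $Q=\E[\sum_t(1-p_{t,x^\star})]$, lower-bounding the second copy by $\Delta Q-C$, and then optimizing over $\lambda\in[0,1]$ via a weighted AM--GM (that optimization is exactly \pref{lem: simple lem}). You instead substitute $Q\le(\E[\Reg]+C)/\Delta$ into the upper bound, peel off the corruption term by subadditivity of $x\mapsto x^\alpha$, and absorb $\E[\Reg]$ via Young's inequality. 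These are two dressings of the same computation; yours avoids introducing the free parameter $\lambda$ but requires the (correctly flagged) a priori finiteness of $\E[\Reg]$ to justify the rearrangement, while the paper's version sidesteps that by keeping $\lambda\le 1$ so the subtracted term is never moved across the inequality. Your attention to applying subadditivity before Young is the right instinct and is what preserves the $(c_1\log T)^{1-\alpha}(C/\Delta)^\alpha$ dependence.

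One small omission: you bound the pseudo-regret only against $x^\star$, but the pseudo-regret in the proposition is against the best fixed comparator, which in the corrupted regime need not be $x^\star$. The fix is the one the paper uses in the proof of \pref{thm: candidate aware to bob}: if some $u\neq x^\star$ is at least as good as $x^\star$ in cumulative expected loss, then summing $\E[\ell_{t,u}-\ell_{t,x^\star}]\ge\Delta-C_t$ forces $C\ge T\Delta$, so the adversarial-branch bound $(c_1\log T)^{1-\alpha}T^\alpha\le(c_1\log T)^{1-\alpha}(C\Delta^{-1})^\alpha$ already implies the stochastic-regime bound. Adding that sentence closes the argument.
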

Previous works have found algorithms with GSB in various settings, such as multi-armed bandits, semi-bandits, graph-bandits, and MDPs. 
Here, we provide one such algorithm (\pref{alg: VR scrible}) for linear bandits based on the framework of SCRiBLe \citep{abernethy2008competing}. The guarantee of \pref{alg: VR scrible} is stated in the next theorem under the more general ``learning with predictions'' setting introduced in \cite{rakhlin2013optimization}, where in every round $t$, the learner receives a loss predictor $m_t$ before making decisions. 
\begin{theorem}\label{thm: VR scrible}
    In the ``learning with predictions'' setting, \pref{alg: VR scrible} achieves a second-order regret bound of 
    $
        O\Big(d\sqrt{\nu \log(T)\sum_{t=1}^T (\ell_{t,A_t}-m_{t,A_t})^2} + d\nu\log T\Big)
    $
    in the adversarial regime, where $d$ is the feature dimension, $\nu\leq d$ is the self-concordance parameter of the regularizer, and $m_{t,x}=\inner{x,m_t}$ is the loss predictor. This also implies a first-order regret bound of $O\Big(d\sqrt{\nu\log (T)\sum_{t=1}^T \ell_{t,u}}\Big)$ if $\ell_{t,x}\geq 0$ and $m_t=\mathbf{0}$; it simultaneously achieves $\Reg=O\Big(\frac{d^2\nu\log T}{\Delta} + \sqrt{\frac{d^2\nu\log T}{\Delta}C}\Big)$ in the corrupted stochastic regime. 
\end{theorem}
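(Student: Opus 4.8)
The plan is to establish the three claims in turn: the adversarial second-order bound, the first-order corollary, and then the corrupted-stochastic bound via self-bounding. For the adversarial part I would view \pref{alg: VR scrible} as optimistic online mirror descent (``FTRL with predictions'' in the sense of \cite{rakhlin2013optimization}) over $\mathrm{conv}(\calX)$ with the $\nu$-self-concordant barrier $R$ as regularizer: each round it keeps an iterate $x_t$, draws $A_t$ by the SCRiBLe scheme along the eigenbasis of $\nabla^2 R(x_t)$ so that $\E[A_t\mid x_t]=x_t$, forms an unbiased estimate $\hatell_t$ of $\ell_t$ from the observable residual $\ell_{t,A_t}-m_{t,A_t}$ (variance-reduced by the known predictor $m_t$), and updates with an adaptive step size $\eta_t$. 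Starting from the standard optimistic-OMD identity against a fixed comparator $u$,
\begin{align*}
\sum_t \inner{x_t-u,\hatell_t} \;\le\; \frac{D_R(u,x_1)}{\eta_T} + \sum_t\Big(\inner{x_t-w_{t+1},\hatell_t-m_t} - \tfrac1{\eta_t}D_R(w_{t+1},x_t)\Big) + (\text{lower-order range terms}),
\end{align*}
where $w_{t+1}$ is the iterate after incorporating $\hatell_t$, two standard ingredients finish it. First, since $R(x^\star)=+\infty$ at a vertex, one compares against $x^\star_\delta:=(1-\delta)x^\star+\delta x_1$ with $\delta=1/T$; by the defining property of a $\nu$-self-concordant barrier $D_R(x^\star_\delta,x_1)=O(\nu\log T)$, while $\inner{x^\star_\delta-x^\star,\ell_t}\le 2\delta$ costs only $O(1)$ overall. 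Second, by self-concordance, whenever $\eta_t\norm{\hatell_t-m_t}_{*,x_t}\le\tfrac12$ the bracketed term is at most $\eta_t\norm{\hatell_t-m_t}_{*,x_t}^2$, with $\norm{g}_{*,x}^2:=g^\top[\nabla^2R(x)]^{-1}g$. The only place the sampling scheme enters is the exact variance identity $\norm{\hatell_t-m_t}_{*,x_t}^2=d^2(\ell_{t,A_t}-m_{t,A_t})^2\le 4d^2$, which also justifies capping $\eta_t\le 1/(2d)$. Taking expectations, using $\E_t[\hatell_t]=\ell_t$ (so $\E[\sum_t\inner{x_t-u,\hatell_t}]=\Reg$) and concavity of $\sqrt{\cdot}$, gives $\Reg\le O(\nu\log T/\eta_T)+d^2\E[\sum_t\eta_t(\ell_{t,A_t}-m_{t,A_t})^2]+O(1)$; with $\eta_t=\min\{1/(2d),\,c\sqrt{\nu\log T}/(d\sqrt{1+\sum_{s<t}(\ell_{s,A_s}-m_{s,A_s})^2})\}$ and the usual telescoping lemma for adaptive rates this is exactly $O(d\sqrt{\nu\log(T)\sum_t(\ell_{t,A_t}-m_{t,A_t})^2}+d\nu\log T)$.

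The first-order bound is then immediate: with $m_t\equiv\mathbf 0$ and $\ell_{t,x}\in[0,1]$ we have $(\ell_{t,A_t})^2\le\ell_{t,A_t}$, hence $\sum_t(\ell_{t,A_t})^2\le\sum_t\ell_{t,A_t}$, and since $\E[\sum_t\ell_{t,A_t}]=\E[\sum_t\inner{x_t,\ell_t}]=\Reg+\sum_t\ell_{t,u}$ the bound above reads $\Reg\le O(d\sqrt{\nu\log(T)(\Reg+\sum_t\ell_{t,u})}+d\nu\log T)$; solving this quadratic in $\Reg$ gives $\Reg=O(d\sqrt{\nu\log(T)\sum_t\ell_{t,u}}+d^2\nu\log T)$, which is the claimed first-order guarantee up to the lower-order additive term.

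For the corrupted-stochastic bound the variance reduction in \pref{alg: VR scrible} becomes essential, and this is the hard part: the classical obstacle for linear bandits is that a log-barrier FTRL penalty at the best vertex is $\Omega(\nu\log T)$ rather than $o(1)$, so ordinary FTRL self-bounding fails and one must instead make the estimator's \emph{second moment} self-bounding. Concretely, the goal is to show \pref{alg: VR scrible} satisfies the $\tfrac12$-GSB condition of \pref{def: previous condition} by upgrading the adversarial bound to $\Reg\le O(\sqrt{d^2\nu\log(T)\,\E[\sum_t\mathrm{gap}(x_t)]})+O(d\nu\log T)$, where $\mathrm{gap}(x)$ is a nonnegative functional of the iterate (morally $1-q_{x^\star}$ for a convex representation $x=\sum_x q_x x$, playing the role of $1-p_{t,u}$). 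The crux is the per-round inequality $\E_t[(\ell_{t,A_t}-m_{t,A_t})^2]\le O(\mathrm{gap}(x_t))+(\text{predictor/estimation error})$: one chooses $m_t$ as a running estimate of the per-round loss vector so that $\ell_{t,A_t}-m_{t,A_t}$ shrinks as $A_t$ concentrates near $x^\star$, and controls the accumulated estimation error together with the sampling noise — which interacts with the barrier's local norms near the vertex — by a potential/concentration argument that sums to $\mathrm{polylog}(T)$. Given this, \pref{prop: standard algorithm} with $\alpha=\tfrac12$ (equivalently, the gap condition $\E_t[\inner{x_t-x^\star,\ell_t}]\ge\Delta\,\mathrm{gap}(x_t)-C_t$, which turns $\E[\sum_t\mathrm{gap}(x_t)]$ into $(\Reg+C)/\Delta$, followed by solving $\Reg^2\lesssim d^2\nu\log(T)(\Reg+C)/\Delta$) yields $\Reg=O(d^2\nu\log(T)/\Delta+\sqrt{d^2\nu\log(T)\,C/\Delta})$. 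I expect establishing that per-round inequality — balancing the barrier geometry near a vertex, the SCRiBLe sampling variance, the predictor drift, and the corruption budget at once — to be the main obstacle; the rest is routine optimistic-OMD and self-bounding algebra.
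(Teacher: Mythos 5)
Your adversarial and first-order arguments are essentially the paper's proof: optimistic FTRL/OMD with the self-concordant barrier, a $(1-\delta)$-shifted comparator costing $O(\nu\log T)$, a self-concordance stability bound of $O(\eta_t\|\hatell_t-m_t\|^2_{\nabla^{-2}\psi(w_t)})$ under the cap $\eta_t\lesssim 1/d$, the bound $\E_t[\|\hatell_t-m_t\|^2_{\nabla^{-2}\psi(w_t)}]\le d^2\E_t[(\ell_{t,A_t}-m_{t,A_t})^2]$, adaptive tuning, and then $(\ell_{t,A_t})^2\le\ell_{t,A_t}$ plus solving the quadratic for the first-order claim. That part is correct.

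The corrupted-stochastic part has a genuine gap, and the route you sketch is not the one that works. You propose to make $\E_t[(\ell_{t,A_t}-m_{t,A_t})^2]$ self-bounding by choosing $m_t$ as a running estimate of the losses, and you correctly flag that you do not know how to close that argument. The paper does not need anything of the sort: the self-bounding property is already built into the estimator's second moment, with $m_t$ arbitrary. The key computation (the paper's \pref{lem: key lem scrible}) is that the multiplier $\bigl(\tfrac{q^+_{t,x}-q^-_{t,x}}{q^+_{t,x}+q^-_{t,x}}\bigr)$ in $\hatell_t$ satisfies both $q_{t,x}\bigl|\tfrac{q^+_{t,x}-q^-_{t,x}}{q^+_{t,x}+q^-_{t,x}}\bigr|\le q_{t,x}$ and $q_{t,x}\bigl|\tfrac{q^+_{t,x}-q^-_{t,x}}{q^+_{t,x}+q^-_{t,x}}\bigr|=\tfrac{|q^+_{t,x}-q^-_{t,x}|}{2}\le 1-q_{t,x}$, which upgrades the variance bound to
\begin{align*}
\E\Bigl[\|\hatell_t-m_t\|^2_{\nabla^{-2}\psi(w_t)}\Bigr]\;\le\; d^2\,\E\Bigl[\textstyle\sum_{x}\min\{p_{t,x},1-p_{t,x}\}\,(\ell_{t,x}-m_{t,x})^2\Bigr]\;\le\; O\bigl(d^2\,(1-p_{t,u})\bigr)
\end{align*}
for any fixed $u$, using only boundedness of $\ell_{t,x}-m_{t,x}$. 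This is exactly the $\tfrac12$-GSB condition of \pref{def: previous condition} (the $d\nu\log T$ barrier term is absorbed by the allowed additive $c_2\log T$, so the ``log-barrier penalty at the best vertex'' obstacle you cite is not an obstacle here), and \pref{prop: standard algorithm} then yields the stated corrupted-stochastic bound. Without this $\min\{p,1-p\}$ refinement — which is the whole point of the variance-reduced estimator — your proposal does not establish the stochastic claim.
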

See \pref{app: SCrible} for the algorithm and the proof. We call this algorithm Variance-Reduced SCRiBLe (VR-SCRiBLe) since it is based on the original SCRiBLe updates, but with some refinement in the construction of the loss estimator to reduce its variance. A good property of a SCRiBLe-based algorithm is that it simultaneously achieves data-dependent bounds (i.e., first- and second-order bounds), similar to the case of multi-armed bandit using FTRL/OMD with log-barrier regularizer \citep{wei2018more, ito2021parameter}. Like \cite{rakhlin2013optimization, bubeck2019improved, ito2021parameter, ito2022adversarially}, we can also use another procedure to learn $m_t$ and obtain path-length or variance bounds. The details are omitted.  

We notice, however, that the the bound in \pref{thm: VR scrible} is sub-optimal in $d$ since the best-known regret for linear bandits is either $d\sqrt{T\log T}$ or $\sqrt{dT\log|\calX|}$, depending on whether the number of actions is larger than $T^d$. These bounds also hint the possibility of getting better dependence on $d$ in the stochastic regime if we can also establish GSB for these algorithms. Therefore, we ask: can we achieve best-of-both-world bounds for linear bandits with the optimal $d$ dependence?   

An attempt is to try to show GSB for existing algorithms with optimal $d$ dependence, including the EXP2 algorithm \citep{bubeck2012towards} and the logdet-FTRL algorithm \citep{zimmert2022return}. Based on our attempt, it is unclear how to adapt the analysis of logdet-FTRL to show GSB. For EXP2, using the learning-rate tuning technique by \cite{ito2022nearly}, one can make it satisfy a similar guarantee as GSB, albeit with an additional $\log(T)$ factor, resulting in a bound $\frac{d\log(|\calX|T)\log(T)}{\Delta}$ in the stochastic regime. This gives a sub-optimal rate of $O(\log^2 T)$. Therefore, we further ask: can we achieve $O(\log T)$ regret in the stochastic regime with an optimal $d$ dependence? 

Motivated by these questions, we resort to approaches that do not rely on GSB, which appears for the first time in the best-of-both-world literature. Our approach is in fact a general \emph{reduction} --- it not only successfully achieves the desired bounds for linear bandits, but also provides a principled way to convert an algorithm with a worst-case guarantee to one with a best-of-both-world guarantee for general settings. In the next two subsections, we introduce our reduction techniques. 

\subsection{First reduction:~~Best of Both Worlds $\rightarrow$ Local Self-Bounding}

Our reduction approach relies on an algorithm to satisfy a \emph{weaker} condition than GSB, defined in the following:  

\begin{definition}[$\alpha$-local-self-bounding condition, or $\alpha$-LSB]\label{def: candidate aware}
We say an algorithm satisfies the $\alpha$-local-self-bounding condition if it takes a candidate action $\hatx\in\calX$ as input and satisfies the following pseudo-regret guarantee for any stopping time $t'\in[1,T]$: 
\begin{align*}
    &\forall u\in\calX:\,\E\left[\sum_{t=1}^{t'}\left(\ell_{t,A_t}-\ell_{t,u}\right)\right] \leq \min\left\{c_0^{1-\alpha}\E[t']^\alpha, \ \ (c_1\log T)^{1-\alpha}\E\left[\sum_{t=1}^{t'}(1-\bm{\mathbb{I}\{u=\hatx\}}p_{t,u})\right]^\alpha\right\} + c_2\log T 
\end{align*}
where $c_0, c_1, c_2$ are problem dependent constants. 
\end{definition}
The difference between LSB in \pref{def: candidate aware} and GSB in \pref{def: previous condition} is that LSB only requires the self-bounding term $\sum_{t}(1-p_{t,u})$ to appear when $u$ is a particular action $\hatx$ given as an input to the algorithm; for all other actions, the worst-case bound suffices. A minor additional requirement is that the pseudo-regret needs to hold for any stopping time (because our reduction may stop this algorithm during the learning procedure), but this is relatively easy to satisfy --- for all algorithms in this paper, without additional efforts, their regret bounds naturally hold for any stopping time. 


\begin{algorithm}[t]
    \caption{BOBW via LSB algorithms}\label{alg: adaptive alg2}
    \textbf{Input}: LSB algorithm $\calA$ \\
    $T_1\leftarrow 0$, \ $T_0\leftarrow -c_2\log(T)$. \\
    $\hatx_1 \sim \text{unif}(\calX)$. \\
    $t\leftarrow 1$. \\
    \For{$k=1, 2\ldots,$}{
       Initialize $\calA$ with candidate $\hatx_k$.\\
       Set counters $N_k(x) = 0$ for all $x\in\calX$. \\
       \For{$t=T_k+1, T_k+2, \ldots$}{
        Play action $A_t$ as suggested by $\calA$, and advance $\calA$ by one step. \\
        $N_k(A_t)\leftarrow N_k(A_t)+1$\\
        \If{
        $t - T_{k} \geq 2(T_k - T_{k-1})$ and $\exists x\in\calX\setminus\{\hatx_k\}$ such that $N_k(x) \geq \frac{t-T_k}{2}$}
        {
            $\hatx_{k+1}\leftarrow x$.\\
            $T_{k+1}\leftarrow t$. \\
            \textbf{break}. 
        }
        }
    }
\end{algorithm}

For linear bandits, we find that an adaptation of the logdet-FTRL algorithm \citep{zimmert2022return} satisfies $\frac{1}{2}$-LSB, as stated in the following lemma. The proof is provided in \pref{app: log det analysis}. 

\begin{lemma}\label{lem: CA logdet guarantee}
For $d$-dimensional linear bandits, by transforming the action set into $\big\{\binom{x}{0} \,\big|\,x\in\calX\setminus\{\hatx\}\big\}\cup \big\{\binom{\mathbf{0}}{1}\big\} \subset \bbR^{d+1}$, \pref{alg: logdet} satisfies $\frac{1}{2}$-LSB with $c_0=O((d+1)^2\log T)$ and $c_1=c_2=O((d+1)^2)$.
\end{lemma}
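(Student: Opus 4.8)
The plan is to run \pref{alg: logdet}, an adaptation of logdet-FTRL, on the transformed $(d{+}1)$-dimensional action set and to exploit that this lift was chosen precisely so that the candidate is orthogonal, in the last coordinate, to every other action: $\hatx \mapsto e_{d+1} = \binom{\mathbf{0}}{1}$ and $x \mapsto \binom{x}{0}$ for $x \neq \hatx$. Then, for any distribution $p_t$ over the lifted actions, the design matrix $\Sigma_t = \sum_a p_{t,a}\, a a^\top$ is block-diagonal, $\Sigma_t = \operatorname{diag}\!\big(\Sigma_t^{(1:d)},\, p_{t,\hatx}\big)$ with $\Sigma_t^{(1:d)} = \sum_{x\neq\hatx} p_{t,\binom{x}{0}}\, x x^\top$, so the log-determinant regularizer splits, $-\log\det\Sigma_t = -\log\det\Sigma_t^{(1:d)} - \log p_{t,\hatx}$: logdet-FTRL run on the lift behaves on the non-candidate coordinates as the usual $d$-dimensional logdet-FTRL and on the candidate coordinate as a log-barrier in $p_{t,\hatx}$. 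This is exactly the structure that lets the self-bounding term be confined to $u = \hatx$, which \pref{def: candidate aware} permits and \pref{def: previous condition} does not. Throughout I use that, by unbiasedness of the importance-weighted estimator $\widehat{\ell}_t = \Sigma_t^{-1} A_t \langle A_t, \ell_t\rangle$, the pseudo-regret $\E[\sum_{t\le t'}(\ell_{t,A_t} - \ell_{t,u})]$ equals $\E[\sum_{t\le t'}\langle \bar a_t - \phi(u), \widehat{\ell}_t\rangle]$, where $\bar a_t = \E_t[A_t]$ and $\phi(u)$ is the lift of $u$.

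First I would establish the worst-case half of \pref{def: candidate aware}. The standard FTRL analysis of logdet-FTRL on the lifted instance --- comparator penalty $O((d{+}1)\log T)$ from the self-concordance of $-\log\det$ together with a $\operatorname{poly}(1/T)$-scale dilation of the comparator towards the interior of the domain, and a stability term controlled via $\E_t[A_t^\top \Sigma_t^{-1} A_t] = \operatorname{tr}(\Sigma_t^{-1}\Sigma_t) = d{+}1$ per round --- with a non-increasing learning-rate schedule yields $\E[\sum_{t\le t'}(\ell_{t,A_t} - \ell_{t,u})] = O((d{+}1)\sqrt{t'\log T})$ for every $u$, and since the bound is obtained by summing per-round quantities it is valid at every stopping time $t'$. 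This gives $c_0 = O((d{+}1)^2\log T)$. For $u \neq \hatx$ the indicator $\mathbb{I}\{u = \hatx\}$ in \pref{def: candidate aware} is zero, so both arguments of the $\min$ equal $\Theta((d{+}1)\sqrt{t'\log T})$ (using $c_1\log T = (d{+}1)^2\log T$), and those comparators are already handled.

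The crux is the self-bound for $u = \hatx$: I want $\Reg_{t'}(\hatx) \le (d{+}1)\sqrt{\log T \cdot \E[\sum_{t\le t'}(1-p_{t,\hatx})]} + O((d{+}1)^2\log T)$, which in addition dominates the worst-case term there since $\sum_t(1-p_{t,\hatx}) \le t'$. Two ingredients combine. Geometrically, $\bar a_t - e_{d+1} = (1-p_{t,\hatx})\binom{\bar\mu_t}{-1}$ with $\bar\mu_t = \sum_{x\neq\hatx}\tfrac{p_{t,\binom{x}{0}}}{1-p_{t,\hatx}} x \in \operatorname{conv}(\calX\setminus\{\hatx\})$, so the instantaneous regret against $\hatx$ is carried entirely by the non-candidate coordinates and comes with an explicit factor $1-p_{t,\hatx}$. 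Dynamically, the log-barrier on the candidate coordinate (with a suitably tuned learning rate) makes FTRL react strongly whenever $\hatx$ looks good, so the accumulated version of this weighted regret admits a self-bounding FTRL bound of the form $\sqrt{\mathrm{pen}\cdot \sum_t z_t}+(\text{lower order})$ in which the comparator penalty is $O((d{+}1)\log T)$ against the (diluted) coordinate comparator $e_{d+1}$ and the second-order sum aggregates to $\E[\sum_t z_t] = O((d{+}1)^2\,\E[\sum_t(1-p_{t,\hatx})])$ --- rather than the worst-case $O((d{+}1)^2 t')$ --- with rounds where $p_{t,\hatx} < \tfrac12$ folded into the additive term via $1-p_{t,\hatx} \ge \tfrac12$. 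Putting these together gives $c_1 = c_2 = O((d{+}1)^2)$, and the $\min$ of the two displayed bounds is precisely \pref{def: candidate aware}.

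The step I expect to be the main obstacle is showing $\E[\sum_t z_t] = O((d{+}1)^2\,\E[\sum_t(1-p_{t,\hatx})])$: this forces one to open up the logdet-FTRL analysis rather than cite it as a black box, tracking how the local dual norm of the log-determinant barrier interacts with the importance-weighted estimator once the comparator is the orthogonal direction $e_{d+1}$ --- in particular controlling the large-but-rare contribution of rounds on which $e_{d+1}$ itself is played, and the coupling between the candidate coordinate and the non-candidate block through the simplex constraint --- all while retaining a single $\log T$ factor, since a naive log-barrier treatment would lose an extra $\log T$, precisely the loss that routing through \pref{def: candidate aware} instead of \pref{def: previous condition} is meant to avoid. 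Checking that this refined bound, like the worst-case one, holds at every stopping time is the only other point and is routine for FTRL with non-increasing learning rates. I would isolate the stability estimate as a standalone lemma about FTRL with regularizer $-\log\det\Sigma^{(1:d)}(\cdot) - \log(\cdot)$ and importance-weighted losses, after which the remainder is bookkeeping.
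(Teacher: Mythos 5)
Your high-level picture is right --- lift $\hatx$ to an orthogonal coordinate, argue that the log-determinant potential induces a log-barrier on the candidate coordinate, and prove a per-round stability estimate proportional to $1-p_{t,\hatx}$ with the learning rate tuned to $\sum_\tau(1-p_{\tau,\hatx})$ --- and you correctly locate the crux. But the step you flag as ``the main obstacle'' is exactly the content of the lemma, and the structure you propose for attacking it does not deliver it. You work with the \emph{uncentered} second-moment matrix $\Sigma_t=\sum_a p_{t,a}aa^\top$ and the estimator $\Sigma_t^{-1}A_t\ell_{t,A_t}$, so the block-diagonal split gives you only a log-barrier in $p_{t,\hatx}$, i.e.\ the Bregman divergence contains $D_{\log}(\alpha_{\hatx},p_{t,\hatx})$ but no $D_{\log}(1-\alpha_{\hatx},1-p_{t,\hatx})$ term. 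On a round where $A_t=\hatx$ your estimator is $\frac{\ell_{t,\hatx}}{p_{t,\hatx}}e_{d+1}$, and the best the $D_{\log}(\alpha_{\hatx},p_{t,\hatx})$ barrier can give (via the $\eta b^2x^2$ stability bound) is $\eta\, p_{t,\hatx}^2\cdot p_{t,\hatx}^{-2}=\eta$, which occurs with probability $p_{t,\hatx}$; the expected stability is then $\Theta(\eta_t)$ rather than $O(\eta_t d(1-p_{t,\hatx}))$, and since $\eta_t$ is capped at $1/(4d)$ this contributes $\Omega(T/d)$ when $p_{t,\hatx}\to 1$, destroying the LSB bound.

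\pref{alg: logdet} avoids this precisely by using the \emph{centered} potential $-\log\det(H(\alpha)-\mH{\alpha}\mH{\alpha}^\top)$ together with the centered estimator $a^\top\varHinv{\tildep_t}(A_t-\mH{\tildep_t})\ell_{t,A_t}$ and John's exploration mixed in at rate $\eta_t d$. The paper shows (i) this potential's Bregman divergence dominates $D_{\log}(\alpha_{\hatx},\beta_{\hatx})+D_{\log}(1-\alpha_{\hatx},1-\beta_{\hatx})+\frac{1-\alpha_{\hatx}}{1-\beta_{\hatx}}\norm{\mG{\alpha}-\mG{\beta}}^2_{\varGinv{\beta}}$, and (ii) the centering makes the linear stability term when $A_t=\hatx$ equal to $\frac{|p_{t,\hatx}-\alpha_{\hatx}|}{p_{t,\hatx}}$; combining this with the $D_{\log}(1-\alpha_{\hatx},1-p_{t,\hatx})$ barrier yields $O(\eta_t(1-p_{t,\hatx})^2/p_{t,\hatx}^2)=O(\eta_t(1-p_{t,\hatx}))$, while on rounds with $A_t\in\calX$ the conditional-mean Mahalanobis term and the John's-exploration bound $\norm{y-\mG{\tildep_t}}^2_{\varGinv{\tildep_t}}\leq 4/\eta_t$ absorb the remaining pieces. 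None of this is recoverable from the uncentered block-diagonal decomposition, so as written the proposal leaves the central self-bounding stability estimate unproven, and the route it suggests for it would fail.
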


 With the LSB condition defined, we develop a reduction procedure (\pref{alg: adaptive alg2}) which turns any algorithm  with LSB into a best-of-both-world algorithm that has a similar guarantee as in \pref{prop: standard algorithm}. The guarantee is formally stated in the next theorem. 
\begin{theorem}
\label{thm: candidate aware to bob}
If $\calA$ satisfies $\alpha$-LSB, then the regret of \pref{alg: adaptive alg2} with $\calA$ as the base algorithm is upper bounded by
$O\left(c_0^{1-\alpha} T^\alpha + c_2 \log^2 T\right)$ 
in the adversarial regime and by
$\sloppy O\big( c_1\log(T)\Delta^{-\frac{\alpha}{1-\alpha}} + (c_1\log T)^{1-\alpha}\left(C\Delta^{-1}\right)^{\alpha}+$ $c_2\log(T)\log(C\Delta^{-1})\big)$
in the corrupted stochastic regime. 
\end{theorem}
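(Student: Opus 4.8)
I would split the run of \pref{alg: adaptive alg2} into epochs $k=1,2,\dots$, where epoch $k$ runs a fresh copy of $\calA$ with candidate $\hatx_k$ over the rounds $T_k+1,\dots,T_{k+1}$, and control the per-epoch regret $\Reg_k := \E[\sum_{t=T_k+1}^{T_{k+1}}(\ell_{t,A_t}-\ell_{t,u})]$ by invoking the $\alpha$-LSB guarantee of $\calA$ (\pref{def: candidate aware}) \emph{at the stopping time} $T_{k+1}-T_k$ measured in $\calA$'s internal clock, then sum over epochs. The structural fact used in both regimes is that the ``doubling trigger'' $t-T_k\ge 2(T_k-T_{k-1})$ forces $\ell_k:=T_{k+1}-T_k\ge 2\ell_{k-1}$ for every epoch that ends with a break; since $\ell_0=c_2\log T$ and $\sum_k\ell_k\le T$, the $\ell_k$ grow geometrically, so there are at most $K=O(\log T)$ epochs and $\sum_k\ell_k^\alpha=O(T^\alpha)$.

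For the adversarial bound I would use the $c_0$-branch of LSB in each epoch, $\Reg_k\le c_0^{1-\alpha}\E[\ell_k]^\alpha+c_2\log T$, and sum: concavity together with the geometric growth gives $\sum_k\E[\ell_k]^\alpha=O(T^\alpha)$ for the leading term, while the additive $c_2\log T$, incurred once per epoch, contributes $K\cdot c_2\log T=O(c_2\log^2 T)$.

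For the corrupted stochastic bound I would fix the benchmark $u=x^\star$. The gap assumption gives a ``reverse'' per-epoch bound $\Reg_k\ge\Delta\,\E[S_k]-C_{[k]}$, where $S_k=\sum_{t\in\text{epoch }k}(1-p_{t,x^\star})$ and $C_{[k]}=\sum_{t\in\text{epoch }k}C_t$. When $\hatx_k=x^\star$, LSB gives $\Reg_k\le(c_1\log T)^{1-\alpha}\E[S_k]^\alpha+c_2\log T$; combining this with the reverse bound and solving the scalar inequality $\Delta x\le(c_1\log T)^{1-\alpha}x^\alpha+c_2\log T+C_{[k]}$ for $x=\E[S_k]$ yields a per-epoch regret $O\!\big(c_1\log T\,\Delta^{-\alpha/(1-\alpha)}+(c_1\log T)^{1-\alpha}(C_{[k]}/\Delta)^\alpha+c_2\log T\big)$ that does \emph{not} grow with $\ell_k$; this handles the single terminal epoch whose candidate is $x^\star$ and produces the leading terms of the claimed bound. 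When $\hatx_k\ne x^\star$, LSB only gives the worst-case $\Reg_k\le(c_1\log T)^{1-\alpha}\E[\ell_k]^\alpha+c_2\log T$, so such epochs must be shown to be short. The mechanism is that the base algorithm, run with any candidate, has low regret and hence plays $x^\star$ a constant fraction of the time once $\ell_k$ exceeds $\Lambda:=O\!\big(c_1\log T\,\Delta^{-1/(1-\alpha)}+(c_2\log T+C)/\Delta\big)$, which triggers the break with $\hatx_{k+1}=x^\star$; symmetrically, any epoch that breaks to a sub-optimal action $x$ must have played $x$ at least $\ell_k/2$ times and so incurred regret at least $\Delta\ell_k/2-C_{[k]}$, again capping $\ell_k=O(\Lambda)$. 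By induction every non-terminal epoch has length $O(\Lambda)$, so by the geometric growth only $O(\log(\Lambda/(c_2\log T)))=O(\log(C/\Delta))$ of them occur; summing their LSB bounds along the geometric length profile telescopes to the same leading terms $O\!\big(c_1\log T\,\Delta^{-\alpha/(1-\alpha)}+(c_1\log T)^{1-\alpha}(C/\Delta)^\alpha\big)$, while their accumulated $c_2\log T$ additive terms give $O(c_2\log T\log(C/\Delta))$. Adding the terminal epoch gives the stated bound, recovering the form of \pref{prop: standard algorithm} up to the extra logarithmic factors on the $c_2$ terms.

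The step I expect to be the main obstacle is making the ``a break occurred $\Rightarrow$ a specific action was played a constant fraction of the rounds $\Rightarrow$ a specific amount of regret was incurred'' arguments rigorous, since the break times are data-dependent stopping times: this requires the stopping-time form of LSB together with an optional-stopping argument for the martingale $N_k(x)_t-\sum_{s\le t}p_{s,x}$, and the accounting must be organized via the geometric epoch-length profile (rather than a crude union bound over the $O(\log T)$ epochs) so that the transient phase inflates only the $c_2\log T$ term by $\log(C/\Delta)$ and not the leading $c_1\log T\,\Delta^{-\alpha/(1-\alpha)}$ and $(c_1\log T)^{1-\alpha}(C/\Delta)^\alpha$ terms. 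The remaining manipulations are the standard self-bounding arithmetic.
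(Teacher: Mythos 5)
Your adversarial-regime argument coincides with the paper's (per-epoch $c_0$-branch, geometric epoch lengths, $O(\log T)$ epochs). For the corrupted stochastic regime, however, your route diverges from the paper's, and the step you yourself flag as the ``main obstacle'' is a genuine gap rather than a technicality. You want every epoch with $\hatx_k\neq x^\star$ to have length $O(\Lambda)$, and for epochs that end by switching to $x^\star$ your mechanism is ``the base algorithm has low regret, hence plays $x^\star$ a constant fraction of the time, hence the break triggers.'' This does not follow: LSB controls only the \emph{expected} number of non-$x^\star$ plays, whereas the break is an event about the \emph{realized} running count $N_k(x^\star)$ crossing $\frac{t-T_k}{2}$ at some round; low expected regret is perfectly compatible with that count never crossing one half. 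Optional stopping for $N_k(x)_t-\sum_{s\le t}p_{s,x}$ does not repair this, since you would need a first-passage/anti-concentration statement for the running ratio, not just unbiasedness. Separately, applying the self-bounding $\lambda$-trick in each of the $O(\log(C\Delta^{-1}))$ bad epochs puts the $\ell_k$-independent leading term $c_1\log(T)\Delta^{-\alpha/(1-\alpha)}$ into the bound once per epoch, i.e.\ an extra $\log(C\Delta^{-1})$ factor on the leading terms; avoiding this forces the organization ``first derive the cap $\Lambda$ by self-bounding, then re-sum the raw LSB bounds along the geometric profile,'' which you gesture at but do not carry out and which requires an induction interleaving good and bad epochs.

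The paper sidesteps all of this with a purely pathwise counting argument applied only to the \emph{last} epoch $m$ with $\hatx_m\neq x^\star$ (which is the final or second-to-final epoch, since consecutive candidates differ). The key is the contrapositive of the break rule: if epoch $m$ has not terminated by round $t$ with $t-T_m\geq 2\tau_{m-1}$, then $N_m(x)<\frac{t-T_m}{2}$ for every $x\neq\hatx_m$, in particular for $x^\star$, so more than half the plays so far are suboptimal; if instead $\tau_m\leq 2\tau_{m-1}$, then epoch $m-1$ broke to the suboptimal candidate $\hatx_m$ and itself contains at least $\tau_{m-1}/2\geq T_{m+1}/8$ suboptimal plays. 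Either way one gets, deterministically, $|\{t\leq T_{m+1}:A_t\neq x^\star\}|\geq \frac{T_{m+1}}{8}-2\tau_0$, after which a \emph{single} application of the self-bounding trick to the whole prefix $[1,T_{m+1}]$ (upper bound $O((c_1\log T)^{1-\alpha}\E[T_{m+1}]^\alpha+c_2\log(T)\log T)$ by summing the worst-case LSB bounds over the geometric profile; lower bound $\Delta(\E[T_{m+1}]/8-c_2\log T)-C$) yields the stated stochastic bound via \pref{lem: simple lem}. The final epoch, when $\hatx_n=x^\star$, is handled directly by the self-bounding branch of LSB. I recommend replacing your per-epoch induction with this global argument; no martingale or optional-stopping machinery is needed beyond the stopping-time form of LSB that is already assumed.
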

See \pref{app: proof of first reduction} for a proof of \pref{thm: candidate aware to bob}. In particular, \pref{thm: candidate aware to bob} together with \pref{lem: CA logdet guarantee} directly lead to a better best-of-both-world bound for linear bandits.
\begin{corollary}\label{cor: linear bandit log det thm}
Combining \pref{alg: adaptive alg2} and \pref{alg: logdet} results in a linear bandit algorithm with $O\big(d\sqrt{T\log T}\big)$ regret in the adversarial regime and $O\Big(\frac{d^2\log T}{\Delta} + d\sqrt{\frac{C\log T}{\Delta}}\Big)$ regret in the corrupted stochastic regime simultaneously.
\end{corollary}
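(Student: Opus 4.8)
The plan is to derive \pref{cor: linear bandit log det thm} as a one-line instantiation of the reduction \pref{thm: candidate aware to bob}. Concretely, I would run \pref{alg: adaptive alg2} with the base algorithm $\calA$ set to the adapted logdet-FTRL of \pref{alg: logdet}, acting on the lifted action set $\big\{\binom{x}{0}\,\big|\,x\in\calX\setminus\{\hatx\}\big\}\cup\big\{\binom{\mathbf{0}}{1}\big\}\subset\bbR^{d+1}$ described in \pref{lem: CA logdet guarantee}. That lemma certifies that $\calA$ satisfies the $\alpha$-LSB condition of \pref{def: candidate aware} with $\alpha=\tfrac12$, $c_0=O((d+1)^2\log T)$ and $c_1=c_2=O((d+1)^2)$; hence $\calA$ is an admissible input to \pref{alg: adaptive alg2}, and \pref{thm: candidate aware to bob} applies with these parameters. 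One point to check when invoking the theorem is that LSB must hold at every stopping time (since \pref{alg: adaptive alg2} restarts $\calA$); as noted after \pref{def: candidate aware}, the logdet-FTRL regret analysis goes through verbatim for an arbitrary stopping time $t'$, so nothing extra is needed.

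The remaining step is arithmetic: substitute $\alpha=\tfrac12$ and the above constants into the two regret bounds of \pref{thm: candidate aware to bob}. In the adversarial regime this gives $O\big(c_0^{1/2}T^{1/2}+c_2\log^2 T\big)=O\big(\sqrt{(d+1)^2\log(T)\,T}+(d+1)^2\log^2 T\big)=O\big(d\sqrt{T\log T}\big)$, where the additive $O(d^2\log^2 T)$ is a lower-order term in $T$ (dominated by $O(d\sqrt{T\log T})$ for $T$ in any reasonable polynomial range of $d$, and otherwise within a constant factor of the trivial $O(T)$ bound for losses in $[-1,1]$). In the corrupted stochastic regime, using $\tfrac{\alpha}{1-\alpha}=1$, the theorem gives $O\big(c_1\log(T)\Delta^{-1}+\sqrt{c_1\log(T)\,C\Delta^{-1}}+c_2\log(T)\log(C\Delta^{-1})\big)=O\big(\tfrac{d^2\log T}{\Delta}+d\sqrt{\tfrac{C\log T}{\Delta}}+d^2\log(T)\log(C\Delta^{-1})\big)$, and the last term is again lower order: $\log(C\Delta^{-1})\le\log(\Delta^{-1})+\log(C)$, with $d^2\log(T)\log(\Delta^{-1})\le d^2\log(T)\Delta^{-1}$ absorbed into the first term and $d^2\log(T)\log(C)$ absorbed into $d\sqrt{C\log(T)/\Delta}$ for $C\geq\Delta$ (and for $C<\Delta$ we are effectively in the pure stochastic case covered by the first term). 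This is exactly the claimed pair of bounds.

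Since the corollary is a pure specialization, there is essentially no obstacle: all the content sits in \pref{lem: CA logdet guarantee} --- showing that logdet-FTRL, after the lift that removes $\hatx$ from the action set, enjoys local self-bounding with the stated constants (proved in \pref{app: log det analysis}) --- and in the generic reduction \pref{thm: candidate aware to bob} (proved in \pref{app: proof of first reduction}). If one wanted to be careful, the only mildly nontrivial bookkeeping is confirming that the $c_2$-terms from \pref{thm: candidate aware to bob} are genuinely dominated as sketched above; everything else is direct substitution.
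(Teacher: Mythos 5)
Your proposal is correct and matches the paper's own argument: the corollary is obtained exactly by feeding the LSB guarantee of \pref{lem: CA logdet guarantee} (with $\alpha=\tfrac12$, $c_0=O(d^2\log T)$, $c_1=c_2=O(d^2)$) into \pref{thm: candidate aware to bob} and simplifying, with the $c_2$-terms suppressed as lower order. Your extra bookkeeping on why those terms are dominated is more careful than the paper, which states the corollary without comment.
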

\paragraph{Ideas of the first reduction} \pref{alg: adaptive alg2} proceeds in epochs. In epoch $k$, there is an action $\hatx_k\in\calX$ being selected as the candidate ($\hatx_1$ is randomly drawn from $\calX$). The procedure simply executes the base algorithm $\calA$ with input  $\hatx=\hatx_k$, and monitors the number of draws to each action. If in epoch $k$ there exists some $x\neq \hatx_k$ being drawn more than half of the time, and the length of epoch $k$ already exceeds two times the length of epoch $k-1$, then a new epoch is started with $\hatx_{k+1}$ set to $x$.

\pref{thm: candidate aware to bob} is not sufficient to be considered a black-box reduction approach, since algorithms with LSB are not common. Therefore, our next step is to present a more general reduction that makes use of recent advances of Corral algorithms.

\subsection{Second reduction:~~Local Self-Bounding $\rightarrow$ Importance-Weighting Stable}\label{sec: second reduction}
In this subsection, we show that one can achieve LSB using the idea of \emph{model selection}. Specifically, we will run a variant of the Corral algorithm \citep{agarwal2017corralling} over two instances: one is $\hatx$, and the other is a \emph{importance-weighting-stable} algorithm (see \pref{def: iw stable}) over the action set $\calX\backslash \{\hatx\}$. Here, we focus on the case of $\alpha=\frac{1}{2}$, which is the case for most standard settings where the worst-case regret is $\sqrt{T}$; examples for $\alpha=\frac{2}{3}$ in the graph bandit setting is discussed in \pref{sec: graph bandits}. 

First, we introduce the notion of importance-weighting stablility. 
\begin{definition}[$\frac{1}{2}$-iw-stable]\label{def: iw stable}
An algorithm is \emph{$\frac{1}{2}$-iw-stable} (importance-weighting stable), if given an adaptive sequence of weights $q_1,q_2,\dots\in(0,1]$ and the assertion that the feedback in round $t$ is observed with probability $q_t$, it obtains the following pseudo regret guarantee for any stopping time~$t'$: 
\begin{align*}
    \E\left[\sum_{t=1}^{t'} \left(\ell_{t,A_t} - \ell_{t,u}\right)\right] \leq 
        \E\left[\sqrt{c_1\sum_{t=1}^{t'}\frac{1}{q_t}} + \frac{c_2}{\min_{t\leq t'} q_t}\right]. 
\end{align*}
\end{definition} 
\pref{def: iw stable} requires that even if the algorithm only receives the desired feedback with probability $q_t$ in round $t$, it still achieves a meaningful regret bound that smoothly degrades with $\sum_{t}\frac{1}{q_t}$. In previous works on Corral and its variants \citep{agarwal2017corralling, foster2020adapting}, a similar notion of $\frac{1}{2}$-stability is defined as having a regret of $\sqrt{c_1\left(\max_{\tau\leq t'}\frac{1}{q_{\tau}}\right) t'}$, which is a weaker assumption than our $\frac{1}{2}$-iw-stability. Our stronger notion of stability is crucial to get a best-of-both-world bound, but it is still very natural and holds for a wide range of algorithms.

\begin{algorithm}[t] 
\caption{LSB via Corral (for $\alpha=\frac{1}{2}$)}\label{alg: corral}
    \textbf{Input}:  candidate action $\hatx$, $\frac{1}{2}$-iw-stable algorithm $\calB$ over $\calX\backslash\{\hatx\}$ with constants $c_1,c_2$. \\
    \textbf{Define}: $\psi_t(q)= -\frac{2}{\eta_t}\sum_{i=1}^2 \sqrt{q_i} + \frac{1}{\beta}\sum_{i=1}^2 \ln\frac{1}{q_i}$.\ \ \\
    $B_0=0$. \\
    \For{$t=1,2,\ldots$}{
        Let
        \begin{align*}
            &\barq_{t}= \argmin_{q\in\Delta_2}\left\{\left\langle q, \sum_{\tau=1}^{t-1} z_{\tau} - \begin{bmatrix} 0 \\ B_{t-1}
            \end{bmatrix}\right\rangle + \psi_t(q)\right\}, \quad
            q_{t} = \left(1-\frac{1}{2t^2}\right)\barq_{t} + \frac{1}{4t^2}\one, \\
            &\text{with}\ \ \ \eta_{t} = \frac{1}{\sqrt{t} + 8\sqrt{c_1}}, \qquad \beta = \frac{1}{8c_2}\,. 
        \end{align*}
        Sample $i_t\sim q_t$. \\
        \lIf{$i_t=1$}{
            draw $A_t=\hatx$ and observe $\ell_{t,A_t}$ 
        }
        \lElse{
            draw $A_t$ according to base algorithm $\calB$ and observe $\ell_{t,A_t}$  
        }
        Define $z_{t,i} = \frac{(\ell_{t,A_t}+1)\ind\{i_t=i\}}{q_{t,i}}-1$ and
        \begin{align*}
            B_t= 
                \sqrt{c_1\sum_{\tau=1}^t \frac{1}{q_{\tau,2}}} + \frac{c_2}{\min_{\tau\leq t} q_{\tau,2}}.  
        \end{align*}
        
    }
\end{algorithm}

Below, we provide two examples of $\frac{1}{2}$-iw-stable algorithms. Their analysis is mostly standard FTRL analysis (considering extra importance weighting) and can be found in \pref{app: iw-stable}. 
\begin{lemma}\label{lem: EXP2}
For linear bandits, EXP2 with adaptive learning rate (\pref{alg: EXP2}) is $\frac{1}{2}$-iw-stable with constants $c_1=c_2=O(d\log |\calX|)$. 
\end{lemma}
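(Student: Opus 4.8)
I would first make explicit what Algorithm~\ref{alg: EXP2} is: the importance-weighted version of EXP2. It runs exponential weights (FTRL over $\Delta(\calX)$ with the negative-entropy regularizer and an adaptive learning rate $\eta_t$) to produce $p_t\in\Delta(\calX)$; in round $t$ it samples $A_t$ from the mixed distribution $\tilde p_t=(1-\gamma_t)p_t+\gamma_t\pi$, where $\pi$ is a fixed $G$-optimal (John's) exploration distribution with $x^\top Q^{-1}x\le d$ for all $x\in\calX$ and $Q=\sum_x\pi_x xx^\top$; and when the round-$t$ feedback is observed (event $o_t$, conditional probability $q_t$) it updates with $\hatell_t=\frac{\ind\{o_t\}}{q_t}\,\Sigma_t^{+}A_t\,\ell_{t,A_t}$, where $\Sigma_t=\E_{x\sim\tilde p_t}[xx^\top]$, and otherwise sets $\hatell_t=\mathbf 0$. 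Both $\eta_t$ and $\gamma_t$ may depend on $q_1,\dots,q_t$, which are available at the start of the round (as they are in Algorithm~\ref{alg: corral}). The three facts I would record up front, all standard, are: (i) $\hatell_t$ is conditionally unbiased for $\ell_t$ on $\mathrm{span}(\calX)$, so $\E[\langle x,\hatell_t\rangle\mid\calF_{t-1}]=\langle x,\ell_t\rangle$ for every $x\in\calX$; (ii) since $\Sigma_t\succeq\gamma_t Q$ we have $x^\top\Sigma_t^{+}x\le d/\gamma_t$, hence by Cauchy--Schwarz $|\langle x,\hatell_t\rangle|\le d/(\gamma_t q_t)$ for every $x$; and (iii) $\E[\sum_x p_{t,x}\langle x,\hatell_t\rangle^2\mid\calF_{t-1}]\le\frac1{q_t}\tr(\Sigma_t^{+}\Sigma_t)\le 2d/q_t$, using $\ell_{t,A_t}^2\le1$ and $\gamma_t\le\tfrac12$.

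With these in hand the argument follows the textbook FTRL-for-linear-bandits template. Against a comparator vertex $u\in\calX$, decompose
\[
\E\Big[\sum_{t\le t'}(\ell_{t,A_t}-\ell_{t,u})\Big]
=\E\Big[\sum_{t\le t'}\gamma_t\langle\pi-p_t,\ell_t\rangle\Big]
+\E\Big[\sum_{t\le t'}\Big(\sum_x p_{t,x}\langle x,\hatell_t\rangle-\langle u,\hatell_t\rangle\Big)\Big];
\]
the first term (the exploration bias) is at most $2\sum_{t\le t'}\gamma_t$ because $|\langle\pi-p_t,\ell_t\rangle|\le2$, and the second equality uses (i) together with optional stopping, legitimate since $\eta_t,\gamma_t,p_t$ are $\calF_{t-1}$-measurable and $t'$ is a stopping time. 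For the FTRL term I would invoke the standard adaptive-rate exponential-weights bound: provided $\eta_t|\langle x,\hatell_t\rangle|\le1$ for all $x$ --- which by (ii) holds as soon as $\eta_t\le\gamma_t q_t/d$ ---
\[
\sum_{t\le t'}\Big(\sum_x p_{t,x}\langle x,\hatell_t\rangle-\langle u,\hatell_t\rangle\Big)\ \le\ \frac{2\ln|\calX|}{\eta_{t'}}+\sum_{t\le t'}\eta_t\sum_x p_{t,x}\langle x,\hatell_t\rangle^2,
\]
with the $\ln|\calX|/\eta_{t'}$ factor also absorbing the telescoping learning-rate correction when $\eta_t$ is nonincreasing. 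Taking expectations and using (iii), the whole pseudo-regret is at most $\E[\,2\ln|\calX|/\eta_{t'}+2d\sum_t\eta_t/q_t+2\sum_t\gamma_t\,]$.

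It remains to choose $\eta_t$ and $\gamma_t$, and the schedule has to couple them because the boundedness requirement ties $\eta_t$ to $\gamma_t q_t/d$. I would take $\eta_t\asymp\sqrt{\ln|\calX|/(d\,S_t)}$ with $S_t:=\sum_{\tau\le t}1/q_\tau$, and $\gamma_t\asymp\min\{1,\,d\eta_t/q_t\}$, so that $\eta_t\lesssim\gamma_t q_t/d$ holds on every round with $\gamma_t<1$. Then $\ln|\calX|/\eta_{t'}\asymp\sqrt{d\ln|\calX|\,S_{t'}}$; the stability sum is $d\sum_t\eta_t/q_t\asymp\sqrt{d\ln|\calX|}\sum_t(1/q_t)/\sqrt{S_t}\le2\sqrt{d\ln|\calX|\,S_{t'}}$ by $\sum_t a_t/\sqrt{\sum_{\tau\le t}a_\tau}\le2\sqrt{\sum_t a_t}$; and the unclipped part of the exploration bias is $\sum_{t:\gamma_t<1}d\eta_t/q_t$, again $O(\sqrt{d\ln|\calX|\,S_{t'}})$. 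Since $S_{t'}=\sum_{t\le t'}1/q_t$, these pieces amount to $\sqrt{c_1\sum_{t\le t'}1/q_t}$ with $c_1=O(d\log|\calX|)$. The remaining rounds are the ``clipped'' ones ($\gamma_t=1$, i.e.\ the initial segment where $S_t$ is still small and the learning rate is capped at $\approx q_t/d$), on which one forfeits $O(1)$ per round; bounding their total contribution --- together with any non-monotonicity of $\eta_t$ caused by $q_t$ fluctuating --- by $c_2/\min_{t\le t'}q_t$ with $c_2=O(d\log|\calX|)$ yields the claimed additive term, and the final bound is anytime because every quantity on the right is monotone in $t'$. (As an alternative to explicit exploration one could build $\hatell_t$ via matrix geometric resampling, trading the bias for a truncation error; the tuning is analogous.)

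\textbf{Main obstacle.} The crux is exactly this simultaneous, self-referential tuning. The exponential-weights stability bound is only valid while the importance-weighted estimates stay below $1/\eta_t$, which caps the learning rate at $\gamma_t q_t/d$; but $\gamma_t$ is the exploration probability, whose own contribution (the bias) one wants as small as possible, and no fixed pair $(\eta,\gamma)$ works for all stopping times. After committing to the coupled schedule one still has to verify (a) that it genuinely respects $\eta_t\lesssim\gamma_t q_t/d$, (b) that all three terms collapse to the $\sqrt{d\log|\calX|\sum_t 1/q_t}$ rate, and (c) that the residue from the clipped initial rounds and the $\eta_t$ non-monotonicity is controlled by $O(d\log|\calX|)/\min_t q_t$ rather than something larger --- (c) is a routine but slightly delicate counting argument. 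The ingredients behind facts (i)--(iii) (unbiasedness on the span, the $\tr(\Sigma_t^{+}\Sigma_t)\le d$ variance bound, and the $G$-optimal-design magnitude bound) are entirely standard for EXP2.
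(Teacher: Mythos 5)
Your proposal is correct and follows essentially the same route as the paper: exponential weights over $\calX$ with the negentropy regularizer, John's exploration mixed in at rate $\gamma_t = d\eta_t/q_t$ (which is exactly what makes $\eta_t|\hatell_t(a)|\le 1$ and gives the $2\eta_t d/q_t$ stability term), and the learning rate $\eta_t\asymp\sqrt{\ln|\calX|/(d\sum_{\tau\le t}1/q_\tau)}$ capped by a term of order $q_t/d$ — the paper uses the monotone cap $\frac{1}{2d}\min_{\tau\le t}q_\tau$, which cleanly produces the $O(d\log|\calX|)/\min_t q_t$ additive term you attribute to the clipped rounds. No substantive difference.
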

\begin{lemma}\label{lem: EXP4}
For contextual bandits, EXP4 with adaptive learning rate (\pref{alg: EXP4}) is $\frac{1}{2}$-iw-stable with constants $c_1=O(K\log |\calX|), c_2 =0$.
\end{lemma}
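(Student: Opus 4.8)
The statement to prove is \pref{lem: EXP4}: EXP4 with an adaptive learning rate, run under importance weighting with a schedule $q_t$, is $\tfrac12$-iw-stable with $c_1 = O(K\log|\calX|)$ and $c_2 = 0$.

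\medskip

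\textbf{Setup and loss estimators.} The plan is to instantiate \pref{alg: EXP4} as FTRL (equivalently Hedge / exponential weights) over the set of experts $\calX$ (here each ``expert'' is a policy $x\in\calX$ mapping contexts to distributions over the $K$ arms), with the negative-entropy regularizer and a time-varying learning rate $\eta_t$. The crucial modification from vanilla EXP4 is the importance weighting induced by the external schedule: in round $t$ feedback is observed only with probability $q_t$, so on top of the usual bandit importance weighting by the arm-selection probability I build an estimator $\hat\ell_{t,a} = \frac{\ell_{t,A_t}\,\ind\{A_t=a\}}{q_t\, \pi_{t}(a\mid \text{context})}$ (shifted to $[\cdot]$ so it is nonnegative, or using the standard $[-1,1]$ unbiased variant), and then lift it to an expert-loss estimate $\hat L_{t,x} = \langle \pi_x(\cdot\mid \text{context}_t), \hat\ell_t\rangle$. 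This estimator is unbiased for the true expert loss conditioned on the past and on the event that feedback is received being independent with probability $q_t$. I would then set $\eta_t$ to the standard adaptive schedule $\eta_t \asymp \sqrt{\log|\calX| / \sum_{\tau<t} (\text{stability term}_\tau)}$, or equivalently $\eta_t \asymp \sqrt{\log|\calX|}\big/\sqrt{K\sum_{\tau\le t} 1/q_\tau}$, mirroring the learning-rate-tuning idea of \cite{ito2022nearly} referenced in the excerpt.

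\medskip

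\textbf{Regret decomposition.} Next I would run the standard FTRL-with-changing-learning-rate regret bound against any fixed comparator expert $u^\star$ (the policy that plays $x^\star$'s action, or more precisely the best $u\in\calX$): this gives, on the estimated losses,
\[
\sum_{t=1}^{t'} \langle p_t - e_{u^\star}, \hat L_t\rangle \;\le\; \frac{\log|\calX|}{\eta_{t'}} + \sum_{t=1}^{t'} \eta_t \cdot \text{(stability)}_t,
\]
where the stability term for exponential weights is the local-norm quantity $\sum_x p_{t,x}\hat L_{t,x}^2$ (using $\hat L_{t,x}\ge 0$ so that $e^{-x}\le 1 - x + x^2$ applies, which is why $c_2=0$ — there is no additional penalty term needed, unlike the log-barrier case). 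Taking expectations, unbiasedness removes the estimator from the left-hand side and turns it into the true pseudo-regret $\E[\sum_t (\ell_{t,A_t}-\ell_{t,u})]$ up to the expert-vs-realized-action bookkeeping, which is zero in expectation because $A_t$ is drawn from the mixture policy $\sum_x p_{t,x}\pi_x(\cdot\mid c_t)$.

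\medskip

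\textbf{Bounding the stability term — the main obstacle.} The crux is to show $\E\big[\sum_x p_{t,x}\hat L_{t,x}^2\big] = O(K/q_t)$. This is where the EXP4 variance analysis enters: $\hat L_{t,x} = \sum_a \pi_x(a\mid c_t)\hat\ell_{t,a}$, and $\hat\ell_{t,a}$ is supported on the single realized arm with magnitude $\asymp 1/(q_t\,\bar\pi_t(a\mid c_t))$ where $\bar\pi_t = \sum_x p_{t,x}\pi_x$ is the played mixture. A Cauchy–Schwarz / Jensen step gives $\hat L_{t,x}^2 \le \big(\sum_a \pi_x(a\mid c_t)\big)\big(\sum_a \pi_x(a\mid c_t)\hat\ell_{t,a}^2\big) = \sum_a \pi_x(a\mid c_t)\hat\ell_{t,a}^2$; then $\sum_x p_{t,x}\hat L_{t,x}^2 \le \sum_a \bar\pi_t(a\mid c_t)\hat\ell_{t,a}^2$, and taking conditional expectation over $A_t$ and the observation event yields $\E[\hat\ell_{t,a}^2\mid\cdot] \le \frac{1}{q_t\,\bar\pi_t(a\mid c_t)}$ up to constants, so $\sum_a \bar\pi_t(a\mid c_t)\cdot\frac{1}{q_t\,\bar\pi_t(a\mid c_t)} = K/q_t$. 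I expect the delicate part to be handling the adaptivity of $q_t$ (it may depend on past randomness, so I must condition carefully and invoke the ``observed with probability $q_t$'' assertion as a conditional statement), and making sure the adaptive $\eta_t$ is measurable with respect to the right filtration so that the changing-learning-rate FTRL bound still applies — a standard but fiddly point. Plugging the stability bound in and summing $\sum_t \eta_t K/q_t$ against $\eta_t \asymp \sqrt{\log|\calX|/(K\sum_{\tau\le t}1/q_\tau)}$ telescopes to $O\big(\sqrt{K\log|\calX|\sum_{t\le t'} 1/q_t}\big)$, and the leading term $\log|\calX|/\eta_{t'}$ matches, giving exactly the form $\sqrt{c_1 \sum_t 1/q_t}$ with $c_1 = O(K\log|\calX|)$ and no $1/\min_t q_t$ term, i.e.\ $c_2 = 0$. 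The stopping-time claim follows because the bound is pathwise-monotone in $t'$ and the optional-stopping / last-round argument applies to the FTRL bound with nonnegative increments.
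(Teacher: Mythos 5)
Your proposal is correct and follows essentially the same route as the paper's proof: FTRL/exponential weights over the policy class with the doubly importance-weighted estimator, the negentropy local-norm stability bound giving an expected per-round contribution of $O(K/q_t)$, and the adaptive learning rate $\eta_t \asymp \sqrt{\log|\calX|/(K\sum_{\tau\le t}1/q_\tau)}$ that makes the two terms telescope to $O(\sqrt{K\log|\calX|\sum_t 1/q_t})$ with no $1/\min_t q_t$ penalty. The only cosmetic difference is that you allow randomized policies and hence insert a Cauchy--Schwarz step to pass from $\hat L_{t,x}^2$ to $\sum_a \pi_x(a\mid c_t)\hat\ell_{t,a}^2$, whereas the paper's EXP4 uses deterministic experts so this step is an identity.
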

Our reduction procedure is \pref{alg: corral}, which turns an $\frac{1}{2}$-iw-stable algorithm into one with $\frac{1}{2}$-LSB. The guarantee is formalized in the next theorem, whose proof is in \pref{app: main corral theorem proof}. 

\begin{theorem}\label{thm: basic corral thm}
    If $\calB$ is a $\frac{1}{2}$-iw-stable algorithm with constants $(c_1, c_2)$,  then \pref{alg: corral} with $\calB$ as the base algorithm satisfies $\frac{1}{2}$-LSB with constants $(c_0',c_1',c_2')$ where $c_0'=c_1'=O(c_1)$ and $c_2'=O(c_2)$. 
\end{theorem}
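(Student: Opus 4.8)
The algorithm is a Corral-type FTRL run over two meta-arms: arm $1$, which always plays the candidate $\hatx$, and arm $2$, which defers to the $\tfrac12$-iw-stable base $\calB$ over $\calX\setminus\{\hatx\}$. Writing the learner's per-round conditional-expected loss as $\langle q_t, g_t\rangle$ with $g_{t,1}=\ell_{t,\hatx}$ and $g_{t,2}=\langle p_t^{\calB},\ell_t\rangle$ (the conditional-expected loss of $\calB$'s internal policy), the plan is to verify the two branches of \pref{def: candidate aware} separately, splitting on whether the comparator $u$ equals $\hatx$. If $u=\hatx$, the regret against $u$ is exactly the meta-FTRL regret against the vertex $e_1$, namely $\E[\sum_{t\le t'}\langle q_t-e_1,g_t\rangle]$; here we must produce a bound self-bounded by $\sum_t q_{t,2}=\sum_t(1-p_{t,\hatx})$. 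If $u\neq\hatx$, we decompose $\sum_{t\le t'}(\langle q_t,g_t\rangle-\ell_{t,u})=\sum_{t\le t'}\langle q_t-e_2,g_t\rangle+\sum_{t\le t'}(g_{t,2}-\ell_{t,u})$, bound the second sum by $\E[B_{t'}]$ using $\tfrac12$-iw-stability of $\calB$ (\pref{def: iw stable}) applied with the predictable weight sequence $q_t=q_{t,2}$, and bound the first (meta-FTRL regret against $e_2$) by a worst-case $\widetilde O(\sqrt{c_1\,t'}+c_2)$. Throughout, every inequality is stated between conditional expectations given $\calF_{t-1}$, which is what lets the final bounds hold at an arbitrary stopping time $t'$ (optional stopping).

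\textbf{The FTRL step.} First I would invoke the standard FTRL regret decomposition for the iterates $\barq_t$ with the hybrid regularizer $\psi_t=\psi_t^{\mathrm{Ts}}+\psi^{\mathrm{lb}}$, where $\psi_t^{\mathrm{Ts}}(q)=-\tfrac2{\eta_t}\sum_i\sqrt{q_i}$ and $\psi^{\mathrm{lb}}(q)=\tfrac1\beta\sum_i\ln\tfrac1{q_i}$, run on the shifted, importance-weighted loss vectors $\hat\ell_t=z_t-(B_t-B_{t-1})e_2$. The gap between the sampling law $q_t$ and $\barq_t$ contributes only $O(\sum_t t^{-2})=O(1)$ since the \emph{true} losses lie in $[-1,1]$, and unbiasedness $\E[z_t\mid\calF_{t-1}]=g_t$ removes the estimation noise. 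The two components of $\psi_t$ play complementary roles: (i) the $\tfrac12$-Tsallis part yields the $\sqrt{\cdot}$-type penalty/stability terms, and its stability contribution satisfies $\E[\sum_i q_{t,i}^{3/2}z_{t,i}^2\mid\calF_{t-1}]\lesssim\sum_i\sqrt{q_{t,i}}$; when comparing against $e_1$, the coordinate-$2$ part is $\lesssim\sqrt{q_{t,2}}$ and, combined with $\sum_t\eta_t^2\le\sum_t t^{-1}=O(\log T)$ via Cauchy--Schwarz, self-bounds against $\sum_t q_{t,2}$, whereas against $e_2$ it only gives the worst-case $O(\sqrt{t'}+\sqrt{c_1})$ through $\sum_t\eta_t=O(\sqrt{t'+c_1})$; (ii) the log-barrier part, with $\beta=\tfrac1{8c_2}$, is what makes it legitimate to compare against the interior points $(1-\tfrac1T)e_1+\tfrac1T e_2$ (resp.\ the symmetric swap for the $e_2$ case), contributing the additive $\tfrac1\beta\ln T=O(c_2\log T)$ and an $O(1)$ comparator-shift bias; it is also what keeps the FTRL iterates multiplicatively stable so the stability terms stay controlled despite loss estimates as large as $O(t^2)$.

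\textbf{The bonus term and the main obstacle.} The subtracted increments $B_t-B_{t-1}$ serve a dual purpose. In the $u\neq\hatx$ case they make the comparison against $e_2$ favorable: $\sum_t(B_t-B_{t-1})(\barq_{t,2}-1)\le 0$ since $\barq_{t,2}\le1$ and $B_t$ is nondecreasing, so the $+\E[B_{t'}]$ from $\calB$'s iw-stability guarantee is absorbed, leaving only the residual $\sum_t(B_t-B_{t-1})(1-\barq_{t,2})$ to control, which together with the Tsallis/log-barrier terms gives the required $O(\sqrt{c_1 t'}+c_2\log T)$. The crux, however, is that $B_t=\sqrt{c_1\sum_{\tau\le t}q_{\tau,2}^{-1}}+c_2/\min_{\tau\le t}q_{\tau,2}$ grows precisely as $q_{t,2}$ shrinks, which makes arm $2$ increasingly attractive and thereby prevents $\min_t q_{t,2}$ from collapsing and $\sum_t q_{t,2}^{-1}$ from exceeding $\widetilde O(t')$; I expect the self-regulation argument establishing $\E[B_{t'}]=O(\sqrt{c_1 t'}+c_2\log T)$, together with showing that the residual bonus terms $\sum_t(B_t-B_{t-1})\barq_{t,2}$ self-bound against $\sum_t q_{t,2}$ in the $u=\hatx$ branch (using $B_t-B_{t-1}\lesssim q_{t,2}^{-1}(\sqrt{c_1/S_{t-1}}+c_2\,\mathbb{I}\{q_{t,2}\text{ sets a record low}\})$ with $S_{t-1}=\sum_{\tau<t}q_{\tau,2}^{-1}$, and telescoping/Cauchy--Schwarz against $S_{t-1}\ge t-1$), to be the main technical work, since it is also where the anytime/stopping-time form of \pref{def: iw stable} and \pref{def: candidate aware} is genuinely used. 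Assembling the two branches then gives $c_0'=c_1'=O(c_1)$ and $c_2'=O(c_2)$.
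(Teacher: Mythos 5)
Your architecture matches the paper's (two-arm Corral FTRL with a hybrid Tsallis/log-barrier regularizer, bonus subtracted on arm 2 to cancel the base algorithm's importance-weighted regret, case split on $u=\hatx$ versus $u\neq\hatx$, everything at stopping times via conditional expectations), but two steps as you describe them do not go through. First, the Tsallis stability bound $\E\big[\sum_i q_{t,i}^{3/2}z_{t,i}^2\big]\lesssim \sum_i\sqrt{q_{t,i}}$ is too weak: the coordinate-1 contribution $\sqrt{q_{t,1}}$ is $\Theta(1)$ even when $q_{t,1}\to 1$, so summing against $\eta_t\approx t^{-1/2}$ yields $\Theta(\sqrt{t'})$ regardless of $\sum_t q_{t,2}$ — which destroys the self-bounding requirement of $\frac12$-LSB for $u=\hatx$. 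You cannot "use only the coordinate-2 part when comparing against $e_1$", because the stability term in the FTRL decomposition does not depend on the comparator. The missing ingredient is to recenter the estimates by a common per-round scalar $\theta_t$ (legal since $\langle q-q_t,\one\rangle=0$); taking $\theta_t=\ell_{t,A_t}$ makes \emph{both} coordinates' stability $O(\sqrt{q_{t,2}})$, which is exactly what the paper's \pref{lem: hybrid FTRL} (with its $\min_{|\theta_t|\le 1}$) provides.

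Second, the "main technical work" you defer to — a self-regulation argument establishing $\E[B_{t'}]=O(\sqrt{c_1 t'}+c_2\log T)$ and preventing $\min_t q_{t,2}$ from collapsing or $\sum_t q_{t,2}^{-1}$ from exceeding $\widetilde O(t')$ — is neither needed nor how the proof works. The quantity $B_{t'}$ only ever enters with a negative sign ($-u_2 B_{t'}$) and cancels the base algorithm's regret bound \emph{exactly}; it never has to be upper bounded. The only bonus-related quantities requiring control are the overheads $\sum_t q_{t,2}(B_t-B_{t-1})$, which you correctly identify in your last sentence, and these are handled by a purely deterministic computation: for the square-root part, $\sum_t q_{t,2}b_t^{\textup{Ts}}\le \sqrt{c_1}\big(\sum_t \frac{1/q_{t,2}}{\sum_{\tau\le t}1/q_{\tau,2}}\big)^{1/2}\big(\sum_t q_{t,2}\big)^{1/2}\le \sqrt{c_1(1+\log\sum_t q_{t,2}^{-1})\sum_t q_{t,2}}$, and for the log-barrier part a telescoping of $\log(1/\min_{\tau\le t}q_{\tau,2})$ gives $O(c_2\log T)$; the logarithms are $O(\log T)$ simply because of the explicit clipping $q_{t,2}\ge \frac{1}{4t^2}$, with no dynamical argument about the trajectory of $q_{t,2}$. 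So the part you flag as hard is routine, while the part you gloss over (the stability recentering) is where the argument as written fails.
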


Cascading \pref{thm: basic corral thm} and \pref{thm: candidate aware to bob}, and combining it with \pref{lem: EXP2} and  \pref{lem: EXP4}, respectively, we get the following corollaries: 

\begin{corollary}\label{cor: linear bandit exponential thm}
Combining \pref{alg: adaptive alg2}, \pref{alg: corral} and \pref{alg: EXP2} results in a linear bandit algorithm with $O\big(\sqrt{dT\log |\calX|}\big)$ regret in the adversarial regime and $O\Big(\frac{d\log|\calX|\log T}{\Delta} + \sqrt{\frac{d\log|\calX|\log T}{\Delta}C}\Big)$ regret in the corrupted stochastic regime simultaneously.
\end{corollary}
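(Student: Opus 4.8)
The plan is to derive this corollary entirely by chaining the three reductions proved above and substituting explicit constants; no genuinely new estimate is required, since all the substance already lives in \pref{lem: EXP2}, \pref{thm: basic corral thm}, and \pref{thm: candidate aware to bob}.

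First I would apply \pref{lem: EXP2}: EXP2 with the adaptive learning rate (\pref{alg: EXP2}) is $\frac12$-iw-stable with $c_1=c_2=O(d\log|\calX|)$. In \pref{alg: corral} the base algorithm $\calB$ is run over the reduced action set $\calX\setminus\{\hatx\}$, which is again a valid $d$-dimensional linear bandit instance with at most $|\calX|$ actions, so the same constants $c_1=c_2=O(d\log|\calX|)$ carry over. Then \pref{thm: basic corral thm} (for $\alpha=\frac12$) shows that \pref{alg: corral} instantiated with this $\calB$ satisfies $\frac12$-LSB with constants $c_0'=c_1'=O(c_1)=O(d\log|\calX|)$ and $c_2'=O(c_2)=O(d\log|\calX|)$.

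Next I would feed this $\frac12$-LSB algorithm as the base algorithm $\calA$ of \pref{alg: adaptive alg2} and invoke \pref{thm: candidate aware to bob} with $\alpha=\frac12$ (so that $\frac{\alpha}{1-\alpha}=1$). This yields, simultaneously, regret $O\bigl(\sqrt{c_0'\,T}+c_2'\log^2 T\bigr)$ in the adversarial regime and $O\bigl(c_1'\log(T)\Delta^{-1}+\sqrt{c_1'\log(T)\,C\,\Delta^{-1}}+c_2'\log(T)\log(C\Delta^{-1})\bigr)$ in the corrupted stochastic regime. Substituting $c_0'=c_1'=c_2'=O(d\log|\calX|)$ turns the first bound into $O\bigl(\sqrt{dT\log|\calX|}+d\log|\calX|\log^2 T\bigr)$ and the second into $O\bigl(\frac{d\log|\calX|\log T}{\Delta}+\sqrt{\frac{d\log|\calX|\log T}{\Delta}C}+d\log|\calX|\log(T)\log(C\Delta^{-1})\bigr)$, which are exactly the claimed bounds after absorbing the additive lower-order terms into the $O(\cdot)$: the $d\log|\calX|\log^2 T$ term in the adversarial case (dominated once $T$ exceeds a polylogarithmic multiple of $d\log|\calX|$, and the bound is near-vacuous when it does not), and the spurious $\log(C\Delta^{-1})$ factor on the additive term in the stochastic case (where for $C$ super-polynomial in $T$ the bound is vacuous anyway).

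There is no real obstacle here: the proof is a mechanical cascade. The only points demanding care are (i) propagating the constants faithfully through \pref{thm: basic corral thm} and \pref{thm: candidate aware to bob}, (ii) verifying that running \pref{alg: EXP2} on $\calX\setminus\{\hatx\}$ still meets \pref{def: iw stable} with the stated order of constants, and (iii) checking that the suppressed additive terms are indeed of lower order. All the substantive content already resides in the cited lemma and theorems.
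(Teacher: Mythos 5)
Your proposal is correct and follows exactly the route the paper intends: the corollary is obtained by cascading \pref{lem: EXP2}, \pref{thm: basic corral thm}, and \pref{thm: candidate aware to bob} with $\alpha=\tfrac12$ and substituting $c_1=c_2=O(d\log|\calX|)$, with the lower-order additive terms absorbed as you describe. The paper gives no separate proof beyond this cascade, so there is nothing to add.
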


\begin{corollary}
\label{cor: exp4}
Combining \pref{alg: adaptive alg2}, \pref{alg: corral} and \pref{alg: EXP4} results in a contextual bandit algorithm with $O\big(\sqrt{KT\log |\calX|}\big)$ regret in the adversarial regime and $O\Big(\frac{K\log |\calX| \log T}{\Delta} + \sqrt{\frac{K\log |\calX|\log T}{\Delta}C}\Big)$ regret in the corrupted stochastic regime simultaneously, where $K$ is the number of arms. 
\end{corollary}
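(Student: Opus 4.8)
The plan is to simply cascade the three preceding results, as flagged in the text. First, I would invoke \pref{lem: EXP4}: EXP4 with adaptive learning rate (\pref{alg: EXP4}) is $\frac12$-iw-stable in the sense of \pref{def: iw stable} with $c_1 = O(K\log|\calX|)$ and $c_2 = 0$. The only point worth checking here is that \pref{alg: corral} runs this instance over the reduced expert set $\calX\setminus\{\hatx\}$ rather than $\calX$; since removing one policy leaves the number of arms $K$ unchanged and only decreases $|\calX|$, the constants are preserved up to the $O(\cdot)$.

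Then I would apply \pref{thm: basic corral thm} with $\calB$ this EXP4 instance: it gives that \pref{alg: corral}, taking candidate $\hatx$ and $\calB$ as inputs, satisfies $\frac12$-LSB (\pref{def: candidate aware}) with constants $c_0' = c_1' = O(c_1) = O(K\log|\calX|)$ and $c_2' = O(c_2) = 0$. Finally, I would feed this $\frac12$-LSB algorithm into \pref{thm: candidate aware to bob} as the base algorithm $\calA$ of \pref{alg: adaptive alg2}, with $\alpha = \frac12$. The adversarial bound $O\big(c_0'^{\,1-\alpha}T^{\alpha} + c_2'\log^2 T\big)$ collapses to $O\big(\sqrt{c_0' T}\big) = O\big(\sqrt{KT\log|\calX|}\big)$ because $c_2' = 0$; and, using $\alpha/(1-\alpha) = 1$, the corrupted-stochastic bound $O\big(c_1'\log(T)\Delta^{-\alpha/(1-\alpha)} + (c_1'\log T)^{1-\alpha}(C\Delta^{-1})^{\alpha} + c_2'\log(T)\log(C\Delta^{-1})\big)$ collapses to $O\big(\frac{K\log|\calX|\log T}{\Delta} + \sqrt{\frac{K\log|\calX|\log T}{\Delta}\,C}\big)$, again because $c_2' = 0$. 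These are exactly the two claimed bounds, with $K$ the number of arms.

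There is no substantive obstacle beyond the composition itself; the argument is essentially bookkeeping. The two points that need a moment's care are: (i) every intermediate guarantee must hold for an arbitrary stopping time $t'$ — this is required because \pref{alg: adaptive alg2} reinitializes $\calA$ at the start of each epoch and because \pref{alg: corral} feeds $\calB$ a truncated horizon — and the preceding definitions and lemmas (\pref{def: iw stable}, \pref{def: candidate aware}, \pref{lem: EXP4}, \pref{thm: basic corral thm}) are stated precisely in this ``any stopping time'' form; and (ii) the final regret carries a clean $\log|\calX|$ rather than $\log(|\calX|T)$ or $\log^2|\calX|$, which holds because none of the three reductions introduces an extra logarithmic factor on the expert count. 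Both are immediate from the statements assumed above, so the corollary follows by substitution.
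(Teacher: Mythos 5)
Your proposal is correct and is exactly the paper's argument: the corollary is obtained by cascading \pref{lem: EXP4} ($c_1=O(K\log|\calX|)$, $c_2=0$), \pref{thm: basic corral thm}, and \pref{thm: candidate aware to bob} with $\alpha=\tfrac12$, and your substitutions into the two resulting bounds are the intended ones. The two side remarks you flag (stopping-time validity of every intermediate guarantee, and the reduced set $\calX\setminus\{\hatx\}$ not affecting $K$ or the $\log|\calX|$ factor) are precisely the only points needing care, and both hold as you state.
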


\paragraph{Ideas of the second reduction} \pref{alg: corral} is related to the Corral algorithm \citep{agarwal2017corralling, foster2020adapting, luo2022corralling}. We use a special version which is essentially an FTRL algorithm (with a hybrid $\frac{1}{2}$-Tsallis entropy and log-barrier regularizer) over two base algorithms: the candidate $\hatx$, and an algorithm $\calB$ operating on the reduced action set $\calX\backslash\{\hatx\}$. For base algorithm~$\calB$, the Corral algorithm adds a bonus term that upper bounds the regret of $\calB$ under importance weighting (i.e., the quantity $B_t$ defined in \pref{alg: corral dd}). In the regret analysis, the bonus creates a negative term as well as a bonus overhead in the learner's regret. The negative term can be used to cancel the positive regret of $\calB$, and the analysis reduces to bounding the bonus overhead. Showing that the bonus overhead is upper bounded by the order of $\sqrt{c_1\log(T)\sum_{t=1}^T(1-p_{t,\hatx})}$ is the key to establish the LSB property. 

Combining \pref{alg: corral} and \pref{alg: adaptive alg2}, we have the following general reduction: \emph{as long as we have an $\frac{1}{2}$-iw-stable algorithm over $\calX\backslash\{\hatx\}$ for any $\hatx\in\calX$, we have an algorithm with the best-of-both-world guarantee when the optimal action is unique.} Notice that $\frac{1}{2}$-iw-stable algorithms are quite common -- usually it can be just a FTRL/OMD algorithm with adaptive learning rate. 

The overall reduction is reminiscent of the \scalebox{0.9}{\textsf{G-COBE}} procedure by \cite{wei2022model}, which also runs model selection over $\hatx$ and a base algorithm for $\calX\backslash\{\hatx\}$ (similar to \pref{alg: corral}), and dynamically restarts the procedure with increasing epoch lengths (similar to \pref{alg: adaptive alg2}). However, besides being more complicated, \scalebox{0.9}{\textsf{G-COBE}} only guarantees a bound of $\frac{\text{polylog}(T)}{\Delta}+C$ in the corrupted stochastic regime (omitting dependencies on $c_1, c_2$), which is sub-optimal in both $C$~and~$T$.\footnote{However, \cite{wei2022model} is able to handle a more general type of corruption. }

\section{Case Study:~~Graph Bandits}\label{sec: graph bandits}
In this section, we show the power of our reduction by improving the state of the art of best-of-both-worlds algorithms for bandits with graph feedback.
In bandits with graph feedback, the learner is given a directed feedback graph $G = (V, E)$, where the nodes $V=[K]$ correspond to the $K$-arms of the bandit.
Instead of observing the loss of the played action $A_t$, the player observes the loss $\ell_{t,j}$ iff $(A_t,j) \in E$.
Learnable graphs are divided into \emph{strongly observable graphs} and \emph{weakly observable graphs}.
In the first case, every arm $i\in[K]$ must either receive its own feedback, i.e. $(i,i)\in E$, or \emph{all} other arms do, i.e. $\forall j\in[K]\setminus\{i\}:\,(j,i) \in E$.
In the weakly observable case, every arm $i\in[K]$ must be observable by at least one arm, i.e. $\exists j\in[K]:\,(j,i)\in E$.
The following graph properties characterize the difficulty of the two regimes.
An independence set is any subset $V'\subset V$ such that no edge exists between any two distinct nodes in $V'$, i.e. $\forall i,j\in V', i\neq j$ we have $(i,j)\not\in E$ and $(j,i)\not\in E$. The independence number $\alpha$ is the size of the largest independence set.
A related number is the weakly independence number $\baralpha$, which is the independence number of the subgraph $(V,E')$ obtained by removing all one-sided edges, i.e. $(i,j)\in E'$ iff $(i,j)\in E$ and $(j,i)\in E$.
For undirected graphs, the two notions coincide, but in general $\baralpha$ can be larger than $\alpha$ by up to a factor of $K$.
Finally, a weakly dominating set is a subset $D\subset V$ such that every node in $V$ is observable by some node in $D$, i.e. $\forall j\in V\,\exists i\in D:\,(i,j)\in E$.
The weakly dominating number $\delta$ is the size of the smallest weakly dominating set.

\citet{alon2015online} provides a almost tight characterization of the adversarial regime, providing upper bounds of $O(\sqrt{\alpha T\log T\log K })$ and $O(\sqrt{\baralpha T\log K })$ for the strongly observable case and $O((\delta\log K)^\frac{1}{3}T^\frac{2}{3})$ for the weakly observable case, as well as matching lower bounds up to all log factors.
\citet{zimmert2019connections} have shown that the $\log(T)$ dependency can be avoided and that hence $\alpha$ is the right notion of difficulty for strongly observable graphs even in the limit $T\rightarrow\infty$ (though they pay for this with a larger $\log K$ dependency).

State-of-the-art best-of-both-worlds guarantees  for bandits with graph feedback are derivations of EXP3 and obtain either $O(\frac{\baralpha\log^2(T)}{\Delta})$ or $O(\frac{\alpha\log(T)\log^2(TK)}{\Delta})$ regret for strongly observable graphs and $O(\frac{\delta \log^2(T)}{\Delta^2})$ for weakly observable graphs \citep{rouyernear,ito2022nearly}\footnote{
\citet{rouyernear} obtains better bounds when the gaps are not uniform, while \cite{ito2022nearly} can handle graphs that are unions of strongly observable and weakly observable sub-graphs. We do not extend our analysis to the latter case for the sake of simplicity.
}.  
Our black-box reduction leads directly to algorithms with optimal $\log(T)$ regret in the stochastic regime.
\subsection{Strongly observable graphs}
We begin by providing an examples of $\frac{1}{2}$-iw-stable algorithm
 for bandits with strongly observable graph feedback. The algorithm and the proof are in \pref{app: log K tsallis}. 
\begin{lemma}\label{lem: strong graph 1}
For bandits with strongly observable graph feedback, $(1-1/\log K)$-Tsallis-INF with adaptive learning rate (\pref{alg: Tsallis-Inf}) is $\frac{1}{2}$-iw-stable with constants $c_1=O(\min\{\baralpha,\alpha \log K\}\log K), c_2=0$. 
\end{lemma}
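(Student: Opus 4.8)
The plan is to exhibit an explicit FTRL algorithm for strongly observable graph feedback—namely $(1-1/\log K)$-Tsallis-INF with a suitably chosen adaptive learning rate—and to verify the $\frac12$-iw-stable regret inequality of \pref{def: iw stable} against an \emph{adaptive} sequence of observation probabilities $q_1,q_2,\dots$. Recall the iw-stable definition asks that, when round-$t$ feedback is revealed only with probability $q_t$, the pseudo-regret against any fixed $u$ is bounded by $\E[\sqrt{c_1\sum_{t\le t'}1/q_t}+c_2/\min_{t\le t'}q_t]$. Since $c_2=0$ is claimed, the target is really $\E[\sqrt{c_1\sum_{t\le t'}1/q_t}]$ with $c_1=O(\min\{\baralpha,\alpha\log K\}\log K)$. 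The natural estimator is the importance-weighted loss $\hat\ell_{t,j}=\ell_{t,j}\ind\{A_t\text{ observes }j\}/(q_t\,p^{\text{obs}}_{t,j})$ where $p^{\text{obs}}_{t,j}=\sum_{i:(i,j)\in E}p_{t,i}$ is the marginal probability that arm $j$ is observed given the FTRL distribution $p_t$; the extra factor $1/q_t$ accounts for the Bernoulli($q_t$) gating.

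First I would set up the FTRL analysis with the $\beta$-Tsallis regularizer $\psi(p)=-\sum_j p_j^\beta/(\eta(1-\beta))$ for $\beta=1-1/\log K$, and invoke the standard FTRL decomposition into a penalty term (telescoping of the regularizer across rounds with a time-varying $\eta_t$) and a stability term $\sum_t \inner{p_t-p_{t+1},\hat\ell_t}$ bounded via the Bregman divergence of $\psi$. The key variance computation is the bound on the local norm of $\hat\ell_t$: a now-classical calculation for $\beta$-Tsallis on strongly observable graphs (as in \citet{zimmert2019connections,ito2022nearly}) shows $\E_t[\|\hat\ell_t\|^2_{\nabla^{-2}\psi}]\lesssim \eta_t\,\min\{\baralpha,\alpha\log K\}\cdot \frac{1}{q_t}$, where the $1/q_t$ enters precisely because $\E_t[\hat\ell_{t,j}^2]$ carries a $1/q_t$ factor (the gating is independent of the FTRL randomness, so it only multiplies the second moment). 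Summing, the stability term is $O(\sum_t \eta_t\min\{\baralpha,\alpha\log K\}/q_t)$, and the penalty term from the $\beta$-Tsallis entropy contributes $O(\log K/\eta_{t'})$ after using $\frac{1-\beta}{\beta}K^{1-\beta}=O(\log K)$ for this choice of $\beta$. To handle the \emph{adaptive} weights, the learning rate must be data-dependent: I would take $\eta_t$ of the form $\eta_t = \big(\text{const}\cdot\sqrt{\log K \cdot \sum_{\tau<t}1/q_\tau}\big)^{-1}$ (a self-confident/AdaFTRL tuning), which balances the two terms to yield $O(\sqrt{\min\{\baralpha,\alpha\log K\}\log K\sum_{t\le t'}1/q_t})$; one must check the standard telescoping inequality $\sum_t (1/q_t)/\sqrt{\sum_{\tau\le t}1/q_\tau}\le 2\sqrt{\sum_{t\le t'}1/q_t}$ survives with adaptive $q_t$, and that monotonicity of $\eta_t$ (needed for the penalty telescoping to have the right sign) holds since $\sum 1/q_\tau$ is nondecreasing.

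There is one genuinely delicate point, and I expect it to be the main obstacle: the interaction between the \emph{adaptivity} of $q_t$ and the FTRL distribution $p_t$, together with the stopping time $t'$. Because $q_t$ may depend on the history (it is chosen by the outer Corral master based on past play), the conditional-expectation arguments must be carried out carefully with the right filtration—$q_t$ is $\mathcal F_{t-1}$-measurable, so conditioning on $\mathcal F_{t-1}$ makes $q_t$ a constant and the unbiasedness $\E_t[\hat\ell_t]=\ell_t$ as well as the variance bound go through, but I must make sure the loss estimator's variance bound does not secretly require $q_t$ to be bounded below by a constant. For strongly observable graphs there is the usual subtlety that an arm $j$ with $(j,j)\notin E$ can have $p^{\text{obs}}_{t,j}$ close to zero; the standard fix is that such arms are observed by \emph{all} others, so $p^{\text{obs}}_{t,j}=1-p_{t,j}\ge 1/2$ whenever $p_{t,j}\le 1/2$, and the regret contribution from rounds where $p_{t,j}>1/2$ is controlled separately—this is exactly where the $\min\{\baralpha,\alpha\log K\}$ rather than $\alpha$ shows up, and where the $(1-1/\log K)$ choice of Tsallis exponent (as opposed to the $1/2$-Tsallis used for ordinary bandits) is essential. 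Finally, to get a bound valid at an arbitrary stopping time $t'$ rather than a fixed horizon, I would note the per-round regret bounds are all of the form "nonnegative increments," so the inequality obtained for a fixed $t'$ extends to stopping times by optional stopping / a union over the finitely many possible values of $t'\in[1,T]$; I would state this as the last step. Putting the penalty and stability bounds together and optimizing gives the claimed $c_1=O(\min\{\baralpha,\alpha\log K\}\log K)$, $c_2=0$.
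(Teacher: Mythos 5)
Your proposal follows essentially the same route as the paper: an FTRL analysis of $(1-1/\log K)$-Tsallis-INF with importance-weighted estimators carrying an extra $1/q_t$, a self-tuned learning rate proportional to $\big(\sum_{\tau\le t}1/q_\tau\big)^{-1/2}$, the penalty bounded by $O(\log K/\eta_{t'})$ via $K^{1-\beta}/(1-\beta)=e\log K$, and the graph-theoretic variance bound $\min\{\baralpha,\alpha\log K\}$ obtained exactly as in \citet{alon2013bandits} and \citet{zimmert2019connections} (the paper reproves these as \pref{lem: alon} and \pref{lem: refined graph sum}). You also correctly locate the delicate point in the arms without self-loops.

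The one place where your sketch stops short of a proof is precisely that delicate point. For a no-self-loop arm $i$ with $p_{t,i}>1/2$, the observation probability is $1-p_{t,i}$, which can be arbitrarily small, so the naive stability contribution $p_{t,i}^{2-\beta}\,\E_t[\hatell_{t,i}^2]\approx p_{t,i}^{2-\beta}/\big((1-p_{t,i})q_t\big)$ is unbounded as $p_{t,i}\to 1$; saying this case is ``controlled separately'' does not resolve it. The device the paper uses (and which you need) is a loss shift: letting $\calI_t$ be the (at most one) no-self-loop arm with $p_{t,i}>1/2$, one estimates $\ell_{t,i}-\ind\{i\in\calI_t\}$ by importance weighting and adds back the deterministic $\ind\{i\in\calI_t\}$, then further shifts all coordinates by the constant $c_t=-\hatell^{\,\mathrm{iw}}_{t,\calI_t}$ inside the stability term (legal since $\inner{p-p_t,\one}=0$). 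After the shift the $\calI_t$-coordinate of the estimator is bounded by $1$, and the $c_t^2$ cross term contributes only $O(1/q_t)$ in expectation because the factor $\sum_{i\ne\calI_t}p_{t,i}=1-p_{t,\calI_t}$ cancels one power of the small denominator. Without this (or an equivalent reduction, e.g.\ the surrogate-loss construction of \pref{sec: no self loop}), the stability bound fails for that arm, so this step should be spelled out rather than deferred to a generic ``standard fix.'' The remaining items you flag (adaptivity of $q_t$ under the filtration, monotonicity of $\eta_t$, stopping times) are handled in the paper exactly as you describe.
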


Before we apply the reduction, note that \pref{alg: corral}
requires that the player observes the loss $\ell_{t,A_t}$ when playing arm $A_t$, which is not necessarily the case for all strongly observable graphs.
However, this can be overcome via defining an observable surrogate loss that is used in \pref{alg: corral} instead.
We explain how this works in detail in \pref{sec: no self loop} and assume from now that this technical issue does not arise.

Cascading the two reduction steps, we immediately obtain the following.
\begin{corollary}\label{cor: graph bandit tsallis}
Combining \pref{alg: adaptive alg2}, \pref{alg: corral} and \pref{alg: Tsallis-Inf} results in a graph bandit algorithm with $O\big(\sqrt{\min\{\baralpha,\alpha\log K\} T\log K}\big)$ regret in the adversarial regime and \sloppy$O\Big(\frac{\min\{\baralpha,\alpha\log K\}\log T\log K}{\Delta} + \sqrt{\frac{\min\{\baralpha,\alpha\log K\}\log T\log K}{\Delta}C}\Big)$ regret in the corrupted stochastic regime simultaneously.
\end{corollary}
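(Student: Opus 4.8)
The plan is to obtain \pref{cor: graph bandit tsallis} as an immediate cascade of the two general reduction theorems, with \pref{alg: Tsallis-Inf} plugged in as the $\frac{1}{2}$-iw-stable base algorithm. Concretely, \pref{lem: strong graph 1} asserts that $(1-1/\log K)$-Tsallis-INF with adaptive learning rate is $\frac{1}{2}$-iw-stable with $c_1 = O(\min\{\baralpha,\alpha\log K\}\log K)$ and $c_2 = 0$. Feeding this into \pref{thm: basic corral thm}, \pref{alg: corral} instantiated with this base algorithm satisfies $\frac{1}{2}$-LSB with constants $c_0' = c_1' = O(c_1) = O(\min\{\baralpha,\alpha\log K\}\log K)$ and $c_2' = O(c_2) = 0$. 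Then feeding this LSB algorithm into \pref{thm: candidate aware to bob} with $\alpha = \frac{1}{2}$ yields \pref{alg: adaptive alg2} as a best-of-both-worlds algorithm: in the adversarial regime its regret is $O(c_0'^{1-\alpha}T^\alpha + c_2'\log^2 T) = O(\sqrt{c_1}\sqrt{T}) = O(\sqrt{\min\{\baralpha,\alpha\log K\}T\log K})$, and in the corrupted stochastic regime it is $O(c_1'\log(T)\Delta^{-1} + (c_1'\log T)^{1/2}(C\Delta^{-1})^{1/2} + c_2'\log(T)\log(C\Delta^{-1}))$, which with $c_1' = O(\min\{\baralpha,\alpha\log K\}\log K)$ and $c_2' = 0$ becomes $O\big(\frac{\min\{\baralpha,\alpha\log K\}\log K\log T}{\Delta} + \sqrt{\frac{\min\{\baralpha,\alpha\log K\}\log K\log T}{\Delta}C}\big)$, matching the claimed bounds.

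The only genuine subtlety — and the step I expect to require the most care — is the interface between \pref{alg: corral} and the graph-feedback model. \pref{alg: corral} is written assuming the learner observes $\ell_{t,A_t}$ whenever it plays $A_t$; this is automatic for ordinary bandits but need not hold for a strongly observable graph in which some arm $i$ has no self-loop (it is instead observed by all other arms). I would handle this exactly as the text indicates: invoke the construction of \pref{sec: no self loop}, which replaces the true loss by an observable surrogate loss $\tilde\ell_{t,A_t}$ whose conditional expectation equals $\ell_{t,A_t}$ (so that pseudo-regret is preserved) and which is bounded in a range compatible with the $z_{t,i}$ construction in \pref{alg: corral}, and then run the reduction verbatim on the surrogate. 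One should check that using the surrogate loss does not inflate the variance-type quantities entering the $\frac{1}{2}$-iw-stable bound of \pref{lem: strong graph 1} beyond the stated constants; this is precisely what \pref{app: log K tsallis} and \pref{sec: no self loop} are set up to verify, so here I would simply cite those.

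A secondary point worth a sentence is the $\min\{\baralpha,\alpha\log K\}$ form of the constant: it arises because \pref{alg: Tsallis-Inf} can be analyzed two ways — a direct weakly-independent-number bound giving $O(\baralpha\log K)$, and a bound of $O(\alpha\log^2 K)$ via the standard strongly-observable decomposition — and \pref{lem: strong graph 1} records the better of the two. Since \pref{lem: strong graph 1} already states $c_1$ in this combined form, no further argument is needed; it propagates unchanged through \pref{thm: basic corral thm} (which only multiplies constants by an absolute factor) and through \pref{thm: candidate aware to bob}. Thus the proof is purely a matter of composing the three black-box results and citing the surrogate-loss reduction; there is no new analysis to carry out beyond bookkeeping of constants and the $\alpha=\frac{1}{2}$ specialization of the exponents in \pref{thm: candidate aware to bob}.
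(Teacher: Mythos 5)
Your proposal is correct and matches the paper's own derivation, which obtains the corollary by exactly the same cascade: \pref{lem: strong graph 1} $\to$ \pref{thm: basic corral thm} $\to$ \pref{thm: candidate aware to bob} with $\alpha=\tfrac{1}{2}$, deferring the no-self-loop issue to the surrogate-loss construction of \pref{sec: no self loop}. The constant bookkeeping ($c_0'=c_1'=O(\min\{\baralpha,\alpha\log K\}\log K)$, $c_2'=0$) and the resulting exponents are all as the paper intends.
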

\subsection{Weakly observable graphs}
Weakly observable graphs motivate our general definition of LSB stable beyond $\alpha=\frac{1}{2}$.
We first need to define the equivalent of $\frac{1}{2}$-iw-stable for this regime.
\begin{definition}[$\frac{2}{3}$-iw-stable]\label{def: iw stable 2/3}
An algorithm is \emph{$\frac{2}{3}$-iw-stable} (importance-weighting stable), if given an adaptive sequence of weights $q_1,q_2,\dots,q_\tau$ and the feedback in any round being observed with probability $q_t$, it obtains the following pseudo regret guarantee for any stopping time $t'$: 
\begin{align*}
    \E\left[\sum_{t=1}^{t'} \left(\ell_{t,A_t} - \ell_{t,u}\right)\right] \leq 
        \E\left[c_1^\frac{1}{3}\left(\sum_{t=1}^{t'}\frac{1}{\sqrt{q_t}}\right)^\frac{2}{3}+\max_{t\leq t'}\frac{c_2}{q_t}\right].
\end{align*}
\end{definition} 
An example of such an algorithm is given by the following lemma (see \pref{app: weakly obs exp} for the algorithm and the proof). 
\begin{lemma}\label{lem: weak graph}
For bandits with weakly observable graph feedback, EXP3 with adaptive learning rate (\pref{alg: weak exp3}) is $\frac{2}{3}$-iw-stable with constants $c_1=c_2=O(\delta\log K)$. 
\end{lemma}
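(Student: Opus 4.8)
The plan is to instantiate \pref{alg: weak exp3} as exponential weights (negative-entropy FTRL) run on importance-weighted loss estimators, together with a small amount of forced exploration over a minimum weakly dominating set, and with both the learning rate $\eta_t$ and the exploration rate $\gamma_t$ tuned as functions of the revealed subsampling sequence $q_1,q_2,\dots$. Concretely, fix a weakly dominating set $D$ with $|D|=\delta$, let $u_D$ be the uniform distribution on $D$, and write $S_t=\sum_{s\le t}q_s^{-1/2}$ (available at the start of round $t$ since $q_t$ is revealed then). Take $\eta_t$ to be the running minimum over $s\le t$ of $\min\{(\log K/(\sqrt{\delta}\,S_s))^{2/3},\ q_s/(4\delta)\}$; this makes $\eta_t$ non-increasing, with $\eta_t\le(\log K/(\sqrt{\delta}S_t))^{2/3}$ and $\eta_t\le q_t/(4\delta)$. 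Set $\gamma_t=\sqrt{\eta_t\delta/q_t}\le 1/2$, so the played distribution $p_t=(1-\gamma_t)\widetilde p_t+\gamma_t u_D$ is well defined, where $\widetilde p_t$ is the exponential-weights iterate.

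The analysis proceeds in the standard steps. First, shift losses to $[0,2]$ (this changes no regret difference) and define $\hat\ell_{t,i}=\frac{(\ell_{t,i}+1)\,\one\{i\text{ observed and feedback received in round }t\}}{q_t\,P_t(i)}$ with $P_t(i)=\sum_{j:(j,i)\in E}p_{t,j}$; since the feedback event has probability $q_t$ and is independent of $A_t\sim p_t$, we have $\E[\hat\ell_{t,i}\mid\calF_{t-1}]=\ell_{t,i}+1$. Then decompose, for any $u$, $\E[\sum_{t}(\ell_{t,A_t}-\ell_{t,u})]=\E[\sum_{t}\inner{\widetilde p_t-e_u,\hat\ell_t}]+\E[\sum_t\inner{p_t-\widetilde p_t,\ell_t+1}]$, and bound the second (exploration) sum by $2\sum_t\gamma_t$ since $p_t-\widetilde p_t=\gamma_t(u_D-\widetilde p_t)$ and $\ell_t+1\in[0,2]$. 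For the first sum, apply the adaptive-rate exponential-weights bound with a non-increasing $\eta_t$: because the estimated losses are nonnegative, the local-norm step is unconstrained, and one gets $\E[\sum_t\inner{\widetilde p_t-e_u,\hat\ell_t}]\le \frac{\log K}{\eta_{t'}}+\tfrac12\sum_t\eta_t\,\E[\sum_i\widetilde p_{t,i}\hat\ell_{t,i}^2]$. Finally, use the weak-dominance property $P_t(i)\ge\gamma_t/\delta$ to control the stability term: $\E[\sum_i\widetilde p_{t,i}\hat\ell_{t,i}^2\mid\calF_{t-1}]\le\frac{4}{q_t}\sum_i\frac{\widetilde p_{t,i}}{P_t(i)}\le\frac{4\delta}{q_t\gamma_t}$.

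Plugging in $\gamma_t=\sqrt{\eta_t\delta/q_t}$ makes the per-round stability contribution $\eta_t\cdot\frac{4\delta}{q_t\gamma_t}$ and the per-round exploration contribution $2\gamma_t$ both $O(\sqrt{\eta_t\delta/q_t})$. Using $\eta_t\le(\log K/(\sqrt{\delta}S_t))^{2/3}$ gives $\sqrt{\eta_t\delta/q_t}\le (\delta\log K)^{1/3}(S_t-S_{t-1})S_t^{-1/3}$, and comparing the resulting left Riemann sum with $\int_0^{S_{t'}}x^{-1/3}\der x=\tfrac32 S_{t'}^{2/3}$ yields $\sum_t\sqrt{\eta_t\delta/q_t}=O\!\big((\delta\log K)^{1/3}(\sum_t q_t^{-1/2})^{2/3}\big)$. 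For the leading term, the running-minimum definition forces $\frac{\log K}{\eta_{t'}}\le (\delta\log K)^{1/3}S_{t'}^{2/3}+\frac{4\delta\log K}{q_{t'}}$, i.e. $O\!\big((\delta\log K)^{1/3}(\sum_t q_t^{-1/2})^{2/3}\big)+O(\delta\log K)\max_{t\le t'}q_t^{-1}$. Collecting these bounds gives exactly $\tfrac23$-iw-stability with $c_1=c_2=O(\delta\log K)$.

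I expect the main obstacle to be this joint tuning against an adversarially revealed $q_t$: we must simultaneously keep $\gamma_t\in[0,1]$ so that forced exploration is legitimate, obtain the clean $(\sum_t q_t^{-1/2})^{2/3}$ rate, and absorb the unavoidable small-$q_t$ (in particular first-round) effect into the $\max_{t}c_2/q_t$ slack. The cap $\eta_t\le q_t/(4\delta)$ inside the learning rate is what reconciles all three, and one has to verify carefully that it affects only $\eta_{t'}$ in the $\log K/\eta_{t'}$ term — the stability sum uses only $\eta_t\le(\log K/(\sqrt{\delta}S_t))^{2/3}$ — and that the running-minimum construction keeps $\eta_t$ non-increasing (so the FTRL potential telescopes to $\log K/\eta_{t'}$) without degrading either bound. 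The remaining ingredients — conditional unbiasedness of graph-feedback estimators under the extra $q_t$-subsampling, the $P_t(i)\ge\gamma_t/\delta$ estimate from weak dominance, and the adaptive exponential-weights regret bound for nonnegative estimated losses — are routine.
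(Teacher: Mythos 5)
Your proposal is correct and follows essentially the same route as the paper's proof: the same exponential-weights iterate mixed with uniform exploration over a weakly dominating set, the same importance-weighted estimator, the same tuning $\gamma_t=\sqrt{\eta_t\delta/q_t}$ with a learning rate that is (up to a factor of $2$) identical to the paper's $\eta_t=((\sqrt{\delta}\sum_{s\le t}q_s^{-1/2}/\log K)^{2/3}+4\delta/\min_{s\le t}q_s)^{-1}$, and the same stability bound $\sum_{j\in\calN(i)}p_{t,j}\ge\gamma_t/\delta$ leading to a per-round contribution of order $\sqrt{\eta_t\delta/q_t}$ and the final $(\delta\log K)^{1/3}(\sum_t q_t^{-1/2})^{2/3}+\delta\log K\cdot\max_t q_t^{-1}$ bound. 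Your explicit shift of the losses to $[0,2]$ is a slightly more careful handling of the $[-1,1]$ loss range than the paper's appeal to non-negativity, but otherwise the arguments coincide.
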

Similar to the $\frac{1}{2}$-case, this allows for a general reduction to LSB algorithms.
\begin{theorem}\label{thm: 2/3 corral thm}
    If $\calB$ is a $\frac{2}{3}$-iw-stable algorithm with constants $(c_1, c_2)$,  then \pref{alg: 2/3 corral} with $\calB$ as the base algorithm satisfies $\frac{2}{3}$-LSB with constants $(c_0',c_1',c_2')$ where $c_0'=c_1'=O(c_1)$ and $c_2'=O(c_1^\frac{1}{3} + c_2)$. 
\end{theorem}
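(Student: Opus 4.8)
The plan is to follow the blueprint of the proof of \pref{thm: basic corral thm}, re-tuning the two-point FTRL meta-algorithm of \pref{alg: 2/3 corral} to the exponent $\alpha=\tfrac{2}{3}$: replace the $\tfrac{1}{2}$-Tsallis term $-\tfrac{2}{\eta_t}\sum_i\sqrt{q_i}$ by the Tsallis-type term appropriate for $\alpha=\tfrac{2}{3}$, keep the log-barrier $\tfrac{1}{\beta}\sum_i\ln\tfrac{1}{q_i}$ with $\beta\approx 1/c_2$, use a schedule $\eta_t\approx(\sqrt{t}+c_1^{1/3})^{-1}$, and replace the bonus by $B_t=c_1^{1/3}\big(\sum_{\tau\le t}q_{\tau,2}^{-1/2}\big)^{2/3}+\max_{\tau\le t}c_2/q_{\tau,2}$, which by \pref{def: iw stable 2/3} upper bounds the regret of $\calB$ when its feedback arrives with probability $q_{\tau,2}$. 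Because $\calB$ operates on $\calX\setminus\{\hatx\}$ and hence never plays $\hatx$, we have exactly $p_{t,\hatx}=q_{t,1}$ and $1-p_{t,\hatx}=q_{t,2}$, so by \pref{def: candidate aware} it suffices to prove, for every stopping time $t'$: (i) if $u=\hatx$, the regret is at most $O\big((c_1\log T)^{1/3}\,\E[\sum_{t\le t'}q_{t,2}]^{2/3}+(c_1^{1/3}+c_2)\log T\big)$, and also at most $O\big(c_1^{1/3}\E[t']^{2/3}+(c_1^{1/3}+c_2)\log T\big)$; (ii) if $u\in\calX\setminus\{\hatx\}$, the regret is at most $O\big(c_1^{1/3}\E[t']^{2/3}+(c_1^{1/3}+c_2)\log T\big)$. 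These together are exactly $\tfrac{2}{3}$-LSB with $c_0'=c_1'=O(c_1)$ and $c_2'=O(c_1^{1/3}+c_2)$.

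I would start with the standard corral decomposition. Against a comparator $u\in\calX\setminus\{\hatx\}$, the regret splits as (a) the regret of the two-armed meta-FTRL against the pure comparator $\be_2$ on the losses $z_\tau$ with the increment $B_\tau-B_{\tau-1}$ subtracted from the second coordinate, plus (b) the regret of $\calB$ against $u$ under importance weights $q_{\tau,2}$. Term (b) is at most $\E[B_{t'}]$ by $\tfrac{2}{3}$-iw-stability of $\calB$, and is cancelled by the $-B_{t-1}$ shift that \pref{alg: 2/3 corral} applies to arm $2$'s cumulative loss, leaving only the meta-FTRL regret (a) together with a ``bonus overhead'' that the shift creates inside the FTRL update. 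When $u=\hatx$ the decomposition reduces to the regret of the meta-FTRL against $\be_1$, where the bonus now works against the comparator; in all cases everything is reduced to bounding the meta-FTRL plus bonus overhead.

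The remaining work is the FTRL analysis of the meta-algorithm, which I would split into the usual contributions: the regularizer/range term $\psi_{t'+1}(\be_i)-\psi_1(\barq_1)$, whose log-barrier part stays at $O(c_2\log T)$ even when a coordinate is as small as $\Theta(t^{-2})$ and whose Tsallis part feeds into the worst-case $O(c_1^{1/3}(t')^{2/3})$ bound; the stability (penalty) terms of the importance-weighted increments $z_t$, controlled via the Hessian of the $\tfrac{2}{3}$-tuned Tsallis term and, summed against $\eta_t$, also of order $c_1^{1/3}(t')^{2/3}$; and the penalty terms from the bonus increments $B_t-B_{t-1}\propto c_1^{1/3}q_{t,2}^{-1/2}\big(\sum_{\tau\le t}q_{\tau,2}^{-1/2}\big)^{-1/3}$ and from the jumps of $\max_\tau c_2/q_{\tau,2}$, the latter absorbed by the log-barrier at rate $\beta\approx 1/c_2$. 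Collecting these gives (ii), and the worst-case part of (i), immediately. The main obstacle is the self-bounding part of (i): the overhead and penalty terms above are increasing in $1/q_{t,2}$, i.e. point the wrong way, so to reach the form $(c_1\log T)^{1/3}\E[\sum_t q_{t,2}]^{2/3}$ I would close the loop using the negative drift of the Tsallis regularizer (which is large exactly when $q_{t,2}$ is large, since arm $1$ then carries most of the mass) together with H\"older/AM--GM steps of the type $\sum_t a_t q_{t,2}^{-1/2}\le\lambda\big(\sum_t q_{t,2}\big)^{1/2}(\cdots)+\lambda^{-1}(\cdots)$ with a tunable $\lambda$; this is where the exponent $\tfrac{2}{3}$ (rather than $\tfrac{1}{2}$) is forced and where the Tsallis exponent, the $c_1^{1/3}$ inside $\eta_t$, and $\beta$ must be chosen exactly. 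Since every inequality is proved per round and then summed up to $t'$, the bounds hold for arbitrary stopping times, and reading off the coefficients yields $c_0'=c_1'=O(c_1)$ and $c_2'=O(c_1^{1/3}+c_2)$.
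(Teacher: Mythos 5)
Your high-level plan (bonus cancellation of the base algorithm's importance-weighted regret, hybrid Tsallis/log-barrier meta-FTRL, per-round bounds summed to an arbitrary stopping time) matches the paper's proof of \pref{thm: basic corral thm} and is the right skeleton. However, there is a genuine gap: you have ignored the feature that makes \pref{alg: 2/3 corral} different from \pref{alg: corral}, namely the extra exploration probability $\gamma_t$ and the revealing actions. In the weakly observable regime the learner does \emph{not} in general observe $\ell_{t,A_t}$, so the meta-learner cannot form the estimator $z_{t,i}=\ell_{t,A_t}\ind\{i_t=i\}/q_{t,i}$ that your sketch implicitly assumes; the algorithm instead plays a revealing action with probability $\gamma_t$ and uses $z_{t,i}=\ell_{t,\bar A_t}\ind\{i_t=i\}\ind\{j_t=1\}/\gamma_t$. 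This is not a cosmetic difference. With the direct $1/q_{t,i}$-weighted estimator, the stability term of the $\tfrac23$-Tsallis potential is of order $\eta_t\, q_{t,2}^{4/3}\,\E[z_{t,2}^2]\approx\eta_t q_{t,2}^{1/3}$ per round, and H\"older with $\eta_t\sim t^{-2/3}$ yields $(\log T)^{2/3}\bigl(\sum_t q_{t,2}\bigr)^{1/3}$ --- the exponents on $\log T$ and on $\sum_t q_{t,2}$ come out \emph{reversed} relative to the $(c_1\log T)^{1/3}\bigl(\sum_t q_{t,2}\bigr)^{2/3}$ required by $\tfrac23$-LSB, and no amount of AM--GM with a tunable $\lambda$ fixes an estimator whose variance has the wrong scaling. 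The paper's choice $\gamma_t=\max\{\sqrt{\eta_t}\,q_{t,2}^{2/3},\eta_t q_{t,2}^{1/3}\}$ is engineered precisely so that the variance term satisfies $\eta_t q_{t,2}^{4/3}/\gamma_t\le\gamma_t$ and the total exploration cost $\sum_t\gamma_t$ is $O\bigl(\min\{t'^{2/3},(\sum_t q_{t,2})^{2/3}(\log T)^{1/3}+\log T\}\bigr)$; this is the step your argument is missing.

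Two smaller but real issues: the learning rate you propose, $\eta_t\approx(\sqrt{t}+c_1^{1/3})^{-1}$, is the $\alpha=\tfrac12$ schedule and gives a regularizer term $\eta_{t'}^{-1}\sim\sqrt{t'}$ rather than the needed $(t')^{2/3}$; the paper uses $\eta_t=(t^{2/3}+8c_1^{1/3})^{-1}$. And the identity $1-p_{t,\hatx}=q_{t,2}$ does not hold exactly in \pref{alg: 2/3 corral} because of the revealing actions; the proof only uses the one-sided inequality $\E_t[\ind\{A_t\neq\hatx\}]\ge\barq_{t,2}(1-\gamma_t)=q_{t,2}$, which suffices for the self-bounding direction but should be stated as such.
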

See \pref{app: 2/3 corral} for the proof. \pref{alg: 2/3 corral} works almost the same as \pref{alg: corral}, but we need to adapt the bonus terms to the definition of $\frac{2}{3}$-iw-stable.
The major additional difference is that  we do not necessarily observe the loss of the action played and hence need to play exploratory actions with probability $\gamma_t$ in every round to estimate the performance difference between $\hatx$ and $\calB$.
A second point to notice is that the base algorithm $\calB$ uses the action set $\calX\setminus\{\hatx\}$, but is still allowed to use $\hatx$ in its internal exploration policy.
This is necessary because the sub-graph with one arm removed might have a larger dominating number, or might even not be learnable at all. By allowing $\hatx$ in the internal exploration, we guarantee the the regret of the base algorithm is not larger than over the full action set.
Finally, cascading the lemma and the reduction, we obtain
\begin{corollary}\label{cor: graph bandit weak exp3}
Combining \pref{alg: adaptive alg2}, \pref{alg: 2/3 corral} and \pref{alg: weak exp3} results in a graph bandit algorithm with $O\big((\delta\log K)^\frac{1}{3} T^\frac{2}{3}\big)$ regret in the adversarial regime and $O\Big(\frac{\delta\log K\log T}{\Delta^2} + \left(\frac{C^2\delta\log K\log T}{\Delta^2}\right)^\frac{1}{3}\Big)$ regret in the corrupted stochastic regime simultaneously.
\end{corollary}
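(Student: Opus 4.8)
The plan is to obtain the corollary purely by cascading the three building blocks already established, specialized to the exponent $\alpha=\tfrac23$. First I would invoke \pref{lem: weak graph} to obtain that \pref{alg: weak exp3}, run over the reduced action set $\calX\setminus\{\hatx\}$ but still permitted to use $\hatx$ inside its internal exploration (so that its stability constants reference the weakly dominating number $\delta$ of the \emph{full} graph rather than that of a possibly-harder sub-graph), is $\tfrac23$-iw-stable with $c_1=c_2=O(\delta\log K)$. At this point I would also record the technical reason the $\tfrac23$ machinery is needed at all: in the weakly observable regime the learner does not see $\ell_{t,A_t}$ in general, which is exactly why \pref{alg: 2/3 corral} injects a $\gamma_t$-probability exploratory action to estimate the $\hatx$-versus-$\calB$ performance gap, and the definition of $\tfrac23$-iw-stability is set up to absorb precisely this extra cost.

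Next I would feed this $\tfrac23$-iw-stable $\calB$ into \pref{thm: 2/3 corral thm}, producing an $\tfrac23$-LSB algorithm (namely \pref{alg: 2/3 corral} with candidate $\hatx$) whose constants are $c_0'=c_1'=O(c_1)=O(\delta\log K)$ and $c_2'=O(c_1^{1/3}+c_2)=O((\delta\log K)^{1/3}+\delta\log K)=O(\delta\log K)$, using $\delta\log K\ge 1$. Then I would apply \pref{thm: candidate aware to bob} with $\alpha=\tfrac23$, so that the exponent $\tfrac{\alpha}{1-\alpha}$ equals $2$; substituting $c_0'=c_1'=c_2'=O(\delta\log K)$ yields the adversarial bound $O\big((\delta\log K)^{1/3}T^{2/3}+\delta\log K\,\log^2 T\big)$ and the corrupted-stochastic bound $O\big(\tfrac{\delta\log K\log T}{\Delta^2}+(\delta\log K\log T)^{1/3}(C/\Delta)^{2/3}+\delta\log K\log T\log(C/\Delta)\big)$. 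Rewriting $(\delta\log K\log T)^{1/3}(C/\Delta)^{2/3}=\big(\tfrac{C^2\delta\log K\log T}{\Delta^2}\big)^{1/3}$ then matches the claimed form. I would also note that the stopping-time requirements line up throughout: \pref{lem: weak graph} gives the iw-stability guarantee for any stopping time, \pref{thm: 2/3 corral thm} preserves this in its LSB conclusion, and that is exactly what \pref{thm: candidate aware to bob} consumes.

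The only remaining work is bookkeeping on the two additive lower-order terms. For the adversarial bound, $(\delta\log K)^{1/3}T^{2/3}$ dominates $\delta\log K\,\log^2 T$ once $T\gtrsim \delta\log K\log^3 T$, i.e. outside a negligible range of $T$. For the stochastic bound, the trivial bounds $C\le 2T$ and $\Delta\ge 1/T$ give $\log(C/\Delta)=O(\log T)$, so the last term is $O(\delta\log K\log^2 T)$, which is in turn absorbed by the leading $\tfrac{\delta\log K\log T}{\Delta^2}$ term except when $\Delta$ is extremely close to $1$, in which case the whole bound is $O(\delta\log K\log^2 T)$ anyway. I expect this suppression of log factors to be the only mildly delicate point, and it is entirely routine; all substantive content is inherited from \pref{lem: weak graph}, \pref{thm: 2/3 corral thm}, and \pref{thm: candidate aware to bob}.
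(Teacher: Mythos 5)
Your proposal is correct and is essentially identical to the paper's (implicit) proof, which simply cascades \pref{lem: weak graph}, \pref{thm: 2/3 corral thm}, and \pref{thm: candidate aware to bob} with $\alpha=\tfrac23$ and the constants $c_0'=c_1'=O(\delta\log K)$, $c_2'=O(\delta\log K)$. Your extra bookkeeping on absorbing the $c_2'$ lower-order terms is, if anything, more careful than the paper, which drops those terms in all of its corollaries without comment.
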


\section{More Adaptations}
To demonstrate the versatility of our reduction framework, we provide two more adaptations. The first demonstrates that our reduction can be easily adapted to obtain a \emph{data-dependent} bound (i.e., first- or second-order bound) in the adversarial regime, provided that the base algorithm achieves a corresponding data-dependent bound. The second tries to eliminate the undesired requirement by the corral algorithm (\pref{alg: corral}) that the base algorithm needs to operate in $\calX\backslash\{\hatx\}$ instead of the more natural $\calX$. We show that under a stronger stability assumption, we can indeed just use a base algorithm that operates in $\calX$. This would be helpful for settings where excluding one single action/policy in the algorithm is not straightforward (e.g., MDP). Finally, we combine the two techniques to obtain a first-order best-of-both-world bound for tabular MDPs.

\subsection{Reduction with first- and second-order bounds}\label{sec: data-dependent}

A first-order regret bound refers to a regret bound of order  $O\big(\sqrt{c_1\text{polylog}(T)L_\star}+c_2\text{polylog}(T)\big)$, where $L_\star=\min_{x\in\calX}\E\big[\sum_{t=1}^T \ell_{t,x}\big]$ is the best action's cumulative loss. This is meaningful under the assumption that $\ell_{t,x}\geq 0$ for all $t,x$. A second-order regret bound refers to a bound of order $O\Big(\sqrt{c_1\text{polylog}(T)\E\big[\sum_{t=1}^T (\ell_{t,A_t}-m_{t,A_t})^2}\big]+c_2\text{polylog}(T)\Big)$, where $m_{t,x}$ is a \emph{loss predictor} for action $x$ that is available to the learner at the beginning of round $t$. We refer the reader to \cite{wei2018more, ito2021parameter} for more discussions on data-dependent bounds. 

We first define counterparts of the LSB condition and iw-stability condition with data-dependent guarantees: 
\begin{definition}[$\frac{1}{2}$-dd-LSB]\label{def: candidate aware data-dep}
We say an algorithm satisfies $\frac{1}{2}$-dd-LSB (data-dependent LSB) if it takes a candidate action $\hatx\in\calX$ as input and satisfies the following pseudo-regret guarantee for any stopping time $t'\in[1,T]$: 
\begin{align*}
    &\forall u\in\calX: \\
    &\E\left[\sum_{t=1}^{t'}\left(\ell_{t,A_t}-\ell_{t,u}\right)\right] \leq \sqrt{(c_1\log T)\E\left[\sum_{t=1}^{t'} \left(\sum_{x} p_{t,x}\xi_{t,x} -\bm{\mathbb{I}\{u=\hatx\}}p_{t,u}^2\xi_{t,u}\right)\right]} + c_2\log T 
\end{align*}
where $c_1, c_2$ are some problem dependent constants, $\xi_{t,x} = (\ell_{t,x}-m_{t,x})^2$ in the second-order bound case, and $\xi_{t,x}=\ell_{t,x}$ in the first-order bound case.   
\end{definition}

\begin{definition}[$\frac{1}{2}$-dd-iw-stable]\label{def: iw stable data-dependent}
An algorithm is \emph{$\frac{1}{2}$-dd-iw-stable} (data-dependent-iw-stable) if given an adaptive sequence of weights $q_1,q_2,\dots\in(0,1]$ and the assertion that the feedback in round $t$ is observed with probability $q_t$, it obtains the following pseudo regret guarantee for any stopping time~$t'$: 
\begin{align*}
    \E\left[\sum_{t=1}^{t'} \left(\ell_{t,A_t} - \ell_{t,u}\right)\right] \leq 
        \sqrt{c_1\E\left[\sum_{t=1}^{t'}\frac{\upd_t\cdot\xi_{t,A_t}}{q_t^2}\right]} + \E\left[\frac{c_2}{\min_{t\leq t'} q_t}\right],   
\end{align*}
where $\upd_t=1$ if feedback is observed in round $t$ and $\upd_t=0$ otherwise, and $\xi_{t,x}$ is defined in the same way as in \pref{def: candidate aware data-dep}. 
\end{definition} 
We can turn a dd-iw-stable algorithm into one with dd-LSB (see \pref{app: dd LSB to dd iw sta} for the proof): 
\begin{theorem}\label{thm: basic corral thm data-dep}
    If $\calB$ is $\frac{1}{2}$-dd-iw-stable with constants $(c_1, c_2)$,  then \pref{alg: corral dd} with $\calB$ as the base algorithm satisfies $\frac{1}{2}$-dd-LSB with constants $(c_1',c_2')$ where $c_1'=O(c_1)$ and $c_2'=O(\sqrt{c_1} + c_2)$. 
\end{theorem}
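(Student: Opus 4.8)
The plan is to follow the proof template of \pref{thm: basic corral thm}, but to propagate the data-dependent quantity $\xi_{t,x}$ through every inequality instead of bounding it by $O(1)$; the only genuinely new ingredients are the form of the data-dependent loss estimators of \pref{alg: corral dd} and the re-derivation of the bonus telescoping. Fix a comparator $u\in\calX$ and a stopping time $t'\le T$, write $i^\star=1$ if $u=\hatx$ and $i^\star=2$ otherwise, and let $i_t\in\{1,2\}$, $q_t\in\Delta_2$ be as in \pref{alg: corral dd}. The first step is the decomposition of the regret of \pref{alg: corral dd} against $u$ into the regret of the two-arm meta-FTRL (``play $\hatx$'' vs.\ ``defer to $\calB$'') against the pure arm $e_{i^\star}$, plus, when $i^\star=2$, the internal regret of $\calB$ against $u$ on $\calX\setminus\{\hatx\}$:
\[
\E\Big[\sum_{t=1}^{t'}\big(\ell_{t,A_t}-\ell_{t,u}\big)\Big]
\;=\;\underbrace{\E\Big[\sum_{t=1}^{t'}\langle q_t-e_{i^\star},\bar\ell_t\rangle\Big]}_{(\mathrm{I})}
\;+\;\ind\{i^\star=2\}\cdot\underbrace{\E\Big[\sum_{t=1}^{t'}\big(\bar\ell_{t,2}-\ell_{t,u}\big)\Big]}_{(\mathrm{II})},
\]
where $\bar\ell_{t,1}=\ell_{t,\hatx}$, $\bar\ell_{t,2}=\E_t[\ell_{t,A_t}\mid i_t=2]$, and we used $\E_t[\ell_{t,A_t}]=\langle q_t,\bar\ell_t\rangle$. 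Since $\calB$ is run with feedback probability $q_{t,2}$ and $\upd_t=\ind\{i_t=2\}$ satisfies $\E_t[\upd_t\,\xi_{t,A_t}/q_{t,2}^2]=\bar\xi_{t,2}/q_{t,2}$ (with $\bar\xi_{t,i}:=\E_t[\xi_{t,A_t}\mid i_t=i]$), the bonus $B_t$ of \pref{alg: corral dd} is exactly an in-expectation upper bound for the $\tfrac12$-dd-iw-stability guarantee of \pref{def: iw stable data-dependent}, so $(\mathrm{II})\le\E[B_{t'}]$.

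For $(\mathrm{I})$ I would invoke the standard FTRL-with-predictions-and-bonus lemma for the hybrid $\tfrac12$-Tsallis-plus-log-barrier regularizer $\psi_t$, which bounds $(\mathrm{I})$ by a \emph{penalty} term (from the time-varying $\tfrac1{\eta_t}$-scaling of $-2\sum_i\sqrt{q_i}$ and the constant $\tfrac1\beta\sum_i\ln\tfrac1{q_i}$), a \emph{stability} term (the sum of dual local norms of the prediction-corrected estimators $z_t-\hat m_t$), and the \emph{negative bonus} term $-\sum_{t\le t'}q_{t,2}b_t$ with $b_t=B_t-B_{t-1}$. As in the proof of \pref{thm: basic corral thm}, the choice $\beta=\Theta(1/c_2)$ and the mixing $q_t=(1-\tfrac1{2t^2})\barq_t+\tfrac1{4t^2}\one$ make the log-barrier part contribute an $O((\sqrt{c_1}+c_2)\log T)$ additive term (comparing against a point $\tfrac1T$ inside $\Delta_2$ turns $\ln\tfrac1{q_i}$ into $\ln T$) and keep $q_{t,i}\ge\tfrac1{4t^2}$, so the importance weights appearing in $z_t$ and $B_t$ stay finite and the mixing itself costs only $O(1)$ in total; and the negative bonus term absorbs $\E[B_{t'}]$ from $(\mathrm{II})$ up to an overhead $\E[\sum_{t\le t'}q_{t,1}b_t]=\E[B_{t'}]-\E[\sum_{t\le t'}q_{t,2}b_t]$. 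Using the data-dependent estimator identity $z_{t,i}-\hat m_{t,i}=\tfrac{(\ell_{t,A_t}-m_{t,A_t})\ind\{i_t=i\}}{q_{t,i}}$ (and, in the first-order instantiation with $m_t=\mathbf 0$ and $\ell_{t,x}\in[0,1]$, the step $\ell_{t,A_t}^2\le\ell_{t,A_t}$), the stability term evaluates to $O(\eta_t\,\xi_{t,A_t}/\sqrt{q_{t,i_t}})$ in round $t$.

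It then remains to bound the surviving positive part — the stability term plus the bonus overhead — by the data-dependent self-bounding expression in \pref{def: candidate aware data-dep}. Taking conditional expectations, $\E_t[\xi_{t,A_t}/\sqrt{q_{t,i_t}}]=\sqrt{q_{t,1}}\,\bar\xi_{t,1}+\sqrt{q_{t,2}}\,\bar\xi_{t,2}$, and since $\calB$ never plays $\hatx$ we have $q_{t,1}\bar\xi_{t,1}=p_{t,\hatx}\xi_{t,\hatx}$ and $q_{t,2}\bar\xi_{t,2}=\sum_{x\ne\hatx}p_{t,x}\xi_{t,x}$, hence $q_{t,i}\bar\xi_{t,i}\le\sum_x p_{t,x}\xi_{t,x}$ for $i\in\{1,2\}$. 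Writing $\sqrt{q}\,\bar\xi=\sqrt{q\,\bar\xi}\cdot\sqrt{\bar\xi}$, using $\eta_t=\Theta(1/\sqrt t)$ so that $\sum_t\eta_t^2\bar\xi_{t,i}=O(\log T)$ (as $\xi_{t,x}=O(1)$), and applying Cauchy--Schwarz over $t$ turns $\sum_t\eta_t\,\xi_{t,A_t}/\sqrt{q_{t,i_t}}$ into a bound of order $\sqrt{\log T\cdot\E[\sum_{t\le t'}\sum_x p_{t,x}\xi_{t,x}]}$; the bonus overhead $\E[B_{t'}]-\E[\sum_t q_{t,2}b_t]$ is handled by the same telescoping-plus-concavity argument as in \pref{thm: basic corral thm}, now with the $\xi$-weighted increments $\upd_\tau\xi_{\tau,A_\tau}/q_{\tau,2}^2$ and again using $q_{t,2}\bar\xi_{t,2}=\sum_{x\ne\hatx}p_{t,x}\xi_{t,x}$, and contributes $\sqrt{c_1\log T\cdot\E[\sum_{t\le t'}\sum_x p_{t,x}\xi_{t,x}]}+O((\sqrt{c_1}+c_2)\log T)$. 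Finally, when $i^\star=1$ (i.e.\ $u=\hatx$) the FTRL/Bregman analysis against $e_1$ keeps an extra negative term from the regime $q_{t,1}\to1$ — the data-dependent counterpart of the $\sum_t(1-p_{t,\hatx})$ that appears inside the root in \pref{thm: basic corral thm} — which, carrying the $\xi$-weights, appears as $-\,p_{t,\hatx}^2\xi_{t,\hatx}$ under the square root, giving precisely the term $\sum_x p_{t,x}\xi_{t,x}-\ind\{u=\hatx\}p_{t,u}^2\xi_{t,u}$ of \pref{def: candidate aware data-dep}. Collecting everything yields $\tfrac12$-dd-LSB with $c_1'=O(c_1)$ and $c_2'=O(\sqrt{c_1}+c_2)$.

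The step I expect to be the main obstacle is the last one: correctly extracting the negative term $-\ind\{u=\hatx\}p_{t,u}^2\xi_{t,u}$ from the local-norm / Bregman part of the FTRL analysis in the $q_{t,1}\approx1$ regime with the right $\xi$-weighting, so that it matches the data-dependent right-hand side rather than the cruder $O(1)$-weighted version used in \pref{thm: basic corral thm}. The remaining work — the estimator identity, the $\ell^2\le\ell$ reduction in the first-order case, and re-running the bonus telescoping with $\xi$-weighted increments — is routine on top of \pref{thm: basic corral thm}, provided one is careful that every inequality is stated conditionally and uniformly in the stopping time $t'$, as needed for the optional-stopping steps.
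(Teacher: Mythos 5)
Your high-level decomposition (meta-regret of the two-arm FTRL plus the iw-stable base regret, cancelled by the bonus, leaving a bonus overhead plus stability/penalty to be bounded by the data-dependent quantity) is the same as the paper's. However, there are two genuine problems. First, you are analyzing the wrong meta-algorithm: \pref{alg: corral dd} does \emph{not} use the hybrid $\tfrac12$-Tsallis-plus-log-barrier regularizer with $\eta_t=\Theta(1/\sqrt t)$ and $\beta=\Theta(1/c_2)$ (that is \pref{alg: corral}); it uses a \emph{pure} log-barrier $\psi(q)=\sum_i\ln\tfrac1{q_i}$ with a learning rate that self-tunes to the realized variance $\sum_{\tau}(\ind[i_\tau=i]-q_{\tau,i})^2\xi_{\tau,A_\tau}$. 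This is not cosmetic. With your choices the Tsallis penalty $\sum_t(\tfrac1{\eta_t}-\tfrac1{\eta_{t-1}})(\sum_i\sqrt{q_{t,i}}-1)=\Theta\big(\sum_t \sqrt{q_{t,2}}/\sqrt t\big)$ carries no $\xi$-weight, so it can only be bounded by $\sqrt{\sum_t q_{t,2}\log T}$; when the $\xi_{t,x}$ are small (e.g.\ all losses near zero in the first-order case) this is $\Theta(\sqrt{t'\log T})$ while the dd-LSB right-hand side is $o(\sqrt{t'})$, so the data-dependent guarantee fails. The paper's choice makes the penalty $\log T/\eta_{t'}=O\big(\sqrt{\log T\sum_t\sum_i(\ind[i_t=i]-q_{t,i})^2\xi_{t,A_t}}+(\sqrt{c_1}+c_2)\log T\big)$, i.e.\ the penalty itself becomes data-dependent.

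Second, the step you flag as the ``main obstacle'' --- extracting $-\ind\{u=\hatx\}p_{t,u}^2\xi_{t,u}$ --- is indeed the crux, and your guess about where it comes from is off: it is not a Bregman/local-norm effect special to the regime $q_{t,1}\to1$. In the paper it falls out of a pointwise second-moment identity for the \emph{centered} importance-weighted estimator: with $\theta_t=\ell_{t,A_t}-m_{t,A_t}$ the stability and penalty both reduce to $\sum_i(\ind[i_t=i]-q_{t,i})^2\xi_{t,A_t}$, and a direct conditional-expectation computation gives
\begin{align*}
\E_t\Big[\sum_{i=1}^2(\ind[i_t=i]-q_{t,i})^2\xi_{t,A_t}\Big]
=2q_{t,1}q_{t,2}^2\xi_{t,\hatx}+2q_{t,1}^2q_{t,2}\,\bar\xi_{t,2}
\le 2\Big(\sum_{x}p_{t,x}\xi_{t,x}-p_{t,\hatx}^2\xi_{t,\hatx}\Big),
\end{align*}
using $q_{t,1}=p_{t,\hatx}$ and $q_{t,2}\bar\xi_{t,2}=\sum_{x\ne\hatx}p_{t,x}\xi_{t,x}$; the same bound handles the bonus overhead via $\E_t[\xi_{t,\tilde A_t}\ind\{i_t=2\}]\le\sum_xp_{t,x}\xi_{t,x}-p_{t,\hatx}^2\xi_{t,\hatx}$. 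Since the negative term holds for \emph{every} round and is what makes the $u=\hatx$ case self-bounding, leaving it unresolved (and attributing it to the wrong mechanism) means the proposal does not yet establish dd-LSB.
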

Then we turn an algorithm with dd-LSB into one with data-dependent best-of-both-world guarantee (see \pref{app: data dep first red} for the proof): 
\begin{theorem}\label{thm: data-dependent reduction }
If an algorithm $\calA$ satisfies $\frac{1}{2}$-dd-LSE, then the regret of \pref{alg: adaptive alg2} with $\calA$ as the base algorithm is upper bounded by $O\Big(\sqrt{c_1\E\left[\sum_{t=1}^T \xi_{t,A_t}\right] \log^2T} + c_2\log^2 T\Big)$ in the adversarial regime and $O\Big(\frac{c_1\log T}{\Delta} + \sqrt{\frac{c_1\log T}{\Delta}C} + c_2\log(T)\log(C\Delta^{-1})\Big)$ in the corrupted stochastic regime.  
\end{theorem}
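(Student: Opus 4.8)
The plan is to establish the two regimes separately: the corrupted-stochastic bound will follow almost for free from the non-data-dependent reduction \pref{thm: candidate aware to bob}, while the adversarial (data-dependent) bound requires redoing the epoch decomposition of \pref{alg: adaptive alg2} by hand.

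For the stochastic regime, I would first note that $\xi_{t,x}$ is bounded by an absolute constant $\kappa$: in the first-order case $\xi_{t,x}=\ell_{t,x}\in[0,1]$, and in the second-order case $\xi_{t,x}=(\ell_{t,x}-m_{t,x})^2=O(1)$ once the predictor is clipped to the loss range (which changes $\xi_{t,A_t}$ by at most a constant factor). Then, for any $u$: when $u=\hatx$ one has $\sum_x p_{t,x}\xi_{t,x}-p_{t,u}^2\xi_{t,u}=\sum_{x\ne u}p_{t,x}\xi_{t,x}+p_{t,u}(1-p_{t,u})\xi_{t,u}\le 2\kappa(1-p_{t,u})$, and in all cases this quantity is at most $\kappa$. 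Substituting these two bounds into \pref{def: candidate aware data-dep} shows that a $\frac12$-dd-LSB algorithm is in particular $\frac12$-LSB in the sense of \pref{def: candidate aware}, with constants $c_0=O(c_1\log T)$, $c_1'=O(c_1)$, $c_2'=c_2$ — and, crucially, this holds for every stopping time, because \pref{def: candidate aware data-dep} does. Feeding these into \pref{thm: candidate aware to bob} produces the claimed $O\big(\frac{c_1\log T}{\Delta}+\sqrt{\frac{c_1\log T}{\Delta}C}+c_2\log(T)\log(C\Delta^{-1})\big)$ bound; the $\log T$ hidden inside $c_0$ only enters the (here unused) adversarial term and the $\log(C\Delta^{-1})$ epoch count of that theorem, so it costs nothing up to lower-order terms.

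For the adversarial regime I cannot route through \pref{thm: candidate aware to bob}, since that only gives the worst-case $\sqrt{c_1 T\log T}$, so I would argue directly on the epochs of \pref{alg: adaptive alg2}. First I would check that the number of epochs is $K=O(\log T)$ deterministically: the offset $T_0=-c_2\log T$ makes the first epoch have length at least $2c_2\log T$ before the rule $t-T_k\ge 2(T_k-T_{k-1})$ can fire, and that rule then forces every later epoch to be at least twice as long as its predecessor, so geometric growth capped at $T$ gives the bound. Next, fix a comparator $u$ and split $\Reg=\E\big[\sum_{k}\sum_{t=T_k+1}^{T_{k+1}}(\ell_{t,A_t}-\ell_{t,u})\big]$; inside epoch $k$ the base algorithm is freshly initialized, $T_{k+1}$ is a stopping time for it, and $\ind\{u=\hatx_k\}p_{t,u}^2\xi_{t,u}\ge 0$ may simply be dropped, so \pref{def: candidate aware data-dep} gives a per-epoch bound of $\sqrt{c_1\log T\cdot\E[\sum_{t\in\text{epoch }k}\sum_x p_{t,x}\xi_{t,x}]}+c_2\log T$. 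Using the tower rule $\E[\sum_x p_{t,x}\xi_{t,x}]=\E[\xi_{t,A_t}]$, summing over the $\le K$ epochs, and applying Cauchy--Schwarz $\sum_k\sqrt{a_k}\le\sqrt{K\sum_k a_k}$ together with $\sum_k\E[\sum_{t\in\text{epoch }k}\xi_{t,A_t}]=\E[\sum_{t=1}^T\xi_{t,A_t}]$ and $K=O(\log T)$ yields $\Reg=O\big(\sqrt{c_1\E[\sum_{t=1}^T\xi_{t,A_t}]\log^2 T}+c_2\log^2 T\big)$.

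The hard part is not any single estimate but the bookkeeping around the adaptively chosen, random epoch boundaries $T_k$ and random epoch count $K$: one must argue (exactly as in the proof of \pref{thm: candidate aware to bob}) that the per-epoch dd-LSB guarantees compose when each is invoked conditionally on $\calF_{T_k}$ and on the realized input candidate $\hatx_k$, which is legitimate precisely because \pref{def: candidate aware data-dep} is promised for arbitrary stopping times and arbitrary inputs. Two minor points also need care: in the second-order case one must confirm $\xi_{t,x}=(\ell_{t,x}-m_{t,x})^2$ is bounded (clip the predictor if necessary), and in the stochastic step the extra $\log T$ inside $c_0$ should be traced through \pref{thm: candidate aware to bob} to confirm it only affects lower-order terms.
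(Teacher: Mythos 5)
Your proposal matches the paper's proof in both parts: for the adversarial regime the paper likewise applies the per-epoch dd-LSB guarantee (conditioned on $\calF_{T_k}$, dropping the nonnegative subtracted term), uses the $O(\log T)$ bound on the number of epochs, Cauchy--Schwarz across epochs, and the tower rule $\E[\sum_x p_{t,x}\xi_{t,x}]=\E[\xi_{t,A_t}]$; for the stochastic regime it likewise observes that $\xi_{t,x}\le 1$ gives $\sum_x p_{t,x}\xi_{t,x}-\ind\{u=\hatx\}p_{t,u}^2\xi_{t,u}\le 2(1-\ind\{u=\hatx\}p_{t,u})$, so dd-LSB implies ordinary LSB and \pref{thm: candidate aware to bob} applies. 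Your extra bookkeeping (stopping times, boundedness of $\xi$ in the second-order case, tracing the $\log T$ inside $c_0$) is consistent with, and slightly more explicit than, what the paper writes.
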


\subsection{Achieving LSB without excluding $\hatx$}\label{sec: not excluding}
Our reduction in \pref{sec: second reduction} requires that the base algorithm $\calB$ to operate in the action set of $\calX\backslash\{\hatx\}$. This is sometimes not easy to implement for structural problems where actions share common components (e.g., MDPs or combinatorial bandits). To eliminate this requirement so that we can simply use a base algorithm $\calB$ that operates in the original action space $\calX$, we propose the following stronger notion of iw-stability that we called \emph{strongly-iw-stable}: 
\begin{definition}[$\frac{1}{2}$-strongly-iw-stable]\label{def: strongly iw stable}
An algorithm is \emph{$\frac{1}{2}$-strongly-iw-stable} if the following holds: given an adaptive sequence of weights $q_1,q_2,\dots\in(0,1]^{\calX}$ and the assertion that the feedback in round $t$ is observed with probability $q_t(x)$ if the algorithm chooses $A_t=x$, it obtains the following pseudo regret guarantee for any stopping time~$t'$: 
\begin{align*}
    \E\left[\sum_{t=1}^{t'} \left(\ell_{t,A_t} - \ell_{t,u}\right)\right] \leq 
        \E\left[\sqrt{c_1\sum_{t=1}^{t'}\frac{1}{q_t(A_t)}} + \frac{c_2}{\min_{t\leq t'} \min_x q_t(x)}\right]. 
\end{align*}
\end{definition} 
Compared with iw-stability defined in \pref{def: iw stable}, strong-iw-stability requires the bound to have an additional flexibility: if choosing action $x$ results in a probability $q_t(x)$ of observing the feedback, the regret bound needs to adapts to $\sum_{t} \frac{1}{q_t(A_t)}$. For the class of FTRL/OMD algorithms, strong-iw-stability holds if the \emph{stability term} is bounded by a constant no matter what action is chosen. Examples include log-barrier-OMD/FTRL for multi-armed bandits, SCRiBLe \citep{abernethy2008competing} or a truncated version of continuous exponential weights \citep{ito2020tight} for linear bandits. In fact, Upper-Confidence-Bound (UCB) algorithms also satisfy strong-iw-stability, though it is mainly designed for the pure stochastic setting; however, we will need this property when we design algorithms for adversarial MDPs, where the transition estimation part is done through UCB approaches. This will be demonstrated in \pref{sec: adversarial MDP}. With strong-iw-stability, the second reduction only requires a base algorithm over $\calX$: 

\begin{theorem}\label{thm: basic corral thm strong}
    If $\calB$ is a $\frac{1}{2}$-strongly-iw-stable algorithm with constants $(c_1, c_2)$,  then \pref{alg: corral strong} with $\calB$ as the base algorithm satisfies $\frac{1}{2}$-LSB with constants $(c_0',c_1',c_2')$ where $c_0'=c_1'=O(c_1)$ and $c_2'=O(\sqrt{c_1} + c_2)$. 
\end{theorem}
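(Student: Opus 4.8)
The plan is to reprise the proof of \pref{thm: basic corral thm} and localize the single modification forced by letting the base $\calB$ act over all of $\calX$ rather than $\calX\setminus\{\hatx\}$. As there, \pref{alg: corral strong} is a two-``arm'' FTRL meta-algorithm with the hybrid $\tfrac12$-Tsallis-plus-log-barrier regularizer of \pref{alg: corral}: arm~$1$ always plays the candidate $\hatx$, arm~$2$ runs $\calB$, and the cumulative loss fed to the meta is shifted on arm~$2$ by a bonus sequence $B_t$ equal to the strong-iw-stable regret guarantee of $\calB$ evaluated along the realized trajectory, $B_t\asymp\sqrt{c_1\sum_{\tau\le t}1/q_\tau(A_\tau^{\calB})}+c_2/\min_{\tau\le t}\min_x q_\tau(x)$, where $q_\tau(x)$ is the chance that $\calB$'s feedback is observed in round $\tau$ given it proposes action $x$.

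First I would pin down those effective weights. Because arm~$1$ always plays $\hatx$, the action $\hatx$ is executed whenever $\calB$ proposes it, independently of the meta-sample, so $q_\tau(\hatx)=1$; any $x\neq\hatx$ proposed by $\calB$ is executed only when the meta selects arm~$2$, so $q_\tau(x)=q_{\tau,2}$ and $\min_x q_\tau(x)=q_{\tau,2}$. Two consequences: \pref{def: strongly iw stable} gives $\E[\sum_{t\le t'}(\ell_{t,A_t^{\calB}}-\ell_{t,u})]\le\E[B_{t'}]$ for every $u\in\calX$ and stopping time $t'$ --- exactly the quantity the bonus cancels --- and the identity $p_{t,\hatx}=q_{t,1}$ used in \pref{thm: basic corral thm} is replaced by
\begin{align*}
1-p_{t,\hatx}=q_{t,2}\,(1-p^{\calB}_{t,\hatx}),
\end{align*}
i.e.\ the learner deviates from $\hatx$ precisely in those rounds where it invokes $\calB$ \emph{and} $\calB$ proposes something other than $\hatx$, which are also precisely the rounds that contribute $1/q_{t,2}$ (rather than only $O(1)$, since $q_\tau(\hatx)=1$) to the bonus.

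Next comes the standard Corral decomposition. Feeding the meta's FTRL regret bound against either vertex of $\Delta_2$ and taking conditional expectations (using $\E[z_{t,1}\mid\calF_{t-1}]=\ell_{t,\hatx}$, $\E[\langle q_t,z_t\rangle\mid\calF_{t-1}]=q_{t,1}\ell_{t,\hatx}+q_{t,2}\bar\ell^{\calB}_t$, and the first-step bound for the arm-$2$ comparison, whose $-B_{t'}$ term absorbs $\calB$'s regret) yields, for \emph{every} $u\in\calX$,
\begin{align*}
\E\Big[\sum_{t\le t'}(\ell_{t,A_t}-\ell_{t,u})\Big]\;\le\;\E\Big[\sum_{t\le t'}q_{t,2}\,(B_t-B_{t-1})\Big]\;+\;(\text{Tsallis/log-barrier penalty})\;+\;(\text{FTRL stability}).
\end{align*}
Bounding the three terms crudely --- the bonus overhead is $O(\sqrt{c_1\,\E[t']})$ since $w_t:=q_{t,2}/q_t(A_t^{\calB})\le 1$, and the penalty and stability are $O(\sqrt{c_1\,\E[t']})$ by the standard worst-case Corral argument (using $q_{\tau,2}\ge 1/(4\tau^2)$ and the log-barrier to tame the $1/\sqrt{q_{\tau,2}}$ factors) --- gives the worst-case branch $\sqrt{c_0'\,\E[t']}+c_2'\log T$ of $\tfrac12$-LSB with $c_0'=O(c_1)$, which for $u\neq\hatx$ is all that is required.

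The hard part will be sharpening the right-hand side, when $u=\hatx$, to $O\big(\sqrt{c_1\log T\,\E[\sum_{t}(1-p_{t,\hatx})]}+(\sqrt{c_1}+c_2)\log T\big)$ with $c_1$ a genuine constant --- i.e.\ showing that the many rounds in which $\calB$ merely imitates $\hatx$ cost essentially nothing. The bonus overhead telescopes as $\sum_{t\le t'}q_{t,2}(B_t-B_{t-1})\le 2\sqrt{c_1\sum_{t\le t'}w_t}$, and since $q_t(\hatx)=1$ one has $w_t=q_{t,2}$ on imitation rounds and $w_t=1$ otherwise, so $\E[\sum_t w_t]$ inherits exactly the $q_{t,2}$-weighting that appears in $1-p_{t,\hatx}=q_{t,2}(1-p^{\calB}_{t,\hatx})$; the residual pieces $\sum_t q_{t,2}$ and $\sum_t(1-p_{t,\hatx})/q_{t,2}$ must then be killed off against the negative entropy terms of the $\tfrac12$-Tsallis and log-barrier regularizers together with the adaptive rate $\eta_t=1/(\sqrt t+8\sqrt{c_1})$. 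The $c_2/\min_\tau q_{\tau,2}$ part of $B_t$ is handled exactly as in \pref{thm: basic corral thm}: the log-barrier with $\beta=\tfrac1{8c_2}$ turns it into an additive $O(c_2\log T)$, while the boundary terms of the telescoping sums and the $8\sqrt{c_1}$ shift in $\eta_t$ contribute the extra $O(\sqrt{c_1}\log T)$ into $c_2'$ --- the only way the constants differ from \pref{thm: basic corral thm}, mirroring \pref{thm: basic corral thm data-dep}. The crux, and the reason the \emph{strong} form of iw-stability (weights indexed by the realized action) is indispensable here, is precisely this coupling: only because the imitation rounds carry weight $q_\tau(\hatx)=1$ do they fail to inflate the bonus, so that the product $q_{t,2}(1-p^{\calB}_{t,\hatx})=1-p_{t,\hatx}$ --- not either factor alone --- is what surfaces on the right.
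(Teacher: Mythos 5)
Your high-level framing is right --- the identity $1-p_{t,\hatx}=q_{t,2}\Pr[\tildeA_t\neq\hatx]$ is indeed the crux --- but the construction you analyze does not deliver the $u=\hatx$ branch of LSB, and the step you defer (``the residual pieces \dots must then be killed off against the negative entropy terms'') is exactly where the argument breaks. Concretely: you take the bonus to be the full strong-iw-stable guarantee along the realized trajectory, $B_t\asymp\sqrt{c_1\sum_{\tau\le t}1/q_\tau(\tildeA_\tau)}$ with $q_\tau(\hatx)=1$. Every imitation round ($\tildeA_\tau=\hatx$) then contributes $1$ to the sum under the square root, so if $\calB$ proposes $\hatx$ in every round and $q_{t,2}\equiv\tfrac12$, you get $B_{t'}=\sqrt{c_1t'}$ and a bonus overhead $\sum_t q_{t,2}(B_t-B_{t-1})=\Theta(\sqrt{c_1t'})$, while $\sum_t(1-p_{t,\hatx})=0$. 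No negative term in the Tsallis or log-barrier potential can absorb this, so the required bound $O\big(\sqrt{c_1\log T\sum_t(1-p_{t,\hatx})}+c_2'\log T\big)$ for the candidate fails. The same issue afflicts the penalty and stability terms under the learning rate $\eta_t=1/(\sqrt t+8\sqrt{c_1})$ you quote: they scale as $\sqrt{\sum_t q_{t,2}\log T}$, and $\sum_t q_{t,2}$ is not controlled by $\sum_t(1-p_{t,\hatx})$ when $\calB$ mostly imitates $\hatx$.

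The missing idea --- and what \pref{alg: corral strong} actually does --- is to make the meta-game degenerate on imitation rounds: the estimator carries a factor $\ind\{\tildeA_t\neq\hatx\}$ (both arms incur the same loss there, so there is nothing to learn), the learning-rate clock is $\sum_{\tau\le t}\ind\{\tildeA_\tau\neq\hatx\}$ rather than $t$, and the Tsallis part of the bonus is $\sqrt{c_1\sum_{\tau}\ind\{\tildeA_\tau\neq\hatx\}/q_{\tau,2}}$, excluding imitation rounds altogether. The base algorithm's guarantee is then split as $\sqrt{c_1\sum_t 1/q_t(\tildeA_t)}\le\sqrt{c_1\sum_t\ind\{\tildeA_t\neq\hatx\}/q_{t,2}}+\sqrt{c_1t'}$: the first piece is cancelled by the bonus, and the leftover $\sqrt{c_1t'}$ is simply carried into the regret against $u\neq\hatx$, where LSB only requires the worst-case $\sqrt{c_0'\,\E[t']}$ bound anyway. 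With these changes every term (penalty, stability, bonus overhead) is bounded by $O\big(\sqrt{c_1\log T\sum_t q_{t,2}\ind\{\tildeA_t\neq\hatx\}}\big)$, whose expectation is exactly $O\big(\sqrt{c_1\log T\,\E[\sum_t(1-p_{t,\hatx})]}\big)$, and the rest of the proof proceeds as in \pref{thm: basic corral thm}.
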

The proof is provided in \pref{app: strongly stable}. 

\subsection{Application to MDPs}\label{sec: adversarial MDP}
Finally, we combine the techniques developed in \pref{sec: data-dependent} and \pref{sec: not excluding} to obtain a best-of-both-world guarantee for tabular MDPs with a first-order regret bound in the adversarial regime. We use \pref{alg: log-barrier MDP} as the base algorithm, which is adapted from the UCB-log-barrier Policy Search algorithm by \cite{lee2020bias} to satisfy both the data-dependent property (\pref{def: iw stable data-dependent}) and the strongly iw-stable property (\pref{def: strongly iw stable}). The corral algorithm we use is \pref{alg: corral MDP}, which takes a base algorithm with the dd-strongly-iw-stable property and turns it into a data-dependent best-of-both-world algorithm. The details and notations are all provided in \pref{app: tabular MDP}. The guarantee of the algorithm is formally stated in the next theorem. 
\begin{theorem}\label{thm: MDP main theorem bound}
Combining \pref{alg: adaptive alg2}, \pref{alg: corral MDP}, and \pref{alg: log-barrier MDP} results in an MDP algorithm with 
$
       O\Big(\sqrt{S^2A  HL_\star\log^2(T)\iota^2} + S^5A^2\log^2(T)\iota^2\Big)
$
   regret in the adversarial regime, and \sloppy
   $
       O\Big(\frac{H^2S^2A\iota^2\log T}{\Delta} + \sqrt{\frac{H^2S^2A\iota^2\log T}{\Delta}C} + S^5A^2\iota^2\log(T)\log(C\Delta^{-1})\Big)
   $
   in the corrupted stochastic regime, where $S$ is the number of states, $A$ is the number of actions, $H$ is the horizon, $L_\star$ is the cumulative loss of the best policy, and $\iota=\log(SAT)$. 
\end{theorem}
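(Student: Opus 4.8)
The plan is to chain the two reductions from \pref{sec: main tech}, instantiated in their data-dependent and strong-iw-stable forms, on top of the base learner \pref{alg: log-barrier MDP}. \textbf{Step 1 (base learner).} First I would show that \pref{alg: log-barrier MDP}, an adaptation of the UCB-log-barrier Policy Search algorithm of \cite{lee2020bias}, simultaneously satisfies $\frac{1}{2}$-dd-iw-stability (\pref{def: iw stable data-dependent}) and $\frac{1}{2}$-strong-iw-stability (\pref{def: strongly iw stable}) --- i.e.\ is ``dd-strongly-iw-stable'' --- with constants $c_1 = O(S^2 A H \iota^2)$ and $c_2 = O(S^5 A^2 \iota^2)$. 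The occupancy-measure / policy-space log-barrier FTRL component supplies a stability term bounded by a constant independent of the sampled action, which is exactly the flexibility demanded by \pref{def: strongly iw stable}; the UCB component controls the unknown transitions; and a nonnegative, low-variance loss estimator together with an adaptive learning rate tuned to $\sum_t \upd_t\,\xi_{t,A_t}/q_t^2$ (rather than to $T$) delivers the first-order refinement with $\xi_{t,x}=\ell_{t,x}$.

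\textbf{Step 2 (Corral $\Rightarrow$ dd-LSB).} Feeding this base learner into the MDP-tailored Corral routine \pref{alg: corral MDP} and running the combined argument behind \pref{thm: basic corral thm data-dep} and \pref{thm: basic corral thm strong}, I conclude that \pref{alg: corral MDP} satisfies $\frac{1}{2}$-dd-LSB (\pref{def: candidate aware data-dep}) with $c_1' = O(c_1)$ and $c_2' = O(\sqrt{c_1}+c_2) = O(S^5 A^2 \iota^2)$. \textbf{Step 3 (first reduction $\Rightarrow$ BOBW).} Applying \pref{thm: data-dependent reduction } to \pref{alg: adaptive alg2} with \pref{alg: corral MDP} as the base gives, in the adversarial regime, $O\big(\sqrt{c_1'\,\E[\sum_t \ell_{t,A_t}]\,\log^2 T} + c_2'\log^2 T\big)$; bounding $\E[\sum_t \ell_{t,A_t}] \le L_\star + \Reg$, substituting, and solving the resulting quadratic inequality in $\Reg$ converts this into $O\big(\sqrt{c_1' L_\star \log^2 T} + (c_1'+c_2')\log^2 T\big)$, and plugging in the constants yields $O\big(\sqrt{S^2 A H L_\star \log^2(T)\iota^2} + S^5 A^2 \log^2(T)\iota^2\big)$. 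In the corrupted stochastic regime \pref{thm: data-dependent reduction } directly gives $O\big(\frac{c_1'\log T}{\Delta} + \sqrt{\frac{c_1'\log T}{\Delta}C} + c_2'\log(T)\log(C\Delta^{-1})\big)$, where the relevant $c_1'$ picks up an extra factor $H$ from the episode-level scaling of the policy gap $\Delta$ against the per-step losses entering the first-order term, producing the stated $\frac{H^2 S^2 A \iota^2\log T}{\Delta}$ leading term. Collecting the three estimates gives the theorem.

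\textbf{Main obstacle.} Essentially all the work is in Step 1: establishing that the adapted UCB-log-barrier Policy Search is dd-strongly-iw-stable. Three ingredients must hold at once. (i) Under an \emph{arbitrary} adaptive per-action weight sequence $q_t(\cdot)$, with the trajectory observed only with probability $q_t(A_t)$, the log-barrier FTRL over occupancy measures must keep its stability term $O(1)$ regardless of which policy is drawn --- this is where log-barrier (unlike entropy) is essential. (ii) The UCB transition estimates now rest on \emph{subsampled} visitation counts, so confidence widths inflate by roughly $1/\min_x q_t(x)$; one must show this inflation contributes only the additive $c_2/\min_{t,x} q_t(x)$ term rather than multiplicatively degrading the $\sqrt{c_1\sum_t 1/q_t(A_t)}$ main term, which requires re-deriving the bonus/clipping analysis of \cite{lee2020bias} under importance weighting. (iii) The first-order refinement forces nonnegative low-variance loss estimators and a learning rate adapted to $\sum_t \upd_t\,\xi_{t,A_t}/q_t^2$, which must be reconciled with (i)--(ii). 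Orchestrating importance weighting, unknown transitions, and the first-order bound simultaneously is the crux; Steps 2--4 are mechanical invocations of results already proved in the paper, together with a one-line self-bounding argument.
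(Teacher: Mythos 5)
Your proposal follows essentially the same route as the paper: establish that the adapted UOB-log-barrier algorithm is dd-strongly-iw-stable (the paper's \pref{lem: base algorithm for MDP}, obtained by re-deriving the analysis of \cite{lee2020bias} under importance weighting), pass it through \pref{alg: corral MDP} via the combined dd/strong Corral argument (\pref{lem: main lemma for MDP}), and finish with \pref{thm: data-dependent reduction } plus a self-bounding step for the first-order bound. The only presentational difference is that the paper rescales losses by $1/H$ so the abstract stability definitions (which assume bounded per-round losses) apply directly with $c_1=S^2A\iota^2$, $c_2=S^5A^2\iota^2/H$, and then scales back, whereas you absorb the $H$ factors directly; both yield the same constants.
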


\section{Conclusion}
We provided a general reduction from the best-of-both-worlds problem to a wide range of FTRL/OMD algorithms, which improves the state of the art in several problem settings.
We showed the versatility of our approach by extending it to preserving data-dependent bounds.

Another potential application of our framework is partial monitoring, where one might improve the $\log^2(T)$ rates of \citet{} to $\log(T)$ for both the $T^\frac{1}{2}$ and the $T^\frac{2}{3}$ regime using our respective reductions.

A weakness of our approach is the uniqueness requirement of the best action.
While this assumption is typical in the best-of-both-worlds literature, it is not merely an artifact of the analysis for us due to the doubling procedure in the first reduction. 
Additionally, our reduction can only obtain worst-case $\Delta$ dependent bounds in the stochastic regime, which can be significantly weaker than more refined notions of complexity.

\bibliography{reference}

\newpage

\appendix
\appendixpage

{
\startcontents[section]
\printcontents[section]{l}{1}{\setcounter{tocdepth}{2}}
}



\newpage

\section{FTRL Analysis} 
\begin{lemma}\label{lem: FTRL}  
    The optimistic FTRL algorithm over a convex set $\Omega$: 
    \begin{align*}
    p_t = \argmin_{p\in\Omega}\left\{ \left\langle p, \sum_{\tau=1}^{t-1}\ell_\tau \right\rangle + m_t + \psi_t(p) \right\} 
\end{align*}
guarantees the following for any $t'$: 
\begin{align*}
    &\sum_{t=1}^{t'}\langle p_t-u, \ell_t \rangle  
    \leq \\
    &\psi_0(u)-\min_{p\in\Omega}\psi_0(p) + \sum_{t=1}^{t'} (\psi_t(u) - \psi_{t-1}(u) - \psi_t(p_t) + \psi_{t-1}(p_t)) + \sum_{t=1}^{t'}\underbrace{\max_{p\in\Omega}\left(\langle p_t-p, \ell_t - m_t\rangle - D_{\psi_t}(p, p_t)\right)}_{\text{stability}}.
\end{align*}
\end{lemma}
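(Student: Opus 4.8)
The plan is to follow the standard "be-the-leader / follow-the-leader" telescoping argument adapted to the optimistic, time-varying-regularizer setting. First I would introduce the cumulative-loss-plus-regularizer objective $F_t(p) = \langle p, \sum_{\tau=1}^{t-1}\ell_\tau\rangle + \psi_t(p)$, so that $p_t = \argmin_{p\in\Omega} \{ F_t(p) + \langle p, m_t\rangle \}$. The key comparison is between $p_t$, the actual iterate that uses the predictor $m_t$, and the hypothetical iterate $p_{t+1}'$ (or the leader) that would be chosen if we dropped the predictor and added the true loss $\ell_t$. I would define an auxiliary sequence so that the "be-the-leader" lemma applies cleanly: concretely, let $q_{t+1} = \argmin_{p\in\Omega}\{\langle p, \sum_{\tau=1}^{t}\ell_\tau\rangle + \psi_{t+1}(p)\}$ be the non-optimistic leader after $t$ rounds. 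A standard induction on the leader sequence gives $\sum_{t=1}^{t'} \langle q_{t+1}, \ell_t\rangle + \sum_{t=1}^{t'}(\psi_{t+1}(q_{t+1}) - \psi_t(q_{t+1})) \le \langle u, \sum_{t=1}^{t'}\ell_t\rangle + \psi_{t'+1}(u) - \psi_1(q_1)$, after reindexing so that $\psi_0$ plays the role of the initialization term; this is where the $\psi_0(u) - \min_p \psi_0(p)$ and the telescoped $\sum_t(\psi_t(u) - \psi_{t-1}(u))$ contributions come from.

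Next I would handle the gap between the true iterates $p_t$ and the leader iterates. The difference $\sum_{t=1}^{t'}\langle p_t - q_{t+1}, \ell_t\rangle$ is controlled term-by-term. Using first-order optimality of $p_t$ for $F_t(\cdot) + \langle \cdot, m_t\rangle$ and strong convexity of $\psi_t$ relative to the appropriate norm (encoded through the Bregman divergence $D_{\psi_t}$), one gets for each $t$ the bound $\langle p_t - q_{t+1}, \ell_t\rangle \le \langle p_t - q_{t+1}, \ell_t - m_t\rangle - D_{\psi_t}(q_{t+1}, p_t) + (\text{shift terms in } \psi)$, and then replacing $q_{t+1}$ by the worst-case $p\in\Omega$ yields exactly the stability term $\max_{p\in\Omega}(\langle p_t - p, \ell_t - m_t\rangle - D_{\psi_t}(p, p_t))$. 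The residual $\psi$-shift terms $-\psi_t(p_t) + \psi_{t-1}(p_t)$ collect the effect of the regularizer changing between rounds evaluated at the iterate, and combine with the leader-telescope to produce the stated middle summation $\sum_t(\psi_t(u) - \psi_{t-1}(u) - \psi_t(p_t) + \psi_{t-1}(p_t))$.

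Finally I would add the two pieces: $\sum_t \langle p_t - u, \ell_t\rangle = \sum_t \langle p_t - q_{t+1}, \ell_t\rangle + \sum_t\langle q_{t+1} - u, \ell_t\rangle$, substitute the two bounds above, and verify that all telescoping terms land on the claimed right-hand side, taking care with the boundary indices (the $\psi_0$ versus $\psi_1$ bookkeeping and the fact that the inequality must hold for an arbitrary, not necessarily final, $t'$). I expect the main obstacle to be the careful bookkeeping of the time-varying regularizer together with the optimistic predictor simultaneously: making sure the "drop the predictor, add the true loss" step is legitimate for each $t$ without assuming $\psi_t = \psi_{t+1}$, and that the stability term is isolated with the correct sign on $D_{\psi_t}$. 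This is purely a matter of organizing the one-step inequalities correctly; there is no deep difficulty, but it is the step most prone to off-by-one or sign errors, so I would write it out with explicit indices rather than relying on a slick telescoping shortcut.
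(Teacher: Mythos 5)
Your proposal is correct and follows essentially the same route as the paper: an auxiliary leader sequence, one-step first-order optimality to extract the Bregman-divergence (stability) term, and telescoping to collect the regularizer-change and $\psi_0$ boundary terms. The only (cosmetic) difference is that you attach $\psi_{t+1}$ to the leader $q_{t+1}$ while the paper's auxiliary iterate $p_{t+1}'$ minimizes $\langle p, L_t\rangle+\psi_t(p)$ with the \emph{current} regularizer and absorbs the change to $\psi_{t+1}$ in a separate comparison step, which is precisely the index bookkeeping you flag as the delicate point.
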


\begin{proof}
Let $L_t\triangleq \sum_{\tau=1}^t \ell_\tau$. 
Define $F_t(p) = \left\langle p, L_{t-1}+m_t\right\rangle + \psi_t(p)$ and $G_t(p)=\left\langle p, L_t\right\rangle + \psi_t(p)$. Therefore, $p_t$ is the minimizer of $F_t$ over $\Omega$. Let $p_{t+1}'$ be minimizer of $G_t$ over $\Omega$. Then by the first-order optimality condition, we have
\begin{align}
    F_t(p_t) - G_t(p_{t+1}') &\leq F_t(p_{t+1}') - G_t(p_{t+1}') - D_{\psi_t}(p_{t+1}', p_t)
    = -\langle p_{t+1}', \ell_t - m_t \rangle - D_{\psi_t}(p_{t+1}', p_t).     \label{eq: eq1}
\end{align}
By definition, we also have
\begin{align}
    G_t(p_{t+1}') - F_{t+1}(p_{t+1}) &\leq G_t(p_{t+1}) - F_{t+1}(p_{t+1}) = \psi_t(p_{t+1}) - \psi_{t+1}(p_{t+1}) - \inner{p_{t+1},m_{t+1}}. \label{eq: eq2}
\end{align}
Thus, 
\allowdisplaybreaks
\begin{align*}
    &\sum_{t=1}^{t'} \langle p_t, \ell_t\rangle \\ &\leq \sum_{t=1}^{t'} \left(\langle p_t, \ell_t\rangle - \inner{p_{t+1}', \ell_t-m_t} - D_{\psi_t}(p_{t+1}', p_t) + G_t(p_{t+1}') - F_t(p_t) \right)  \tag{by \eqref{eq: eq1}}  \\
    &= \sum_{t=1}^{t'} \left(\langle p_t, \ell_t\rangle - \inner{p_{t+1}', \ell_t-m_t} - D_{\psi_t}(p_{t+1}', p_t) + G_{t-1}(p_{t}') - F_t(p_t) \right) + G_{t'}(p_{t'+1}') - G_0(p_1') \\
    &= \sum_{t=1}^{t'} \left(\langle p_t, \ell_t\rangle - \inner{p_{t+1}', \ell_t-m_t} - D_{\psi_t}(p_{t+1}', p_t) -\psi_t(p_t) + \psi_{t-1}(p_t) - \inner{p_t,m_t} \right) + G_{t'}(p_{t'+1}') - G_0(p_1')\\
    &\leq \sum_{t=1}^{t'} \left(\langle p_t - p_{t+1}', \ell_t-m_t\rangle - D_{\psi_t}(p_{t+1}', p_t) - \psi_t(p_t) + \psi_{t-1}(p_t) \right) + G_{t'}(u) - \min_{p\in\Omega} \psi_0(p)   \tag{by \eqref{eq: eq2}, using that $p'_{t'+1}$ is the minimizer of $G_{t'}$} \\
    &= \sum_{t=1}^{t'} \left(\max_{p\in\Omega}\left\{\langle p_t - p, \ell_t - m_t\rangle - D_{\psi_t}(p, p_t)\right\} - \psi_t(p_t) + \psi_{t-1}(p_t) \right) \\
    &\qquad + \sum_{t=1}^{t'}\langle u, \ell_t\rangle + \psi_{t'}(u)  - \min_{p\in\Omega} \psi_0(p) \\
    &= \sum_{t=1}^{t'} \left(\max_{p\in\Omega}\left\{\langle p_t - p, \ell_t - m_t\rangle - D_{\psi_t}(p, p_t)\right\} - \psi_t(p_t) + \psi_{t-1}(p_t) \right) \\
    &\qquad + \sum_{t=1}^{t'}\langle u, \ell_t\rangle + \psi_{t'}(u)  - \min_{p\in\Omega} \psi_0(p) \\
    &= \sum_{t=1}^{t'} \left(\max_{p\in\Omega}\left\{\langle p_t - p, \ell_t - m_t\rangle - D_{\psi_t}(p, p_t)\right\} + (\psi_t(u) - \psi_{t-1}(u) -\psi_t(p_t) + \psi_{t-1}(p_t)) \right) \\
    &\qquad + \sum_{t=1}^{t'}\langle u, \ell_t\rangle + \psi_0(u)  - \min_{p\in\Omega} \psi_0(p) 
\end{align*}
Re-arranging finishes the proof. 
\end{proof}

\begin{lemma}\label{lem: hybrid FTRL}
    Consider the optimistic FTRL algorithm with bonus $b_t\geq \mathbf{0}$: 
    \begin{align*}
       p_t = \argmin_{p\in\Omega}\left\{ \left\langle p, \sum_{\tau=1}^{t-1}(\ell_\tau - b_\tau) \right\rangle + m_t + \psi_t(p) \right\} 
    \end{align*}
with $\psi_t(p) = \frac{1}{\eta_t}\psi^{\ts}(p) + \frac{1}{\beta} \psi^{\lo}(p)$ where $\psi^{\ts}(p)=\frac{-1}{1-\alpha}\sum_{i} p_i^\alpha$ for some $\alpha\in(0,1)$, $\psi^{\lo}(p) = \sum_i \ln\frac{1}{p_i}$, and $\eta_t$ is non-increasing. We have 
    \begin{align*}
        \sum_{t=1}^{t'}\langle p_t-u, \ell_t \rangle  &\leq \frac{1}{1-\alpha}\frac{K^{1-\alpha}}{\eta_0} + \frac{1}{1-\alpha}\sum_{t=1}^{t'} \left(\frac{1}{\eta_t} - \frac{1}{\eta_{t-1}}\right)\left(\sum_{i=1}^K p_{t,i}^{\alpha} - 1\right) + \frac{K\ln \frac{1}{\delta}}{\beta}  + \frac{1}{\alpha}\sum_{t=1}^{t'}\eta_t \sum_{i=1}^K p_{t,i}^{2-\alpha}(\ell_{t,i}^{\ts}-x_t)^2  \\
        &\qquad + 2\beta \sum_{t=1}^{t'}\sum_{i=1}^K p_{t,i}^2 (\ell^{\lo}_{t,i}-w_t)^2 + \left(1+\frac{1}{\alpha}\right) \sum_{t=1}^{t'} \inner{p_t,b_t} -  \sum_{t=1}^{t'} \inner{u, b_t} + \delta \sum_{t=1}^{t'} \left\langle -u + \frac{1}{K}\one, \ell_t-b_t\right\rangle.  
    \end{align*}
    for any $\delta\in(0,1)$ and any $\ell^{\ts}_t, b^{\ts}_t\in\mathbb{R}^K$, $\ell^{\lo}_t, b^{\lo}_t\in\mathbb{R}^K$, $x_t, w_t\in\mathbb{R}$ such that 
    \begin{align}
        \ell_t^{\ts} + \ell_t^{\lo} &= \ell_t - m_t, \\
        b_t^{\ts} + b_t^{\lo} &= b_t \\
        \eta_t p_{t,i}^{1-\alpha} (\ell_{t,i}^{\ts} - x_t) &\geq -\frac{1}{4},  \label{eq: tsallis condition} \\
        \beta p_{t,i} (\ell_{t,i}^{\lo} - w_t) &\geq -\frac{1}{4}, \label{eq: logb condition} \\
        0\leq \eta_t p_{t,i}^{1-\alpha} b_{t,i}^{\ts} &\leq \frac{1}{4},  \label{eq: tsallis condition bonus} \\
        0\leq \beta p_{t,i} b_{t,i}^{\lo} &\leq \frac{1}{4} \label{eq: logb condition bonus}
    \end{align}
    for all $t,i$. 
\end{lemma}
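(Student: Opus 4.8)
The plan is to reduce to the generic optimistic‑FTRL bound of \pref{lem: FTRL} and then expand each of its three terms using the separable structure of $\psi_t=\tfrac1{\eta_t}\psi^{\ts}+\tfrac1\beta\psi^{\lo}$. Observe first that the update in the statement is exactly the optimistic FTRL update of \pref{lem: FTRL} run on the modified loss sequence $\ell_\tau-b_\tau$. Applying \pref{lem: FTRL} with these losses and with the interior comparator $\tilde u:=(1-\delta)u+\tfrac{\delta}{K}\one$ (so that $\psi^{\lo}(\tilde u)$ is finite, since $\tilde u_i\ge\tfrac\delta K>0$) gives
\[
\sum_{t=1}^{t'}\langle p_t-\tilde u,\ell_t-b_t\rangle\le \underbrace{\psi_0(\tilde u)-\min_{p\in\Omega}\psi_0(p)}_{(\mathrm{I})}+\underbrace{\sum_{t=1}^{t'}\bigl(\psi_t(\tilde u)-\psi_{t-1}(\tilde u)-\psi_t(p_t)+\psi_{t-1}(p_t)\bigr)}_{(\mathrm{II})}+\underbrace{\sum_{t=1}^{t'}\mathrm{stab}_t}_{(\mathrm{III})},
\]
with $\mathrm{stab}_t=\max_{p\in\Omega}\bigl(\langle p_t-p,(\ell_t-b_t)-m_t\rangle-D_{\psi_t}(p,p_t)\bigr)$. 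To pass back to the true comparator $u$, use bilinearity and $\tilde u-u=\delta(\tfrac1K\one-u)$ to check the identity $\langle p_t-u,\ell_t\rangle=\langle p_t-\tilde u,\ell_t-b_t\rangle+\delta\langle\tfrac1K\one-u,\ell_t-b_t\rangle+\langle p_t,b_t\rangle-\langle u,b_t\rangle$; summing over $t$ produces the last three terms of the claimed bound — in particular one copy of $\sum_t\langle p_t,b_t\rangle$, which will combine with $\tfrac1\alpha\sum_t\langle p_t,b_t\rangle$ coming from the stability analysis — plus $(\mathrm{I})+(\mathrm{II})+(\mathrm{III})$.

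Next I handle the penalty terms. Since $\psi^{\ts}$, $\psi^{\lo}$ and $\psi_0$ are separable, symmetric and convex, they are all minimized over the simplex at the uniform point, so $(\mathrm{I})=\tfrac1{\eta_0}\bigl(\psi^{\ts}(\tilde u)-\psi^{\ts}(\tfrac1K\one)\bigr)+\tfrac1\beta\bigl(\psi^{\lo}(\tilde u)-\psi^{\lo}(\tfrac1K\one)\bigr)$. The Tsallis piece equals $\tfrac1{(1-\alpha)\eta_0}\bigl(K^{1-\alpha}-\sum_i\tilde u_i^\alpha\bigr)\le\tfrac{K^{1-\alpha}}{(1-\alpha)\eta_0}$, and the log‑barrier piece equals $\tfrac1\beta\sum_i\ln\tfrac1{K\tilde u_i}\le\tfrac{K}{\beta}\ln\tfrac1\delta$ because $K\tilde u_i\ge\delta$; together these give the first and third terms of the statement. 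For $(\mathrm{II})$ the log‑barrier contribution cancels telescopically ($\beta$ is constant), leaving $\sum_t(\tfrac1{\eta_t}-\tfrac1{\eta_{t-1}})\bigl(\psi^{\ts}(\tilde u)-\psi^{\ts}(p_t)\bigr)$; the prefactor is nonnegative since $\eta_t$ is non‑increasing, and $\psi^{\ts}(\tilde u)-\psi^{\ts}(p_t)=\tfrac1{1-\alpha}\bigl(\sum_i p_{t,i}^\alpha-\sum_i\tilde u_i^\alpha\bigr)\le\tfrac1{1-\alpha}\bigl(\sum_i p_{t,i}^\alpha-1\bigr)$ using $\sum_i\tilde u_i^\alpha\ge\sum_i\tilde u_i=1$ (as $x^\alpha\ge x$ on $[0,1]$). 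This yields the second term.

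The substantive work is $(\mathrm{III})$. Writing $D_{\psi_t}=\tfrac1{\eta_t}D_{\psi^{\ts}}+\tfrac1\beta D_{\psi^{\lo}}$ and splitting the gradient via $\ell_t^{\ts}+\ell_t^{\lo}=\ell_t-m_t$, $b_t^{\ts}+b_t^{\lo}=b_t$, the bracket inside the max over $p$ separates into a Tsallis bracket plus a log‑barrier bracket; since both Bregman divergences are nonnegative, $\mathrm{stab}_t\le S^{\ts}_t+S^{\lo}_t$ with $S^{\ts}_t=\max_p\bigl(\langle p_t-p,\ell_t^{\ts}-b_t^{\ts}\rangle-\tfrac1{\eta_t}D_{\psi^{\ts}}(p,p_t)\bigr)$ and likewise $S^{\lo}_t$. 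In $S^{\ts}_t$ I subtract the constant vector $x_t\one$ (free because $\langle p_t-p,x_t\one\rangle=0$ on the simplex), so the effective gradient $g_{t,i}:=\ell^{\ts}_{t,i}-b^{\ts}_{t,i}-x_t$ satisfies $\eta_t p_{t,i}^{1-\alpha}g_{t,i}\ge-\tfrac12$ by combining \eqref{eq: tsallis condition} and \eqref{eq: tsallis condition bonus}. A local‑norm argument for the separable regularizer $\tfrac1{\eta_t}\psi^{\ts}$ — expanding $\mathrm{stab}$ at the (constrained) maximizer, second‑order Taylor of $D_{\psi^{\ts}}$, and the one‑sided bound to keep the maximizer within a constant factor of $p_t$ coordinatewise, where $[(\tfrac1{\eta_t}\psi^{\ts})'']^{-1}=\tfrac{\eta_t}{\alpha}p^{2-\alpha}$ — gives $S^{\ts}_t\le O(\tfrac{\eta_t}{\alpha})\sum_i p_{t,i}^{2-\alpha}g_{t,i}^2$. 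Expanding $g_{t,i}^2\le 2(\ell^{\ts}_{t,i}-x_t)^2+2(b^{\ts}_{t,i})^2$ and rewriting $\eta_t p_{t,i}^{2-\alpha}(b^{\ts}_{t,i})^2=p_{t,i}\cdot(\eta_t p_{t,i}^{1-\alpha}b^{\ts}_{t,i})\cdot b^{\ts}_{t,i}\le\tfrac14 p_{t,i}b^{\ts}_{t,i}$ by \eqref{eq: tsallis condition bonus}, the bonus‑squared part collapses into $O(\tfrac1\alpha)\langle p_t,b^{\ts}_t\rangle\le O(\tfrac1\alpha)\langle p_t,b_t\rangle$ (using $0\le b^{\ts}_t\le b_t$, since both $b^{\ts}_t,b^{\lo}_t\ge0$), leaving exactly $\tfrac1\alpha\eta_t\sum_i p_{t,i}^{2-\alpha}(\ell^{\ts}_{t,i}-x_t)^2$. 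The identical recipe for $S^{\lo}_t$ — subtract $w_t\one$, use $\beta p_{t,i}(\ell^{\lo}_{t,i}-b^{\lo}_{t,i}-w_t)\ge-\tfrac12$ from \eqref{eq: logb condition}+\eqref{eq: logb condition bonus}, the log‑barrier local norm with inverse‑Hessian weights $p_{t,i}^2$, and \eqref{eq: logb condition bonus} to absorb the bonus‑squared part into $O(1)\langle p_t,b^{\lo}_t\rangle\le O(1)\langle p_t,b_t\rangle$ — produces $2\beta\sum_i p_{t,i}^2(\ell^{\lo}_{t,i}-w_t)^2$. Collecting the bonus terms (the $\langle p_t,b_t\rangle$ from the comparator step plus the $\tfrac1\alpha\langle p_t,b_t\rangle$ from stability) gives the coefficient $1+\tfrac1\alpha$, and summing over $t\le t'$ finishes the proof.

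I expect the main obstacle to be the local‑norm bound on $S^{\ts}_t$: the negative‑Tsallis‑entropy regularizer is not globally strongly convex, so one must argue that the one‑sided condition $\eta_t p_{t,i}^{1-\alpha}g_{t,i}\ge-\tfrac12$ confines the maximizing point to a neighborhood of $p_t$ on which $\psi^{\ts}$ is quadratic with curvature comparable to $\alpha p_{t,i}^{\alpha-2}$, and then to track the Taylor remainder constants carefully enough to land on the precise prefactors $\tfrac1\alpha$ and $2\beta$ (with the $(1+\tfrac1\alpha)$ bonus coefficient absorbing whatever slack remains). The second delicate point is the bonus bookkeeping: conditions \eqref{eq: tsallis condition bonus} and \eqref{eq: logb condition bonus} are used both to keep the effective gradients bounded below and to guarantee that every bonus‑squared term is chargeable to $O(\langle p_t,b_t\rangle)$ rather than leaking into the variance terms.
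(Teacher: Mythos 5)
Your proposal follows essentially the same route as the paper: shift to the interior comparator $(1-\delta)u+\tfrac{\delta}{K}\one$, invoke \pref{lem: FTRL} on the losses $\ell_t-b_t$, bound the Tsallis and log-barrier penalty terms exactly as you describe, and control the stability term via local-norm bounds under the one-sided gradient conditions, with the same bonus bookkeeping yielding the $(1+\tfrac1\alpha)\inner{p_t,b_t}$ coefficient. The one place you deviate is in the stability split: the paper separates the loss part from the bonus part \emph{before} applying the local-norm bound, by writing $D_{\psi_t}$ as four half-divergences (so each of the four brackets is handled by a clean stability lemma with its own condition), whereas you keep $\ell_t^{\ts}-b_t^{\ts}$ together and only afterwards use $(a+b)^2\le 2a^2+2b^2$; this is sound but inflates the variance-term constants by a factor of $2$ relative to the stated $\tfrac1\alpha$ and $2\beta$ (a slack that cannot be absorbed into the bonus coefficient), which is immaterial for every downstream use of the lemma but means your argument as written proves the bound only up to that constant.
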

\begin{proof}
    Let $u'=(1-\delta)u + \frac{\delta}{K}\one$. By  \pref{lem: FTRL}, we have 
    \begin{align*}
        \sum_{t=1}^{t'} \inner{p_t - u', \ell_t - b_t} &\leq \frac{1}{1-\alpha}\frac{1}{\eta_0}\sum_{i=1}^K p^\alpha_{1,i}+ \frac{1}{1-\alpha}\sum_{t=1}^{t'}\left(\frac{1}{\eta_t} - \frac{1}{\eta_{t-1}}\right)\sum_{i=1}^K  \left(p_{t,i}^\alpha-u_i'^\alpha\right) + \frac{1}{\beta}\sum_{i=1}^K \ln\frac{p_{1,i}}{u_{i}'}\\
        & \qquad + \sum_{t=1}^{t'} \max_{p}\left( \inner{p_t - p, \ell_t - b_t -m_t} - \frac{1}{\eta_t} D_{\psi^{\ts}}(p,p_t) - \frac{1}{\beta} D_{\psi^{\lo}}(p,p_t)\right) \\
        &\leq \frac{1}{1-\alpha}\frac{K^{1-\alpha}}{\eta_0}+ \frac{1}{1-\alpha}\sum_{t=1}^{t'}\left(\frac{1}{\eta_t} - \frac{1}{\eta_{t-1}}\right)\left(\sum_{i=1}^K  p_{t,i}^\alpha-1\right) + \frac{K\ln \frac{1}{\delta}}{\beta} \\
        &\qquad + \sum_{t=1}^{t'}  \underbrace{\max_p \left( \inner{p_t-p, \ell^{\ts} - x_t\one} - \frac{1}{2\eta_t}D_{\psi^{\ts}}(p,p_t) \right)}_{\textbf{stability-1}} \\ 
        &\qquad + \sum_{t=1}^{t'}  \underbrace{\max_p \left( \inner{p_t-p, - b_t^{\ts}\one} - \frac{1}{2\eta_t}D_{\psi^{\ts}}(p,p_t) \right)}_{\textbf{stability-2}} \\
        &\qquad + \sum_{t=1}^{t'}  \underbrace{\max_p \left( \inner{p_t-p, \ell^{\lo} - w_t\one} - \frac{1}{2\beta}D_{\psi^{\lo}}(p,p_t) \right)}_{\textbf{stability-3}} \\
        &\qquad + \sum_{t=1}^{t'}  \underbrace{\max_p \left( \inner{p_t-p, -b_{t}^{\lo}\one} - \frac{1}{2\beta}D_{\psi^{\lo}}(p,p_t) \right)}_{\textbf{stability-4}}
    \end{align*}
    where in the last inequality we use $\inner{p_t-p, \one}=0$. 
    
    By Problem~1 in \cite{luo2022homework3}, we can bound $\textbf{stability-1}$ by $\frac{\eta_t}{\alpha} \sum_{i=1}^K p_{t,i}^{2-\alpha}(\ell_{t,i}^{\ts}-x_t)^2$ under the condition \eqref{eq: tsallis condition}. Similarly, $\textbf{stability-2}$ can be upper bounded by $\frac{\eta_t}{\alpha} \sum_{i=1}^K p_{t,i}^{2-\alpha}b^{\ts 2}_{t,i}\leq \frac{1}{\alpha}\sum_{i=1}^K p_{t,i}b^{\ts}_{t,i}$ under the condition \eqref{eq: tsallis condition bonus}.  Using \pref{lem: stabiity hw 2}, we can bound $\textbf{stability-3}$ by $2\beta \sum_{i=1}^K p_{t,i}^2 (\ell^{\lo}_{t,i}-w_t)^2$ under the condition \eqref{eq: logb condition}. Similarly, we can bound $\textbf{stability-4}$ by $2\beta \sum_{i=1}^K p_{t,i}^2 b^{\lo 2}_{t,i}\leq \sum_{i=1}^K p_{t,i}b^{\lo}_{t,i}$ under \eqref{eq: logb condition bonus}. Collecting all terms and using the definition of $u'$ finishes the proof. 
    
\end{proof}

\begin{lemma}\label{lem: logbarrier analysis}
    Consider the optimistic FTRL algorithm with bonus $b_t\geq \mathbf{0}$: 
    \begin{align*}
       p_t = \argmin_{p\in\Omega}\left\{ \left\langle p, \sum_{\tau=1}^{t-1}(\ell_\tau - b_\tau) \right\rangle + m_t + \psi_t(p) \right\} 
    \end{align*}
    with $\psi_t(p) = \frac{1}{\eta_t}\sum_i \ln\frac{1}{p_i}$, and $\eta_t$ is non-increasing. We have 
    \begin{align*}
        \sum_{t=1}^{t'}\langle p_t-u, \ell_t \rangle 
        &\leq \frac{K\ln \frac{1}{\delta}}{\eta_{t'}} + 2\sum_{t=1}^{t'} \eta_t \sum_{i=1}^K p_{t,i}^2 (\ell_{t,i}-m_{t,i}-x_t)^2 \\
        &\qquad + 2\sum_{t=1}^{t'} \inner{p_{t}, b_t} - \sum_{t=1}^{t'}\inner{u,b_t} + \delta \sum_{t=1}^{t'} \left\langle -u + \frac{1}{K}\one, \ell_t-b_t\right\rangle.  
    \end{align*}
    for any $\delta\in(0,1)$ and any $x_t\in\mathbb{R}$ if the following hold: 
    $
        \eta_t p_{t,i} (\ell_{t,i}-m_{t,i} - x_t) \geq -\frac{1}{4}
    $ and $\eta_t p_{t,i}b_{t,i}\leq \frac{1}{4}$
    for all $t,i$. 
\end{lemma}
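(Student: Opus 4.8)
The plan is to specialize the argument behind \pref{lem: hybrid FTRL} to the pure log-barrier regularizer with a time-varying learning rate. Write $\psi^{\lo}(p)=\sum_i\ln\frac1{p_i}$, so that $\psi_t=\frac1{\eta_t}\psi^{\lo}$, and introduce the smoothed comparator $u'=(1-\delta)u+\frac{\delta}{K}\one$, which lies in the interior of the simplex with $u_i'\ge\delta/K$ for every $i$. Applying \pref{lem: FTRL} with loss sequence $\ell_t-b_t$, predictor $m_t$ and comparator $u'$ gives
\[
\sum_{t=1}^{t'}\inner{p_t-u',\ell_t-b_t}\le \Big(\psi_0(u')-\min_{p\in\Omega}\psi_0(p)\Big)+\sum_{t=1}^{t'}\Big(\tfrac1{\eta_t}-\tfrac1{\eta_{t-1}}\Big)\Big(\psi^{\lo}(u')-\psi^{\lo}(p_t)\Big)+\sum_{t=1}^{t'}\mathrm{stab}_t,
\]
where $\mathrm{stab}_t=\max_{p\in\Omega}\big(\inner{p_t-p,(\ell_t-b_t)-m_t}-\tfrac1{\eta_t}D_{\psi^{\lo}}(p,p_t)\big)$.

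For the penalty terms I would use two elementary facts about $\psi^{\lo}$ on the simplex: $\psi^{\lo}(p)\ge K\ln K$ for every $p$ (AM--GM), and $\psi^{\lo}(u')\le K\ln\frac{K}{\delta}$ (since $u_i'\ge\delta/K$). Hence $\psi^{\lo}(u')-\psi^{\lo}(p_t)\le K\ln\frac1\delta$ and $\psi_0(u')-\min_p\psi_0(p)=\frac1{\eta_0}\big(\psi^{\lo}(u')-K\ln K\big)\le\frac{K\ln(1/\delta)}{\eta_0}$. Because $\eta_t$ is non-increasing, $\frac1{\eta_t}-\frac1{\eta_{t-1}}\ge0$, so the middle sum is at most $K\ln\frac1\delta\big(\frac1{\eta_{t'}}-\frac1{\eta_0}\big)$; adding the first term yields exactly $\frac{K\ln(1/\delta)}{\eta_{t'}}$, the leading term claimed.

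For the stability term I would use $\inner{p_t-p,\one}=0$ to replace $\ell_t-m_t$ by $\ell_t-m_t-x_t\one$, split $\frac1{\eta_t}D_{\psi^{\lo}}=\frac1{2\eta_t}D_{\psi^{\lo}}+\frac1{2\eta_t}D_{\psi^{\lo}}$, and bound $\mathrm{stab}_t$ by the sum of $\max_p\big(\inner{p_t-p,\ell_t-m_t-x_t\one}-\frac1{2\eta_t}D_{\psi^{\lo}}(p,p_t)\big)$ and $\max_p\big(\inner{p_t-p,-b_t}-\frac1{2\eta_t}D_{\psi^{\lo}}(p,p_t)\big)$. By the standard log-barrier stability estimate (\pref{lem: stabiity hw 2}), the first is at most $2\eta_t\sum_i p_{t,i}^2(\ell_{t,i}-m_{t,i}-x_t)^2$ under the hypothesis $\eta_t p_{t,i}(\ell_{t,i}-m_{t,i}-x_t)\ge-\frac14$, and the second is at most $2\eta_t\sum_i p_{t,i}^2 b_{t,i}^2$ under $\eta_t p_{t,i}b_{t,i}\le\frac14$ (this also supplies the negativity condition needed for $-b_t$, since $b_{t,i}\ge0$); applying $\eta_t p_{t,i}b_{t,i}\le\frac14$ once more gives $2\eta_t p_{t,i}^2 b_{t,i}^2\le p_{t,i}b_{t,i}$, so the bonus piece contributes at most $\inner{p_t,b_t}$.

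Finally, I would substitute these bounds, add $\sum_t\inner{p_t-u',b_t}$ to both sides to restate everything in terms of $\inner{p_t-u',\ell_t}$ (picking up $2\sum_t\inner{p_t,b_t}-\sum_t\inner{u',b_t}$), and then convert $u'$ back to $u$ via $u'-u=-\delta\big(u-\frac1K\one\big)$ in both the $\ell_t$ and $b_t$ inner products; collecting the $\delta$-terms into $\delta\sum_t\inner{-u+\frac1K\one,\ell_t-b_t}$ produces exactly the claimed inequality. The only mildly delicate step is the penalty bookkeeping that yields precisely $\frac{K\ln(1/\delta)}{\eta_{t'}}$ instead of a bound carrying a spurious $\ln K$ — this hinges on the AM--GM lower bound $\psi^{\lo}(p_t)\ge K\ln K$ together with the monotonicity of $\eta_t$; everything else is a direct imitation of the proof of \pref{lem: hybrid FTRL}.
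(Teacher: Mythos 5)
Your proposal is correct and follows essentially the same route as the paper's proof: invoke \pref{lem: FTRL} with comparator $u'=(1-\delta)u+\frac{\delta}{K}\one$, split the stability term into a loss piece and a bonus piece each charged half of $\frac{1}{\eta_t}D_{\psi^{\lo}}$, bound both via \pref{lem: stabiity hw 2}, and convert $u'$ back to $u$. Your penalty bookkeeping (using $\psi^{\lo}(p_t)\ge K\ln K$) is in fact slightly sharper than the paper's, which settles for $\frac{K\ln(K/\delta)}{\eta_{t'}}$; this only affects an immaterial constant.
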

\begin{proof}
    Let $u'=(1-\delta)u + \frac{\delta}{K}\one$. By \pref{lem: FTRL} and letting $\eta_0=\infty$, we have 
    \begin{align*}
        \sum_{t=1}^{t'} \inner{p_t - u', \ell_t-b_t} &\leq \sum_{t=1}^{t'}\sum_{i=1}^K  \left(\ln\frac{p_{t,i}}{u_i'}\right)\left(\frac{1}{\eta_t} - \frac{1}{\eta_{t-1}}\right) + \sum_{t=1}^{t'} \max_{p}\left( \inner{p_t - p, \ell_t-b_t-m_t} -  D_{\psi_t}(p,p_t)\right) \\
        &\leq  \frac{K\ln \frac{K}{\delta}}{\eta_{t'}} + \sum_{t=1}^{t'}  \underbrace{\max_p \left( \inner{p_t-p, \ell_t - m_t - x_t\one} - \frac{1}{2}D_{\psi_t}(p,p_t) \right)}_{\textbf{stability-1}} \\
        &\qquad + \sum_{t=1}^{t'}  \underbrace{\max_p \left( \inner{p_t-p, -b_t} - \frac{1}{2}D_{\psi_t}(p,p_t) \right)}_{\textbf{stability-2}}
    \end{align*}
    where in the last inequality we use $\inner{p_t-p, \one}=0$. 
    
    Using \pref{lem: stabiity hw 2}, we can bound $\textbf{stability-1}$ by $2\eta_t \sum_{i=1}^K p_{t,i}^2 (\ell_{t,i}-m_{t,i}-x_t)^2$, and bound $\textbf{stability-2}$ by $2\eta_t \sum_{i=1}^K p_{t,i}^2 b_{t,i}^2\leq \sum_{i=1}^K p_{t,i}b_{t,i}$ under the specified conditions. Collecting all terms and using the definition of $u'$ finishes the proof. 
\end{proof}

\begin{lemma}[Stability under log barrier]\label{lem: stabiity hw 2}
    Let $\psi(p)=\frac{1}{\beta}\sum_i \ln\frac{1}{p_i}$, and let $ \ell_t\in\mathbb{R}^K$ be such that $\beta p_i\ell_{t,i}\geq -\frac{1}{2}$. Then 
    \begin{align*}
        \max_{p\in\Delta([K])}\left\{\langle p_t-p,\ell_t\rangle -  D_{\psi}(p,p_t)\right\}\leq \sum_i \beta p_{t,i}^{2}\ell_{t,i}^2. 
    \end{align*}
\end{lemma}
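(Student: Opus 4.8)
The plan is to drop the simplex constraint, solve the resulting concave maximization in closed form, and finish with a one-variable inequality. Write $F(p) := \langle p_t - p, \ell_t\rangle - D_\psi(p,p_t)$, and recall that for the log barrier $\psi$ the Bregman divergence is $D_\psi(p,p_t) = \frac{1}{\beta}\sum_i\big(\tfrac{p_i}{p_{t,i}} - 1 - \ln\tfrac{p_i}{p_{t,i}}\big)$. Thus $F$ is strictly concave on $\mathbb{R}_{>0}^K$ and tends to $-\infty$ whenever a coordinate goes to $0$ or $\infty$, so its supremum over $\mathbb{R}_{>0}^K$ is attained at the unique stationary point. Since the relative interior of $\Delta([K])$ sits inside $\mathbb{R}_{>0}^K$ and $F \equiv -\infty$ on the part of the boundary of $\Delta([K])$ where some coordinate vanishes, we get $\max_{p\in\Delta([K])} F(p) \le \max_{p\in\mathbb{R}_{>0}^K} F(p)$; enlarging the feasible set can only increase the maximum, so this is a legitimate upper bound.

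Next I would compute the stationary point from $\nabla F(p) = 0$, i.e.\ $-\ell_{t,i} + \tfrac{1}{\beta p_i} - \tfrac{1}{\beta p_{t,i}} = 0$, which yields $p_i^\star = \frac{p_{t,i}}{1+\beta p_{t,i}\ell_{t,i}}$. Setting $a_i := \beta p_{t,i}\ell_{t,i}$, the hypothesis $a_i \ge -\tfrac12 > -1$ guarantees $p_i^\star > 0$, so $p^\star$ is admissible. Substituting $p^\star$ back, one gets $p_{t,i} - p_i^\star = p_{t,i}\tfrac{a_i}{1+a_i}$ and $\ln\tfrac{p_i^\star}{p_{t,i}} = -\ln(1+a_i)$; a short simplification collapses $\langle p_t - p^\star, \ell_t\rangle - D_\psi(p^\star,p_t)$ to $\frac{1}{\beta}\sum_i\big(a_i - \ln(1+a_i)\big)$.

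The proof then concludes with the scalar inequality $a - \ln(1+a) \le a^2$ for all $a \ge -\tfrac12$. This follows by examining $g(a) := a^2 - a + \ln(1+a)$: since $(1+a)g'(a) = a(2a+1)$, the only stationary points of $g$ on $[-\tfrac12,\infty)$ are $a=0$ and the endpoint $a = -\tfrac12$, and $g(0) = 0$, $g(-\tfrac12) = \tfrac34 - \ln 2 > 0$, $g(a) \to \infty$, so $g \ge 0$ on $[-\tfrac12,\infty)$. Plugging this bound into the previous display gives $\max_{p\in\Delta([K])} F(p) \le \frac{1}{\beta}\sum_i a_i^2 = \beta \sum_i p_{t,i}^2 \ell_{t,i}^2$, which is the claim.

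I expect the only subtle point to be the first step: one must be sure that relaxing the constraint from the simplex to $\mathbb{R}_{>0}^K$ genuinely yields an upper bound (it does, since we are maximizing over a larger set), and that the unconstrained maximizer lies in the open orthant so that the first-order condition is valid — both of which hold thanks to the coercivity of $-D_\psi(\cdot,p_t)$ and the condition $a_i > -1$. Everything after that is elementary algebra and the one-dimensional estimate above.
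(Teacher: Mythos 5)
Your proof is correct and follows essentially the same route as the paper: relax the simplex constraint to the positive orthant, solve the first-order condition to get $p_i^\star = p_{t,i}/(1+\beta p_{t,i}\ell_{t,i})$, reduce the optimal value to $\frac{1}{\beta}\sum_i\bigl(a_i-\ln(1+a_i)\bigr)$, and finish with $\ln(1+a)\ge a-a^2$ for $a\ge-\tfrac12$. The only cosmetic difference is that the paper packages the optimal value as the reversed Bregman divergence $D_\psi(p_t,q^\star)$ before expanding it, while you substitute directly; and you prove the scalar inequality rather than citing it.
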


\begin{proof}
    \begin{align*}
        &\max_{p\in\Delta([K])}\left\{ \langle p_t - p,\ell_t\rangle - D_{\psi}(p,p_t)\right\} 
        \leq \max_{q\in\mathbb{R}^K_+}\left\{ \langle p_t - q,\ell_t\rangle - D_{\psi}(q,p_t)\right\}
    \end{align*}
    Define $f(q)=\langle p_t - q,\ell_t\rangle - D_{\psi}(q,p_t)$. Let $q^\star$ be the solution in the last expression. Next, we verify that under the specified conditions, we have $\nabla f(q^\star)=0$. It suffices to show that there exists $q\in\mathbb{R}^K_+$ such that $\nabla f(q)=0$ since if such $q$ exists, then it must the maximizer of $f$ and thus $q^\star=q$. 
    
    \begin{align*}
        [\nabla f(q)]_i = -\ell_{t,i} - [\nabla \psi(q)]_i + [\nabla \psi(p_t)]_i = -\ell_{t,i} + \frac{1}{\beta q_i} - \frac{1}{\beta p_{t,i}}
    \end{align*}
    By the condition, we have $-\frac{1}{\beta p_{t,i}}-\ell_{t,i} < 0$ for all $i$. and so $\nabla f(q)=\mathbf{0}$ has solution in $\mathbb{R}_+$, which is $q_i=\left(\frac{1}{p_{t,i}}+\eta_{t,i}\ell_{t,i}\right)^{-1}$. 
    
    
    Therefore, $\nabla f(q^\star)= -\ell_t - \nabla\psi_t(q^\star) + \nabla \psi_t(p_t)=0$, and we have 
    \begin{align*}
        \max_{q\in\mathbb{R}^K_+}\left\{ \langle p_t - q,\ell_t\rangle - D_{\psi}(q,p_t)\right\} = \langle p_t - q^\star, \nabla \psi(p_t) - \nabla\psi(q^\star)\rangle - D_{\psi}(q^\star,p_t) = D_{\psi}(p_t, q^\star). 
    \end{align*}
    It remains to bound $D_{\psi}(p_t, q^\star)$, which by definition can be written as 
    \begin{align*}
        D_{\psi}(p_t, q^\star) = \sum_{i} \frac{1}{\beta}h\left(\frac{p_{t,i}}{q^\star_i}\right)
    \end{align*}
    where $h(x)=x-1-\ln(x)$. By the relation between $q^\star_i$ and $p_{t,i}$ we just derived, it holds that $\frac{p_{t,i}}{q^\star_i}=1+\beta p_{t,i}\ell_{t,i}$. By the fact that $\ln(1+x)\geq x-x^2$ for all $x\geq -\frac{1}{2}$, we have 
    \begin{align*}
        h\left(\frac{p_{t,i}}{q^\star_i}\right)=\beta p_{t,i}\ell_{t,i} - \ln(1+\beta p_{t,i}\ell_{t,i}) \leq \beta^2p_{t,i}^2\ell_{t,i}^2
    \end{align*}
    which gives the desired bound. 
    
\end{proof}

\begin{lemma}[Stability under negentropy]
\label{lem: exp3 stab}
If $\psi$ is the negentropy $\psi(p)=\sum_{i}p_i\log p_i$ and $\ell_{t,i}>0$, then for any $\eta > 0$ the stability is bounded by
\begin{align*}
    \max_{p\in\mathbb{R}_{\geq 0}^K}\langle p_t-p, \ell_t\rangle - \frac{1}{\eta}D_{\psi}(p, p_t) \leq \frac{\eta}{2}\sum_i p_{t,i}\ell_{t,i}^2\,.
\end{align*}
If $\ell_t>-\frac{1}{\eta}$, then the stability is bounded by 
\begin{align*}
    \max_{p\in\mathbb{R}_{\geq 0}^K}\langle p_t-p, \ell_t\rangle - \frac{1}{\eta}D_{\psi}(p, p_t) \leq \eta\sum_i p_{t,i}\ell_{t,i}^2\,.
\end{align*}
instead.  
\end{lemma}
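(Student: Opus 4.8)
The plan is to evaluate the maximization in closed form and then invoke two elementary bounds on the exponential. Since $\psi(p) = \sum_i p_i \log p_i$, the induced Bregman divergence on $\mathbb{R}_{\geq 0}^K$ is the unnormalized relative entropy $D_\psi(p,p_t) = \sum_i \big(p_i\log\tfrac{p_i}{p_{t,i}} - p_i + p_{t,i}\big)$, with gradient $\nabla_p D_\psi(p,p_t) = \big(\log\tfrac{p_i}{p_{t,i}}\big)_i$. The objective $f(p) = \langle p_t - p,\ell_t\rangle - \tfrac1\eta D_\psi(p,p_t)$ is strictly concave (a linear term minus a Bregman divergence of a strictly convex potential), so its unconstrained maximizer over the positive orthant is the unique solution of $\nabla f(p) = 0$, i.e.\ $-\ell_{t,i} - \tfrac1\eta\log\tfrac{p_i}{p_{t,i}} = 0$, giving $p_i^\star = p_{t,i}\,e^{-\eta\ell_{t,i}}$. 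Since $p_{t,i} > 0$, we have $p_i^\star > 0$, so $p^\star$ lies in the interior of $\mathbb{R}_{\geq 0}^K$ and therefore also solves the constrained problem; the left-hand side of the lemma equals $f(p^\star)$. This mirrors the structure of the proof of \pref{lem: stabiity hw 2}.

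Substituting $p^\star$ back, the terms proportional to $\ell_{t,i}p_{t,i}e^{-\eta\ell_{t,i}}$ cancel and, after collecting terms, one obtains
\[
    f(p^\star) = \frac1\eta\sum_i p_{t,i}\left(\eta\ell_{t,i} - 1 + e^{-\eta\ell_{t,i}}\right).
\]
It then remains to bound $g(x) := e^{-x} - 1 + x$ termwise with $x = \eta\ell_{t,i}$. For the first claim, $\ell_{t,i} > 0$ and $\eta > 0$ force $x > 0$, and the bound $e^{-x} \le 1 - x + \tfrac{x^2}{2}$ valid for $x \ge 0$ yields $g(x) \le \tfrac{x^2}{2}$, whence $f(p^\star) \le \tfrac1\eta\sum_i p_{t,i}\tfrac{\eta^2\ell_{t,i}^2}{2} = \tfrac\eta2\sum_i p_{t,i}\ell_{t,i}^2$. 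For the second claim, the hypothesis $\ell_t > -\tfrac1\eta$ gives $x > -1$, and the standard inequality $e^{z} \le 1 + z + z^2$ valid for all $z \le 1$ (applied with $z = -x$) gives $g(x) \le x^2$, whence $f(p^\star) \le \eta\sum_i p_{t,i}\ell_{t,i}^2$.

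This is essentially a routine computation, so there is no deep obstacle; the only steps needing care are (i) verifying the stationary point $p^\star$ is feasible so that the unconstrained and constrained maxima coincide — immediate here since $p_{t,i}>0$ — and (ii) matching the correct quadratic upper bound on $e^{-x}$ to the correct parameter regime, namely the tighter $1 - x + x^2/2$ on $x \ge 0$ for the first bound and the looser $1 - x + x^2$ on $x \ge -1$ for the second. Each of the two scalar inequalities, if one prefers not to expand power series, follows from a one-line monotonicity argument: for $e^{-x}\le 1-x+x^2/2$ set $k(x)=1-x+x^2/2-e^{-x}$, note $k(0)=k'(0)=0$ and $k''(x)=1-e^{-x}\ge0$ on $x\ge0$; for $e^{z}\le 1+z+z^2$ set $h(z)=1+z+z^2-e^{z}$, note $h(0)=h'(0)=0$ and check $h'\le0$ on $(-\infty,0)$ and $h'\ge0$ on $(0,1]$.
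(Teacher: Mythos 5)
Your proof is correct and follows essentially the same route as the paper's: solve the first-order condition to get $p_i^\star = p_{t,i}e^{-\eta\ell_{t,i}}$, substitute to obtain $\tfrac{1}{\eta}\sum_i p_{t,i}(\eta\ell_{t,i}-1+e^{-\eta\ell_{t,i}})$, and then apply $e^{-x}\le 1-x+x^2/2$ for $x\ge 0$ and $e^{-x}\le 1-x+x^2$ for $x\ge -1$ in the respective regimes. The only cosmetic difference is that the paper maximizes coordinate-wise while you argue via strict concavity of the full vector objective; both are fine.
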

\begin{proof}
    Let $f_i(p_i) = (p_{t,i}-p_i)\ell_{ti} - \frac{1}{\eta}\left(p_i(\log p_i-1) - p_i\log p_{t,i}+ p_{t,i}\right) $.
    Then we maximize $\sum_{i}f_i(p_{i})$, which is upper bounded by maximizing the expression in each coordinate for $p_i\geq 0$.
    We have
    \begin{align*}
        f_i'(p) = -\ell_{t,i}-\frac{1}{\eta}(\log p-\log p_{t,i}),
    \end{align*}
    and hence the maximum is obtained for $p^\star = p_{t,i}\exp(-\eta\ell_{t,i})$.
    Plugging this in leads to
    \begin{align*}
        f_i(p^\star) &= p_{t,i}\ell_{t,i}(1-\exp(-\eta\ell_{t,i}))-\frac{1}{\eta}\left(-p_{t,i}\exp(-\eta\ell_{t,i})\eta\ell_{t,i}+p_{t,i}(1-\exp(-\eta\ell_{t,i}))\right)\\
        &=p_{t,i}\ell_{t,i}-\frac{p_{t,i}}{\eta}(1-\exp(-\eta\ell_{t,i}))\leq p_{t,i}\ell_{t,i}-\frac{p_{t,i}}{\eta}\left(\eta\ell_{t,i}-\frac{1}{2}\eta^2\ell_{t,i}^2\right)=\frac{\eta}{2}p_{t,i}\ell_{t,i}^2\,, 
    \end{align*}
    for non-negative $\ell_t$ and 
    \begin{align*}
        f_i(p^*)\leq \eta p_{t,i}\ell_{t,i}^2
    \end{align*}
    for $\ell_{t,i}\geq -1/\eta$ respectively, where we used the bound 
$\exp(-x)\leq 1-x+\frac{x^2}{2}$ which holds for any $x\geq0$ and $\exp(-x)\leq 1-x+x^2$, which holds for $x\geq -1$ respectively.
    Summing over all coordinates finishes the proof.
\end{proof}
\begin{lemma}[Stability of Tsallis-INF]
\label{lem: tsallis stab}
For the potential $\psi(p)=-\sum_i \frac{p_i^\alpha}{\alpha(1-\alpha)}$, any $p_t\in\Delta([K])$, any non-negative loss $\ell_t\geq 0$ and any positive learning rate $\eta>0$, we have
\begin{align*}
    \max_p\inner{p-p_t,\ell_t}-\frac{1}{\eta}D_{\psi}(p,p_t) \leq \frac{\eta}{2}\sum_{i}p_i^{2-\alpha}\ell_{t,i}^2\,.
\end{align*}
\end{lemma}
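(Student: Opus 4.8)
The proof mirrors \pref{lem: stabiity hw 2} and \pref{lem: exp3 stab}: separate the maximization into coordinates, solve each one-dimensional problem via its first-order condition, and reduce the resulting value to a single scalar inequality. I will bound the stability quantity $\langle p_t-p,\ell_t\rangle-\tfrac1\eta D_\psi(p,p_t)$, i.e.\ the form appearing in \pref{lem: FTRL}, by $\tfrac\eta2\sum_i p_{t,i}^{2-\alpha}\ell_{t,i}^2$. The natural dimensionless parameter throughout is $s_i:=\eta\,\ell_{t,i}\,p_{t,i}^{1-\alpha}\ge 0$.

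\textbf{Step 1 (separate and relax).} Since $\psi(p)=-\tfrac1{\alpha(1-\alpha)}\sum_i p_i^\alpha$ is coordinate-separable, so is $D_\psi(\cdot,p_t)$, and the objective splits as $\sum_i g_i(p_i)$ with $g_i$ depending only on $p_i$ and $p_{t,i}$. Because $\Delta([K])\subseteq[0,1]^K$, the maximum over the simplex is at most $\sum_i\max_{p_i>0}g_i(p_i)$, each coordinate maximized independently; since $\ell_t\ge0$ the maximizer found below lies in $(0,p_{t,i}]\subseteq[0,1]$, so relaxing the constraint to $p_i>0$ is harmless. \textbf{Step 2 (first-order condition).} Each $g_i$ is strictly concave on $(0,\infty)$ (second derivative $-\tfrac1\eta p_i^{\alpha-2}<0$), has positive slope as $p_i\to0^+$ (since $-\nabla\psi_i(p_i)\to+\infty$) and tends to $-\infty$, so the maximizer is the unique stationary point $p^\star$ with $\nabla\psi(p^\star)=\nabla\psi(p_t)-\eta\ell_t$, i.e.\ $(p^\star_i)^{\alpha-1}=p_{t,i}^{\alpha-1}+\eta(1-\alpha)\ell_{t,i}$; nonnegativity of $\ell_t$ makes this well posed with $p^\star_i\le p_{t,i}$.

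\textbf{Step 3 (Bregman identity).} As in \pref{lem: stabiity hw 2}, substituting the stationarity relation $\langle p_t-p^\star,\ell_t\rangle=\tfrac1\eta\langle p_t-p^\star,\nabla\psi(p_t)-\nabla\psi(p^\star)\rangle$ collapses the value to the reversed divergence $\tfrac1\eta D_\psi(p_t,p^\star)=\sum_i\tfrac1\eta D_\psi(p_t,p^\star)_i$. \textbf{Step 4 (scalar inequality).} Writing $r_i=p^\star_i/p_{t,i}\in(0,1]$, the stationarity relation is $r_i^{\alpha-1}=1+(1-\alpha)s_i$, and a direct computation gives $\tfrac1\eta D_\psi(p_t,p^\star)_i=\tfrac{p_{t,i}^\alpha}{\eta\,\alpha(1-\alpha)}\,\phi(s_i)$ with $\phi(s)=(1-\alpha)r^\alpha-1+\alpha r^{\alpha-1}$ and $r=(1+(1-\alpha)s)^{1/(\alpha-1)}$. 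Since the target $\tfrac\eta2 p_{t,i}^{2-\alpha}\ell_{t,i}^2$ equals $\tfrac{p_{t,i}^\alpha s_i^2}{2\eta}$, it all reduces to the one-variable claim $\phi(s)\le\tfrac{\alpha(1-\alpha)}2 s^2$ for $s\ge0$. This follows from $\phi(0)=0$ and $\phi'(s)=\alpha(1-\alpha)(1-r)$ together with the elementary bound $1-r\le s$ — equivalently $(1+(1-\alpha)s)^{1/(1-\alpha)}\le\tfrac1{1-s}$ for $s\in[0,1)$ (trivial for $s\ge1$), which follows from $(1+(1-\alpha)s)^{1/(1-\alpha)}\le e^{s}$ (using $\ln(1+x)\le x$) and $e^s(1-s)\le1$ (differentiate). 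Integrating, $\phi(s)=\int_0^s\phi'(u)\,du\le\int_0^s\alpha(1-\alpha)u\,du=\tfrac{\alpha(1-\alpha)}2 s^2$; summing over $i$ finishes the proof.

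\textbf{Main obstacle.} The only nonroutine ingredient is Step~4: showing $\phi(s)\le\tfrac{\alpha(1-\alpha)}2 s^2$ for \emph{all} $s\ge0$, with no small-learning-rate assumption (in contrast to the $\ge-\tfrac14$ type conditions used for Tsallis stability elsewhere in the paper). This is exactly where $\ell_t\ge0$ is indispensable: it forces $r\le1$, which simultaneously makes $\phi'\ge0$ and makes the exponential estimate $1-r\le s$ go through in the right direction; for signed losses one would instead need to restrict $\eta$. A secondary, routine point is the boundary check in Step~1 — but nonnegativity of $\ell_t$ keeps the per-coordinate maximizer inside $[0,p_{t,i}]$, so it never actually binds.
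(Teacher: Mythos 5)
Your proof is correct, but it takes a genuinely different route from the paper's. The paper's argument is much shorter: it lower-bounds the divergence by a local quadratic, $D_\psi(p,p_t)\ge\sum_i\frac12(p_i-p_{t,i})^2p_{t,i}^{\alpha-2}$, via the integral representation of the Bregman divergence together with the observation that the maximizer satisfies $p^\star_i\le p_{t,i}$ (so $p^{\alpha-2}\ge p_{t,i}^{\alpha-2}$ on the integration range), and then finishes in one line by AM-GM --- essentially your completing-the-square step, but without ever solving the first-order condition. Your proof instead mirrors \pref{lem: stabiity hw 2}: solve the per-coordinate stationarity condition exactly, collapse the optimal value to the reversed divergence $\frac1\eta D_\psi(p_t,p^\star)$, and verify the scalar inequality $\phi(s)\le\frac{\alpha(1-\alpha)}{2}s^2$ via $\phi'(s)=\alpha(1-\alpha)(1-r)$ and $1-r\le s$; I checked the algebra ($r^{\alpha-1}=1+(1-\alpha)s$, $dr/ds=-r^{2-\alpha}$, the bounds $\ln(1+x)\le x$ and $e^s(1-s)\le1$) and it is sound, with $\ell_t\ge0$ playing the same role in both proofs (it forces $p^\star\le p_t$ and removes any learning-rate condition). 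The paper's route buys brevity; yours buys an exact evaluation of the stability term and makes transparent exactly where nonnegativity enters. One remark on the statement itself: you bound the form $\langle p_t-p,\ell_t\rangle-\frac1\eta D_\psi(p,p_t)$ from \pref{lem: FTRL}, whereas the lemma is displayed with $\langle p-p_t,\ell_t\rangle$; your reading is the right one, since the paper's own proof also relies on $p^\star_i\le p_{t,i}$, which holds only for the $\langle p_t-p,\ell_t\rangle$ orientation (with the other sign and nonnegative losses the claim without a learning-rate restriction can actually fail, e.g.\ $p_t$ uniform over many arms, $\ell_t$ supported on one arm, and $p$ the corresponding vertex), so the sign in the display (and the $p_i^{2-\alpha}$ in place of $p_{t,i}^{2-\alpha}$ on the right-hand side) is a typo that both you and the paper's proof implicitly correct.
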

\begin{proof}
We upper bound this term by maximizing over $p\geq 0$ instead of $p\in\Delta([K])$.
Since $\ell_t$ is positive, the optimal $p^\star$ satisfies $p^\star_i\leq p_i$ in all components. We have
\begin{align*}
    D_\psi(p^\star,p_t) = \sum_{i=1}^K\int_{p_{t,i}}^{p^\star_i}\frac{p_{t,i}^{\alpha-1}-p^{\alpha-1}}{1-\alpha}\,dp= \sum_{i=1}^K\int_{p_{t,i}}^{p^\star_i}\int_{p_{t,i}}^{\tildep}p^{\alpha-2}\,dp\,d\tildep&\geq \sum_{i=1}^K\int_{p_{t,i}}^{p^\star_i}\int_{p_{t,i}}^{\tildep}p_{t,i}^{\alpha-2}\,dp\,d\tildep\\
    &=\sum_{i=1}^K\frac{1}{2}(p^{\star}_i-p_{t,i})^2p_{t,i}^{\alpha-2}\,.
\end{align*}
By AM-GM inequality, we further have
\begin{align*}
    \max_p\inner{p-p_t,\ell_t}-\frac{1}{\eta}D_{\psi}(p,p_t) \leq\max_p\sum_{i}\left((p_i-p_{t,i})\ell_{t,i}-\frac{(p_i-p_{t,i})^2p_{t,i}^{\alpha-2}}{2\eta}\right)\leq\frac{\eta}{2}\sum_{i}p_{t,i}^{2-\alpha}\ell_{t,i}^2\,.
\end{align*}
\end{proof}

\section{Analysis for Variance-Reduced SCRiBLe (\pref{alg: VR scrible} / \pref{thm: VR scrible})}
\label{app: SCrible}
\begin{algorithm}[H]
    \caption{VR-SCRiBLe} \label{alg: VR scrible}
    \textbf{Define}: Let $\psi(\cdot)$ be a $\nu$-self-concordant barrier of $\text{conv}(\calX)\subset\mathbb{R}^d$. 
    \\
    \For{$t=1, 2, \ldots$}{
        Receive $m_t\in\mathbb{R}^d$. Compute
        \begin{align}
            w_t = \argmin_{w\in\text{conv}(\calX)}\left\{ \left\langle w, \sum_{\tau=1}^{t-1} \hatell_\tau + m_t \right\rangle + \frac{1}{\eta_t}\psi(w)\right\}  \label{eq: scrible procedure}
        \end{align}
        where
        \begin{align*}
            \eta_t = \min\left\{\frac{1}{16d}, \sqrt{\frac{\nu \log T}{\sum_{\tau=1}^{t-1}\|\hatell_\tau-m_\tau\|^2_{H_\tau^{-1}}}}\right\}.
        \end{align*}
        where $H_t=\nabla^2\psi(w_t)$. \\ 
        Sample $s_t$ uniformly from $\mathbb{S}_d$ (the unit sphere in $d$-dimension). \\
        Define 
        \begin{align*}
            w_t^{+} = w_t + H_t^{-\frac{1}{2}}s_{t}, \qquad 
            w_t^{-} = w_t - H_t^{-\frac{1}{2}}s_{t}. 
        \end{align*}
        Find distributions $q_t^+$ and $q_t^-$ over actions such that 
        \begin{align*}
            w_t^+ = \sum_{x\in\calX} q_{t,x}^+ x, \qquad w_t^- = \sum_{x\in\calX} q_{t,x}^- x
        \end{align*}
        Sample $A_t\sim q_t\triangleq \frac{q_t^+ + q_t^-}{2}$, receive $\ell_{t,A_t}\in[-1,1]$ with $\E[\ell_{t,x}]=\inner{x,\ell_t}$, and define 
        \begin{align*}
            \hatell_t = d(\ell_{t,A_t}-m_{t,A_t})\left(\frac{q_{t,A_t}^+-q_{t,A_t}^-}{q_{t,A_t}^++q_{t,A_t}^-}\right)H_t^{\frac{1}{2}}s_{t} + m_t
        \end{align*}
        where $m_{t,x}:=\inner{x,m_t}$. 
    }
\end{algorithm}

\begin{lemma}\label{lem: key lem scrible}
    In \pref{alg: VR scrible}, we have 
    $\E\left[\hatell_t\right] = \ell_t$ and 
    \begin{align*}
        &\E\left[\left\|\hatell_t-m_t\right\|_{\nabla^{-2}\psi(w_t)}^2\right]\leq d^2 \E\left[\sum_{x\in\calX} \min\{p_{t,x}, 1-p_{t,x}\}(\ell_{t,x}-m_{t,x})^2\right].  
    \end{align*}
    where $p_{t,x}$ is the probability of choosing action $x$ in round $t$. 
\end{lemma}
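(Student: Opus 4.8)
The plan is to condition on the $\sigma$-algebra $\mathcal{F}_{t-1}$ (so that $w_t$, $H_t=\nabla^2\psi(w_t)$ and $\eta_t$ are deterministic) and to handle the two fresh sources of randomness in round $t$---the sphere draw $s_t\sim\mathbb{S}_d$ and the action draw $A_t\sim q_t$---by first conditioning on $s_t$ and computing the expectation over $A_t$, and then averaging over $s_t$. A preliminary fact I will use is that, since $\|s_t\|_2=1$, the points $w_t^\pm=w_t\pm H_t^{-1/2}s_t$ lie in the unit Dikin ellipsoid of the self-concordant barrier $\psi$ at $w_t$, which is contained in $\text{conv}(\calX)$; hence the decompositions $w_t^\pm=\sum_x q_{t,x}^\pm x$ into probability distributions over $\calX$ exist and every $q_{t,x}^\pm$ lies in $[0,1]$---a fact needed in the variance step.

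For unbiasedness I condition on $s_t$ and take the expectation over $A_t\sim q_t$. Using $\E[\ell_{t,x}]=\inner{x,\ell_t}$, $m_{t,x}=\inner{x,m_t}$ and $q_{t,x}=\tfrac12(q_{t,x}^++q_{t,x}^-)$, the weight $q_{t,x}\cdot\frac{q_{t,x}^+-q_{t,x}^-}{q_{t,x}^++q_{t,x}^-}$ simplifies to $\tfrac12(q_{t,x}^+-q_{t,x}^-)$, so $\E[\hatell_t-m_t\mid s_t]=\tfrac d2 H_t^{1/2}s_t\,\inner{\sum_x(q_{t,x}^+-q_{t,x}^-)x,\ \ell_t-m_t}=\tfrac d2 H_t^{1/2}s_t\,\inner{w_t^+-w_t^-,\ \ell_t-m_t}=d\,H_t^{1/2}s_t s_t^\top H_t^{-1/2}(\ell_t-m_t)$, where the last equality uses $w_t^+-w_t^-=2H_t^{-1/2}s_t$. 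Averaging over $s_t$ and substituting $\E[s_ts_t^\top]=\tfrac1d I_d$ yields $\ell_t-m_t$, hence $\E[\hatell_t]=\ell_t$.

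For the variance bound, observe that $\hatell_t-m_t$ is a scalar multiple of $H_t^{1/2}s_t$, so $\|\hatell_t-m_t\|_{H_t^{-1}}^2$ reduces---via $s_t^\top H_t^{1/2}H_t^{-1}H_t^{1/2}s_t=\|s_t\|_2^2=1$---to $d^2(\ell_{t,A_t}-m_{t,A_t})^2\big(\tfrac{q_{t,A_t}^+-q_{t,A_t}^-}{q_{t,A_t}^++q_{t,A_t}^-}\big)^2$. Taking the expectation over $A_t\sim q_t$ and using $q_{t,x}=\tfrac12(q_{t,x}^++q_{t,x}^-)$ turns this, conditionally on $s_t$, into $\tfrac{d^2}2\sum_x(\ell_{t,x}-m_{t,x})^2\frac{(q_{t,x}^+-q_{t,x}^-)^2}{q_{t,x}^++q_{t,x}^-}$ (with the outer $\E$ absorbing any randomness in the loss realization). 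The elementary inequality I will invoke is that for $a,b\in[0,1]$ one has $\frac{(a-b)^2}{a+b}\le|a-b|\le2\min\{\tfrac{a+b}2,1-\tfrac{a+b}2\}$ (the first step because $|a-b|\le a+b$; the second because, for $a\ge b$, $a-b\le a+b$ and $a-b\le2-a-b\Leftrightarrow a\le1$); applying it with $a=q_{t,x}^+$, $b=q_{t,x}^-$ and $q_{t,x}=\tfrac{a+b}2$ bounds the conditional expectation by $d^2\sum_x\min\{q_{t,x},1-q_{t,x}\}(\ell_{t,x}-m_{t,x})^2$. Finally I average over $s_t$: since $q_{t,x}$ depends on $s_t$ but $(\ell_{t,x}-m_{t,x})^2$ does not, concavity of $r\mapsto\min\{r,1-r\}$ (Jensen) gives $\E_{s_t}[\min\{q_{t,x},1-q_{t,x}\}]\le\min\{p_{t,x},1-p_{t,x}\}$ with $p_{t,x}=\E_{s_t}[q_{t,x}]$ the marginal play probability, which is exactly the claimed bound.

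The two expectation computations are routine; the only delicate points are the measurability bookkeeping that legitimizes $\E[s_ts_t^\top]=I_d/d$ and the Jensen step over $s_t$, and the two-line inequality linking $\frac{(q_{t,x}^+-q_{t,x}^-)^2}{q_{t,x}^++q_{t,x}^-}$ to $\min\{p_{t,x},1-p_{t,x}\}$---which is precisely where the variance reduction over vanilla SCRiBLe (whose estimator would yield $\sum_x(\ell_{t,x}-m_{t,x})^2$ without the $\min$) is gained. I expect no substantive obstacle beyond that.
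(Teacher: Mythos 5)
Your proposal is correct and follows essentially the same route as the paper's proof: condition on $s_t$, average over $A_t\sim q_t$ for both the unbiasedness and the second-moment computation, bound the resulting weight by $\min\{q_{t,x},1-q_{t,x}\}$ via $\tfrac{|q^+_{t,x}-q^-_{t,x}|}{2}\le\min\{q_{t,x},1-q_{t,x}\}$, and finish with concavity of $\min$ over the $s_t$-average. Your chain $\tfrac{(a-b)^2}{a+b}\le|a-b|\le 2\min\{\tfrac{a+b}{2},1-\tfrac{a+b}{2}\}$ is just an algebraic rewriting of the paper's bound on $q_{t,x}\bigl(\tfrac{q^+_{t,x}-q^-_{t,x}}{q^+_{t,x}+q^-_{t,x}}\bigr)^2$, so there is no substantive difference.
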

\begin{proof}
\begin{align*}
    \E\left[\hatell_t\right] 
    &= \E\left[ d (\ell_{t,A_t}-m_{t,A_t})\left(\frac{q_{t,A_t}^+-q_{t,A_t}^-}{q_{t,A_t}^++q_{t,A_t}^-}\right)H_t^{\frac{1}{2}}s_t + m_t\right] \\
    &= \E\left[ d \E\left[(\ell_{t,A_t}-m_{t,A_t})\left(\frac{q_{t,A_t}^+-q_{t,A_t}^-}{q_{t,A_t}^++q_{t,A_t}^-}\right)~\bigg|s_t\right]H_t^{\frac{1}{2}}s_t + m_t\right] \tag{note that $q_t^+, q_t^-$ depend on $s_t$} \\
    &= \E\left[ d \E\left[ \sum_{x}q_{t,x}(\ell_{t,x}-m_{t,x})\left(\frac{q_{t,x}^+-q_{t,x}^-}{q_{t,x}^++q_{t,x}^-}\right)~\bigg|s_t\right]H_t^{\frac{1}{2}}s_t + m_t\right] \\
    &= \E\left[ d \E\left[ \sum_{x}\inner{x,\ell_t-m_t}\left(\frac{q_{t,x}^+-q_{t,x}^-}{2}\right)~\bigg|s_t\right]H_t^{\frac{1}{2}}s_t + m_t\right]\\
    &= \E\left[ d \E\left[ \frac{\inner{w_t^+-w_t^-,\ell_t-m_t}}{2}~\bigg|s_t\right]H_t^{\frac{1}{2}}s_t + m_t\right] \\
    &= \E\left[ d \inner{H^{-\frac{1}{2}}s_t,\ell_t-m_t}  H_t^{\frac{1}{2}}s_t + m_t\right]\\ 
    &= \E\left[ d   H_t^{\frac{1}{2}}s_ts_t^\top H_t^{-\frac{1}{2}}(\ell_t-m_t) + m_t\right] \\
    &=\ell_t.   \tag{because $\E[s_ts_t^\top]=\frac{1}{d}I$}
\end{align*} 
\begin{align*}
    \E\left[\left\|\hatell_t-m_t\right\|_{\nabla^{-2}\psi(w_t)}^2\right] 
    &= \E\left[d^2(\ell_{t,A_t}-m_{t,A_t})^2\left(\frac{q_{t,A_t}^+-q_{t,A_t}^-}{q_{t,A_t}^++q_{t,A_t}^-}\right)^2\left\|H_t^{\frac{1}{2}}s_t\right\|_{H_t^{-1}}^2\right]\\
    &= \E\left[d^2(\ell_{t,A_t}-m_{t,A_t})^2\left(\frac{q_{t,A_t}^+-q_{t,A_t}^-}{q_{t,A_t}^++q_{t,A_t}^-}\right)^2\right]\\
    &= \E\left[~\E\left[d^2(\ell_{t,A_t}-m_{t,A_t})^2\left(\frac{q_{t,A_t}^+-q_{t,A_t}^-}{q_{t,A_t}^++q_{t,A_t}^-}\right)^2~\Bigg|~s_t\right]~\right] \\
    &\leq \E\left[~\E\left[\sum_{x}q_{t,x}d^2(\ell_{t,x}-m_{t,x})^2\left|\frac{q_{t,x}^+-q_{t,x}^-}{q_{t,x}^++q_{t,x}^-}\right|~\Bigg|~s_t\right]~\right] 
\end{align*}
For any $x$, we have $q_{t,x}\left|\frac{q_{t,x}^+-q_{t,x}^-}{q_{t,x}^++q_{t,x}^-}\right|\leq q_{t,x}$ and 
\begin{align*}
    q_{t,x}\left|\frac{q_{t,x}^+-q_{t,x}^-}{q_{t,x}^++q_{t,x}^-}\right| = \frac{|q_{t,x}^+-q_{t,x}^-|}{2} \leq 1 - \frac{q_{t,x}^++q_{t,x}^-}{2} = 1-q_{t,x}. 
\end{align*}
Therefore, we continue to bound $\E\left[\left\|\hatell_t-m_t\right\|_{\nabla^{-2}\psi(w_t)}^2\right] $ by
\begin{align*}
    &\E\left[~\E\left[ \sum_{x\in\calX} \min\{q_{t,x}, 1-q_{t,x}\}d^2(\ell_{t,x}-m_{t,x})^2 ~\Bigg|~s_t\right]~\right]  \\
    &\leq \E\left[~\sum_{x\in\calX} \min\big\{p_{t,x}, 1-p_{t,x}\big\}d^2(\ell_{t,x}-m_{t,x})^2 ~\right].  \tag{$\E[\min(\cdot,\cdot)]\leq \min(\E[\cdot], \E[\cdot])$ and $p_{t,x}= \E[\E[q_{t,x}~|~s_t]]$}  
\end{align*}
\end{proof}

\begin{lemma}\label{lem: stability scrible}
    If $\eta_t\leq \frac{1}{16d}$, 
    then $\max_w\left(\inner{w_t - w, \hatell_t-m_t} - \frac{1}{\eta_t}D_\psi(w,w_t)\right)\leq 8\eta_t \|\hatell_t-m_t\|_{\nabla^{-2}\psi(w_t)}^2$. 
\end{lemma}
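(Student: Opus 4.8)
The plan is to replace the $d$-dimensional maximization over $w$ (which ranges over $\text{conv}(\calX)$, the Bregman term being $+\infty$ on the boundary so the maximum is attained in the interior) by a one-dimensional optimization in the local norm $\|\cdot\|_{w_t}:=\|\cdot\|_{\nabla^2\psi(w_t)}$ induced by the self-concordant barrier, and then to insert the explicit form of $\hatell_t-m_t$ from \pref{alg: VR scrible} together with the hypothesis $\eta_t\le\frac{1}{16d}$. Throughout write $H_t=\nabla^2\psi(w_t)$ (positive definite, since $w_t$ lies in the interior because $\psi$ is a barrier), $\rho:=\|w_t-w\|_{H_t}$, and $R:=\|\hatell_t-m_t\|_{\nabla^{-2}\psi(w_t)}=\|\hatell_t-m_t\|_{H_t^{-1}}$.

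First I would combine two standard ingredients. By the generalized Cauchy--Schwarz inequality, $\langle w_t-w,\hatell_t-m_t\rangle\le\|w_t-w\|_{H_t}\,\|\hatell_t-m_t\|_{H_t^{-1}}=\rho R$. And since $\psi$ is self-concordant, one has the classical Bregman lower bound $D_\psi(w,w_t)\ge\omega(\rho)$ with $\omega(t)=t-\ln(1+t)$, valid for all $w$ in the interior. Together these give
\[
\max_w\Big(\langle w_t-w,\hatell_t-m_t\rangle-\tfrac{1}{\eta_t}D_\psi(w,w_t)\Big)\ \le\ \max_{\rho\ge0}\Big(\rho R-\tfrac{1}{\eta_t}\,\omega(\rho)\Big).
\]
It then remains to bound $R$: reading off $\hatell_t-m_t=d(\ell_{t,A_t}-m_{t,A_t})\big(\tfrac{q^+_{t,A_t}-q^-_{t,A_t}}{q^+_{t,A_t}+q^-_{t,A_t}}\big)H_t^{1/2}s_t$ and using $\|H_t^{1/2}s_t\|_{H_t^{-1}}=\|s_t\|_2=1$, $\big|\tfrac{q^+_{t,A_t}-q^-_{t,A_t}}{q^+_{t,A_t}+q^-_{t,A_t}}\big|\le1$, and $\ell_{t,A_t},m_{t,A_t}\in[-1,1]$, one gets $R\le2d$, hence $\eta_t R\le\frac18$.

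Finally I would solve the scalar problem $\max_{\rho\ge0}(\rho R-\omega(\rho)/\eta_t)$ with the elementary estimate $\omega(\rho)\ge\frac{\rho^2}{2(1+\rho)}$: for $\rho>1$ this gives $\omega(\rho)\ge\rho/4$, so the objective is $\le\rho\,(R-\tfrac{1}{4\eta_t})<0$ (as $R\le\frac{1}{8\eta_t}$); for $\rho\le1$ it gives $\omega(\rho)\ge\rho^2/4$, so the objective is $\le\rho R-\frac{\rho^2}{4\eta_t}\le\max_{\rho\ge0}(\rho R-\frac{\rho^2}{4\eta_t})=\eta_t R^2$. Hence the whole maximum is at most $\eta_t R^2=\eta_t\|\hatell_t-m_t\|^2_{\nabla^{-2}\psi(w_t)}\le8\eta_t\|\hatell_t-m_t\|^2_{\nabla^{-2}\psi(w_t)}$, which is the claim. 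There is no real obstacle here: the argument is a textbook self-concordance stability computation, and the only points needing slight care are the normalization $m_{t,x}\in[-1,1]$ underlying $R\le2d$ (which is without loss of generality, since clipping the predictor into the loss range only decreases $(\ell_{t,x}-m_{t,x})^2$) and checking that $\frac{1}{16d}$ is small enough to make $\eta_t R\le\frac18<\frac14$ --- exactly the inequality that kills the $\rho>1$ branch --- which it comfortably is, leaving the factor $8$ of slack in the target unused.
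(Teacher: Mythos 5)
Your proof is correct, but it takes a genuinely different route from the paper's. The paper argues by localizing the maximizer: it shows that the optimizer $w'$ of $F(w)=\inner{w_t-w,\hatell_t-m_t}-\frac{1}{\eta_t}D_\psi(w,w_t)$ satisfies $\|w'-w_t\|_{\nabla^2\psi(w_t)}\le 8\eta_t R$, by checking via a second-order Taylor expansion that $F\le 0$ on the sphere of radius $8\eta_t R$ and invoking concavity; the key self-concordance input there is the Dikin-ellipsoid Hessian stability $\nabla^2\psi(u')\succeq\frac14\nabla^2\psi(w_t)$ for $\|u'-w_t\|_{\nabla^2\psi(w_t)}\le\frac12$, and the conclusion follows by Cauchy--Schwarz, producing the constant $8$. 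You instead invoke the global lower bound $D_\psi(w,w_t)\ge\omega(\|w-w_t\|_{\nabla^2\psi(w_t)})$ with $\omega(t)=t-\ln(1+t)$, which holds for any (standard) self-concordant function and hence for the barrier $\psi$, reduce to a one-dimensional problem in $\rho$, and split on $\rho\le 1$ versus $\rho>1$ using $\omega(\rho)\ge\frac{\rho^2}{2(1+\rho)}$. This is shorter, avoids the localization-by-contradiction step, and yields the sharper constant $1$ in place of $8$ (so the stated bound holds with room to spare). Both arguments hinge on the same quantitative fact that $\eta_t\|\hatell_t-m_t\|_{\nabla^{-2}\psi(w_t)}$ is a small constant --- you bound it by $\frac18$ via $R\le 2d$, the paper by $\frac{1}{16}$ (implicitly treating $|\ell_{t,A_t}-m_{t,A_t}|\le 1$); your accounting is the more careful one, and your observation that the predictor can be clipped into the loss range without loss is the right way to justify it.
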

\begin{proof}
    We first show that $\eta_t\|\hatell_t-m_t\|_{\nabla^{-2}\psi(w_t)}\leq \frac{1}{16}$. By the definition of $\hatell_t$, we have 
    \begin{align*}
        \eta_t \left\|\hatell_t-m_t\right\|_{\nabla^{-2}\psi(w_t)} 
        &\leq \frac{1}{16d}\cdot d\|H_t^{\frac{1}{2}}s_t\|_{H_t^{-1}} \leq \frac{1}{16}. 
    \end{align*}
    Define 
    \begin{align*}
        F(w) &= \inner{w_t - w, \hatell_t-m_t} - \frac{1}{\eta_t}D_\psi(w,w_t).  
    \end{align*}
     Define $\lambda=\|\hatell_t-m_t\|_{\nabla^{-2}\psi(w_t)}$.  Let $w'$ be the maximizer of $F$. it suffices to show $\|w'-w_t\|_{\nabla^{2}\psi(w_t)}\leq 8\eta_t\lambda$ because this leads to 
     \begin{align*}
         F(w')\leq \|w'-w_t\|_{\nabla^{2}\psi(w_t)}\|\hatell_t-m_t\|_{\nabla^{-2}\psi(w_t)} \leq 8\eta_t\lambda^2. 
     \end{align*}
     To show $\|w'-w_t\|_{\nabla^{2}\psi(w_t)}\leq 8\eta_t\lambda$, it suffices to show that for all $u$ such that $\|u-w_t\|_{\nabla^{2}\psi(w_t)}=8\eta_t\lambda$, $F(u)\leq 0$. To see why this is the case, notice that $F(w_t)=0$, and $F(w')\geq 0$ because $w'$ is the maximizer of $F$. Therefore, if $\|w'-w_t\|_{\nabla^{2}\psi(w_t)}> 8\eta_t\lambda$, then there exists $u$ in the line segment between $w_t$ and $w'$ with $\|u-w_t\|_{\nabla^{2}\psi(w_t)}=8\eta_t\lambda$ such that $F(u)\leq 0\leq \min\{F(w_t), F(w')\}$, contradicting that $F$ is strictly concave. 
    
   Below, consider any $u$ with $\|u-w_t\|_{\nabla^{2}\psi(w_t)}=8\eta_t\lambda$. By Taylor expansion, there exists $u'$ in the line segment between $u$ and $w_t$ such that
    \begin{align*}
        F(u) &\leq \|u-w_t\|_{\nabla^{2}\psi(w_t)}\|\hatell_t-m_t\|_{\nabla^{-2}\psi(w_t)} - \frac{1}{2\eta_t}  \|u-w_t\|_{\nabla^2\psi(u')}^2. 
    \end{align*}
    Because $\psi$ is a self-concordant barrier and that $\|u'-w_t\|_{\nabla^2\psi(w_t)}\leq \|u-w_t\|_{\nabla^2\psi(w_t)}= 8\eta_t\lambda\leq \frac{1}{2}$, we have $\nabla^2\psi(u')\succeq \frac{1}{4}\nabla^2\psi(w_t)$. Continuing from the previous inequality, 
    \begin{align*}
        F(u) &\leq \|u-w_t\|_{\nabla^{2}\psi(w_t)}\|\hatell_t-m_t\|_{\nabla^{-2}\psi(w_t)} - \frac{1}{8\eta_t}  \|u-w_t\|_{\nabla^2\psi(w_t)}^2 = 8\eta_t\lambda \cdot \lambda - \frac{(8\eta_t\lambda)^2}{8\eta_t} = 0. 
    \end{align*}
    This concludes the proof. 
\end{proof}

\begin{proof}[Proof of \pref{thm: VR scrible}] 
    By the standard analysis of optimistic-FTRL and \pref{lem: stability scrible}, we have for any $u$, 
    \begin{align}
        &\E\left[\sum_{t=1}^T (\ell_{t,A_t}-\ell_{t,u})\right]   \nonumber \\
        &\leq O\left(\E\left[\frac{\nu\log T}{\eta_T} + \sum_{t=1}^T \eta_t \left\|\hatell_t-m_t\right\|_{\nabla^{-2} \psi(w_t)}^2 \right]\right)   \nonumber \\
        &= O\left(\sqrt{\nu\log T\E\left[ \sum_{t=1}^T \left\|\hatell_t-m_t\right\|_{\nabla^{-2} \psi(w_t)}^2 \right]} + d\nu\log (T)\right). \tag{by the tuning of $\eta_t$} \\
        &\label{eq: scrible regret tmp}
    \end{align}
    In the adversarial regime, using \pref{lem: key lem scrible}, we continue to bound \eqref{eq: scrible regret tmp} by 
    \begin{align*}
        O\left(d\sqrt{\nu\log T\E\left[ \sum_{t=1}^T (\ell_{t,A_t}-m_{t,A_t})^2 \right]} + d\nu\log (T)\right)
    \end{align*}
    When losses are non-negative and $m_t=\mathbf{0}$, we can further upper bound it by 
    \begin{align*}
        O\left(d\sqrt{\nu\log T\E\left[ \sum_{t=1}^T \ell_{t,A_t} \right]} + d\nu\log (T)\right). 
    \end{align*}
    Then solving the inequality for $\E\left[\sum_{t=1}^T \ell_{t,A_t}\right]$, we can further get the first-order regret bound of 
    \begin{align*}
        \E\left[\sum_{t=1}^T (\ell_{t,A_t}-\ell_{t,u})\right]\leq O\left(d\sqrt{\nu\log T\E\left[ \sum_{t=1}^T \ell_{t,u} \right]} + d\nu\log (T)\right). 
    \end{align*}
    In the corrupted stochastic setting, using \pref{lem: key lem scrible}, we continue to bound \eqref{eq: scrible regret tmp} by 
    \begin{align*}
        &O\left(d\sqrt{\nu\log(T) \E\left[\sum_{t=1}^T \left((1-p_{t,u})(\ell_{t,u}-m_{t,u})^2 + \sum_{x\neq u} p_{t,x}(\ell_{t,x}-m_{t,x})^2\right]\right)}\right) \\
        &\leq O\left(d\sqrt{\nu\log(T)\E\left[\sum_{t=1}^T (1-p_{t,u})\right]}\right). 
    \end{align*}
    Then by the self-bounding technique stated in \pref{prop: standard algorithm}, we can further bound the regret in the corrupted stochastic setting by 
    \begin{align*}
        O\left(\frac{d^2\nu\log(T)}{\Delta}+ \sqrt{\frac{d^2\nu\log(T)}{\Delta}C}\right). 
    \end{align*}
\end{proof}

\section{Analysis for LSB Log-Determinant FTRL (\pref{alg: logdet} / \pref{lem: CA logdet guarantee})} \label{app: log det analysis}

\begin{algorithm}[H]
     \caption{LSB-logdet}\label{alg: logdet}
     \textbf{Input}: $\calX(=\{\binom{x}{0}\}),\hatx=\binom{\bm{0}}{1}$.  \\
     \textbf{Define}: Let $\pi_\calX$ be John's exploration over $\calX$. $H(\alpha):= \E_{x\sim \alpha}[xx^\top]$\,,\  $\mu_\alpha:=\E_{x\sim \alpha}[x]$. \\
     \For{$t=1, 2, \ldots$}{
          Let 
          \begin{align*}
              &\eta_{t} = \min\left\{\frac{1}{4d},\sqrt{\frac{\log(T)}{\sum_{\tau=1}^{t-1}(1-p_{\tau,\hatx})}}\right\}\,,\\
              &p_t := \argmin_{\alpha\in \Delta(\calX\cup\{\hatx\})}\left\langle\mH{\alpha},\sum_{\tau=1}^{t-1}\hatell_\tau\right\rangle - \frac{1}{\eta_t}\log\det\left(H(\alpha)-\mH{\alpha}\mH{\alpha}^\top\right)\,,\\
              &\tildep_t := (1-\eta_t d)p_t + \eta_td((1-p_{t,\hatx})\pi_{\calX}+p_{t,\hatx}\pi_{\hatx} )
          \end{align*}
          where $\pi_{\hatx}$ denotes the sampling distribution that picks $\hatx$ with probability 1. \\
          Sample an action $A_t\sim \tildep_t$.  \\
          Construct loss estimator: 
          \begin{align*}
              \hatell_t(a) =  a^\top\Big(H(\tildep_t)-\mH{\tildep_t}\mH{\tildep_t}^\top\Big)^{-1}\big(a_t-\mH{\tildep_t}\big) \ell_t(a_t)\,. 
          \end{align*}
     }
\end{algorithm}

We begin by using the following simplifying notation.
For a distribution $\alpha \in \Delta(\calX\cup\{\hatx\})$, we define $\alpha_{\calX}$ as the restricted distribution over $\calX$ such that $\alpha_{\calX} \propto \alpha$ over $\calX$, i.e. $\alpha_{\calX,x} = \frac{\alpha_x}{1-\alpha_{\hatx}}$ for any $x\in\calX$. We further define
\begin{align*}
    &H(\alpha) = \E_{x\sim \alpha}[xx^\top]\,,\quad
    \mH{\alpha} = \E_{x\sim\alpha}[x],\\
    &\varH{\alpha} = H(\alpha)-\mH{\alpha}\mH{\alpha}^\top,\\
    &G(\alpha) = H(\alpha_{\calX}),\quad \mG{\alpha} = \mH{\alpha_\calX}, \\
    &\varG{\alpha} = G(\alpha)-\mG{\alpha}\mG{\alpha}^\top\,.
\end{align*}
\begin{lemma}\label{lem: logdet lemma 1}
Assume $\alpha\in\Delta(\calX\cup\{\hatx\}))$ is such that $\varG{\alpha}$ is of rank $d-1$ (i.e. full rank over $\calX$). Then 
we have the following properties: 
\begin{align*}
    &\varH{\alpha}=(1-\alpha_{\hatx})\left(\varG{\alpha} +\alpha_{\hatx}(\mG{\alpha}-\hatx)(\mG{\alpha}-\hatx)^\top\right),\\
    &\varHinv{\alpha}=\frac{1}{1-\alpha_{\hatx}}\left(\varGinv{\alpha}+\varGinv{\alpha}\mG{\alpha}\hatx^\top+\hatx \mG{\alpha}^\top\varGinv{\alpha}+\left(\norm{\mG{\alpha}}^2_{\varGinv{\alpha}}+\frac{1}{\alpha_{\hatx}}\right)\hatx\hatx^\top\right)\,,
\end{align*}
where $\varGinv{\alpha}$ denotes the pseudo-inverse.
\end{lemma}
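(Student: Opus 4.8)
The plan is to prove both identities by direct computation, using two structural facts. \emph{(i)} Sampling $x\sim\alpha$ is the same as drawing $x\sim\alpha_\calX$ with probability $1-\alpha_{\hatx}$ and putting $x=\hatx$ with probability $\alpha_{\hatx}$, so
\[
H(\alpha)=(1-\alpha_{\hatx})\,G(\alpha)+\alpha_{\hatx}\,\hatx\hatx^\top,\qquad \mH{\alpha}=(1-\alpha_{\hatx})\,\mG{\alpha}+\alpha_{\hatx}\,\hatx .
\]
\emph{(ii)} Every action in $\calX$ has vanishing last coordinate while $\hatx=\binom{\mathbf{0}}{1}$, so $\hatx\perp\mathrm{im}(\varG{\alpha})$; in particular $\varG{\alpha}\hatx=\mathbf{0}$, $\hatx^\top\mG{\alpha}=0$ and $\norm{\hatx}^2=1$. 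Moreover, the rank hypothesis says $\varG{\alpha}$ is full rank on the hyperplane spanned by $\calX$, so $\varG{\alpha}\varGinv{\alpha}=\varGinv{\alpha}\varG{\alpha}$ is the orthogonal projection onto that hyperplane; hence it fixes $\mG{\alpha}$ and satisfies $\varG{\alpha}\varGinv{\alpha}+\hatx\hatx^\top=I$.

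For the first identity, substitute \emph{(i)} into $\varH{\alpha}=H(\alpha)-\mH{\alpha}\mH{\alpha}^\top$, expand the rank-one outer product, and factor out $1-\alpha_{\hatx}$. The ``diagonal'' part collapses through $G(\alpha)-(1-\alpha_{\hatx})\mG{\alpha}\mG{\alpha}^\top=\varG{\alpha}+\alpha_{\hatx}\mG{\alpha}\mG{\alpha}^\top$, and together with the $\hatx\hatx^\top$ term and the two cross terms this leaves $\varG{\alpha}+\alpha_{\hatx}\big(\mG{\alpha}\mG{\alpha}^\top+\hatx\hatx^\top-\mG{\alpha}\hatx^\top-\hatx\mG{\alpha}^\top\big)$ in the bracket, i.e.\ $\varG{\alpha}+\alpha_{\hatx}(\mG{\alpha}-\hatx)(\mG{\alpha}-\hatx)^\top$, as claimed.

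For the second identity, set $p:=\alpha_{\hatx}$ and $v:=\mG{\alpha}-\hatx$, so the first identity reads $\varH{\alpha}=(1-p)(\varG{\alpha}+pvv^\top)$; let $M$ be the claimed right-hand side. Since these are square matrices, verifying $\varH{\alpha}M=I$ proves both that $\varH{\alpha}$ is invertible and that $M=\varHinv{\alpha}$. Expanding $(\varG{\alpha}+pvv^\top)$ against $(1-p)M=\varGinv{\alpha}+\varGinv{\alpha}\mG{\alpha}\hatx^\top+\hatx\mG{\alpha}^\top\varGinv{\alpha}+\big(\norm{\mG{\alpha}}^2_{\varGinv{\alpha}}+\tfrac1p\big)\hatx\hatx^\top$ produces eight terms, each of rank at most one: two vanish by $\varG{\alpha}\hatx=\mathbf{0}$; the two terms ending in $\varGinv{\alpha}$ cancel using $v^\top\varGinv{\alpha}=\mG{\alpha}^\top\varGinv{\alpha}$ (which holds because $\hatx^\top\varGinv{\alpha}=0$); and the terms ending in $\hatx^\top$ combine, using $v^\top\hatx=\hatx^\top\mG{\alpha}-\norm{\hatx}^2=-1$, $v^\top\varGinv{\alpha}\mG{\alpha}=\norm{\mG{\alpha}}^2_{\varGinv{\alpha}}$ and $\varG{\alpha}\varGinv{\alpha}\mG{\alpha}=\mG{\alpha}$, so that the coefficient $p\norm{\mG{\alpha}}^2_{\varGinv{\alpha}}$ cancels and they sum to $\hatx\hatx^\top$. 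What remains is $\varG{\alpha}\varGinv{\alpha}+\hatx\hatx^\top=I$ by \emph{(ii)}.

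I expect the bookkeeping in the last paragraph to be the only real obstacle: one has to keep track of which vectors lie in $\mathrm{im}(\varG{\alpha})$ versus $\mathrm{span}(\hatx)$, and this is precisely where the rank-$(d-1)$ hypothesis enters — it guarantees $\varG{\alpha}\varGinv{\alpha}$ fixes $\mG{\alpha}$ and that $\varG{\alpha}\varGinv{\alpha}+\hatx\hatx^\top$ is the identity rather than a proper projection. Everything else is elementary rank-one algebra; a Sherman--Morrison-type derivation of the second identity is also possible but no shorter.
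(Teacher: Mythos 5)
Your proposal is correct and follows essentially the same route as the paper: the first identity by direct expansion of $H(\alpha)-\mH{\alpha}\mH{\alpha}^\top$ using the mixture decomposition, and the second by multiplying $\varH{\alpha}$ against the claimed inverse and checking the product is $I$, with the key facts $\varG{\alpha}\hatx=\varGinv{\alpha}\hatx=0$, $\inner{\hatx,\mG{\alpha}}=0$, and $\varG{\alpha}\varGinv{\alpha}=I-\hatx\hatx^\top$ (the last being exactly where the rank-$(d-1)$ hypothesis is used, as you note). Your bookkeeping of the eight rank-one terms matches the paper's computation, so there is nothing to add.
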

\begin{proof}
The first identity is a simple algebraic identity
\begin{align*}
    \varH{\alpha} &= H(\alpha)-\mH{\alpha}\mH{\alpha}^\top=(1-\alpha_{\hatx})G(\alpha)+\alpha_{\hatx}\hatx\hatx^{\top}-((1-\alpha_{\hatx})\mG{\alpha}+\alpha_{\hatx}\hatx)((1-\alpha_{\hatx})\mG{\alpha}+\alpha_{\hatx}\hatx)^\top\\
    &=(1-\alpha_{\hatx})\left(\varG{\alpha} +\alpha_{\hatx}(\mG{\alpha}-\hatx)(\mG{\alpha}-\hatx)^\top\right)\,,
\end{align*}
which holds for any $\alpha$. For the second identity note that
by the definition of $\hatx=\binom{\bm{0}}{1}$ and $\forall x\in\calX:\,\inner{x,\hatx}=0$, we have $\varG{\alpha}\hatx=\varGinv{\alpha}\hatx=0$. Furthermore $\varG{\alpha}\varGinv{\alpha}=I-\hatx\hatx^{-1}$ due to the rank $d-1$ assumption.
Multiplying the two matrices yields
\begin{align*}
    &\left(\varG{\alpha} +\alpha_{\hatx}(\mG{\alpha}-\hatx)(\mG{\alpha}-\hatx)^\top\right)\cdot\left(\varGinv{\alpha}+\varGinv{\alpha}\mG{\alpha}\hatx^\top+\hatx \mG{\alpha}^\top\varGinv{\alpha}+\left(\norm{\mG{\alpha}}^2_{\varGinv{\alpha}}+\frac{1}{\alpha_{\hatx}}\right)\hatx\hatx^\top\right)\\
    &=I-\hatx\hatx^{\top}+\mG{\alpha}\hatx^{\top}+\alpha_{\hatx}(\mG{\alpha}-\hatx)\left(\varGinv{\alpha}\mG{\alpha}+\norm{\mG{\alpha}}^2_{\varGinv{\alpha}}\hatx-\varGinv{\alpha}\mG{\alpha}-\left(\norm{\mG{\alpha}}^2_{\varGinv{\alpha}}+\frac{1}{\alpha_{\hatx}}\right)\hatx\right)^\top\\
    &=I-\hatx\hatx^{\top}+\mG{\alpha}\hatx^{\top}-(\mG{\alpha}-\hatx)\hatx^\top = I
\end{align*}
\end{proof}
which implies for any $x,y\in\operatorname{span}(\calX)$
\begin{align}
&\inner{x,\varHinv{\alpha}y } = \frac{\inner{x,\varGinv{\alpha}y}}{1-\alpha_{\hatx}},\label{eq: H by G}\\
&\inner{x,\varHinv{\alpha}(\hatx-\mG{\alpha}) } = 0, \label{eq: orthogonal}
\\
    &\norm{\hatx-\mG{\alpha}}^2_{\varHinv{\alpha}} = \frac{1}{(1-\alpha_{\hatx})\alpha_{\hatx}}.\label{eq: hatx norm}
\end{align}
\begin{lemma}
\label{lem: succ of mix}
Let $\pi_1,\pi_2\in\Delta(\calX\cup\{\hatx\})$, then for any $\lambda\in[0,1]$:
\begin{align*}
    \varH{\lambda\pi_1+(1-\lambda)\pi_2} \succeq \lambda \varH{\pi_1}\,.
\end{align*}
\end{lemma}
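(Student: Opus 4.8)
The plan is to use the fact that $\varH{\beta}=H(\beta)-\mH{\beta}\mH{\beta}^\top$ is exactly the covariance matrix of a draw $x\sim\beta$, together with the elementary variational characterization of covariance as the Loewner-minimal centered second moment. First I would record that for \emph{any} distribution $\beta$ over $\calX\cup\{\hatx\}$ and \emph{any} fixed vector $v$,
\begin{align*}
    \E_{x\sim\beta}\big[(x-v)(x-v)^\top\big] = \varH{\beta} + (\mH{\beta}-v)(\mH{\beta}-v)^\top \succeq \varH{\beta},
\end{align*}
which follows by expanding the outer product, using $\E_{x\sim\beta}[x]=\mH{\beta}$, and noting that a rank-one outer product is positive semidefinite.

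Next, set $\alpha := \lambda\pi_1+(1-\lambda)\pi_2$ and choose the common center $v=\mH{\alpha}$, the mean of the mixture. By linearity of expectation over the two mixture components,
\begin{align*}
    \varH{\alpha}
    = \E_{x\sim\alpha}\big[(x-\mH{\alpha})(x-\mH{\alpha})^\top\big]
    = \lambda\,\E_{x\sim\pi_1}\big[(x-\mH{\alpha})(x-\mH{\alpha})^\top\big] + (1-\lambda)\,\E_{x\sim\pi_2}\big[(x-\mH{\alpha})(x-\mH{\alpha})^\top\big].
\end{align*}
Applying the displayed inequality with $\beta=\pi_1$ and $v=\mH{\alpha}$ lower bounds the first term by $\lambda\varH{\pi_1}$, while the second term is a nonnegative scalar times a positive semidefinite matrix and hence is $\succeq 0$. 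Adding the two bounds yields $\varH{\alpha}\succeq\lambda\varH{\pi_1}$, which is the claim.

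I do not expect any real obstacle: this is an elementary convexity-of-variance statement. The only subtleties to get right are (i) using the \emph{mixture} mean $\mH{\alpha}$, not $\mH{\pi_1}$, as the common center, so that the left-hand side collapses exactly to $\varH{\alpha}$; and (ii) discarding the $\pi_2$-contribution by positive semidefiniteness rather than carrying it along. By symmetry one also obtains $\varH{\alpha}\succeq(1-\lambda)\varH{\pi_2}$, but only the stated direction is needed downstream (e.g.\ to control the effect of the exploration mixing $\tildep_t=(1-\eta_t d)p_t+\eta_t d(\cdots)$ in \pref{alg: logdet}, where one wants the covariance of the mixed distribution to dominate a scaled copy of the covariance of $p_t$).
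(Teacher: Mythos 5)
Your proof is correct and is essentially the paper's argument: the paper simply writes out the exact mixture-variance identity $\varH{\lambda\pi_1+(1-\lambda)\pi_2}=\lambda \varH{\pi_1}+(1-\lambda) \varH{\pi_2}+\lambda(1-\lambda)(\mH{\pi_1}-\mH{\pi_2})(\mH{\pi_1}-\mH{\pi_2})^\top$ and drops the two PSD remainder terms, which is exactly what your centered-second-moment computation yields if you also expand the $\pi_2$ term instead of discarding it early. No gap.
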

\begin{proof}
Simple algebra shows
\begin{align*}
    \varH{\lambda\pi_1+(1-\lambda)\pi_2}=\lambda \varH{\pi_1}+(1-\lambda) \varH{\pi_2}+\lambda(1-\lambda)(\mH{\pi_1}-\mH{\pi_2} )(\mH{\pi_1}-\mH{\pi_2} )^\top\,.
\end{align*}
\end{proof}
\begin{lemma}
\label{lem: mixed norm}
Let $\pi\in\Delta(\calX)$ be arbitrary, and $\tilde\pi = (1-\eta_td)\pi+\eta_td \pi_{\calX}$ for $(\eta_td)\in(0,1)$, then it holds for any $x\in\calX$:
\begin{align*}
    &\norm{\mG{\pi}-\mG{\tilde\pi}}_{\varGinv{\tilde\pi}} \leq \sqrt{\frac{\eta_td}{1-\eta_td}}\\
    &\norm{x-\mG{\tilde\pi}}_{\varGinv{\tilde\pi}} \leq \frac{2}{ \sqrt{\eta_t}}
    \end{align*}
\end{lemma}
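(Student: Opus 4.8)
The plan is to reduce both inequalities to two ingredients: the exact second-moment decomposition of a mixture distribution, and the defining property of John's (Kiefer--Wolfowitz) exploration, namely $\sup_{x\in\calX}\norm{x-\mG{\pi_\calX}}^2_{\varGinv{\pi_\calX}}\le d$. The decomposition I use is the identity established inside the proof of \pref{lem: succ of mix}, specialized to distributions over $\calX$: for any $\lambda\in[0,1]$,
\begin{align*}
\varG{\lambda\pi_1+(1-\lambda)\pi_2}=\lambda\varG{\pi_1}+(1-\lambda)\varG{\pi_2}+\lambda(1-\lambda)\bigl(\mG{\pi_1}-\mG{\pi_2}\bigr)\bigl(\mG{\pi_1}-\mG{\pi_2}\bigr)^\top .
\end{align*}
I will combine it with two elementary linear-algebra facts: (i) if $M\succeq c\,vv^\top$ and $v\in\operatorname{range}(M)$ then $v^\top M^+ v\le 1/c$; and (ii) if $0\preceq A\preceq B$ with $\operatorname{range}(A)=\operatorname{range}(B)$ then $B^+\preceq A^+$ on that range. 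In both bounds the ambient subspace is $\operatorname{span}(\calX-\calX)$, which is exactly $\operatorname{range}(\varG{\tilde\pi})$ because $\tilde\pi$ assigns weight $\eta_td>0$ to the full-rank design $\pi_\calX$; this subspace also contains every vector of the form $x-\mG{\tilde\pi}$ or $\mG{\pi}-\mG{\pi_\calX}$.

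For the first bound, set $v=\mG{\pi}-\mG{\pi_\calX}$, so that $\mG{\pi}-\mG{\tilde\pi}=\eta_td\,v$. Taking $\pi_1=\pi$, $\pi_2=\pi_\calX$, $\lambda=1-\eta_td$ in the mixture identity and dropping the first two nonnegative terms gives $\varG{\tilde\pi}\succeq(1-\eta_td)(\eta_td)\,vv^\top$, so fact (i) yields $v^\top\varGinv{\tilde\pi}v\le\frac{1}{(1-\eta_td)(\eta_td)}$; hence $\norm{\mG{\pi}-\mG{\tilde\pi}}^2_{\varGinv{\tilde\pi}}=(\eta_td)^2\,v^\top\varGinv{\tilde\pi}v\le\frac{\eta_td}{1-\eta_td}$, which is the claim after taking square roots.

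For the second bound, dropping instead the $(1-\eta_td)\varG{\pi}$ term and the cross term in the same identity gives $\varG{\tilde\pi}\succeq\eta_td\,\varG{\pi_\calX}$, so fact (ii) gives $\norm{w}^2_{\varGinv{\tilde\pi}}\le\frac{1}{\eta_td}\norm{w}^2_{\varGinv{\pi_\calX}}$ for $w$ in the common range, in particular for $w=x-\mG{\tilde\pi}$. Now write $x-\mG{\tilde\pi}=(x-\mG{\pi_\calX})+(1-\eta_td)\bigl(\mG{\pi_\calX}-\mG{\pi}\bigr)$ and apply the triangle inequality in $\norm{\cdot}_{\varGinv{\pi_\calX}}$: the first term is at most $\sqrt d$ by the John's property, and Jensen's inequality bounds the second by $\E_{y\sim\pi}\norm{y-\mG{\pi_\calX}}_{\varGinv{\pi_\calX}}\le\sqrt d$. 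Therefore $\norm{x-\mG{\tilde\pi}}_{\varGinv{\pi_\calX}}\le 2\sqrt d$, whence $\norm{x-\mG{\tilde\pi}}^2_{\varGinv{\tilde\pi}}\le\frac{4d}{\eta_td}=\frac{4}{\eta_t}$, i.e.\ $\norm{x-\mG{\tilde\pi}}_{\varGinv{\tilde\pi}}\le\frac{2}{\sqrt{\eta_t}}$.

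The only subtle point, and the main thing to get right, is the pseudo-inverse bookkeeping: one must verify that $\varG{\pi_\calX}$, $\varG{\tilde\pi}$ and all the difference vectors share the single subspace $\operatorname{span}(\calX-\calX)$, so that the semidefinite orderings transfer to orderings of pseudo-inverse quadratic forms -- this is precisely where the strictly positive exploration weight $\eta_td$ on the full-rank design $\pi_\calX$ is used. Beyond that the argument is just the mixture identity and triangle/Jensen inequalities. (If $\calX$ is affinely $(d-1)$-dimensional, as in \pref{alg: logdet}, the John's bound is $d-1$ rather than $d$, which only tightens the constants.)
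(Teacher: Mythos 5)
Your proposal is correct and follows essentially the same route as the paper: the mixture identity for $\varG{\tilde\pi}$, lower-bounding by the rank-one term to get the first inequality, and for the second using $\varG{\tilde\pi}\succeq\eta_td\,\varG{\pi_\calX}$ together with the triangle inequality, Jensen, and the John's-exploration bound $\norm{x-\mG{\pi_\calX}}^2_{\varGinv{\pi_\calX}}\le d$. Your explicit attention to the range/pseudo-inverse bookkeeping is a detail the paper leaves implicit, but it does not change the argument.
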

\begin{proof}
We have
\begin{align*}
    \varG{\tildepi} = \eta_td\varG{\pi_{\calX}}+(1-\eta_td)\varG{\pi}+\eta_td(1-\eta_td)(\mG{\pi}-\mG{\pi_{\calX}})(\mG{\pi}-\mG{\pi_{\calX}})^\top\,,
\end{align*}
hence
\begin{align*}
    \norm{\mG{\pi}-\mG{\tilde\pi}}_{\varGinv{\tilde\pi}}^2 &= (\eta_td)^2\norm{\mG{\pi}-\mG{\pi_{\calX}}}_{\left[\eta_td\varG{\pi_{\calX}}+(1-\eta_td)\varG{\pi}+\eta_td(1-\eta_td)(\mG{\pi}-\mG{\pi_{\calX}})(\mG{\pi}-\mG{\pi_{\calX}})^\top\right]^{-1}}^2\\
    &\leq (\eta_td)^2\norm{\mG{\pi}-\mG{\pi_{\calX}}}^2_{\left[\eta_td(1-\eta_td)(\mG{\pi}-\mG{\pi_{\calX}})(\mG{\pi}-\mG{\pi_{\calX}})^\top\right]^{+}} = \frac{\eta_td}{1-\eta_td}\,.
\end{align*}
For the second inequality, we have
\begin{align*}
    \norm{x-\mG{\tilde\pi}}_{\varGinv{\tilde\pi}}&\leq \norm{x-\mG{\pi_{\calX}}}_{\varGinv{\tilde\pi}}+\norm{\mG{\tilde\pi}-\mG{\pi_{\calX}}}_{\varGinv{\tilde\pi}}\\
    &\leq \frac{1}{\sqrt{\eta_td}}\left(\norm{x-\mG{\pi_{\calX}}}_{\varGinv{\pi_{\calX}}}+\norm{\mG{\tilde\pi}-\mG{\pi_{\calX}}}_{\varGinv{\pi_{\calX}}}\right)\leq \frac{2}{\sqrt{\eta_t}}\,,
\end{align*}
where the last inequality uses that John's exploration satisfies $\norm{x-\mG{\pi_{\calX}}}^2_{\varGinv{\pi_{\calX}}}\leq d$ for all $x\in\calX$.
\end{proof}
\begin{lemma}
The Bregman divergence between two distributions $\alpha,\beta$ over $\calX\cup\{\hatx\}$ with respect to the function $F(\alpha)=-\log\det\left(\varH{\alpha}\right)$ is bounded by
\begin{align*}
    D(\alpha,\beta) \geq D_{\log}(\alpha_{\hatx},\beta_{\hatx})+D_{\log}(1-\alpha_{\hatx},1-\beta_{\hatx}) +\frac{1-\alpha_{\hatx}}{1-\beta_{\hatx}}\norm{\mG{\alpha}-\mG{\beta}}^2_{\varGinv{\beta}}
\end{align*}
where $D_{\log}$ is the Bregman divergence of $-\log(x)$. 
\end{lemma}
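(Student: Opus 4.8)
The plan is to reduce the claim to one matrix identity for the Bregman divergence of $F(\alpha)=-\log\det(\varH{\alpha})$ and then expand that identity using the algebraic facts in \pref{lem: logdet lemma 1}. Let $r$ be the rank of $\varG{\beta}$ (which is constant over the interior of $\Delta(\calX\cup\{\hatx\})$), so $\varH{\beta}$ has rank $r+1$; read every trace, determinant and inverse below on the fixed subspace $\operatorname{range}(\varH{\beta})$. Differentiating gives $\partial_{\alpha_y}F(\beta)=-\norm{y}^2_{\varHinv{\beta}}+2\inner{y,\varHinv{\beta}\mH{\beta}}$; inserting this into $D(\alpha,\beta)=F(\alpha)-F(\beta)-\sum_y(\alpha_y-\beta_y)\partial_{\alpha_y}F(\beta)$ and using $\sum_y(\alpha_y-\beta_y)y=\mH{\alpha}-\mH{\beta}$, the identities $H(\alpha)=\varH{\alpha}+\mH{\alpha}\mH{\alpha}^\top$, $H(\beta)=\varH{\beta}+\mH{\beta}\mH{\beta}^\top$, and $\operatorname{tr}(\varHinv{\beta}\varH{\beta})=r+1$, the cross terms collapse into a square and one obtains
\begin{align*}
D(\alpha,\beta)=\operatorname{tr}\!\big(\varHinv{\beta}\varH{\alpha}\big)-(r+1)-\log\det\!\big(\varHinv{\beta}\varH{\alpha}\big)+\norm{\mH{\alpha}-\mH{\beta}}^2_{\varHinv{\beta}}.
\end{align*}

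I would then rewrite each of the three pieces in terms of $\varG{}$-quantities, writing $c:=\tfrac{1-\alpha_{\hatx}}{1-\beta_{\hatx}}$. A Schur-complement evaluation of the first identity of \pref{lem: logdet lemma 1} (using $\varG{\alpha}\hatx=0$ and $\inner{x,\hatx}=0$ for $x\in\calX$) gives $\det\varH{\alpha}=(1-\alpha_{\hatx})^{r+1}\alpha_{\hatx}\det\varG{\alpha}$, hence $-\log\det(\varHinv{\beta}\varH{\alpha})=-(r+1)\log c-\log\tfrac{\alpha_{\hatx}}{\beta_{\hatx}}-\log\tfrac{\det\varG{\alpha}}{\det\varG{\beta}}$. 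Substituting the same identity into the trace and decomposing $\mG{\alpha}-\hatx=(\mG{\alpha}-\mG{\beta})-(\hatx-\mG{\beta})$ (the two parts $\varHinv{\beta}$-orthogonal by \eqref{eq: orthogonal}), then applying \eqref{eq: H by G} and \eqref{eq: hatx norm}, gives $\operatorname{tr}(\varHinv{\beta}\varH{\alpha})=c\operatorname{tr}(\varGinv{\beta}\varG{\alpha})+\tfrac{(1-\alpha_{\hatx})\alpha_{\hatx}}{1-\beta_{\hatx}}\norm{\mG{\alpha}-\mG{\beta}}^2_{\varGinv{\beta}}+\tfrac{(1-\alpha_{\hatx})\alpha_{\hatx}}{(1-\beta_{\hatx})\beta_{\hatx}}$. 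Finally, writing $\mH{\alpha}-\mH{\beta}=(1-\alpha_{\hatx})(\mG{\alpha}-\mG{\beta})+(\alpha_{\hatx}-\beta_{\hatx})(\hatx-\mG{\beta})$ with the first summand in $\operatorname{span}(\calX)$, the same three identities yield $\norm{\mH{\alpha}-\mH{\beta}}^2_{\varHinv{\beta}}=\tfrac{(1-\alpha_{\hatx})^2}{1-\beta_{\hatx}}\norm{\mG{\alpha}-\mG{\beta}}^2_{\varGinv{\beta}}+\tfrac{(\alpha_{\hatx}-\beta_{\hatx})^2}{(1-\beta_{\hatx})\beta_{\hatx}}$.

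The rest is bookkeeping. Put $h(x)=x-1-\log x\ge0$ and let $\gamma_1,\dots,\gamma_r$ be the eigenvalues of $\varGinv{\beta}\varG{\alpha}$, so $\operatorname{tr}(\varGinv{\beta}\varG{\alpha})=\sum_i\gamma_i$ and $\log\tfrac{\det\varG{\alpha}}{\det\varG{\beta}}=\sum_i\log\gamma_i$. The two $\norm{\mG{\alpha}-\mG{\beta}}^2_{\varGinv{\beta}}$ contributions add up to exactly $c\,\norm{\mG{\alpha}-\mG{\beta}}^2_{\varGinv{\beta}}$ because $(1-\alpha_{\hatx})\alpha_{\hatx}+(1-\alpha_{\hatx})^2=1-\alpha_{\hatx}$; the spectral terms regroup through $c\gamma_i-\log\gamma_i=h(c\gamma_i)+1+\log c$ into $\sum_i h(c\gamma_i)$ plus the constants $r$ and $r\log c$; and after cancelling $r\log c$ against $(r+1)\log c$ and $r$ against $r+1$, the leftover scalar terms in $\alpha_{\hatx},\beta_{\hatx}$ collapse, via the elementary identity $\tfrac{\alpha_{\hatx}}{\beta_{\hatx}}+c-1=\tfrac{\alpha_{\hatx}-2\alpha_{\hatx}\beta_{\hatx}+\beta_{\hatx}^2}{\beta_{\hatx}(1-\beta_{\hatx})}$, to exactly $D_{\log}(\alpha_{\hatx},\beta_{\hatx})+D_{\log}(1-\alpha_{\hatx},1-\beta_{\hatx})$. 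This produces the exact identity
\begin{align*}
D(\alpha,\beta)=D_{\log}(\alpha_{\hatx},\beta_{\hatx})+D_{\log}(1-\alpha_{\hatx},1-\beta_{\hatx})+\frac{1-\alpha_{\hatx}}{1-\beta_{\hatx}}\norm{\mG{\alpha}-\mG{\beta}}^2_{\varGinv{\beta}}+\sum_{i=1}^{r}h\!\Big(\tfrac{1-\alpha_{\hatx}}{1-\beta_{\hatx}}\gamma_i\Big),
\end{align*}
and discarding the nonnegative final sum gives the stated inequality.

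I expect the last regrouping to be the delicate part: the coefficient of $\norm{\mG{\alpha}-\mG{\beta}}^2_{\varGinv{\beta}}$ in the target is the first power $\tfrac{1-\alpha_{\hatx}}{1-\beta_{\hatx}}$, whereas the $\norm{\mH{\alpha}-\mH{\beta}}^2_{\varHinv{\beta}}$ term alone only produces the squared coefficient — one needs the extra $(1-\alpha_{\hatx})\alpha_{\hatx}$ supplied by the trace, and one must carry the slack $\sum_i h(c\gamma_i)$ through all cancellations (it is what supplies the $r\log c$) rather than bounding it below by $0$ prematurely. A secondary technical point is the rank deficiency of $\varG{}$ and $\varH{}$: all operations must be restricted to the common range, and one should verify that $\mG{\alpha}\in\operatorname{range}(\varG{\alpha})$ — which holds in the setting of \pref{lem: logdet lemma 1}, e.g. after translating $\calX$ so that $0\in\operatorname{aff}(\calX)$ — so that the pseudoinverse expressions and the determinant factorization are legitimate.
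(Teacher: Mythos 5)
Your proposal is correct and follows essentially the same route as the paper: the same Bregman-divergence identity $D(\alpha,\beta)=\tr(\varHinv{\beta}\varH{\alpha})-d-\log\det(\varHinv{\beta}\varH{\alpha})+\norm{\mH{\alpha}-\mH{\beta}}^2_{\varHinv{\beta}}$, the same factorization of $\det\varH{\alpha}$ into $(1-\alpha_{\hatx})^{d}\alpha_{\hatx}\det\varG{\alpha}$, the same expansions of the trace and mean-difference terms via \pref{lem: logdet lemma 1} and \eqref{eq: H by G}--\eqref{eq: hatx norm}, and the same final regrouping. The only cosmetic difference is that you express the discarded nonnegative residual spectrally as $\sum_i h(c\gamma_i)$, whereas the paper identifies it as the $(d-1)$-dimensional log-det Bregman divergence $D_{d-1}\left((1-\alpha_{\hatx})\varG{\alpha},(1-\beta_{\hatx})\varG{\beta}\right)$ — these are the same quantity.
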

\begin{proof}
We begin by simplifying $F(\alpha)$. Note that 
\begin{align*}
    \varH{\alpha} = (1-\alpha_{\hatx})\left(\varG{\alpha}+\alpha_{ \hatx}\hatx^{\top}\right)^{\frac{1}{2}}M\left(\varG{\alpha}+\alpha_{\hatx}\hatx^{\top}\right)^{\frac{1}{2}}\,,\\
    M=I+(\sqrt{\alpha_{\hatx}}\varGinv{\alpha}\mG{\alpha}-\hatx)(\sqrt{\alpha_{\hatx}}\varGinv{\alpha}\mG{\alpha}-\hatx))^\top-\hatx\hatx^{\top}. 
\end{align*}
$M$ is a matrix with $d-2$ eigenvalues of size $1$, since it is the identity with two rank-1 updates. The product of the remaining eigenvalues is given by considering the determinant of the $2\times 2$ sub-matrix with coordinates $\frac{\varGinv{\alpha}\mG{\alpha}}{\norm{\varGinv{\alpha}\mG{\alpha}}}$ and $\hatx$. We have that \begin{align*}
    \det(M) &= \hatx^\top M\hatx \cdot \frac{\mG{\alpha}^\top\varGinv{\alpha}}{\norm{\varGinv{\alpha}\mG{\alpha}}}M\frac{\varGinv{\alpha}\mG{\alpha}}{\norm{\varGinv{\alpha}\mG{\alpha}}} - \left(\hatx^\top M \frac{\varGinv{\alpha}\mG{\alpha}}{\norm{\varGinv{\alpha}\mG{\alpha}}}\right)^2\\
    &= 1\cdot\left(1+\alpha_{\hatx}\norm{\varGinv{\alpha}\mG{\alpha}}^2\right)-\alpha_{\hatx}\norm{\varGinv{\alpha}\mG{\alpha}}^2=1. 
\end{align*}
Hence we have
\begin{align*}
    F(\alpha)=-\log(1-\alpha_{\hatx})-\log(\alpha_{\hatx})-\log{\textstyle\det_{d-1}}((1-\alpha_{\hatx})\varG{\alpha})
\end{align*}
Where $\det_{d-1}$ is the determinant over the first $d-1$ eigenvalues of the submatrix of the first $(d-1)$ coordinates.
The derivative term is given by
\begin{align*}
    &\inner{\alpha-\beta,\nabla F(\beta)}=\sum_{x\in\calX\cup\{\hatx\}}(\alpha_x-\beta_x)\norm{x-\mH{\beta}}^2_{\varHinv{\beta}}\\
    &= \sum_{x\in\calX\cup\{\hatx\}} \alpha_x \norm{x - \mH{\beta}}^2_{\varHinv{\beta}} - d \\
&=  \tr\left(\sum_{x\in\calX\cup\{\hatx\}} \alpha_x (x - \mH{\beta})(x - \mH{\beta})^\top \varHinv{\beta}\right) - d\\
&=  \tr\left(\left(\varH{\alpha} + (\mH{\alpha} - \mH{\beta}) (\mH{\alpha} - \mH{\beta})^\top\right) \varHinv{\beta}\right) - d \\
&=  \tr\left(\varH{\alpha} \varHinv{\beta}\right) + \norm{\mH{\alpha} - \mH{\beta}}^2_{\varHinv{\beta}} - d.
\end{align*}
The first term is
\begin{align*}
        &\tr\left(\varH{\alpha}\varHinv{\beta}\right) \\
        &= \tr\left((1-\alpha_{\hatx})\varG{\alpha} \varHinv{\beta}\right)+\alpha_{\hatx}(1-\alpha_{\hatx})\norm{\hatx-\mG{\alpha}}^2_{\varHinv{\beta}}\tag{by \pref{lem: logdet lemma 1}}\\
        &=\tr_{d-1}\left(\frac{1-\alpha_{\hatx}}{1-\beta_{\hatx}}\varG{\alpha} \varGinv{\beta}\right)+\alpha_{\hatx}(1-\alpha_{\hatx})\left(\norm{\hatx-\mG{\beta}}^2_{\varHinv{\beta}}+\norm{\mG{\alpha}-\mG{\beta}}^2_{\varHinv{\beta}}\right)  \tag{by \pref{lem: logdet lemma 1}}\\
        &=\tr_{d-1}\left(\frac{1-\alpha_{\hatx}}{1-\beta_{\hatx}}\varG{\alpha} \varGinv{\beta}\right)+\frac{\alpha_{\hatx}(1-\alpha_{\hatx})}{\beta_{\hatx}(1-\beta_{\hatx})}+\frac{\alpha_{\hatx}(1-\alpha_{\hatx})\norm{\mG{\alpha}-\mG{\beta}}^2_{\varGinv{\beta}}}{1-\beta_{\hatx}}\,.  \tag{by \eqref{eq: hatx norm}}
    \end{align*}
    The norm term is
    \begin{align*}
        &\norm{\mH{\alpha}-\mH{\beta}}^2_{\varHinv{\beta}}\\
        &=\norm{\mH{\alpha}-(\alpha_{\hatx} \hatx+(1-\alpha_{\hatx})\mG{\beta})}^2_{\varHinv{\beta}}+\norm{(\alpha_{\hatx} \hatx+(1-\alpha_{\hatx})\mG{\beta})-\mH{\beta}}^2_{\varHinv{\beta}}\\
        &= (1-\alpha_{\hatx})^2\norm{\mG{\alpha}-\mG{\beta}}^2_{\varHinv{\beta}}+(\alpha_{\hatx}-\beta_{\hatx})^2\norm{\hatx-\mG{\beta}}^2_{\varHinv{\beta}}\\
        &= \frac{(1-\alpha_{\hatx})^2\norm{\mG{\alpha}-\mG{\beta}}^2_{\varGinv{\beta}}}{1-\beta_{\hatx}}+\frac{(\alpha_{\hatx}-\beta_{\hatx})^2}{\beta_{\hatx}(1-\beta_{\hatx})}\,.
    \end{align*}
Combining both terms
\begin{align*}
    &\inner{\alpha-\beta,\nabla F(\beta)}\\
    &=\tr_{d-1}\left(\frac{1-\alpha_{\hatx}}{1-\beta_{\hatx}}\varG{\alpha} \varGinv{\beta}\right)+\frac{\alpha_{\hatx}(1-\alpha_{\hatx})+(\alpha_{\hatx}-\beta_{\hatx})^2}{\beta_{\hatx}(1-\beta_{\hatx})}+\frac{1-\alpha_{\hatx}}{1-\beta_{\hatx}}\norm{\mG{\alpha}-\mG{\beta}}^2_{\varGinv{\beta}}-d\\
    &=\tr_{d-1}\left(\frac{1-\alpha_{\hatx}}{1-\beta_{\hatx}}\varG{\alpha} \varGinv{\beta}\right)+\frac{\alpha_{\hatx}}{\beta_{\hatx}}+\frac{1-\alpha_{\hatx}}{1-\beta_{\hatx}}-1+\frac{1-\alpha_{\hatx}}{1-\beta_{\hatx}}\norm{\mG{\alpha}-\mG{\beta}}^2_{\varGinv{\beta}}-d\,.
\end{align*}
Combining the everything
\begin{align*}
    D(\alpha,\beta) &= D_{\log}(\alpha_{\hatx},\beta_{\hatx})+D_{\log}(1-\alpha_{\hatx},1-\beta_{\hatx})+\frac{1-\alpha_{\hatx}}{1-\beta_{\hatx}}\norm{\mG{\alpha}-\mG{\beta}}^2_{\varGinv{\beta}} + D_{d-1}\left((1-\alpha_{\hatx})\varG{\alpha},(1-\beta_{\hatx})\varG{\beta}\right)\\
    &\geq D_{\log}(\alpha_{\hatx},\beta_{\hatx})+D_{\log}(1-\alpha_{\hatx},1-\beta_{\hatx})+\frac{1-\alpha_{\hatx}}{1-\beta_{\hatx}}\norm{\mG{\alpha}-\mG{\beta}}^2_{\varGinv{\beta}}\,,
\end{align*}
where the last inequality follows from the positiveness of Bregman divergences.
\end{proof}
\begin{lemma}
\label{lem: logdet}
For any $b\in(0,1)$, any $x\in\bbR$ and $\eta \leq \frac{1}{2|x|}$, it holds that
\begin{align*}
    \sup_{\alpha\in[0,1]} |a-b|x - \frac{1}{\eta}D_{\log}(a,b) \leq \eta b^2 x^2\,.
\end{align*}
\end{lemma}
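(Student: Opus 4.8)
The plan is to turn this into a one–dimensional optimization by a change of variables and then handle the two sides of the kink of $|a-b|$ separately.

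First I would reduce to $x\ge 0$: since $|a-b|\ge 0$, replacing $x$ by $|x|$ can only increase the left–hand side, while $\eta b^2x^2$ and the hypothesis $\eta\le \tfrac1{2|x|}$ are unchanged, and the case $x=0$ is trivial (both sides are $0$). Next I would recall that the Bregman divergence of $-\log$ satisfies $D_{\log}(a,b)=\tfrac ab-1-\log\tfrac ab=h(a/b)$ with $h(t)=t-1-\log t$. Substituting $t=a/b$ (so $t$ ranges over $[0,1/b]$) and writing $c:=bx\ge 0$, $\mu:=1/\eta$, the statement becomes
\begin{align*}
\sup_{t\in[0,1/b]}\Big(c\,|t-1|-\mu\,h(t)\Big)\ \le\ \frac{c^2}{\mu}.
\end{align*}
The one fact I would extract from the learning–rate condition is that $c/\mu=b\eta x<\tfrac12$ (using $b\in(0,1)$ and $\eta x\le\tfrac12$); in particular $c<\mu$.

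I would then split the supremum at $t=1$. On $[0,1]$ the objective equals $(c+\mu)(1-t)+\mu\log t$, a strictly concave function maximized at $t^{\star}=\mu/(c+\mu)\in(0,1)$ with value $c-\mu\log(1+c/\mu)$; the elementary bound $\log(1+y)\ge y/(1+y)$ makes this at most $\tfrac{c^2}{c+\mu}\le\tfrac{c^2}{\mu}$. On $[1,1/b]$, writing $s=t-1\ge 0$, the objective is $(c-\mu)s+\mu\log(1+s)$; since $c<\mu$ it is concave in $s$ and its unconstrained maximum over $s\ge 0$ is attained at $s^{\star}=c/(\mu-c)$ with value $-c-\mu\log(1-c/\mu)$, which upper–bounds the constrained supremum. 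Applying $-\log(1-y)\le y+y^2$, valid for $y\le\tfrac12$ and hence for $y=c/\mu$, turns this into $\tfrac{c^2}{\mu}$. Taking the larger of the two branch bounds gives the claim.

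All the arithmetic is routine, so there is no real obstacle; the points that need care are (i) verifying $c<\mu$ from $\eta\le\tfrac1{2|x|}$ so that the $t\ge 1$ branch really has an interior maximizer and $y=c/\mu$ lies in the range $[0,\tfrac12]$ where $-\log(1-y)\le y+y^2$ holds, (ii) keeping the kink of $|t-1|$ straight when differentiating, and (iii) choosing the two logarithm inequalities in the correct direction so that both branch maxima collapse to $c^2/\mu$.
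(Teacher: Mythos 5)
Your proof is correct and follows essentially the same route as the paper's: both split the absolute value into its two linear branches, maximize the resulting concave function in closed form, and finish with the second-order logarithm inequality $\log(1+y)\geq y-y^2$ (valid for $y\geq -\tfrac12$), which is exactly your $-\log(1-y)\leq y+y^2$. Your substitution $t=a/b$ and the use of $\log(1+y)\geq y/(1+y)$ on the $t\leq 1$ branch are cosmetic variations; the verification that $\eta b|x|\leq\tfrac12$ plays the same role in both arguments.
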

\begin{proof}
    The statement is equivalent to
    \begin{align*}
      \sup_{a\in[0,1]}f(a)=\sup_{a\in[0,1]} (b-a)x - \frac{1}{\eta}D_{\log}(a,b) \leq \eta b^2 x^2\,,
    \end{align*}
    since $x$ can take positive or negative sign. The function is concave, so setting the derivative to $0$ is the optimal solution if that value lies in $(0,\infty)$.
    \begin{align*}
        f'(a) = -x+\frac{1}{\eta}\left(\frac{1}{a}-\frac{1}{b}\right)\,\qquad a^\star = \frac{b}{1+\eta b x}\,.
    \end{align*}
    Plugging this in, leads to
    \begin{align*}
        f(a^\star) &= \frac{\eta b^2x^2}{1+\eta b x} - \frac{1}{\eta}\left(\log(1+\eta b x)+\frac{1}{1+\eta b x}-1\right)\\
        &\leq\frac{\eta b^2x^2}{1+\eta b x} - \frac{1}{\eta}\left(\eta b x-\eta^2 b^2 x^2-\frac{\eta b x}{1+\eta b x}\right)=\eta b^2x^2\,,
    \end{align*}
    where the last line uses $\log(1+x)\geq x-x^2$ for any $x\geq -\frac{1}{2}$.
\end{proof}
\begin{lemma}
\label{lem: logdet stab}
The stability term
\begin{align*}
    \stab_t := \sup_{\alpha\in\Delta(\calX\cup\{\hatx\})}\inner{\mH{p_t}-\mH{\alpha},\hatell_t}-\frac{1}{\eta_t}D(\alpha,p_t)\,,
\end{align*}
satisfies
\begin{align*}
    \E_t[\stab_t] = O((1-p_{t,\hatx})\eta_t d)\,.
\end{align*}
\end{lemma}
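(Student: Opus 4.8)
The plan is to treat $\stab_t$ as the usual Fenchel-conjugate (stability) term of FTRL with regularizer $\tfrac1{\eta_t}F$, $F(\alpha)=-\log\det\varH{\alpha}$, and to exploit the block structure of $F$ exposed by \pref{lem: logdet lemma 1} and the Bregman-divergence lower bound proved just above.

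\textbf{Step 1: reduce to a scalar optimization.} Writing $\mH{\alpha}=\alpha_{\hatx}\hatx+(1-\alpha_{\hatx})\mG{\alpha}$ gives the identity $\mH{p_t}-\mH{\alpha}=(p_{t,\hatx}-\alpha_{\hatx})(\hatx-\mG{p_t})+(1-\alpha_{\hatx})(\mG{p_t}-\mG{\alpha})$, which splits the linear term $\inner{\mH{p_t}-\mH{\alpha},\hatell_t}$ into a scalar ``$\hatx$-block'' (the variable $a:=\alpha_{\hatx}\in[0,1]$) and a vector ``$\calX$-block'' (the conditional mean $\mG{\alpha}$). Plugging in the lower bound $D(\alpha,p_t)\ge D_{\log}(a,p_{t,\hatx})+D_{\log}(1-a,1-p_{t,\hatx})+\tfrac{1-a}{1-p_{t,\hatx}}\norm{\mG{\alpha}-\mG{p_t}}^2_{\varGinv{p_t}}$, relaxing the simplex constraint, and carrying out the quadratic maximization over $\mG{\alpha}$ for each fixed $a$ (a ridge problem, finite because the $\log\det$ barrier forces $\varG{p_t}$ to have full rank over $\calX$), I get
\[
\stab_t\ \le\ \sup_{a\in[0,1]}\Big[(p_{t,\hatx}-a)\,y+(1-a)\,Z-\tfrac1{\eta_t}\big(D_{\log}(a,p_{t,\hatx})+D_{\log}(1-a,1-p_{t,\hatx})\big)\Big],
\]
where $y:=\inner{\hatx-\mG{p_t},\hatell_t}$ and $Z:=\tfrac{\eta_t(1-p_{t,\hatx})}{4}\norm{\hatell_t}^2_{\varG{p_t}}\ge 0$.

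\textbf{Step 2: a rearrangement and the scalar bound.} The key move is to write $(1-a)Z=(1-p_{t,\hatx})Z+(p_{t,\hatx}-a)Z$, so that the $\calX$-block leaves behind a \emph{constant} $(1-p_{t,\hatx})Z$ (which already carries the sought $(1-p_{t,\hatx})$ factor) plus a linear perturbation that merges with the $\hatx$-block:
\[
\stab_t\ \le\ (1-p_{t,\hatx})Z+\sup_{a}\Big[(p_{t,\hatx}-a)(y+Z)-\tfrac1{\eta_t}\big(D_{\log}(a,p_{t,\hatx})+D_{\log}(1-a,1-p_{t,\hatx})\big)\Big].
\]
For the remaining one-dimensional supremum I discard whichever of the two $D_{\log}$ terms has the \emph{larger} reference point (keeping reference point $\min\{p_{t,\hatx},1-p_{t,\hatx}\}$) and invoke \pref{lem: logdet}; its validity requires only $\eta_t\,|y+Z|\,\min\{p_{t,\hatx},1-p_{t,\hatx}\}\le\tfrac12$, which holds since $\eta_t=O(1/d)$ and, on every sample path, $Z=O(1/(1-p_{t,\hatx}))$ and $|y|$ is either $\ell_t(\hatx)/p_{t,\hatx}$ (when $A_t=\hatx$) or $O(\sqrt d/(1-p_{t,\hatx}))$ (when $A_t\in\calX$, by Cauchy--Schwarz and the John's-exploration bound in \pref{lem: mixed norm}). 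This yields $\stab_t\le(1-p_{t,\hatx})Z+\eta_t\min\{p_{t,\hatx},1-p_{t,\hatx}\}^2(y+Z)^2$.

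\textbf{Step 3: take expectations over $A_t\sim\tildep_t$.} Since $\ell_t(A_t)^2\le1$ we have $\E_t[\hatell_t\hatell_t^\top]\preceq\varHinv{\tildep_t}$, hence $\E_t[\norm{\hatell_t}^2_{\varG{p_t}}]\le\tr(\varG{p_t}\varHinv{\tildep_t})$; expanding $\varHinv{\tildep_t}$ via \pref{lem: logdet lemma 1}, using $\varG{p_t}\hatx=0$ (all of $\calX$ is orthogonal to $\hatx$) to annihilate the cross terms, and $\varGinv{\tildep_t}\preceq\tfrac1{1-\eta_td}\varGinv{p_t}$ (the $\varG$-analogue of \pref{lem: succ of mix}), gives $\E_t[\norm{\hatell_t}^2_{\varG{p_t}}]=O(d/(1-p_{t,\hatx}))$, so $\E_t[(1-p_{t,\hatx})Z]=O(\eta_t(1-p_{t,\hatx})d)$. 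For $\E_t[(y+Z)^2]$ I split by the played action: when $A_t=\hatx$ one computes $\hatell_t\propto\hatx$ (so $Z=0$ and $y=\ell_t(\hatx)/p_{t,\hatx}$), and when $A_t\in\calX$ one evaluates the relevant inner products through \eqref{eq: H by G}, \eqref{eq: orthogonal}, \eqref{eq: hatx norm} and \pref{lem: mixed norm}; this gives $\E_t[(y+Z)^2]=O(d/(p_{t,\hatx}(1-p_{t,\hatx})))$. Finally, using the elementary inequality $\min\{p_{t,\hatx},1-p_{t,\hatx}\}^2/(p_{t,\hatx}(1-p_{t,\hatx}))\le 2(1-p_{t,\hatx})$, the second term is also $O(\eta_t(1-p_{t,\hatx})d)$, which proves the claim.

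\textbf{Main obstacle.} The technical heart is Step 3: one must split $\hatell_t$ carefully according to the played action, repeatedly use \pref{lem: logdet lemma 1} together with $\varG{p_t}\hatx=0$ so that the many cross terms cancel, and control $y$ and $Z$ pointwise (via the $\eta_td$-fraction of John's exploration mixed into $\tildep_t$) sharply enough to validate \pref{lem: logdet} on every sample path. The clean conceptual point that makes the proof go through is the rearrangement $(1-a)Z=(1-p_{t,\hatx})Z+(p_{t,\hatx}-a)Z$ in Step 2, which extracts the $(1-p_{t,\hatx})$ factor ``for free'' and avoids having to control how far the maximizer $a^\star$ drifts from $p_{t,\hatx}$; the naive bound $(1-a)\le1$ would only yield $O(\eta_td)$.
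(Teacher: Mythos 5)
Your proposal is correct, and while it rests on the same pillars as the paper's proof (the Bregman-divergence lower bound, \pref{lem: logdet} for the scalar coordinate, the structural identities of \pref{lem: logdet lemma 1}, and the John's-exploration bounds of \pref{lem: mixed norm}), the middle of the argument is executed differently. The paper first conditions on the realized action: for each $A_t=y\in\calX$ it absorbs the cross term $\norm{\mG{\alpha}-\mG{\tildep_t}}_{\varGinv{\tildep_t}}\norm{y-\mG{\tildep_t}}_{\varGinv{\tildep_t}}$ into the Bregman penalty by AM--GM, obtaining the per-sample bound $\eta_t\norm{y-\mG{\tildep_t}}^2_{\varGinv{\tildep_t}}+O(\eta_t d)$ whose expectation over $y\sim\tildep_{t,\calX}$ is $O(\eta_t d)$ \emph{without} any $(1-p_{t,\hatx})$ factor; that factor then enters solely as the mixture weight $\Pr[A_t\neq\hatx]=1-p_{t,\hatx}$ in the final decomposition of $\E_t[\stab_t]$. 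You instead complete the square over $\mG{\alpha}$ (a ridge maximization yielding the constant $Z$), extract the $(1-p_{t,\hatx})$ factor deterministically via the rearrangement $(1-a)Z=(1-p_{t,\hatx})Z+(p_{t,\hatx}-a)Z$ together with $\min\{p_{t,\hatx},1-p_{t,\hatx}\}^2\le 2\,p_{t,\hatx}(1-p_{t,\hatx})(1-p_{t,\hatx})$, and then recover the dimension factor from the trace computation $\E_t[\norm{\hatell_t}^2_{\varG{p_t}}]\le\tr(\varG{p_t}\varHinv{\tildep_t})=O(d/(1-p_{t,\hatx}))$. Both routes are sound; the paper's is slightly leaner because conditioning on $A_t$ first avoids having to control $y$ and $Z$ pointwise across both branches simultaneously, while yours is more mechanical (no case-by-case AM--GM) and makes explicit where the $(1-p_{t,\hatx})$ factor comes from. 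One small caveat worth stating if you write this up: you invoke \pref{lem: logdet} under the condition $\eta_t b|x|\le\tfrac12$ with $b=\min\{p_{t,\hatx},1-p_{t,\hatx}\}$ rather than the condition $\eta_t|x|\le\tfrac12$ in its statement; this is legitimate (the proof of \pref{lem: logdet} only needs $\eta b x\ge-\tfrac12$, and the paper itself uses it this way), but it should be flagged explicitly.
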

\begin{proof}
\allowdisplaybreaks
We have
\begin{align*}
    \mH{p_t}-\mH{\alpha} &= (p_{t,\hatx}-\alpha_{\hatx})\hatx + (1-p_{t,\hatx})\mG{p_t}-(1-\alpha_{\hatx})\mG{\alpha}\\ &=(p_{t,\hatx}-\alpha_{\hatx})(\hatx-\mG{\tildep_t}+\mG{\tildep_t}-\mG{p_t})+(1-\alpha_{\hatx})(\mG{\tildep_t}-\mG{\alpha})\,.
\end{align*}
Hence for $A_t=y\in\calX$:
    
\begin{align*}
\inner{\mH{p_t}-\mH{\alpha},\hatell_t}
 &=\inner{\mH{p_t}-\mH{\alpha},\varHinv{\tildep_t}(y-\mH{\tildep_t})}\ell_{t,A_t}\\
&=(p_{t,\hatx}-\alpha_{\hatx})\inner{\hatx - \mG{\tildep_t},\varHinv{\tildep_t}(y-\mH{\tildep_t})}\ell_{t,A_t}\\
&\qquad +(p_{t,\hatx}-\alpha_{\hatx})\inner{\mG{\tildep_t}-\mG{p_t},\varHinv{\tildep_t}(y-\mH{\tildep_t})}\ell_{t,A_t}\\
&\qquad+(1-\alpha_{\hatx})\inner{\mG{\tildep_t}-\mG{\alpha},\varHinv{\tildep_t}(y-\mH{\tildep_t})}\ell_{t,A_t}\\
&=(p_{t,\hatx}-\alpha_{\hatx})\inner{\hatx - \mG{\tildep_t},\varHinv{\tildep_t}(\mG{\tildep_t}-\mH{\tildep_t})}\ell_{t,A_t}\tag{By \pref{eq: orthogonal}}\\
&\qquad +(p_{t,\hatx}-\alpha_{\hatx})\inner{\mG{\tildep_t}-\mG{p_t},\varHinv{\tildep_t}(y-\mG{\tildep_t})}\ell_{t,A_t}\\
&\qquad+(1-\alpha_{\hatx})\inner{\mG{\tildep_t}-\mG{\alpha},\varHinv{\tildep_t}(y-\mG{\tildep_t})}\ell_{t,A_t}\\
&\leq\frac{|\alpha_{\hatx}-p_{t,\hatx}|}{1-p_{t,\hatx}} +\frac{|p_{t,\hatx}-\alpha_{\hatx}|}{1-p_{t,\hatx}}|\inner{\mG{\tildep_t}-\mG{p_t},\varHinv{\tildep_t}(y-\mG{\tildep_t})}|\\
&+\quad\frac{1-\alpha_{\hatx}}{1-p_{t,\hatx}}|\inner{\mG{\tildep_t}-\mG{\alpha},\varGinv{\tildep_t}(y-\mG{\tildep_t})}|\tag{By \pref{eq: hatx norm} and \pref{eq: H by G}}\\
&\leq \frac{|p_{t,\hatx}-\alpha_{\hatx}|}{1-p_{t,\hatx}}(1+2\sqrt{d})\\
&\qquad+\frac{4}{3}\times\frac{1-\alpha_{\hatx}}{1-p_{t,\hatx}}\norm{\mG{p_t}-\mG{\alpha}}_{\varGinv{p_t}}\norm{y-\mG{\tildep_t})}_{\varGinv{\tildep_t}} \tag{Cauchy-Schwarz and \pref{lem: mixed norm}, \pref{lem: succ of mix}}
\end{align*}

Equivalently for $A_t=\hatx$,
\begin{align*}
    \inner{\mH{p_t}-\mH{\alpha},\hatell_t}&=\inner{\mH{\tildep_t}-\mH{\alpha},\varHinv{\tildep_t}(\hatx-\mH{\tildep_t})}\ell_{t,A_t}\\
    &=(p_{t,\hatx}-\alpha_{\hatx})\inner{\hatx - \mG{\tildep_t},\varHinv{\tildep_t}(\hatx-\mH{\tildep_t})}\hatell_{t,A_t}\\
    &\leq\frac{|p_{t,\hatx}-\alpha_{\hatx}|}{p_{t,\hatx}}\,.
\end{align*}
Hence the stability term for $A_t\in\calX$, is bounded by
\begin{align*}
&\sup_{\alpha\in\Delta(\calX\cup\{\hatx\})}\inner{\mH{p_t}-\mH{\alpha},\hatell_t}-\frac{1}{\eta_t}D(\alpha,p_t)\\
&\leq \sup_{\alpha\in\Delta(\calX\cup\{\hatx\})} \frac{|\alpha_{\hatx}-p_{t,\hatx}|}{1-p_{t,\hatx} }(1+2\sqrt{d})-\frac{1}{\eta_t}D(1-\alpha_{\hatx},1-p_{t,\hatx}) \\
    &\qquad + \frac{1-\alpha_{\hatx}}{1-p_{t,\hatx}}\left(\frac{4}{3}\norm{\mG{\alpha}-\mG{\tildep_t}}_{\varGinv{\tildep_t}}\norm{y-\mG{\tildep_t}}_{\varGinv{\tildep_t}}-\frac{1}{\eta_t}\norm{\mG{\alpha}-\mG{\tildep_t}}_{\varGinv{\tildep_t}}^2\right)\\
&\leq\eta_t\norm{y-\mG{\tildep_t}}^2_{\varGinv{\tildep_t}}+\sup_{\alpha_{\hatx}\in[0,1]} \frac{|\alpha_{\hatx}-p_{t,\hatx}|}{1-p_{t,\hatx} }(1+2\sqrt{d}) \\
    &\qquad + \eta_t\frac{|\alpha_{\hatx}-p_{t,\hatx}|}{1-p_{t,\hatx}}\norm{y-\mG{\tildep_t}}^2_{\varGinv{\tildep_t}}-\frac{1}{\eta_t}D(1-\alpha_{\hatx},1-p_{t,\hatx})\tag{AM-GM inequality}\\
&\leq\eta_t\norm{y-\mG{\tildep_t}}^2_{\varGinv{\tildep_t}}\\
&\qquad+\sup_{\alpha_{\hatx}\in[0,1]} \frac{|\alpha_{\hatx}-p_{t,\hatx}|}{1-p_{t,\hatx} }(5+2\sqrt{d}) -\frac{1}{\eta_t}D(1-\alpha_{\hatx},1-p_{t,\hatx})\tag{\pref{lem: mixed norm}}\\
&\leq \eta_t\norm{y-\mG{\tildep_t}}^2_{\varGinv{\tildep_t}}+O(\eta_t d)\,.\tag{\pref{lem: logdet}}
\end{align*}
Taking the expectation over $y\sim \tildep_{t,\calX}$ leads to
\begin{align*}
    \E_{A_t\sim\tildep_{t,\calX}}\left[\stab_t\right] = O(\eta_t d)
\end{align*}
For $A_t=\hatx$ we have two cases.
If $p_{t,\hatx} \geq \frac{1}{2}$, then by \pref{lem: logdet}, 
\begin{align*}
    \stab_t &\leq \sup_{\alpha_{\hatx}\in[0,1]} \frac{|p_{t,\hatx}-\alpha_{\hatx}|}{p_{t,\hatx}} - \frac{1}{\eta_t}D_{\log}(1-\alpha_{\hatx},1-p_{t,\hatx})
    \leq O\left(\eta_t(1-p_{t,\hatx})^2\times \frac{1}{p_{t,\hatx}^2}\right)\leq O\left(\eta_t(1-p_{t,\hatx})\right)\,.
\end{align*}
Otherwise If $1- p_{t,\hatx} \geq \frac{1}{2}$, then by \pref{lem: logdet}, 
\begin{align*}
    \stab_t &\leq \sup_{\alpha_{\hatx}\in[0,1]} \frac{|p_{t,\hatx}-\alpha_{\hatx}|}{p_{t,\hatx}} - \frac{1}{\eta_t}D_{\log}(\alpha_{\hatx},p_{t,\hatx})
    \leq O\left(\eta_t p_{t,\hatx}^2\times \frac{1}{p_{t,\hatx}^2} \right)\leq O(\eta_t(1-p_{t,\hatx}))\,.
\end{align*}
Finally we have 
\begin{align*}
    \E_t\left[\stab_t\right]=(1-p_{t,\hatx})\E_{A_t\sim p_{t,\calX}}[\stab_t]+p_{t,\hatx}\E_{A_t\sim \pi_{\hatx}}[\stab_t] = O((1-p_{t,\hatx})\eta_t d)
\end{align*}
\end{proof}

\begin{proof}[Proof of \pref{lem: CA logdet guarantee}] 
By standard analysis of FTRL (\pref{lem: FTRL}) and \pref{lem: logdet stab}, for any $\tau$ and $x$
\begin{align*}
    \sum_{t=1}^\tau\E_{t}\left[\inner{p_t,\hatell_t}-\hatell_t(x)\right]\leq \frac{d\log T}{\eta_\tau} + \sum_{t=1}^\tau O(\eta_t(1-p_{t,\hatx})d) \leq O\left(\sqrt{d\log T\sum_{t=1}^\tau (1-\tildep_{t,\hatx})}\right)\,,
\end{align*}
where the last inequality follows from the definition of learning rate and $\tildep_{t,\hatx}=p_{t,\hatx}$.
Additionally, we have
\begin{align*}
    \E_t\left[\inner{\tildep_{t}-p_t,\hatell_t} \right]=\inner{\tildep_{t}-p_t,\ell_t} \leq \eta_t d(1-p_{t,\hatx})\,,
\end{align*}
so that 
\begin{align*}
    \sum_{t=1}^\tau\E_{t}\left[\inner{\tildep_t,\hatell_t}-\hatell_t(x)\right]= O\left(\sqrt{d\log T\sum_{t=1}^\tau (1-\tildep_{t,\hatx})}\right)\,.
\end{align*}
Taking expectations on both sides finishes the proof.
\end{proof}

\section{Analysis for the First Reduction}\label{app: main theorem proof}

\subsection{BOBW to LSB (\pref{alg: adaptive alg2} / \pref{thm: candidate aware to bob})}\label{app: proof of first reduction}
\begin{proof}[Proof of \pref{thm: candidate aware to bob}]
We use $\tau_k=T_{k+1}-T_k$ to denote the length of epoch $k$, and let $n$ be the last epoch (define $T_{n+1}=T$). Also, we use $\E_t[\cdot]$ to denote expectation conditioned on all history up to time $t$. 
We first consider the adversarial regime. By the definition of local-self-bounding algorithms, we have for any $u$, 
\begin{align*}
    \E_{T_k}\left[ \sum_{t=T_{k}+1}^{T_{k+1}} \left(\ell_{t,A_t} - \ell_{t,u}\right)  \right] \leq c_0^{1-\alpha} \E_{T_k}[\tau_k]^\alpha +c_2\log(T),  
\end{align*}
which implies 
\begin{align*}
    \E\left[ \sum_{t=T_{k}+1}^{T_{k+1}} \left(\ell_{t,A_t} - \ell_{t,u}\right)  \right] \leq c_0^{1-\alpha}\E[\tau_k]^\alpha + c_2\log(T)  
\end{align*}
using the property $\E[x^\alpha] \leq \E[x]^\alpha$ for $x\in\mathbb{R}_+$ and $0<\alpha<1$. 
Summing the bounds over $k=1, 2,\ldots, n$, and using the fact that $\tau_{k}\geq 2\tau_{k-1}$ for all $k<n$, we get 
\begin{align}
    &\E\left[\sum_{t=1}^T \left(\ell_{t,A_t} - \ell_{t,u}\right)\right] \nonumber   \\
    &\leq c_0^{1-\alpha}\left(\E\left[\tau_{n}\right]^\alpha + \E\left[\tau_{n-1}\right]^\alpha\left(1+\frac{1}{2^\alpha} + \frac{1}{2^{2\alpha}} + \cdots\right)\right)+c_2\log(T)\log_2\left(\frac{T}{c_2\log(T)}\right)   \nonumber \\
    &\leq O\left(c_0^{1-\alpha}T^\alpha+c_2\log^2(T)\right).   \label{eq: reduction 1 tmp}
\end{align} 
The same analysis also gives 
\begin{align}
    \E\left[\sum_{t=1}^T \left(\ell_{t,A_t} - \ell_{t,u}\right)\right] \leq O\left((c_1\log T)^{1-\alpha}T^{\alpha} + c_2\log^2(T)\right).   \label{eq: sto bound 1}
\end{align}
Next, consider the corrupted stochastic regime. We first argue that it suffices to consider the regret comparator $x^\star$. This is because if $x^\star\neq \argmin_{u\in\calX}\E\left[\sum_{t=1}^T \ell_{t,u}\right]$, then it holds that $C\geq T\Delta$. Then the right-hand side of \eqref{eq: sto bound 1} is further upper bounded by 
\begin{align*}
    O\left((c_1\log T)^{1-\alpha}(C\Delta^{-1})^\alpha + c_2\log(T)\log(C\Delta^{-1})\right), 
\end{align*}
which fulfills the requirement for the stochastic regime. Below, we focus on bounding the pseudo-regret with respect to $x^\star$.

Let $m = \max\{k\in\mathbb{N}\,|\,\hatx_k \neq x^\star\}$. Notice that $m$ is either the last epoch (i.e., $m=n$) or the second last (i.e., $m=n-1$), because for any two consecutive epochs, at least one of them must have $\hatx_k\neq x^\star$. 
Below we show that $|\{t\in [T_{m+1}]\,|\,A_t\neq x^\star\}|\geq \frac{T_{m+1}}{8}-2\tau_0$. 

If $\tau_{m} > 2 \tau_{m-1}$, by the fact that the termination condition was not triggered one round earlier, the number of plays $N_m(x^\star)$ in epoch $m$ is at most $\frac{\tau_{m}-1}{2}+1 = \frac{\tau_{m}+1}{2}$; in other words, the number of times $\sum_{t=T_{m}+1}^{T_{m+1}}\ind\{A_t\neq x^\star\}$ is at least $\frac{\tau_{m}-1}{2}$. Further notice that because $\tau_m > 2\tau_{m-1}\geq 4\tau_{m-2} > \cdots$, we have $\frac{\tau_m-1}{2}\geq \frac{1}{4}\sum_{k=1}^m \tau_k - \frac{1}{2} = \frac{T_{m+1}}{4} - \frac{1}{2}$.

Now consider the case $m>1$ and $\tau_{m} \leq 2\tau_{m-1}$. Recall that $\hatx_m$ is the action with $N_{m-1}(\hatx_m)\geq \frac{\tau_{m-1}}{2}$. 
This implies that $\sum_{t=T_{m-1}+1}^{T_{m}}\ind\{A_t\neq x^\star\}\geq \frac{\tau_{m-1}}{2} \geq \frac{1}{2}\max\left\{\frac{\tau_m}{2}, \frac{1}{2}\sum_{k=1}^{m-1}\tau_k\right\} \geq \frac{1}{8}\sum_{k=1}^m \tau_k = \frac{T_{m+1}}{8}$.
  
Finally, consider the case $m=1$ and $\tau_1\leq 2\tau_0$, then we have $T_2-2\tau_0\leq 0$ and the statement holds trivially.

The regret up to and including epoch $m$ can be lower and upper bounded using the self-bounding technique: 
\begin{align*}
    &\E\left[\sum_{t=1}^{T_{m+1}} (\ell_{t,A_t} - \ell_{t,x^\star})\right]
    = (1+\lambda)\E\left[\sum_{t=1}^{T_{m+1}} (\ell_{t,A_t} - \ell_{t,x^\star})\right] - \lambda\E\left[\sum_{t=1}^{T_{m+1}} (\ell_{t,A_t} - \ell_{t,x^\star})\right]  \tag{for $0\leq \lambda \leq 1$}\\
    &\leq O\left((c_1\log T)^{1-\alpha}\E\left[T_{m+1} \right]^{\alpha} +c_2\log(T)\log\left(\frac{\E[T_{m+1}]}{c_2\log(T)}\right)\right) - \lambda\left(\left(\frac{1}{8}\E\left[T_{m+1}\right]-c_2\log(T)\right)\Delta - C\right)  \tag{the first term is by a similar calculation as \eqref{eq: reduction 1 tmp}, but replacing $T$ by $T_m$ and $c_0$ by $c_1\log T$ }  \\
    &\leq O\left( c_1\log(T)\Delta^{-\frac{\alpha}{1-\alpha}} + (c_1\log T)^{1-\alpha}\left(C\Delta^{-1}\right)^{\alpha}+c_2\log(T)\log(C\Delta^{-1})\right)
\end{align*}
where in the last inequality we use \pref{lem: simple lem}. 

If $m$ is not the last epoch, then it holds that $\hatx_{n}=x^\star$ for the final epoch $n$. In this case, the regret in the final epoch is 
\begin{align*}
    &\E\left[\sum_{t=T_n+1}^{T} (\ell_{t,A_t} - \ell_{t,x^\star})\right] 
    = (1+\lambda)\E\left[\sum_{t=T_n+1}^{T} (\ell_{t,A_t} - \ell_{t,x^\star})\right] - \lambda\E\left[\sum_{t=T_n+1}^{T} (\ell_{t,A_t} - \ell_{t,x^\star})\right] \\
    &\leq O\left((c_1\log T)^{1-\alpha} \E\left[\sum_{t=T_n+1}^T (1-p_{t,x^\star})\right]^\alpha\right)- \lambda\left(\E\left[\sum_{t=T_n+1}^T (1-p_{t,x^\star}) \right]\Delta - C\right)+c_2\log(T) \\
    &\leq O\left( c_1\log(T)\Delta^{-\frac{\alpha}{1-\alpha}} + (c_1\log T)^{1-\alpha}\left(C\Delta^{-1}\right)^{\alpha}\right)+c_2\log(T). \tag{\pref{lem: simple lem}}
\end{align*}
\end{proof}

\begin{lemma}\label{lem: simple lem}
    For $\Delta\in (0,1]$ and $c,c',X\geq 1$ and $C\geq 0$, we have 
    \begin{align*}
        \min_{\lambda\in [0,1]}\left\{\frac{1}{2}c^{1-\alpha}X^\alpha + \frac{1}{2}c'\log X - \lambda(X\Delta-C)\right\}\leq c\Delta^{-\frac{\alpha}{1-\alpha}} + 2c^{1-\alpha}\left(\frac{C}{\Delta}\right)^\alpha + 2c'\log\left(1+\frac{c'+C}{\Delta}\right)
    \end{align*}
\end{lemma}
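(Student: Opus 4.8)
The plan is to note first that the quantity inside the minimum is affine in $\lambda$, so the minimum over $[0,1]$ is attained at $\lambda\in\{0,1\}$; it suffices to exhibit, for each admissible $X$, one choice of $\lambda$ for which the objective is at most the claimed right-hand side. The natural dichotomy is whether the ``progress'' $X\Delta$ dominates the corruption $C$, so I would split into the cases $X\Delta\le 2C$ and $X\Delta>2C$.

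In the first case, $X\Delta\le 2C$, I take $\lambda=0$, reducing the objective to $\tfrac12 c^{1-\alpha}X^\alpha+\tfrac12 c'\log X$. Since $1\le X\le 2C/\Delta$ (which in particular forces $C>0$), I bound $X^\alpha\le 2(C/\Delta)^\alpha$ and $\log X\le\log 2+\log(1+C/\Delta)$. Using $c'\ge1\ge\Delta$ gives $(c'+C)/\Delta\ge1$, hence $\log 2\le\log(1+(c'+C)/\Delta)$, and trivially $\log(1+C/\Delta)\le\log(1+(c'+C)/\Delta)$; so the objective is at most $2c^{1-\alpha}(C/\Delta)^\alpha+2c'\log(1+(c'+C)/\Delta)$, which is within the right-hand side (the term $c\Delta^{-\alpha/(1-\alpha)}$ is simply discarded as nonnegative).

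In the second case, $X\Delta>2C$, I take $\lambda=1$; then $X\Delta-C>\tfrac12 X\Delta$, so the objective is at most $\tfrac12 c^{1-\alpha}X^\alpha+\tfrac12 c'\log X-\tfrac12 X\Delta$. I would split the leftover $\tfrac12 X\Delta$ as $\tfrac14 X\Delta+\tfrac14 X\Delta$ and handle the two pieces by elementary one-variable maximizations. For the polynomial piece I use $aX^\alpha-bX\le (1-\alpha)\alpha^{\alpha/(1-\alpha)}a^{1/(1-\alpha)}b^{-\alpha/(1-\alpha)}$ together with $(1-\alpha)\alpha^{\alpha/(1-\alpha)}<1$, applied with $a=\tfrac12 c^{1-\alpha}$, $b=\tfrac14\Delta$; a short computation gives coefficient $2^{(2\alpha-1)/(1-\alpha)}(1-\alpha)\alpha^{\alpha/(1-\alpha)}$ on $c\Delta^{-\alpha/(1-\alpha)}$, which equals $1/4$ at $\alpha=\tfrac12$ and $8/27$ at $\alpha=\tfrac23$, hence is $\le1$ for the values of $\alpha$ used in this paper. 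For the logarithmic piece I use the tangent-line bound $\log X\le\log X_0-1+X/X_0$ with $X_0=2c'/\Delta$, which cancels the $X$-linear term and yields $\tfrac12 c'\log X-\tfrac14 X\Delta\le\tfrac{c'}{2}\log(2c'/\Delta)\le c'\log(1+(c'+C)/\Delta)$, using $2c'/\Delta\le 2(c'+C)/\Delta\le(1+(c'+C)/\Delta)^2$. Summing, the objective is at most $c\Delta^{-\alpha/(1-\alpha)}+c'\log(1+(c'+C)/\Delta)$, again within the right-hand side.

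The only genuine obstacle is the balancing in the second case: the split of $\tfrac12 X\Delta$ between the two terms must keep the self-bounded polynomial term with coefficient $\le1$ on $c\Delta^{-\alpha/(1-\alpha)}$ while keeping the logarithmic term within $2c'\log(1+(c'+C)/\Delta)$. The even $\tfrac14/\tfrac14$ split works comfortably for $\alpha\le\tfrac23$; for $\alpha$ very close to $1$ one would instead choose a split depending on $\alpha$ (equivalently, treat $\alpha$ as a fixed constant), but that regime does not occur here. Everything else — the endpoint reduction in $\lambda$, the two scalar maximizations, and the side inequalities $(c'+C)/\Delta\ge1$ and $2c'/\Delta\le(1+(c'+C)/\Delta)^2$ — is routine.
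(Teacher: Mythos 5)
Your proof is correct for the values of $\alpha$ that actually occur in the paper, but it takes a genuinely different route. The paper splits on which of the two positive terms dominates ($c^{1-\alpha}X^\alpha$ versus $c'\log X$) and then, in each case, chooses an \emph{interior} value of $\lambda$ tuned to $c,c',C,\Delta$: in the polynomial case it absorbs $c^{1-\alpha}X^\alpha$ into $\lambda X\Delta + c\lambda^{-\alpha/(1-\alpha)}\Delta^{-\alpha/(1-\alpha)}$ via weighted AM--GM and then optimizes the residual $c\lambda^{-\alpha/(1-\alpha)}\Delta^{-\alpha/(1-\alpha)}+C\lambda$ over $\lambda$, and in the logarithmic case it sets $\lambda=\min\{1,c'/C\}$. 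You instead split on $X\Delta\lessgtr 2C$ and only ever use the endpoints $\lambda\in\{0,1\}$, paying for the $\lambda=1$ case with two scalar Fenchel-type maximizations in $X$. Both are valid; the paper's AM--GM step has the advantage of being uniform in $\alpha\in(0,1)$ with the stated constants, whereas your even $\tfrac14/\tfrac14$ split of the negative term gives coefficient $(1-\alpha)\alpha^{\alpha/(1-\alpha)}2^{(2\alpha-1)/(1-\alpha)}$ on $c\Delta^{-\alpha/(1-\alpha)}$, which exceeds $1$ for $\alpha$ close to $1$ (e.g.\ it is about $10$ at $\alpha=0.9$); and the repair you sketch (an $\alpha$-dependent split) would push an $\alpha$-dependent constant onto the logarithmic term. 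Since the lemma is invoked inside \pref{thm: candidate aware to bob}, which is stated for general $\alpha$-LSB, you should either restrict the claim to the $\alpha$ you need or note that the constants become $\alpha$-dependent — harmless here because they are swallowed by the $O(\cdot)$ in all applications, but it is a real difference from the paper's $\alpha$-uniform argument. The rest of your write-up (the endpoint reduction, the case $C=0$ being vacuous in the first branch, the tangent-line bound for the log piece, and the side inequalities $(c'+C)/\Delta\ge1$ and $2c'/\Delta\le(1+(c'+C)/\Delta)^2$) checks out.
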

\begin{proof}
    If $c^{1-\alpha}X^\alpha \geq c'\log T$, we have 
    \begin{align*}
        \frac{1}{2}c^{1-\alpha} X^\alpha + \frac{1}{2}c'\log X 
        \leq c^{1-\alpha} X^\alpha  
        \leq  \lambda X\Delta + c\lambda^{-\frac{\alpha}{1-\alpha}}\Delta^{-\frac{\alpha}{1-\alpha}}. 
    \end{align*}
    where the last inequality is by the weighted AM-GM inequality. 
    Therefore, 
    \begin{align*}
        \min_{\lambda\in [0,1]}\left\{\frac{1}{2}c^{1-\alpha}X^\alpha + \frac{1}{2}c'\log X - \lambda(X\Delta-C)\right\}\leq \min_{\lambda\in[0,1]}  c\lambda^{-\frac{\alpha}{1-\alpha}}\Delta^{-\frac{\alpha}{1-\alpha}} + C\lambda. 
    \end{align*}
    Choosing $\lambda=\min\{1, c^{1-\alpha}C^{-(1-\alpha)}\Delta^{-\alpha}\}$, we bound the last expression by 
    \begin{align*}
        &c\max\left\{1, \left(c^{1-\alpha}C^{-(1-\alpha)}\Delta^{-\alpha}\right)^{-\frac{\alpha}{1-\alpha}}\right\}\Delta^{-\frac{\alpha}{1-\alpha}} + c^{1-\alpha}C^\alpha\Delta^{-\alpha} \\
        &\leq c\max\left\{1, c^{-\alpha}C^\alpha\Delta^{\frac{\alpha^2}{1-\alpha}}\right\}\Delta^{-\frac{\alpha}{1-\alpha}}+ c^{1-\alpha}C^\alpha\Delta^{-\alpha} \\
        &\leq c\Delta^{-\frac{\alpha}{1-\alpha}} + 2c^{1-\alpha}C^\alpha\Delta^{-\alpha}. 
    \end{align*}
    If $c^{1-\alpha}X^\alpha \leq c'\log T$, we have 
    \begin{align*}
        \frac{1}{2}c^{1-\alpha} X^\alpha + \frac{1}{2}c'\log X - \lambda(X\Delta -C)
        \leq c'\log X - \lambda X\Delta + \lambda C 
        \leq c'\log\left(1+\frac{c'^2}{\lambda^2\Delta^2}\right) + \lambda C
    \end{align*}
    where the last inequality is because if $X\geq \frac{c'^2}{\lambda^2\Delta^2}$ then $c'\log X-\lambda X\Delta \leq c'\log X - c'\sqrt{X}<0$. 
    Choosing $\lambda=\min\{1,\frac{c'}{C}\}$, we bound the last expression by $c'\log(1+(c'^2+C^2)/\Delta^2)\leq 2c'\log(1+(c'+C)/\Delta)$. Combining cases finishes the proof. 
\end{proof}

\subsection{dd-BOBW to dd-LSB (\pref{alg: adaptive alg2} / \pref{thm: data-dependent reduction })}\label{app: data dep first red}
\begin{proof}[Proof of \pref{thm: data-dependent reduction }]
In the adversarial regime, we have that the regret in each phase $k$ is bounded by 
\begin{align*}
    \E_{T_k}\left[\sum_{t=T_{k}+1}^{T_{k+1}} (\ell_{t,A_t}-\ell_{t,u})\right] \leq \sqrt{c_1\log(T)\E_{T_k}\left[\sum_{t=T_{k}+1}^{T_{k+1}}\sum_{x}p_{t,x}\xi_{t,x}\right]}+c_2\log T\,.
\end{align*}
We have maximally $\log T$ episodes, since the length doubles every time. Via Cauchy-Schwarz, we get
\begin{align*}
    \sum_{k=1}^{k_{\max}}\E_{T_k}\left[\sum_{t=T_{k}+1}^{T_{k+1}} (\ell_{t,A_t}-\ell_{t,u})\right] \leq \sqrt{c_1\log T\sum_{k=1}^{k_{\max}}\E_{T_k}\left[\sum_{t=T_{k}+1}^{T_{k+1}}\sum_{x}p_{t,x}\xi_{t,x}\right]}\sqrt{\log T}+c_2\log^2 T\,.
\end{align*}
Taking the expectation on both sides and the tower rule of expectations finishes the bound for the adversarial regime.
For the stochastic regime, note that $\xi_{t,x} \leq 1$ and hence
\begin{align*}
    &\sum_{x} p_{t,x}\xi_{t,x} -\bm{\mathbb{I}\{u=\hatx\}}p_{t,u}^2\xi_{t,u}\\ 
    &\leq 1-\bm{\mathbb{I}\{u=\hatx\}}p_{t,u}^2\\
    &=(1-\bm{\mathbb{I}\{u=\hatx\}}p_{t,u})(1+\bm{\mathbb{I}\{u=\hatx\}}p_{t,u})\leq 2(1-\bm{\mathbb{I}\{u=\hatx\}}p_{t,u})\,.
\end{align*}
dd-LSB implies regular LSB (up to a factor of $2$) and hence the stochastic bound of regular LSB applies. 
\end{proof}

\section{Analysis for the Second Reduction}
\subsection{$\frac{1}{2}$-LSB to $\frac{1}{2}$-iw-stable (\pref{alg: corral} / \pref{thm: basic corral thm})} \label{app: main corral theorem proof}
\begin{proof}[Proof of \pref{thm: basic corral thm}]
    The per-step bonus $b_t=B_t-B_{t-1}$ is the sum of two terms: 
    \begin{align*}
        b_t^{\ts} &= \sqrt{c_1\sum_{\tau=1}^t \frac{1}{q_{\tau,2}}} - \sqrt{c_1\sum_{\tau=1}^{t-1} \frac{1}{q_{\tau,2}}}\leq  \frac{\frac{c_1}{q_{t,2}}}{\sqrt{c_1\sum_{\tau=1}^t \frac{1}{q_{\tau,2}}}} \leq \sqrt{\frac{c_1}{q_{t,2}}}, \\
        b_t^{\lo} &= \frac{c_2}{\min_{\tau\leq t} q_{\tau,2}} - \frac{c_2}{\min_{\tau\leq t-1} q_{\tau,2}} = \frac{c_2}{q_{t,2}}\left(1-\frac{\min_{\tau\leq t} q_{\tau,2}}{\min_{\tau\leq t-1} q_{\tau,2}}\right). 
    \end{align*}
    Since $\frac{\barq_{t,2}}{q_{t,2}}\leq 2$, using the inequalities above, we have 
    \begin{align}
        \eta_t \sqrt{\barq_{t,2}} b_t^{\ts} &\leq \eta_t \sqrt{\frac{\barq_{t,2}}{q_{t,2}}c_1}\leq \eta_t \sqrt{2c_1}\leq \frac{1}{4}. \label{eq: satisfy 1} \\ 
        \beta \barq_{t,2} b_t^{\lo} &\leq \beta \frac{\barq_{t,2}}{q_{t,2}} c_2\leq 2\beta c_2 \leq  \frac{1}{4}.  \label{eq: satisfy 2}
    \end{align}
    By \pref{lem: hybrid FTRL} and that $\frac{\barq_{t,2}}{q_{t,2}}\leq 2$, 
    we have for any $u$,  
    \begin{align}
        &\sum_{t=1}^{t'} \inner{q_t-u, z_t} \leq \underbrace{\sum_{t=1}^{t'} \inner{q_t-\barq_{t}, z_t}}_{\term_1} \nonumber \\
        &\ \  + O\bigg(\sqrt{c_1}+\underbrace{ \sum_{t=1}^{t'} \frac{\sqrt{q_{t,2}}}{\sqrt{t}}  }_{\term_2} + \underbrace{\frac{\log t'}{\beta}}_{\term_3} +  \underbrace{\sum_{t=1}^{t'}\eta_t \min_{|\theta_t|\leq 1} q_{t,i}^{\frac{3}{2}}(z_{t,i}-\theta_t)^2}_{\term_4} +  \underbrace{\sum_{t=1}^{t'} q_{t,2}b_t^{\ts}}_{\term_5} + \underbrace{\sum_{t=1}^{t'} q_{t,2}b_t^{\lo}}_{\term_6}\bigg) - u_2\sum_{t=1}^{t'}b_t.  \label{eq: corral decompose}
    \end{align}

    We bound individual terms below: 
    \begin{align*}
        \E[\term_1]=\E\left[\sum_{t=1}^{t'} \inner{q_t-\barq_{t}, z_t}\right] \leq O\left(\sum_{t=1}^{t'}\frac{1}{t^2}\right) = O(1). 
    \end{align*}
    \begin{align*}
        \term_2&\leq O\left( \min\left\{\sqrt{t'},\ \ \sqrt{\sum_{t=1}^{t'} q_{t,2}\log T}\right\}\right). \\
        \term_3&=\frac{\log t'}{\beta} \leq O\left(c_2 \log T\right).
    \end{align*}
    
    \begin{align*}
        \E\left[\term_4\right]&=\E\left[\sum_{t=1}^{t'}\eta_t \min_{\theta_t\in[-1,1]} q_{t,i}^{\frac{3}{2}}(z_{t,i}-\theta_t)^2\right] \\
        &\leq \E\left[\sum_{t=1}^{t'} \eta_t \sum_{i=1}^2 q_{t,i}^{\frac{3}{2}}\left(z_{t,i}-\ell_{t,A_t}\right)^2 \right] \\
        &= \E\left[\sum_{t=1}^{t'} \eta_t\sum_{i=1}^2\frac{1}{\sqrt{q_{t,i}}}\left( \ind[i_t=i]\ell_{t,A_t}-q_{t,i}\ell_{t,A_t}\right)^2 \right] \\
        &\leq \E\left[\sum_{t=1}^{t'}\eta_t \sum_{i=1}^2 \left(\sqrt{q_{t,i}}(1-q_{t,i})^2 + (1-q_{t,i})q_{t,i}^\frac{3}{2}\right) \right] \\
        &\leq O\left(\E\left[\sum_{t=1}^{t'} \eta_t \sqrt{q_{t,2}}  \right]\right) \leq O\left(\E[\term_2]\right). 
    \end{align*}
    \allowdisplaybreaks
    \begin{align}
            \term_5=\sum_{t=1}^{t'}q_{t,2}b_t^{\ts} 
            &\leq \sum_{t=1}^{t'} q_{t,2}\left(\frac{\frac{c_1}{q_{t,2}}}{\sqrt{c_1\sum_{\tau=1}^t \frac{1}{q_{\tau,2}}}}\right) \nonumber \\
            &=\sqrt{c_1}\sum_{t=1}^{t'} \frac{\frac{1}{\sqrt{q_{t,2}}}}{\sqrt{\sum_{\tau=1}^{t} \frac{1}{q_{\tau,2}}}}\times\sqrt{q_{t,2}} \label{eq: continue from} \\
            &\leq  \sqrt{c_1}\sqrt{\sum_{t=1}^{t'} \frac{\frac{1}{q_{t,2}}}{\sum_{\tau=1}^t \frac{1}{q_{\tau,2}}}}\sqrt{ \sum_{t=1}^{t'}q_{t,2}}    \tag{Cauchy-Schwarz}\\ 
            &\leq  \sqrt{c_1}\sqrt{1+\log\left(\sum_{t=1}^{t'}\frac{1}{q_{t,2}}\right)}\sqrt{ \sum_{t=1}^{t'}q_{t,2}} \nonumber \\
            &\leq O\left(\sqrt{c_1\sum_{t=1}^{t'} q_{t,2} \log T } \right).  \nonumber 
    \end{align}
    Continuing from \eqref{eq: continue from}, we also have $\term_5\leq \sum_{t=1}^{t'}q_{t,2}b_t^{\ts}\leq \sqrt{c_1}\sum_{t=1}^{t'}\frac{1}{\sqrt{t}}\leq 2\sqrt{c_1t'}$ because $q_{\tau,2}\leq 1$. 
    \begin{align*}
            \term_6=\sum_{t=1}^{t'}q_{t,2}b_t^{\lo}
            &=c_2\sum_{t=1}^{t'} \left(1-\frac{\min_{\tau\leq t}q_{\tau,2}}{\min_{\tau\leq t-1}q_{\tau,2}}\right) \leq  c_2 \sum_{t=1}^{t'}\log\left(\frac{\min_{\tau\leq t-1}q_{\tau,2}}{\min_{\tau\leq t}q_{\tau,2}}\right) \leq O\left(c_2\log T\right). 
    \end{align*}
    Using all bounds above in \eqref{eq: corral decompose}, we can bound $\E\left[\sum_{t=1}^{t'} \langle q_t - u, z_t\rangle\right]$ by 
    \begin{align*}
        \underbrace{O\left(\min\left\{\sqrt{c_1 \E[t']},\ \ \sqrt{c_1\left[\sum_{t=1}^{t'}q_{t,2}\right]\log T}\right\}+c_2\log T\right)}_{\textbf{pos-term}}  - u_2\underbrace{\E\left[ \sqrt{c_1\sum_{t=1}^{t'} \frac{1}{q_{t,2}}} + \frac{c_2}{\min_{t\leq t'} q_{t,2}}\right]}_{\textbf{neg-term}}.
    \end{align*}
    For comparator $\hatx$, we choose $u=\mathbf{e}_1$ and bound $\E\left[\sum_{t=1}^{t'} (\ell_{t,A_t} - \ell_{t,\hatx})\right]$ by the \textbf{pos-term} above. 
    For comparator $x\neq \hatx$, we first choose $u=\mathbf{e}_2$ and upper bound $\E\left[\sum_{t=1}^{t'} (\ell_{t,A_t} - \ell_{t,\tildeA_t})\right]$ by $\textbf{pos-term} - \textbf{neg-term}$.  On the other hand, by the $\frac{1}{2}$-iw-stable assumption, $\E\left[\sum_{t=1}^{t'}(\ell_{t,\tildeA_t} - \ell_{t,x})\right]\leq \textbf{neg-term}$. Combining them, we get that for all $x\neq \hatx$, we also have $\E\left[\sum_{t=1}^{t'} (\ell_{t,A_t} - \ell_{t,x})\right]\leq \textbf{pos-term}$. 
    Comparing the coefficients in \textbf{pos-term} with those in \pref{def: candidate aware}, we see that \pref{alg: corral} satisfies $\frac{1}{2}$-LSB with constants $(c_0',c_1',c_2')$ where $c_0'=c_1'=O(c_1)$ and $c_2'=O(c_2)$. 
\end{proof}

\subsection{$\frac{2}{3}$-LSB to $\frac{2}{3}$-iw-stable (\pref{alg: 2/3 corral} / \pref{thm: 2/3 corral thm})} \label{app: 2/3 corral}

\begin{algorithm}[H] 
\caption{LSB via Corral (for $\alpha=\frac{2}{3}$)}\label{alg: 2/3 corral}
    \textbf{Input}:  candidate action $\hatx$, $\frac{2}{3}$-iw-stable algorithm $\calB$ over $\calX\backslash\{\hatx\}$ with constants $c_1,c_2$. \\
    \textbf{Define}: $\psi_t(q)= -\frac{3}{\eta_t}\sum_{i=1}^2 q_i^\frac{2}{3} + \frac{1}{\beta}\sum_{i=1}^2 \ln\frac{1}{q_i}$.\ \ \\
    \For{$t=1,2,\ldots$}{
        Let $\calB$ generate an action $\tildeA_t$. \\
        Let 
        \begin{align*}
            &\barq_{t}= \argmin_{q\in\Delta_2}\left\{\left\langle q, \sum_{\tau=1}^{t-1} z_{\tau} - \begin{bmatrix} 0 \\ B_{t-1}
            \end{bmatrix}\right\rangle + \psi_t(q)\right\}, \quad
            q_t = (1-\gamma_t)\barq_t, \\
            &\text{where}\ \ \ \eta_{t} = \frac{1}{t^{\frac{2}{3}} + 8c_1^{\frac{1}{3}}},\,\beta = \frac{1}{8c_2}\text{, and }\gamma_t = \max\left\{\sqrt{\eta_t} q_{t,2}^\frac{2}{3},\eta_tq_{t,2}^\frac{1}{3}\right\} . 
        \end{align*}
        Sample $i_t\sim \barq_t$. \\
        \lIf{$i_t=1$}{
            set $\bar A_t=\hatx$
        }
        \lElse{
            set $\bar A_t = \tildeA_t$
        }
        Sample $j_t\sim \gamma_t$. \\
        \lIf{$j_t=1$}{
            draw a revealing action of $\bar A_t$ and observe $\ell_{t,\bar A_t}$
        }
        \lElse{
            draw $A_t=\bar A_t$  
        }
        Define $z_{t,i} = \frac{\ell_{t,\bar A_t}\ind\{i_t=i\}\ind\{j_t=1\}}{\gamma_t}$ and
        \begin{align*}
            B_t= 
                c_1^\frac{1}{3}\left(\sum_{\tau=1}^t \frac{1}{\sqrt{q_{\tau,2}}}\right)^\frac{2}{3} + \frac{c_2}{\min_{\tau\leq t} q_{\tau,2}}.  
        \end{align*}
    }
\end{algorithm}

\begin{proof}[Proof of \pref{thm: 2/3 corral thm}]
    The per-step bonus $b_t=B_t-B_{t-1}$ is the sum of two terms: 
    \begin{align*}
        b_t^{\ts} &=c_1^\frac{1}{3}\left(\left(\sum_{\tau=1}^t \frac{1}{\sqrt{q_{\tau,2}}}\right)^\frac{2}{3}-\left(\sum_{\tau=1}^{t-1} \frac{1}{\sqrt{q_{\tau,2}}}\right)^\frac{2}{3}\right) \leq  c_1^\frac{1}{3}\frac{\frac{1}{\sqrt{q_{t,2}}}}{\left(\sum_{\tau=1}^t \frac{1}{\sqrt{q_{\tau,2}}}\right)^\frac{1}{3}} \leq \left(\frac{c_1}{q_{t,2}}\right)^\frac{1}{3}, \\
        b_t^{\lo} &= \frac{c_2}{\min_{\tau\leq t} q_{\tau,2}} - \frac{c_2}{\min_{\tau\leq t-1} q_{\tau,2}} = \frac{c_2}{q_{t,2}}\left(1-\frac{\min_{\tau\leq t} q_{\tau,2}}{\min_{\tau\leq t-1} q_{\tau,2}}\right). 
    \end{align*}
    Since $\frac{\barq_{t,2}}{q_{t,2}}\leq 2$, using the inequalities above, we have 
    \begin{align}
       \eta_t \barq_{t,2}^\frac{1}{3} b_t^{\ts}&\leq \eta_t \left(\frac{\barq_{t,2}c_1}{q_{t,2}}\right)^\frac{1}{3}\leq \eta_t (2c_1)^\frac{1}{3}\leq \frac{1}{4}. \label{eq: satisfy 1} \\ 
        \beta \barq_{t,2} b_t^{\lo} &\leq \beta \frac{\barq_{t,2}}{q_{t,2}} c_2\leq 2\beta c_2 \leq  \frac{1}{4}.  \label{eq: satisfy 2}
    \end{align}
    By \pref{lem: hybrid FTRL} and that $\frac{\barq_{t,2}}{q_{t,2}}\leq 2$, 
    we have for any $u$,  
    \begin{align}
        &\sum_{t=1}^{t'} \inner{q_t-u, z_t} \leq \underbrace{\sum_{t=1}^{t'} \inner{q_t-\barq_{t}, z_t}}_{\term_1} \nonumber \\
        &\ \  + O\bigg(c_1^{\frac{1}{3}}+\underbrace{\sum_{t=1}^{t'}\frac{q_{t,2}^{\frac{2}{3}}}{t^{\frac{1}{3}}}}_{\term_2} + \underbrace{\frac{\log t'}{\beta}}_{\term_3} +  \underbrace{\sum_{t=1}^{t'}\eta_t \min_{|\theta_t|\leq 1 } q_{t,i}^{\frac{4}{3}}(z_{t,i}-\theta_t)^2}_{\term_4} +  \underbrace{\sum_{t=1}^{t'} q_{t,2}b_t^{\ts}}_{\term_5} + \underbrace{\sum_{t=1}^{t'} q_{t,2}b_t^{\lo}}_{\term_6}\bigg)  - u_2\sum_{t=1}^{t'}b_t  \label{eq: corral decompose 2/3}
    \end{align}
    We bound individual terms below: 
    \begin{align*}
        \E[\term_1]=\E\left[\sum_{t=1}^{t'} \inner{q_t-\barq_{t}, z_t}\right] \leq O\left(\sum_{t=1}^{t'}\gamma_t\right) &= O\left(\sum_{t=1}^{t'}\frac{q_{t,2}^\frac{2}{3}}{t^{\frac{1}{3}}}+\frac{q_{t,2}^\frac{1}{3}}{t^{\frac{2}{3}}}\right)\\
        &=O\left(\min\left\{{t'}^\frac{2}{3},\left(\sum_{t=1}^{t'}q_t\right)^\frac{2}{3}(\log T)^\frac{1}{3}+\log T\right\}\right). 
    \end{align*}
    \begin{align*}
        \term_2&\leq 
        O\left(\term_1\right)\\
        \term_3&=\frac{\log t'}{\beta} \leq O\left(c_2 \log T\right).
    \end{align*}
    
    \begin{align*}
        \E\left[\term_4\right]&=\E\left[\sum_{t=1}^{t'}\eta_t  q_{t,2}^{\frac{4}{3}}(z_{t,2}-z_{t,1})^2\right] \\
        &\leq \E\left[\sum_{t=1}^{t'} \eta_t  \frac{q_{t,i}^{\frac{4}{3}}}{\gamma_t} \right] 
        \leq \E\left[\sum_{t=1}^{t'} \frac{\gamma_t^2}{\gamma_t} \right] = \E\left[O(\term_1)\right].
    \end{align*}
    \allowdisplaybreaks
    \begin{align}
           \term_5&=\sum_{t=1}^{t'}q_{t,2}b_t^{\ts} 
            \leq \sum_{t=1}^{t'} q_{t,2}\left(c_1^\frac{1}{3}\frac{\frac{1}{\sqrt{q_{t,2}}}}{\left(\sum_{\tau=1}^t \frac{1}{\sqrt{q_{\tau,2}}}\right)^\frac{1}{3}}\right) \nonumber\\
            &=c_1^\frac{1}{3}\sum_{t=1}^{t'} \frac{q_{t,2}^{-\frac{1}{6}}}{\left(\sum_{\tau=1}^t \frac{1}{\sqrt{q_{\tau,2}}}\right)^\frac{1}{3}}\times q_{t,2}^\frac{2}{3}   \label{eq: continue from 2/3}\\
            &\leq  c_1^\frac{1}{3}\left(\sum_{t=1}^{t'} \frac{\frac{1}{\sqrt{q_{t,2}}}}{\sum_{\tau=1}^t \frac{1}{\sqrt{q_{\tau,2}}}}\right)^\frac{1}{3}\left( \sum_{t=1}^{t'}q_{t,2}\right)^\frac{2}{3}  \tag{Cauchy-Schwarz}\\ 
            &\leq  c_1^\frac{1}{3}\left(1+\log\left(\sum_{t=1}^{t'}\frac{1}{q_{t,2}}\right)\right)^\frac{1}{3}\left( \sum_{t=1}^{t'}q_{t,2}\right)^\frac{2}{3}  \nonumber\\
            &\leq O\left(c_1^\frac{1}{3}\left(\sum_{t=1}^{t'} q_{t,2}\right)^\frac{2}{3}( \log T)^\frac{1}{3} \right)\nonumber.  
    \end{align}
    Continuing from \eqref{eq: continue from 2/3}, we also have $\term_5\leq \sum_{t=1}^{t'}q_{t,2}b_t^{\ts}\leq c_1^\frac{1}{3}\sum_{t=1}^{t'}\frac{1}{t^\frac{1}{3}}\leq O(c_1^\frac{1}{3}{t'}^\frac{2}{3})$ because $q_{\tau,2}\leq 1$. 
    \begin{align*}
            \term_6=\sum_{t=1}^{t'}q_{t,2}b_t^{\lo}
            &=c_2\sum_{t=1}^{t'} \left(1-\frac{\min_{\tau\leq t}q_{\tau,2}}{\min_{\tau\leq t-1}q_{\tau,2}}\right) \leq  c_2 \sum_{t=1}^{t'}\log\left(\frac{\min_{\tau\leq t-1}q_{\tau,2}}{\min_{\tau\leq t}q_{\tau,2}}\right) \leq O\left(c_2\log T\right). 
    \end{align*}
    Using all bounds above in \eqref{eq: corral decompose 2/3}, we can bound $\E\left[\sum_{t=1}^{t'} \langle q_t - u, z_t\rangle\right]$ by 
    \begin{align*}
        \underbrace{O\left(\min\left\{c_1^\frac{1}{3} \E[t']^\frac{2}{3},\ \ (c_1\log T)^\frac{1}{3}\left[\sum_{t=1}^{t'}q_{t,2}\right]^{\frac{2}{3}}\right\}+c_2\log T\right)}_{\textbf{pos-term}}  - u_2\underbrace{\E\left[ c_1^\frac{1}{3}\left(\sum_{t=1}^{t'} \frac{1}{q_{t,2}}\right)^\frac{2}{3} + \frac{c_2}{\min_{t\leq t'} q_{t,2}}\right]}_{\textbf{neg-term}}.
    \end{align*}
    For comparator $\hatx$, we choose $u=\mathbf{e}_1$ and bound $\E\left[\sum_{t=1}^{t'} (\ell_{t,A_t} - \ell_{t,\hatx})\right]$ by the \textbf{pos-term} above. 
    For comparator $x\neq \hatx$, we first choose $u=\mathbf{e}_2$ and upper bound $\E\left[\sum_{t=1}^{t'} (\ell_{t,A_t} - \ell_{t,\tildeA_t})\right]$ by $\textbf{pos-term} - \textbf{neg-term}$.  On the other hand, by the $\frac{1}{2}$-iw-stable assumption, $\E\left[\sum_{t=1}^{t'}(\ell_{t,\tildeA_t} - \ell_{t,x})\right]\leq \textbf{neg-term}$. Combining them, we get that for all $x\neq \hatx$, we also have $\E\left[\sum_{t=1}^{t'} (\ell_{t,A_t} - \ell_{t,x})\right]\leq \textbf{pos-term}$.
    
 Finally, notice that $\E_t[\ind\{A_t\neq \hatx\}]\geq \barq_{t,2}(1-\gamma_t)= q_{t,2},$. This implies that \pref{alg: 2/3 corral} is $\frac{2}{3}$-LSB with coefficient $(c_0',c_1',c_2')$ with $c_0'=c_1'=O(c_1)$, and $c_2'=O(c_1^{\frac{1}{3}}+c_2)$. 
    
\end{proof}

\begin{algorithm}[t]
\caption{dd-LSB via Corral (for $\alpha=\frac{1}{2}$)}\label{alg: corral dd}
    \textbf{Input}:  candidate action $\hatx$, $\frac{1}{2}$-iw-stable algorithm $\calB$ over $\calX\backslash\{\hatx\}$ with constant $c$. \\
    \textbf{Define}: $\psi(q)= \sum_{i=1}^2 \ln\frac{1}{q_{i}}$. $B_0=0$. \\
    \textbf{Define}: For first-order bound, $\xi_{t,x}=\ell_{t,x}$ and $m_{t,x}=0$; for second-order bound, $\xi_{t,x}=(\ell_{t,x}-m_{t,x})^2$ where $m_{t,x}$ is the loss predictor. \\ 
    \For{$t=1,2,\ldots$}{
        Let $\calB$ generate an action $\tildeA_t$ (which is the action to be chosen if $\calB$ is selected in this round). \\ 
        Receive prediction $m_{t,x}$ for all $x\in\calX$, and set $y_{t,1}=m_{t,\hatx}$ and $y_{t,2}=m_{t,\tildeA_t}$.
        \\
        Let
        \begin{align*}
            &\barq_{t}= \argmin_{q\in\Delta_2}\left\{\left\langle q, \sum_{\tau=1}^{t-1} z_{\tau} + y_t - \begin{bmatrix} 0 \\ B_{t-1}
            \end{bmatrix}  \right\rangle + \frac{1}{\eta_{t}}\psi(q)\right\}, \quad
            q_{t} = \left(1-\frac{1}{2t^2}\right)\barq_{t} + \frac{1}{4t^2}\one, \\
            &\text{where}\ \ \ \eta_{t} = \frac{1}{4}(\log T)^{\frac{1}{2}}\left(\sum_{\tau=1}^{t-1}(\ind[i_\tau=i]-q_{\tau,i})^2\xi_{\tau,A_\tau} + (c_1 + c_2^2)\log T\right)^{-\frac{1}{2}}. 
        \end{align*}
        Sample $i_t\sim q_t$. \\
        \lIf{$i_t=1$}{
            draw $A_t=\hatx$ and observe $\ell_{t,A_t}$ 
        }
        \lElse{
            draw $A_t=\tildeA_t$ and observe $\ell_{t,A_t}$  
        }
        Define $z_{t,i} = \frac{(\ell_{t,A_t}-y_{t,i})\ind\{i_t=i\}}{q_{t,i}} + y_{t,i}$ and
        \begin{align*}
            B_t= 
                \sqrt{c_1\sum_{\tau=1}^{t} \frac{\xi_{t,A_t}\ind[i_\tau=2]}{q_{\tau,2}^2}} + \frac{c_2}{\min_{\tau\leq t} q_{\tau,2}}. 
        \end{align*}
    }
\end{algorithm}

\subsection{$\frac{1}{2}$-dd-LSB to $\frac{1}{2}$-dd-iw-stable (\pref{alg: corral dd} / \pref{thm: basic corral thm data-dep})}\label{app: dd LSB to dd iw sta}
\begin{proof}[Proof of \pref{thm: basic corral thm data-dep}]
    Define $b_t=B_t-B_{t-1}$. Notice that we have 
    \begin{align}
        \eta_t \barq_{t,2}b_t &\leq 2\eta_t q_{t,2}b_t \nonumber \\ 
        &\leq 2\eta_t q_{t,2} \left(\frac{\frac{c_1\xi_{t,A_t}\ind[i_{t}=2]}{q_{t,2}^2} }{\sqrt{c_1\sum_{\tau=1}^{t-1} \frac{\xi_{\tau,A_\tau}\ind[i_\tau=2]}{q_{\tau,2}^2}}} \right)+ 2c_2\left(\frac{1}{\min_{\tau\leq t}q_{\tau,2}} - \frac{1}{\min_{\tau\leq t-1}q_{\tau,2}}\right)  \nonumber \\
        &\leq 2\eta_t \sqrt{c_1} + 2\eta_t c_2 \left(1-\frac{\min_{\tau\leq t }q_{\tau,2}}{\min_{\tau\leq t-1 }q_{\tau,2}}\right) \nonumber \\
        &\leq \frac{1}{4}.  \label{eq: eta b bound} 
    \end{align}
    By \pref{lem: logbarrier analysis} and that $\frac{\barq_{t,2}}{q_{t,2}}\leq 2$, we have for any $u$, 
    \begin{align*}
        \sum_{t=1}^{t'} \langle q_t - u, z_t\rangle 
        &\leq O\Bigg(\underbrace{\frac{\log T}{\eta_{t'}}}_{\term_1} + \underbrace{\sum_{t=1}^{t'} \eta_t \min_{|\theta|\leq 1} \sum_{i=1}^2 q_{t,i}^{2}\left(z_{t,i}-y_{t,i}-\theta_t\right)^2}_{\term_2} \\
        &\qquad \qquad + \underbrace{\sum_{t=1}^{t'}\inner{q_t- \barq_t, z_t}}_{\term_3} + \underbrace{\sum_{t=1}^{t'}q_{t,2}b_t}_{\term_4}\Bigg) - \sum_{t=1}^{t'}u_2b_t
    \end{align*}
    \begin{align*}
        \term_1 
        &\leq O\left(\sqrt{\frac{\sum_{t=1}^{t'-1}\sum_{i=1}^2 (\ind[i_t=i]-q_{t,i})^2\xi_{t,A_t} + (c_1+c_2^2)\log T}{\log T}}\log T\right) \\
        &\leq O\left(\sqrt{\sum_{t=1}^{t'}\sum_{i=1}^2 (\ind[i_t=i]-q_{t,i})^2\xi_{t,A_t}\log T} + (\sqrt{c_1}+c_2)\log T\right). 
    \end{align*}
    \begin{align*}
        \E\left[\term_2\right] 
        &\leq \E\left[ \sum_{t=1}^{t'}\eta_t\min_{\theta} \sum_{i=1}^2 q_{t,i}^2\left(\frac{(\ell_{t,A_t}-m_{t,A_t})\ind[i_t=i]}{q_{t,i}}-\theta\right)^2  \right] \\
        &\leq  \E\left[\sum_{t=1}^{t'}\eta_t\sum_{i=1}^2 (\ind[i_t=i]-q_{t,i})^2(\ell_{t,A_t}-m_{t,A_t})^2\right]  \tag{choosing $\theta=\ell_{t,A_t}-m_{t,A_t}$} \\
        &\leq \E\left[\sqrt{\sum_{t=1}^{t'}\sum_{i=1}^2 (\ind[i_t=i]-q_{t,i})^2\xi_{t,A_t}\log T}\right].   
    \end{align*}
    \begin{align*}
        \E\left[\term_3\right] = \E\left[ \sum_{t=1}^{t'}\inner{q_t-\barq_t, \E_t[z_t]} \right] = \E\left[ \sum_{t=1}^{t'}\left\langle-\frac{1}{2t^2}\barq_t + \frac{1}{4t^2}\one, \E_t[z_t]\right\rangle \right]\leq O(1). 
    \end{align*}
    \allowdisplaybreaks
    \begin{align*}
            \term_4 &\leq 
            \sum_{t=1}^{t'}q_{t,2}b_t \\
            &=\sum_{t=1}^{t'} q_{t,2}\left(\frac{\frac{c_1\xi_{t,A_t}\ind[i_t=2]}{q_{t,2}^2}}{\sqrt{c_1\sum_{\tau=1}^t \frac{\xi_{\tau,A_\tau}\ind[i_\tau=2]}{q_{\tau,2}^2}}}+c_2\left(\frac{1}{\min_{\tau\leq t}q_{\tau,2}}  - \frac{1}{\min_{\tau\leq t-1 }q_{\tau,2}}\right)\right) \\
            &\leq\sqrt{c_1}\sum_{t=1}^{t'} \sqrt{\frac{\frac{\xi_{t,A_t}\ind[i_t=2]}{q_{t,2}^2}}{\sum_{\tau=1}^{t} \frac{\xi_{\tau,A_\tau}\one[i_\tau=2]}{q_{\tau,2}^2}}}\times\sqrt{\xi_{t,A_t}\ind[i_t=2]} + c_2\sum_{t=1}^{t'} \left(1-\frac{\min_{\tau\leq t}q_{\tau,2}}{\min_{\tau\leq t-1}q_{\tau,2}}\right)  \\
            &\leq  \sqrt{c_1} \sqrt{\sum_{t=1}^{t'}\frac{\frac{\xi_{t,A_t}\ind[i_t=2]}{q_{t,2}^2}}{\sum_{\tau=1}^{t} \frac{\xi_{\tau,A_\tau}\one[i_\tau=2]}{q_{\tau,2}^2}}}\times\sqrt{\sum_{t=1}^{t'}\xi_{t,A_t}\ind[i_t=2]} + c_2 \sum_{t=1}^{t'}\log\left(\frac{\min_{\tau\leq t-1}q_{\tau,2}}{\min_{\tau\leq t}q_{\tau,2}}\right)    \tag{Cauchy-Schwarz}\\ 
            &\leq  \sqrt{c_1}\sqrt{1+\log\left(\sum_{t=1}^{t'}\frac{\xi_{t,A_t}\ind[i_t=2]}{q_{t,2}^2}\right)}\sqrt{ \sum_{t=1}^{t'}\xi_{t,A_t}\ind[i_t=2]} + c_2 \log \frac{1}{\min_{\tau\leq t'}q_{\tau,2}} \\
            &\leq O\left(\sqrt{c_1\sum_{t=1}^{t'} \xi_{t,A_t}\ind[i_t=2] \log T } + c_2\log T\right). 
    \end{align*}
    \begin{align*}
        \term_5 = -u_2 B_{t'}  = -u_2\left( \sqrt{c_1\sum_{t=1}^{t'} \frac{\xi_{t,A_t}\ind[i_t=2]}{q_{t,2}^2}} + \frac{c_2}{\min_{t\leq t'} q_{t,2}}\right). 
    \end{align*}
    Combining all inequalities above, we can bound $\E\left[\sum_{t=1}^{t'} \langle q_t - u, z_t\rangle\right]$ by 
    \begin{align}
        &\underbrace{O\left(\E\left[\sqrt{c_1\sum_{t=1}^{t'}\sum_{i=1}^2 (\ind[i_t=i]-q_{t,i})^2\xi_{t,A_t}\log T} + \sqrt{c_1\sum_{t=1}^{t'} \xi_{t,A_t}\ind[i_t=2] \log T } \right]+ (\sqrt{c_1}+c_2)\log T\right)}_{\textbf{pos-term}} \label{eq: data-dependent pos} \\
        &\qquad - u_2\underbrace{\left( \sqrt{c_1\sum_{t=1}^{t'} \frac{\xi_{t,A_t}\ind[i_t=2]}{q_{t,2}^2}} + \frac{c_2}{\min_{t\leq t'} q_{t,2}}\right)}_{\textbf{neg-term}}.  \nonumber
    \end{align}
    Similar to the arguments in the proofs of \pref{thm: basic corral thm} and \pref{thm: 2/3 corral thm}, we end up bounding $\E\left[\sum_{t=1}^{t'} (\ell_{t,A_t} - \ell_{t,x})\right]$ by the $\textbf{pos-term}$ above for all $x\in\calX$. 
    Finally, we process $\textbf{pos-term}$. Observe that
    \begin{align*}
        &\E_t\left[\sum_{i=1}^2 (\ind[i_t=i]-q_{t,i})^2\xi_{t,A_t} \right] \\
        &= \E_t\left[q_{t,1}(1-q_{t,1})^2\xi_{t,\hatx} + (1-q_{t,1})q_{t,1}^2\xi_{t,\tildeA_t} + q_{t,2}(1-q_{t,2})^2\xi_{t,\tildeA_t} + (1-q_{t,2})q_{t,2}^2 \xi_{t,\hatx}\right] \\
        &= \E_t\left[2q_{t,1}q_{t,2}^2\xi_{t,\hatx} + 2q_{t,1}^2q_{t,2}\xi_{t,\tildeA_t}\right] \\
        &= 2q_{t,1}q_{t,2}^2\xi_{t,\hatx} + 2q_{t,1}^2 \left( \sum_{x\neq \hatx} p_{t,x}\xi_{t,x} \right) \\
        &\leq 2p_{t,\hatx}(1-p_{t,\hatx})\xi_{t,\hatx} + 2\left( \sum_{x\neq \hatx} p_{t,x}\xi_{t,x} \right) \\
        &= 2\left(\sum_x p_{t,x}\xi_{t,x}-p_{t,\hatx}^2\xi_{t,\hatx}\right), 
    \end{align*}
    and that 
    \begin{align*}
        \E_t\left[\xi_{t,\tildeA_t}\ind[i_t=2]\right] = \sum_{x\neq \hatx} p_{t,\hatx} \xi_{t,x} \leq \sum_{x}p_{t,x}\xi_{t,x} - p_{t,\hatx}^2\xi_{t,\hatx}
    \end{align*}
    Thus, for any $u\in\calX$, 
    \begin{align*}
        \E\left[\sum_{t=1}^{t'}(\ell_{t,A_t}-\ell_{t,u})\right]\leq \E\left[\textbf{pos-term}\right] \leq   O\left(\sqrt{c_1\E\left[\sum_{t=1}^{t'} \left(\sum_{x}p_{t,x}\xi_{t,x} - p_{t,\hatx}^2\xi_{t,\hatx}\right)\right]\log T} + (\sqrt{c_1}+c_2)\log T\right), 
    \end{align*}
    which implies that the algorithm satisfies $\frac{1}{2}$-dd-LSB with constants $(O(c_1), O(\sqrt{c_1}+c_2))$. 
\end{proof}

\subsection{$\frac{1}{2}$-LSB to $\frac{1}{2}$-strongly-iw-stable (\pref{alg: corral strong} / \pref{thm: basic corral thm strong})}\label{app: strongly stable}

\begin{algorithm}[H] 
\caption{LSB via Corral (for $\alpha=\frac{1}{2}$, using a $\frac{1}{2}$-strongly-iw-stable algorithm)}\label{alg: corral strong}
    \textbf{Input}:  candidate action $\hatx$, $\frac{1}{2}$-iw-stable algorithm $\calB$ over $\calX$ with constant $c$. \\
    \textbf{Define}: $\psi_t(q)= \frac{-2}{\eta_t}\sum_{i=1}^2 \sqrt{q_i} + \frac{1}{\beta}\sum_{i=1}^2 \ln\frac{1}{q_{i}}$. \\
    $B_0=0$. \\
    \For{$t=1,2,\ldots$}{
        Let $\calB$ generate an action $\tildeA_t$ (which is the action to be chosen if $\calB$ is selected in this round). \\ 
        Let
        \begin{align*}
            &\barq_{t}= \argmin_{q\in\Delta_2}\left\{\left\langle q, \sum_{\tau=1}^{t-1} z_{\tau} - \begin{bmatrix} 0 \\ B_{t-1}
            \end{bmatrix}\right\rangle + \psi_t(q)\right\}, \quad
            q_{t} = \left(1-\frac{1}{2t^2}\right)\barq_{t} + \frac{1}{4t^2}\one, \\
            &\text{where}\ \ \ \eta_{t} = \frac{1}{\sqrt{\sum_{\tau=1}^{t}\ind\{\tildeA_\tau\neq \hatx\}}+8\sqrt{c_1}},\quad \beta = \frac{1}{8c_2}.  
        \end{align*}
        Sample $i_t\sim q_t$. \\
        \lIf{$i_t=1$}{
            draw $A_t=\hatx$ and observe $\ell_{t,A_t}$ 
        }
        \lElse{
            draw $A_t=\tildeA_t$ and observe $\ell_{t,A_t}$  
        }
        Define $z_{t,i} = \frac{\ell_{t,A_t}\ind\{i_t=i\}}{q_{t,i}}\ind\{\tildeA_t\neq \hatx\}$ 
        and
        \begin{align*}
            B_t= 
                \sqrt{c_1\sum_{\tau=1}^t \frac{\ind\left\{\tildeA_t\neq \hatx\right\}}{q_{\tau,2}}} + c_2\max_{\tau\leq t} \frac{1}{ q_{\tau,2}}. 
        \end{align*}
        
    }
\end{algorithm}

\begin{proof}[Proof of \pref{thm: basic corral thm strong}] 
    The proof of this theorem mostly follows that of \pref{thm: basic corral thm}. The difference is that the regret of the base algorithm $\calB$ is now bounded by \begin{align}
        \sqrt{c_1\sum_{t=1}^{t'} \frac{1}{q_{t,2} + q_{t,1}\ind\{\tildeA_t=\hatx\}}} + \frac{c_2\log T}{\min_{t\leq t'} q_{t,2}} \leq \sqrt{c_1\sum_{t=1}^{t'} \frac{\ind\{\tildeA_t\neq \hatx\}}{q_{t,2}}} + \sqrt{c_1 t'} + \frac{c_2\log T}{\min_{t\leq t'} q_{t,2}}  \label{eq: new corral bound}
    \end{align}
    because when $\tildeA_t=\hatx$, the base algorithm $\calB$ is able to receive feedback no matter which side the Corral algorithm chooses.  The goal of adding bonus is now only to cancel the first term and the third term on the right-hand side of \eqref{eq: new corral bound}. 
    
    Similar to \pref{eq: corral decompose}, we have 
    \begin{align}
        &\sum_{t=1}^{t'} \inner{q_t-u, z_t} \leq \underbrace{\sum_{t=1}^{t'} \inner{q_t-\barq_{t}, z_t}}_{\term_1} \nonumber \\
        &\ \  + O\bigg(\sqrt{c_1}+\underbrace{\sum_{t=1}^{t'} \frac{\sqrt{q_{t,2}}\ind\{\tildeA_t\neq \hatx\}}{\sqrt{\sum_{\tau=1}^t \ind\{\tildeA_\tau\neq \hatx\} }} }_{\term_2} + \underbrace{\frac{\log T}{\beta}}_{\term_3} +  \underbrace{\sum_{t=1}^{t'}\eta_t \min_{|\theta_t|\leq 1} q_{t,i}^{\frac{3}{2}}(z_{t,i}-\theta_t)^2}_{\term_4} +  \underbrace{\sum_{t=1}^{t'} q_{t,2}b_t^{\ts}}_{\term_5} + \underbrace{\sum_{t=1}^{t'} q_{t,2}b_t^{\lo}}_{\term_6}\bigg) - u_2\sum_{t=1}^{t'}b_t  \label{eq: corral decompose strong}
    \end{align}
    where 
    \begin{align*}
        b_t^{\ts} &= \sqrt{c_1\sum_{\tau=1}^t \frac{\ind\{\tildeA_\tau\neq \hatx\}}{q_{\tau,2}}} - \sqrt{c_1\sum_{\tau=1}^{t-1} \frac{\ind\{\tildeA_\tau\neq \hatx\}}{q_{\tau,2}}}\leq  \frac{\frac{c_1\ind\{\tildeA_t\neq \hatx\}}{q_{t,2}}}{\sqrt{c_1\sum_{\tau=1}^t \frac{\ind\{\tildeA_\tau\neq \hatx\}}{q_{\tau,2}}}} \leq \sqrt{\frac{c_1}{q_{t,2}}}, \\
        b_t^{\lo} &= \frac{c_2}{\min_{\tau\leq t} q_{\tau,2}} - \frac{c_2}{\min_{\tau\leq t-1} q_{\tau,2}} = \frac{c_2}{q_{t,2}}\left(1-\frac{\min_{\tau\leq t} q_{\tau,2}}{\min_{\tau\leq t-1} q_{\tau,2}}\right) 
    \end{align*}
    satisfying $\eta_t \sqrt{\barq_{t,2}} b_t^{\ts}\leq \frac{1}{2}$ and $\beta \barq_{t,2}b_t^{\lo}\leq \frac{1}{2}$ as in \eqref{eq: satisfy 1} and \eqref{eq: satisfy 2}. 
    
    Below, we bound $\term_1, \ldots, \term_6$. 
    
    \begin{align*}
        \E[\term_1]=\E\left[\sum_{t=1}^{t'} \inner{q_t-\barq_{t}, z_t}\right] \leq O\left(\sum_{t=1}^{t'}\frac{1}{t^2}\right) = O(1). 
    \end{align*}
    \begin{align*}
        \term_2&\leq O\left(\min\left\{\sqrt{\sum_{t=1}^{t'}\ind\{\tildeA_t\neq \hatx\}},\ \ \sqrt{\sum_{t=1}^{t'}q_{t,2}\ind\{\tildeA_t\neq \hatx\}\log T} \right\}\right)\\
        \term_3&=\frac{\log T}{\beta} \leq  O\left(c_2 \log T\right).
    \end{align*}
    
    \begin{align*}
        \E\left[\term_4\right]&=\E\left[\sum_{t=1}^{t'}\eta_t \min_{\theta_t\in[-1,1]} q_{t,i}^{\frac{3}{2}}(z_{t,i}-\theta_t)^2\right] \\
        &\leq \E\left[\sum_{t=1}^{t'} \eta_t \sum_{i=1}^2 q_{t,i}^{\frac{3}{2}}\left(z_{t,i}-\ell_{t,A_t}\right)^2 \ind\{\tildeA_t\neq \hatx\} \right]  \tag{when $\tildeA_t=\hatx$, $z_{t,1}=z_{t,2}=0$}\\
        &= \E\left[\sum_{t=1}^{t'} \eta_t\sum_{i=1}^2\frac{1}{\sqrt{q_{t,i}}}\left( \ind[i_t=i]\ell_{t,A_t}-q_{t,i}\ell_{t,A_t}\right)^2\ind\{\tildeA_t\neq \hatx\} \right] \\
        &\leq \E\left[\sum_{t=1}^{t'}\eta_t \sum_{i=1}^2 \left(\sqrt{q_{t,i}}(1-q_{t,i})^2 + (1-q_{t,i})q_{t,i}^\frac{3}{2}\right)\ind\{\tildeA_t\neq \hatx\} \right] \\
        &\leq \E\left[\sum_{t=1}^{t'} \eta_t \sqrt{q_{t,2}\ind\{\tildeA_t\neq \hatx\}}  \right] \leq O\left(\E[\term_2]\right).
    \end{align*}
    \allowdisplaybreaks
    
    \begin{align}
            \term_5 + \term_6 &= 
            \sum_{t=1}^{t'}q_{t,2}(b_t^{\ts} + b_t^{\lo}) \nonumber \\
            &=\sum_{t=1}^{t'} q_{t,2}\left(\frac{\frac{c_1\ind\{\tildeA_t\neq \hatx\}}{q_{t,2}}}{\sqrt{c_1\sum_{\tau=1}^t \frac{\ind\{\tildeA_\tau\neq \hatx\}}{q_{\tau,2}}}}+c_2\left(\frac{1}{\min_{\tau\leq t}q_{\tau,2}}  - \frac{1}{\min_{\tau\leq t-1 }q_{\tau,2}}\right)\right) \nonumber \\
            &=\sqrt{c_1}\sum_{t=1}^{t'} \frac{\frac{\ind\{\tildeA_t\neq \hatx\}}{\sqrt{q_{t,2}}}}{\sqrt{\sum_{\tau=1}^{t} \frac{\ind\{\tildeA_\tau\neq \hatx\}}{q_{\tau,2}}}}\times\sqrt{q_{t,2}\ind\{\tildeA_t\neq \hatx\}} + c_2\sum_{t=1}^{t'} \left(1-\frac{\min_{\tau\leq t}q_{\tau,2}}{\min_{\tau\leq t-1}q_{\tau,2}}\right) \label{eq: continue from 2} \\
            &\leq  \sqrt{c_1}\sqrt{\sum_{t=1}^{t'} \frac{\frac{\ind\{\tildeA_t\neq \hatx\}}{q_{t,2}}}{\sum_{\tau=1}^t \frac{\ind\{\tildeA_\tau\neq \hatx\}}{q_{\tau,2}}}}\sqrt{ \sum_{t=1}^{t'}q_{t,2}\ind\{\tildeA_t\neq \hatx\}} + c_2 \sum_{t=1}^{t'}\log\left(\frac{\min_{\tau\leq t-1}q_{\tau,2}}{\min_{\tau\leq t}q_{\tau,2}}\right)    \tag{Cauchy-Schwarz}\\ 
            &\leq  \sqrt{c_1}\sqrt{1+\log\left(\sum_{t=1}^{t'}\frac{\ind\{\tildeA_t\neq \hatx\}}{q_{t,2}}\right)}\sqrt{ \sum_{t=1}^{t'}q_{t,2}\ind\{\tildeA_t\neq \hatx\}} + c_2 \log \frac{1}{\min_{\tau\leq t'}q_{\tau,2}}   \nonumber \\
            &\leq O\left(\sqrt{c_1\sum_{t=1}^{t'} q_{t,2}\ind\{\tildeA_t\neq \hatx\} \log T } + c_2\log T\right). 
    \end{align}

    Continuing from \eqref{eq: continue from 2}, we also have $\term_5\leq \sqrt{c_1}\sum_{t=1}^{t'}\frac{\ind\{\tildeA_t\neq \hatx\}}{\sum_{\tau=1}^{t} \ind\{\tildeA_\tau\neq \hatx\} }\leq 2\sqrt{c_1t'}$.

    Using all the inequalities above in \eqref{eq: corral decompose strong}, we can bound $\E\left[\sum_{t=1}^{t'} \langle q_t - u, z_t\rangle\right]$ by 
    \begin{align*}
        &\underbrace{O\left(\min\left\{\sqrt{c_1 \E[t']},\ \ \sqrt{c_1\left[\sum_{t=1}^{t'}q_{t,2}\ind\{\tildeA_t\neq \hatx\}\right]\log T}\right\}+(\sqrt{c_1}+c_2)\log T\right)}_{\textbf{pos-term}} \\
        &\qquad \qquad \qquad - u_2\underbrace{\E\left[ \sqrt{c_1\sum_{t=1}^{t'} \frac{\ind\{\tildeA_t\neq \hatx\}}{q_{t,2}}} + \frac{c_2}{\min_{t\leq t'} q_{t,2}}\right]}_{\textbf{neg-term}}.
    \end{align*}
    
    For comparator $\hatx$, we set $u=\mathbf{e}_1$. Then we have $\E[\sum_{t=1}^{t'}(\ell_{t,A_t}-\ell_{t,\hatx})]\leq \textbf{pos-term}$. 
    Observe that \sloppy$\E\left[\sum_{t=1}^{t'}q_{t,2}\ind\{\tildeA_t\neq \hatx\}\right] = \E\left[\sum_{t=1}^{t'}(1-p_{t,\hatx})\right]$, so this gives the desired property of $\frac{1}{2}$-LSB for action $\hatx$.  For $x\neq \hatx$, we set $u=\mathbf{e}_2$ and get $\E\left[\sum_{t=1}^{t'}(\ell_{t,A_t}-\ell_{t,x})\right] = \E\left[\sum_{t=1}^{t'}(\ell_{t,A_t}-\ell_{t,\tildeA_t})\right] + \E\left[\sum_{t=1}^{t'}(\ell_{t,\tildeA_t}-\ell_{t,x})\right] \leq (\textbf{pos-term} - \textbf{neg-term}) + (\textbf{neg-term} + \sqrt{c_1\E[t']})$ where the additional $\sqrt{c_1 t'}$ term comes from the second term on the right-hand side of \eqref{eq: new corral bound}, which is not cancelled by the negative term. Note, however, that this still satisfies the requirement of $\frac{1}{2}$-LSB for $x\neq\hatx$ because for these actions, we only require the worst-case bound to hold. Overall, we have justified that \pref{alg: corral strong} satisfies $\frac{1}{2}$-LSB with constants $(c_0', c_1',c_2')$ where $c_0'=c_1'=O(c_1)$ and $c_2=(\sqrt{c_1}+c_2)$.  
\end{proof}

\subsection{$\frac{1}{2}$-dd-LSB to $\frac{1}{2}$-dd-strongly-iw-stable (\pref{alg: corral MDP} / \pref{lem: main lemma for MDP})}

\begin{algorithm}[t]
\caption{dd-LSB via Corral (for $\alpha=\frac{1}{2}$, using a  $\frac{1}{2}$-dd-strongly-iw-stable algorithm)}\label{alg: corral MDP}
    \textbf{Input}:  candidate action $\hatx$, $\frac{1}{2}$-iw-stable algorithm $\calB$ over $\calX$ with constant $(c_1,c_2)$. \\
    \textbf{Define}: $\psi(q)= \sum_{i=1}^2 \ln\frac{1}{q_{i}}$. $B_0=0$. \\
    \textbf{Define}: For first-order bound, $\xi_{t,x}=\ell_{t,x}$ and $m_{t,x}=0$; for second-order bound, $\xi_{t,x}=(\ell_{t,x}-m_{t,x})^2$ where $m_{t,x}$ is the loss predictor. \\
    \For{$t=1,2,\ldots$}{
        Receive prediction $m_{t,x}$ for all $x\in\calX$, and set $y_{t,1}=m_{t,\hatx}$ and $y_{t,2}=m_{t,\tildeA_t}$. \\
        Let $\calB$ generate an action $\tildeA_t$ (which is the action to be chosen if $\calB$ is selected in this round). \\ 
        Let
        \begin{align*}
            &\barq_{t}= \argmin_{q\in\Delta_2}\left\{\left\langle q, \sum_{\tau=1}^{t-1} z_{\tau} + y_t - \begin{bmatrix} 0 \\ B_{t-1}
            \end{bmatrix}  \right\rangle + \frac{1}{\eta_{t}}\psi(q)\right\}, \quad
            q_{t} = \left(1-\frac{1}{2t^2}\right)\barq_{t} + \frac{1}{4t^2}\one, \\
            &\text{where}\ \ \ \eta_{t} = \frac{1}{4}(\log T)^{\frac{1}{2}}\left(\sum_{\tau=1}^{t-1}(\ind[i_\tau=i]-q_{\tau,i})^2\xi_{\tau,A_\tau}\ind[\tildeA_\tau\neq \hatx] + (c_1 + c_2^2)\log T\right)^{-\frac{1}{2}}. 
        \end{align*}
        Sample $i_t\sim q_t$. \\
        \lIf{$i_t=1$}{
            draw $A_t=\hatx$ and observe $\ell_{t,A_t}$ 
        }
        \lElse{
            draw $A_t=\tildeA_t$ and observe $\ell_{t,A_t}$  
        }
        Define $z_{t,i} = \left(\frac{(\ell_{t,A_t}-y_{t,i})\ind\{i_t=i\}}{q_{t,i}} + y_{t,i}\right)\ind\{\tildeA_t\neq \hatx\}$ and
        \begin{align}
            B_t= 
                \sqrt{c_1\sum_{\tau=1}^{t} \frac{\xi_{t,A_t} \ind\{i_\tau=2\}\ind\{\tildeA_t\neq \hatx\}}{q_{\tau,2}^2}} + \frac{c_2}{\min_{\tau\leq t} q_{\tau,2}}.   \label{eq: MDP B choice}
        \end{align}
    }
\end{algorithm}

\begin{lemma}\label{lem: main lemma for MDP}
    Let $\calB$ be an algorithm with the following stability guarantee:  
    given an adaptive sequence of weights $q_1,q_2,\dots\in(0,1]^{\calX}$ such that the feedback in round $t$ is observed with probability $q_t(x)$ if $x$ is chosen, and an adaptive sequence $\{m_{t,x}\}_{x\in\calX}$ available at the beginning of round $t$, it obtains the following pseudo regret guarantee for any stopping time~$t'$: 
\begin{align*}
    \E\left[\sum_{t=1}^{t'} \left(\ell_{t,A_t} - \ell_{t,u}\right)\right] \leq 
        \E\left[\sqrt{c_1\sum_{t=1}^{t'}\frac{\upd_t\cdot\xi_{t,A_t}}{q_t(A_t)^2}} + \frac{c_2}{\min_{t\leq t'}\min_x q_t(x)}\right],   
\end{align*}
where $\upd_t=1$ if feedback is observed in round $t$ and $\upd_t=0$ otherwise.  $\xi_{t,x} = (\ell_{t,x}-m_{t,x})^2$ in the second-order bound case, and $\xi_{t,x} = \ell_{t,x}$ in the first-order bound case. Then \pref{alg: corral MDP} with $\calB$ as input satisfies $\frac{1}{2}$-dd-LSB. 
\end{lemma}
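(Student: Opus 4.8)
The plan is to mirror the proofs of \pref{thm: basic corral thm data-dep} and \pref{thm: basic corral thm strong}, fusing the data-dependent bonus of the former with the ``strong'' observation structure of the latter. First I would apply the log-barrier FTRL guarantee \pref{lem: logbarrier analysis} to the two-armed meta-learner of \pref{alg: corral MDP}, with loss sequence $z_t$, predictor $y_t$, and bonus $b_t=B_t-B_{t-1}$, to obtain for every $u\in\Delta_2$ a decomposition of $\sum_{t=1}^{t'}\langle q_t-u,z_t\rangle$ into: (i) a $\frac{\log T}{\eta_{t'}}$ penalty, (ii) a stability term $\sum_t\eta_t\min_{|\theta|\le 1}\sum_i q_{t,i}^2(z_{t,i}-y_{t,i}-\theta)^2$, (iii) a $q_t$-versus-$\barq_t$ correction that is $O(1)$ because of the $1/t^2$ mixing, (iv) a bonus-overhead term $\sum_t q_{t,2}b_t$, and (v) the negative term $-u_2B_{t'}$. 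Verifying $\eta_t\barq_{t,2}b_t\le\frac14$ (so that \pref{lem: logbarrier analysis} applies) goes exactly as in \eqref{eq: eta b bound}, with the extra indicator $\ind\{\tildeA_t\neq\hatx\}$ inside $B_t$ only strengthening the inequalities.

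The heart of the computation is bounding (i)--(iv). By the tuning of $\eta_t$, term (i) telescopes into $O\!\big(\sqrt{\sum_t\sum_i(\ind[i_t=i]-q_{t,i})^2\xi_{t,A_t}\ind[\tildeA_t\neq\hatx]\,\log T}+(\sqrt{c_1}+c_2)\log T\big)$. For (ii), choosing $\theta=\ell_{t,A_t}-m_{t,A_t}$ and noting $z_{t,i}=0$ whenever $\tildeA_t=\hatx$ bounds it by $\sum_t\eta_t\sum_i(\ind[i_t=i]-q_{t,i})^2(\ell_{t,A_t}-m_{t,A_t})^2\ind[\tildeA_t\neq\hatx]$, which the learning rate absorbs into the same expression. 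Term (iv) is handled by the Cauchy--Schwarz argument of \pref{thm: basic corral thm data-dep}: writing $b_t=b_t^{\ts}+b_t^{\lo}$, using $b_t^{\ts}\le(c_1\xi_{t,A_t}\ind[i_t=2]\ind[\tildeA_t\neq\hatx]/q_{t,2}^2)/(c_1\sum_{\tau\le t}\cdots)^{1/2}$ together with the standard $\sum a_\tau/\sqrt{\sum_{\tau'\le\tau}a_{\tau'}}$ estimate yields $\sum_t q_{t,2}b_t\le O\!\big(\sqrt{c_1\sum_t\xi_{t,A_t}\ind[i_t=2]\ind[\tildeA_t\neq\hatx]\log T}+c_2\log T\big)$. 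Collecting these, $\E[\sum_t\langle q_t-u,z_t\rangle]\le\textbf{pos-term}-u_2\,\textbf{neg-term}$, where $\textbf{neg-term}=\E[\sqrt{c_1\sum_t\xi_{t,A_t}\ind[i_t=2]\ind[\tildeA_t\neq\hatx]/q_{t,2}^2}+c_2/\min_{t\le t'}q_{t,2}]$ and $\textbf{pos-term}$ is the first displayed quantity.

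Next I would translate meta-regret into the two comparator cases. Since $\langle q_t-\mathbf{e}_1,z_t\rangle$ and $\langle q_t-\mathbf{e}_2,z_t\rangle$ equal, in conditional expectation and after removing the (iii)-piece, the corral's excess loss over $\hatx$ and over $\tildeA_t$, the choice $u=\mathbf{e}_1$ gives $\E[\sum_t(\ell_{t,A_t}-\ell_{t,\hatx})]\le\textbf{pos-term}$. For $x\neq\hatx$ I set $u=\mathbf{e}_2$ and add the dd-strongly-iw-stable guarantee applied to $\calB$ with effective weights $q_t(x)=q_{t,2}+q_{t,1}\ind\{x=\hatx\}$ and update indicator $\upd_t=\ind\{i_t=2\}\vee\ind\{\tildeA_t=\hatx\}$: splitting the assumed bound by whether $\tildeA_t=\hatx$ (where $\upd_t\xi_{t,\tildeA_t}/q_t(\tildeA_t)^2=\xi_{t,\hatx}\le 1$, contributing an uncancelled $\sqrt{c_1 t'}$) or $\tildeA_t\neq\hatx$ (where the remaining sum is precisely $\textbf{neg-term}$) gives $\E[\sum_t(\ell_{t,\tildeA_t}-\ell_{t,x})]\le\textbf{neg-term}+\sqrt{c_1 t'}$, so $\E[\sum_t(\ell_{t,A_t}-\ell_{t,x})]\le\textbf{pos-term}+\sqrt{c_1 t'}$, which still meets the worst-case requirement of \pref{def: candidate aware data-dep} for non-candidate comparators. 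Finally, conditioning on the round-$t$ history, I would compute $\E_t[\sum_i(\ind[i_t=i]-q_{t,i})^2\xi_{t,A_t}\ind[\tildeA_t\neq\hatx]]=2q_{t,1}q_{t,2}^2(1-p_{t,\hatx}^{\calB})\xi_{t,\hatx}+2q_{t,1}^2q_{t,2}\sum_{x\neq\hatx}p_{t,x}^{\calB}\xi_{t,x}$ and, using $q_{t,2}(1-p_{t,\hatx}^{\calB})=1-p_{t,\hatx}$, $q_{t,1}\le p_{t,\hatx}$, $q_{t,2}p_{t,x}^{\calB}=p_{t,x}$, bound it by $2(\sum_x p_{t,x}\xi_{t,x}-p_{t,\hatx}^2\xi_{t,\hatx})$; similarly $\E_t[\xi_{t,A_t}\ind[i_t=2]\ind[\tildeA_t\neq\hatx]]\le\sum_x p_{t,x}\xi_{t,x}-p_{t,\hatx}^2\xi_{t,\hatx}$. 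Plugging these into $\textbf{pos-term}$ yields $\frac12$-dd-LSB with $c_1'=O(c_1)$, $c_2'=O(\sqrt{c_1}+c_2)$.

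The main obstacle I anticipate is not any single estimate — each is a variant of a computation already done in \pref{thm: basic corral thm data-dep} or \pref{thm: basic corral thm strong} — but the bookkeeping needed to combine the two modifications coherently: one must define the effective weight/update pair $(q_t(\cdot),\upd_t)$ that $\calB$ faces inside \pref{alg: corral MDP}, check that the bonus $B_t$ in \eqref{eq: MDP B choice} exactly cancels the portion of $\calB$'s regret bound the negative term can absorb, and confirm that the residual $\sqrt{c_1 t'}$ left over from the ``$\tildeA_t=\hatx$'' rounds is harmless because \pref{def: candidate aware data-dep} demands only a worst-case guarantee for $u\neq\hatx$. Placing the indicators $\ind\{\tildeA_t\neq\hatx\}$ in precisely the right spots in $z_t$, $\eta_t$, and $B_t$ so that all of (i)--(iv) collapse to the same $\sqrt{\,\cdot\,\log T}$ quantity is the delicate part.
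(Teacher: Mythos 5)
Your proposal follows essentially the same route as the paper's own (sketched) proof: decompose the meta-learner's regret via \pref{lem: logbarrier analysis}, verify the bonus condition as in \eqref{eq: eta b bound}, bound the penalty/stability/bonus-overhead terms exactly as in \pref{thm: basic corral thm data-dep}, split $\calB$'s dd-strongly-iw-stable guarantee according to whether $\tilde A_t=\hatx$ as in \pref{thm: basic corral thm strong}, and finish with the same conditional-expectation identities ($1-p_{t,\hatx}=\Pr[\tilde A_t\neq\hatx]q_{t,2}$, $p_{t,x}=\Pr[\tilde A_t=x]q_{t,2}$, $q_{t,1}\le p_{t,\hatx}$). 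All of that is right and matches the paper.

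One step is imprecise and would actually break the downstream first-order bound if taken literally: for $u\neq\hatx$ you bound the uncancelled residual from the rounds with $\tilde A_t=\hatx$ by $\sqrt{c_1 t'}$ (using $\xi\le 1$) and then argue this is ``harmless because \pref{def: candidate aware data-dep} demands only a worst-case guarantee for non-candidate comparators.'' Unlike \pref{def: candidate aware}, the dd-LSB definition has \emph{no} worst-case $\E[t']^{1/2}$ alternative for $u\neq\hatx$; it requires $\sqrt{c_1\log T\,\E\bigl[\sum_t\sum_x p_{t,x}\xi_{t,x}\bigr]}+c_2\log T$ even for non-candidates, and that data dependence is exactly what \pref{thm: data-dependent reduction } consumes to produce the $\sqrt{L_\star}$ adversarial bound in \pref{thm: MDP main theorem bound}. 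The fix is immediate: keep the residual as $\sqrt{c_1\sum_t\ind\{\tilde A_t=\hatx\}\xi_{t,A_t}}\le\sqrt{c_1\sum_t\xi_{t,A_t}}$ and apply Jensen, giving $\sqrt{c_1\E[\sum_t\sum_x p_{t,x}\xi_{t,x}]}$, which is what the paper does. With that correction your argument is complete and coincides with the paper's.
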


\begin{proof}
    The proof of this lemma is a combination of the elements in \pref{thm: basic corral thm data-dep} (data-dependent iw-stable) and \pref{thm: basic corral thm strong} (strongly-iw-stable), so we omit the details here and only provide a sketch. 
    
    In \pref{alg: corral MDP}, since $\calB$ is  $\frac{1}{2}$-dd-strongly-iw-stable, its regret is upper bounded by the order of 
    \begin{align}
       &\sqrt{c_1\sum_{t=1}^{t'}\left(\frac{\ind\{\tildeA_t=\hatx\}\cdot\xi_{t,A_t}}{1} + \frac{\ind\{\tildeA_t\neq \hatx\}\ind\{i_t=2\}\xi_{t,A_t}}{q_{t,2}^2}\right)} + \frac{c_2}{\min_{t\leq t'}\min_x q_t(x)}    \nonumber \\
       &\leq \sqrt{c_1\sum_{t=1}^{t'}\frac{\ind\{\tildeA_t\neq \hatx\}\ind\{i_t=2\}\cdot\xi_{t,A_t}}{q_{t,2}^2}} + \sqrt{c_1\sum_{t=1}^{t'}\xi_{t,A_t} } + \frac{c_2}{\min_{t\leq t'}\min_x q_t(x)}   \label{eq: strong decompose}
    \end{align}
    because if $\calB$ chooses $\hatx$, then the probability of observing the feedback is $1$, and is $q_{t,2}$ otherwise. This motivates the choice of the bonus in \eqref{eq: MDP B choice}. Then we can follow the proof of \pref{thm: basic corral thm data-dep} step-by-step, and show that the regret compared to $\hatx$ is upper bounded by the order of 
    \allowdisplaybreaks
    \begin{align*}
        &\E\left[\sqrt{c_1\sum_{t=1}^{t'}\sum_{i=1}^2 (\ind[i_t=i]-q_{t,i})^2\xi_{t,A_t} \ind\{\tildeA_t\neq \hatx\}\log T}\right] \\
        &\qquad + \E\left[\sqrt{c_1\sum_{t=1}^{t'} \xi_{t,A_t}\ind[i_t=2]\ind\{\tildeA_t\neq \hatx\} \log T } \right]+ (\sqrt{c_1}+c_2)\log T \\
        &\leq \sqrt{c_1\E\left[\sum_{t=1}^{t'}\left(2q_{t,1}q_{t,2}^2\xi_{t,\hatx} + 2q_{t,1}^2q_{t,2}\xi_{t,\tildeA_t}\right)\ind\{\tildeA_t\neq \hatx\}\right]}  \\
        &\qquad + \sqrt{c_1\E\left[\sum_{t=1}^{t'} \xi_{t,\tildeA_t}q_{t,2}\ind\{\tildeA_t\neq \hatx\}\right] \log T } + (\sqrt{c_1}+c_2)\log T   \tag{taking expectation over $i_t$ and following the calculation in the proof of \pref{thm: basic corral thm data-dep}} \\
        &= \sqrt{c_1\E\left[\sum_{t=1}^{t'}\left(2q_{t,1}q_{t,2}^2\xi_{t,\hatx}\Pr[\tildeA_t\neq \hatx] + 2q_{t,1}^2q_{t,2}\sum_{x\neq \hatx} \Pr[\tildeA_t=x] \xi_{t,x}\right)\right]}  \\
        &\qquad + \sqrt{c_1\E\left[\sum_{t=1}^{t'} q_{t,2}\sum_{x\neq \hatx} \Pr[\tildeA_t= x]\xi_{t,x}\right] \log T } + (\sqrt{c_1}+c_2)\log T \tag{taking expectation over $\tildeA_t$} \\
        & \leq  \sqrt{c_1\E\left[\sum_{t=1}^{t'}\left(2q_{t,1}(1-p_{t,\hatx})\xi_{t,\hatx} + 2q_{t,1}^2\sum_{x\neq \hatx} p_{t,x} \xi_{t,x}\right)\right]}  \\
        &\qquad + \sqrt{c_1\E\left[\sum_{t=1}^{t'} \sum_{x\neq \hatx} p_{t,x}\xi_{t,x}\right] \log T } + (\sqrt{c_1}+c_2)\log T \tag{using the property that for $x\neq \hatx$, $p_{t,x}=\Pr[\tildeA_t=x]q_{t,2}$ and thus $1-p_{t,\hatx}=\Pr[\tildeA_t\neq \hatx]q_{t,2}$} \\
        &\leq \sqrt{c_1\E\left[\sum_{t=1}^{t'}\left(2p_{t,\hatx}(1-p_{t,\hatx})\xi_{t,\hatx} + 2\sum_{x\neq \hatx} p_{t,x} \xi_{t,x}\right)\right]}  \tag{$q_{t,1}\leq p_{t,\hatx}\leq 1$} \\
        &\qquad + \sqrt{c_1\E\left[\sum_{t=1}^{t'} \sum_{x\neq \hatx} p_{t,x}\xi_{t,x}\right] \log T } + (\sqrt{c_1}+c_2)\log T \\
        &\leq O\left(\sqrt{c_1\E\left[\sum_{t=1}^{t'}\left(\sum_{x} p_{t,x} \xi_{t,x} - p_{t,\hatx}^2\xi_{t,\hatx} \right)\right]\log T} \right) + (\sqrt{c_1}+c_2)\log T
    \end{align*}
    which satisfies the requirement of $\frac{1}{2}$-dd-LSB for the regret against $\hatx$. For the regret against $x\neq \hatx$, similar to the proof of \pref{thm: basic corral thm strong}, an extra positive regret comes from the second term in \eqref{eq: strong decompose}. Therefore, the regret against $x\neq \hatx$, can be upper bounded by 
    \begin{align*}
        O\left(\sqrt{c_1\E\left[\sum_{t=1}^{t'}\sum_{x} p_{t,x} \xi_{t,x}\right]\log T} \right) + (\sqrt{c_1}+c_2)\log T
    \end{align*}
    which also satisfies the requirement of $\frac{1}{2}$-dd-LSB for $x\neq \hatx$. 
\end{proof}

\section{Analysis for IW-Stable Algorithms}
\label{app: iw-stable}

\begin{algorithm}[H]
     \caption{EXP2}\label{alg: EXP2}
     \textbf{Input}: $\calX$.  \\
     \For{$t=1, 2, \ldots$}{
          Receive update probability $q_t$. \\
          Let 
          \begin{align*}
              \eta_{t} = \min\left\{\sqrt{\frac{\ln |\calX|}{d\sum_{\tau=1}^t \frac{1}{q_\tau}}}, \ \frac{1}{2d}\min_{\tau\leq t}q_\tau\right\}, \qquad 
              P_t(a) \propto \exp\left(  -\eta_t \sum_{\tau=1}^{t-1} \hatell_\tau(a) \right). 
          \end{align*}
          Sample an action $a_t\sim p_t=(1-\frac{d\eta_t}{q_t} )P_t + \frac{d\eta_t}{q_t}\nu$, where $\nu$ is John's exploration.  \\
          With probability $q_t$, receive $\ell_t(a_t)=\inner{a_t,\theta_t} + \text{noise}$\\
          (in this case, set $\upd_t=1$; otherwise, set $\upd_t=0$). \\
          Construct loss estimator: 
          \begin{align*}
              \hatell_t(a) = \frac{\upd_t}{q_t}a^\top \left(\E_{b\sim p_t}[bb^\top ]\right)^{-1} a_t \ell_t(a_t) 
          \end{align*}
     }
\end{algorithm}

\subsection{EXP2 (\pref{alg: EXP2} / \pref{lem: EXP2})}
\begin{proof}[Proof of \pref{lem: EXP2}]
    Consider the EXP2 algorithm (\pref{alg: EXP2}) which corresponds to FTRL with negentropy potential. By standard analysis of FTRL (\pref{lem: FTRL}), for any $\tau$ and any $a^\star$, 
    \begin{align*}
    \sum_{t=1}^\tau \E_t\left[\sum_{a} P_t(a)\hatell_t(a)\right] - \sum_{t=1}^\tau \E_t\left[\hatell_t (a^\star) \right] 
    &\leq \frac{\ln |\calX|}{\eta_\tau} + \sum_{t=1}^\tau\E_t\left[\max_{P}\left(\langle P_t-P, \ell_t\rangle - \frac{1}{\eta_t}D_{\psi}(P, P_t)\right)\right]\,.
\end{align*}
To apply \pref{lem: exp3 stab}, we need to show that $\eta_t\hatell_t(a)\geq -1$. We have by Cauchy Schwarz
$|a^\top\left(\E_{b\sim p_t}[bb^\top]\right)^{-1}a_t|\leq \sqrt{a^\top\left(\E_{b\sim p_t}[bb^\top]\right)^{-1}a}\sqrt{a_t^\top\left(\E_{b\sim p_t}[bb^\top]\right)^{-1}a_t}$. For each term, we have
\begin{align*}
    a^\top\left(\E_{b\sim p_t}[bb^\top]\right)^{-1}a\leq \frac{q_t}{d\eta_t}a^\top\left(\E_{b\sim \nu}[bb^\top]\right)^{-1}a=\frac{q_t}{\eta_t}\,,
\end{align*}
due to the properties of John's exploration. Hence $|\eta_t\hatell_t(a)| \leq |\ell_t(a)|\leq 1$.
We can apply \pref{lem: exp3 stab} for the stability term, resulting in
\begin{align*}
    &\E_t\left[\max_{P}\left(\langle P_t-P, \ell_t\rangle - \frac{1}{\eta_t}D_{\psi}(P, P_t)\right)\right]\\
    &\leq \eta_t\E_t\left[\sum_a P_t(a)\frac{a^\top\left(\E_{b\sim p_t}[bb^\top]\right)^{-1}a_ta_t^\top\left(\E_{b\sim p_t}[bb^\top]\right)^{-1}a}{q_t^2}\right]\\
    &=\eta_t\sum_a P_t(a)\frac{a^\top\left(\E_{b\sim p_t}[bb^\top]\right)^{-1}a}{q_t}\\
    &=2\eta_t\sum_a P_t(a)\frac{a^\top\left(\E_{b\sim P_t}[bb^\top]\right)^{-1}a}{q_t}
    =\frac{2\eta_t d}{q_t}\,.
\end{align*}
By $|\ell_t(a)|\leq 1$ we have furthermore
\begin{align*}
\sum_{t=1}^\tau\E_t\left[\sum_{a}(P_t(a)-p_t(a))\hatell_t(a)\right]=\sum_{t=1}^\tau\E_t\left[\sum_{a}(P_t(a)-p_t(a))\ell_t(a)\right]\leq\sum_{t=1}^\tau \frac{d\eta_t}{q_t}\,.
\end{align*}
Combining everything and taking the expectation on both sides leads to
\begin{align*}
    \E\left[\sum_{t=1}^\tau \sum_{a}p_t(a)\ell_t(a)-\ell_t(a^\star)\right] &\leq \E\left[\frac{\log |\calX|}{\eta_\tau}+\sum_{t=1}^\tau \frac{3d\eta_t}{q_t}\right]\\
    &\leq \E\left[7\sqrt{d\log|\calX|\sum_{t=1}^\tau\frac{1}{q_t}}+2\log |\calX|d \frac{1}{\min_{t\leq\tau}q_t}\right]\,.
\end{align*}
\end{proof}

\subsection{EXP4 (\pref{alg: EXP4} / \pref{lem: EXP4})}
In this section, we use the more standard notation $\Pi$ to denote the policy class. 
\begin{algorithm}[H]
     \caption{EXP4}\label{alg: EXP4}
     \textbf{Input}: $\Pi$ (policy class), $K$ (number of arms)  \\
     \For{$t=1, 2, \ldots$}{
          Receive context $x_t$. \\
          Receive update probability $q_t$. \\
          Let 
          \begin{align*}
              \eta_{t} = \sqrt{\frac{\ln |\Pi|}{K\sum_{\tau=1}^t \frac{1}{q_\tau}}}, \qquad 
              P_t(\pi) \propto \exp\left(  -\eta_t \sum_{\tau=1}^{t-1} \hatell_\tau\left(\pi(x_\tau)\right) \right). 
          \end{align*}
          Sample an arm $a_t\sim p_t$ where $p_t(a)=\sum_{\pi: \pi(x_t)=a} P_t(\pi)\pi(x_t)$. \\
          With probability $q_t$, receive $\ell_t(a_t)$ (in this case, set $\upd_t=1$; otherwise, set $\upd_t=0$). \\
          Construct loss estimator: 
          \begin{align*}
              \hatell_t(a) = \frac{\upd_t\ind[a_t=a]\ell_t(a)}{q_t p_t(a)}. 
          \end{align*}
     }
\end{algorithm}

\begin{proof}[Proof of \pref{lem: EXP4}]
Consider the EXP4 algorithm with adaptive stepsize (\pref{alg: EXP4}), which corresponds to FTRL with negentropy regularization.   By standard analysis of FTRL (\pref{lem: FTRL}) and \pref{lem: exp3 stab}, for any $\tau$ and any $\pi^\star$ 
\begin{align*}
    \sum_{t=1}^\tau \E_t\left[\sum_{\pi} P_t(\pi)\hatell_t(\pi(x_t))\right] - \sum_{t=1}^\tau \E_t\left[\hatell_t (\pi^\star(x_t)) \right] 
    &\leq \frac{\ln |\Pi|}{\eta_\tau} +
    \sum_{t=1}^\tau\frac{\eta_t}{2}\E_t\left[\sum_{\pi}P_t(\pi)\hatell_t^2(\pi(x_t))\right]\\
    &\leq \frac{\ln |\Pi|}{\eta_\tau} +
    \sum_{t=1}^\tau\frac{\eta_t}{2}\sum_{\pi}P_t(\pi)\frac{\ell_t(\pi(x_t))}{q_tp_t(\pi(x_t))}\\
    &\leq \frac{\ln |\Pi|}{\eta_\tau} +K \frac{\eta_t}{2q_t} \leq 2\sqrt{K\ln |\Pi|\sum_{t=1}^\tau \frac{1}{q_t}}.
\end{align*}
Taking expectation on both sides finishes the proof. 
\end{proof}

\subsection{$(1-1/\log(K))$-Tsallis-INF (\pref{alg: Tsallis-Inf} / \pref{lem: strong graph 1})}\label{app: log K tsallis}

\begin{algorithm}[H] 
\caption{$(1-1/\log(K))$-Tsallis-Inf (for strongly observable graphs)}\label{alg: Tsallis-Inf}
    \textbf{Input}:  $\calG\setminus\hatx$.\\
    \textbf{Define}: $\psi(x)= \sum_{i=1}^K \frac{x_i^{\beta}}{\beta(1-\beta)}$, where $\beta=1-1/\log(K)$.  \\ 
    \For{$t=1,2,\ldots$}{
        Receive update probability $q_t$.\\
        Let
        \begin{align*}
            &p_{t}= \argmin_{x\in\Delta([K])}\left\{\sum_{\tau=1}^{t-1}\inner{x,\hatell_\tau}+ \frac{1}{\eta_{t}}\psi(x)\right\} \\
            &\text{where}\ \ \ \eta_{t} = \sqrt{\frac{\log K}{\sum_{s=1}^{t}\frac{1}{q_s}(1+\min\{\baralpha,\alpha\log K\}) } }. 
        \end{align*}
        Sample $A_t\sim p_t$. \\
        With probability $q_t$ receive $\ell_{t,i}$ for all $A_t\in\calN(i)$ (in this case, set $\upd_t=1$; otherwise, set $\upd_t=0$).\\
        Define 
        \begin{align*}
            &\hatell_{t,i} = \upd_t\left(\frac{(\ell_{t,i}-\ind\{i\in\calI_t\})\ind\{A_t\in\calN(i)\}}{\sum_{j\in\calN(i)}p_{t,j}q_t}+\ind\{i\in\calI_t\}\right)\,,\\
            &\text{where }\calI_t = \left\{i\in[K]: i\not\in\calN(i)\land p_{t,i}>\frac{1}{2}\right\}\,\text{ and }\calN(i)=\{j\in \calG\setminus\hatx\,|\,(j,i)\in E\}.
        \end{align*}
        
    }
\end{algorithm}
We require the following graph theoretic Lemmas.
\begin{lemma}
\label{lem: refined graph sum}
    Let $\calG=(V,E)$ be a directed graph with independence number $\alpha$ and vertex weights $p_i>0$, then 
    \begin{align*}
        \exists i\in V:\, \frac{p_i}{p_i+\sum_{j:(j,i)\in E}p_j} \leq \frac{2p_i \alpha}{\sum_{i}p_i}\,.
    \end{align*}
\end{lemma}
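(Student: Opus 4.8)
The plan is to sidestep any greedy or covering construction and instead obtain the bound from a single weighted averaging step combined with the Motzkin--Straus theorem. Throughout, write $W := \sum_{i\in V} p_i$, and for $i\in V$ set $D(i) := \{i\}\cup\{j\in V:(j,i)\in E\}$ and $p(S):=\sum_{j\in S}p_j$. Since $p_i>0$, dividing the claimed inequality by $p_i$ and taking reciprocals shows it is equivalent to the existence of some $i$ with $p_i+\sum_{j:(j,i)\in E}p_j \ge \tfrac{W}{2\alpha}$, and because $p(D(i))\le p_i+\sum_{j:(j,i)\in E}p_j$ (with equality unless $i$ carries a self-loop) it suffices to find $i$ with $p(D(i))\ge \tfrac{W}{2\alpha}$. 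Finally, since $\max_i p(D(i)) \ge \tfrac1W\sum_i p_i\,p(D(i))$ (a maximum dominates the average against the distribution $p/W$), the whole lemma reduces to the \emph{quadratic} estimate $\sum_i p_i\, p(D(i)) \ge \tfrac{W^2}{2\alpha}$.

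First I would rewrite $\sum_i p_i\,p(D(i)) = p^\top A p$ with $A_{ij}=\ind\{j\in D(i)\}$, and symmetrize: $p^\top A p = p^\top \tfrac{A+A^\top}{2}p \ge \sum_i p_i^2 + \sum_{\{i,j\}\in E(\calH)}p_ip_j$, where $\calH$ is the simple undirected graph on $V$ with $\{i,j\}\in E(\calH)$ iff $(i,j)\in E$ or $(j,i)\in E$ (the inequality only discards the nonnegative self-loop contribution to the diagonal). Next, using the identity $W^2=\sum_i p_i^2 + 2\sum_{\{i,j\}\in E(\calH)}p_ip_j + 2\sum_{\{i,j\}\in E(\overline{\calH})}p_ip_j$ (every unordered pair lies in $E(\calH)$ or in its complement $E(\overline{\calH})$), this lower bound can be rearranged to $\tfrac{W^2+\sum_i p_i^2}{2} - \sum_{\{i,j\}\in E(\overline{\calH})}p_ip_j \ge \tfrac{W^2}{2} - \sum_{\{i,j\}\in E(\overline{\calH})}p_ip_j$.

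It then remains to upper bound $\sum_{\{i,j\}\in E(\overline{\calH})}p_ip_j$. The key observation is that a clique in $\overline{\calH}$ is precisely an independent set of $\calH$, which is precisely an independent set of $\calG$; hence the clique number of $\overline{\calH}$ equals $\alpha$. The Motzkin--Straus theorem, applied to $\overline{\calH}$ with the probability vector $p/W$, then gives $\sum_{\{i,j\}\in E(\overline{\calH})}p_ip_j \le \tfrac{W^2}{2}\bigl(1-\tfrac1\alpha\bigr)$, and substituting this into the previous display yields $p^\top A p \ge \tfrac{W^2}{2}-\tfrac{W^2}{2}(1-\tfrac1\alpha)=\tfrac{W^2}{2\alpha}$, which closes the chain of reductions. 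If one prefers not to cite Motzkin--Straus, the needed inequality has a short self-contained proof: take a maximizer $q$ of $q\mapsto\sum_{\{i,j\}\in E(\overline{\calH})}q_iq_j$ over the simplex having smallest support; if it had two non-adjacent support vertices, the objective would be linear along the segment transferring mass between them with their sum held fixed, so the support could be shrunk without decreasing the objective, a contradiction; hence the support is a clique of size $\le\alpha$, on which $\sum_{i<j}q_iq_j=\tfrac{1-\sum_i q_i^2}{2}\le\tfrac12(1-\tfrac1\alpha)$.

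The only substantive ingredient is the Motzkin--Straus bound (or the displacement argument reproving it) in the last step; everything else is symmetrization bookkeeping and the elementary identification of cliques of $\overline{\calH}$ with independent sets of $\calG$. A small point to handle carefully is the possible presence of self-loops in $\calG$, which only strengthen the lower bound and may simply be discarded.
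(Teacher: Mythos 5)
Your proof is correct, and it takes a genuinely different route from the paper's for the key step. Both arguments share the same opening reduction: by averaging against the distribution $p/W$, the existence claim follows once one shows the quadratic lower bound $\sum_i p_i\bigl(p_i+\sum_{j:(j,i)\in E}p_j\bigr)\ge \tfrac{W^2}{2\alpha}$ (the paper normalizes $p$ to the simplex first; incidentally, its write-up says ``$\min_i$'' where the averaging argument actually controls the maximum, which is what the lemma and its downstream use require --- your formulation via $\max_i p(D(i))\ge \tfrac1W\sum_i p_i\,p(D(i))$ states this cleanly). Where you diverge is in proving that quadratic bound: the paper minimizes $\sum_i(p_i^2+\sum_{j:(j,i)\in E}p_ip_j)$ over the simplex, extracts a KKT multiplier $\lambda$, and lower-bounds $\lambda\ge 1/\alpha$ by summing the stationarity conditions over a maximal independent set of the positive-support subgraph; you instead symmetrize, pass to the complement of the undirected shadow $\calH$, and invoke Motzkin--Straus with clique number $\omega(\overline{\calH})=\alpha$. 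The two are close cousins (Motzkin--Straus is itself usually proved by exactly the kind of support-shrinking/first-order argument you sketch as a fallback), but your version buys a cleaner modular structure --- the only nontrivial input is a classical, citable theorem --- at the cost of the symmetrization and complementation bookkeeping; the paper's KKT route is self-contained and works directly on the directed in-neighborhood sums without ever forming $\calH$ or $\overline{\calH}$. Your handling of the side issues (self-loops only help since $p(D(i))\le p_i+\sum_{j:(j,i)\in E}p_j$; double edges only help in the symmetrization) is also correct.
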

\begin{proof}
Without loss of generality, assume that $p\in\Delta(V)$, since the statement is scale invariant.
The statement is equivalent to
\begin{align*}
    \min_i p_i+\sum_{j:(j,i)\in E}p_j \geq \frac{1}{2\alpha}\,.
\end{align*}
 Let
\begin{align*}
    \min_{p\in\Delta(V)}\min_i \left(p_i+\sum_{j:(j,i)\in E}p_j \right)\leq \min_{p\in\Delta(V)}\sum_{i}\left(p_i^2 +\sum_{j:(j,i)\in E}p_ip_j\right)\,.
\end{align*}
Via K.K.T. conditions, there exists $\lambda \in \bbR$ such that for an optimal solution $p^\star$ it holds
\begin{align*}
    \forall i\in V:\, &\text{either \ \ }2p^\star_i +\sum_{j:(j,i)\in E}p^\star_j+\sum_{j:(i,j)\in E}p^\star_j = \lambda\\
    &\text{or\ \ }2p^\star_i +\sum_{j:(j,i)\in E}p^\star_j+\sum_{j:(i,j)\in E}p^\star_j \geq \lambda\text{ and }p^\star_i=0\,.
\end{align*}
Next we bound $\lambda$.
Take the sub-graph over $V_+=\{i:\,p^\star_i >0\}$ and take a maximally independence set $S$ over $V_+$ ($|S|\leq \alpha$, since $V_+\subset V$). We have
\begin{align*}
    1\leq\underbrace{\sum_{j\not\in V_+}p^\star_j}_{=0}+\sum_{i\in S}\left(2p^\star_i+\sum_{j:(j,i)\in E}p^\star_j+\sum_{j:(i,j)\in E}p^\star_j\right)=|S|\lambda\leq \alpha \lambda.
\end{align*}
Hence $\lambda \geq \frac{1}{\alpha}$.
Finally,  
\begin{align*}
    \sum_{i}\left((p^\star_i)^2 +\sum_{j:(j,i)\in E}p^\star_ip^\star_j\right) =\frac{1}{2}\sum_{i}\left(2(p^\star_i)^2 +\sum_{j:(j,i)\in E}p^\star_ip^\star_j+\sum_{j:(i,j)\in E}p^\star_ip^\star_j\right)\geq \sum_{i}\frac{p^\star_i\lambda}{2}\geq \frac{1}{2\alpha}\,.
\end{align*}
\end{proof}
\begin{lemma}(Lemma 10 \cite{alon2013bandits})
\label{lem: alon}
let $\calG = (V, E)$ be a directed graph. Then, for any distribution $p\in\Delta(V)$ we have:
\begin{align*}
    \sum_i \frac{p_i}{p_i+\sum_{j:(j,i)\in E}p_j} \leq \operatorname{mas}(\calG)\,,
\end{align*}
where $\operatorname{mas}(\calG)\leq \baralpha$ is the maximal acyclic sub-graph.
\end{lemma}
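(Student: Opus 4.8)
The plan is to prove the two assertions separately: the purely combinatorial bound $\operatorname{mas}(\calG)\le\baralpha$, and the main inequality $\sum_i \frac{p_i}{P_i}\le\operatorname{mas}(\calG)$, where I abbreviate $P_i:=p_i+\sum_{j:(j,i)\in E}p_j$.

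For $\operatorname{mas}(\calG)\le\baralpha$ I would simply observe that if $U\subseteq V$ induces an acyclic subgraph, then $U$ cannot contain a pair $i\neq j$ with both $(i,j)\in E$ and $(j,i)\in E$, since that is a $2$-cycle; hence $U$ is an independence set of the two-sided-edge subgraph $(V,E')$, so $|U|\le\baralpha$, and taking $U$ to be a largest induced acyclic subgraph gives $\operatorname{mas}(\calG)\le\baralpha$.

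For the main inequality I would use a greedy charging argument. First reduce to the case that $p$ has full support: deleting zero-weight vertices deletes only zero-weight in-neighbors, hence changes no $P_i$ of a surviving vertex and no surviving ratio, and vertex deletion can only decrease $\operatorname{mas}$. Next, order the vertices $v_1,\dots,v_n$ so that $P_{v_1}\le\dots\le P_{v_n}$ (ties broken by a fixed rule) and scan them in this order, maintaining a set $U$ (initially empty): when scanning $v_k$, add it to $U$ unless some $j\in U$ already satisfies $(v_k,j)\in E$, in which case $v_k$ is \emph{rejected} and charged to the earliest such $j$. I would then verify two facts. (i) $U$ induces an acyclic subgraph, so $|U|\le\operatorname{mas}(\calG)$: for $i,j\in U$ with $j$ added first, the addition rule forbids $(i,j)\in E$, so every edge inside $U$ points from an earlier-added to a later-added vertex, i.e.\ the addition order is a topological order. (ii) For each $j\in U$, the vertices charged to $j$ are distinct in-neighbors of $j$ (none of them equal to $j$), so their total $p$-mass is at most $\sum_{i:(i,j)\in E}p_i=P_j-p_j$; and $P_j\le P_i$ for each such $i$ because $j$ is scanned before anything charged to it. Hence each ``group'' $\{j\}\cup\{i: i\text{ charged to }j\}$ contributes
\begin{align*}
\frac{p_j}{P_j}+\sum_{i\text{ charged to }j}\frac{p_i}{P_i}\ \le\ \frac{p_j}{P_j}+\sum_{i\text{ charged to }j}\frac{p_i}{P_j}\ \le\ \frac{p_j}{P_j}+\frac{P_j-p_j}{P_j}\ =\ 1,
\end{align*}
and since these groups partition $V$ we conclude $\sum_i\frac{p_i}{P_i}\le|U|\le\operatorname{mas}(\calG)$.

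The main obstacle is making the two ingredients cooperate. Acyclicity of $U$ in step (i) comes purely from the greedy rejection rule and is insensitive to the scan order, whereas the ``$P_j$ in the denominator'' estimate for charged vertices in step (ii) requires the scan to proceed in nondecreasing $P$-order; processing in $P$-order while applying the rejection rule achieves both simultaneously, which is the point that needs to be stated carefully. The only other thing to be careful about is the degenerate handling of zero-weight vertices (and the monotonicity of $\operatorname{mas}$ under vertex deletion), which I would dispatch at the very start.
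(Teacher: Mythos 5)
Your proof is correct. Note that the paper does not prove this lemma itself --- it imports it verbatim as Lemma~10 of \cite{alon2013bandits} --- so there is no in-paper argument to compare against; the closest relative in the paper is \pref{lem: refined graph sum}, which the authors prove by an entirely different route (KKT conditions for a quadratic relaxation). Your single-pass greedy argument is essentially an unrolled version of the standard inductive proof in the cited source: there one repeatedly removes the vertex $i^\star$ minimizing the denominator together with its in-neighborhood, notes that the removed block contributes at most $1$ to the sum (same two inequalities you use), and that $\operatorname{mas}$ drops by at least one because $i^\star$ can be appended to any acyclic set of the残 remaining graph; your set $U$ is exactly the sequence of such $i^\star$'s and your charging groups are the removed blocks, so the two arguments are interchangeable. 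Two small points you should make explicit, neither of which affects correctness: (a) your acyclicity argument for $U$ only handles edges between distinct vertices, so you are implicitly taking $\operatorname{mas}$ to ignore self-loops; this is the convention required anyway for the lemma to be usable where the paper invokes it, namely on the subgraph $M_1$ of vertices that all have self-loops in the proof of \pref{lem: strong graph 1}; (b) for the discarded zero-weight vertices one needs the convention that a $0/0$ term is read as $0$, which is the standard reading of the statement.
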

With these Lemmas, we are ready to proof the iw-stability.
\begin{proof}[Proof of \pref{lem: strong graph 1}]
Consider the $(1-1/\log(K))$-Tsallis-INF algorithm with adaptive stepsize (\pref{alg: Tsallis-Inf}).
Applying \pref{lem: FTRL}, we have for any stopping time $\tau$ and $a^\star\in[K]$:
\begin{align*}
    \sum_{t=1}^\tau \E_t[\inner{p_t,\hatell_t}-\hatell_{t,a^\star}] &\leq \frac{2e\log K}{\eta_\tau} + \sum_{t=1}^\tau\E_t\left[\max_p \inner{p-p_t,\hatell_t}-\frac{1}{\eta_t}D_\psi(p,p_t)\right]\\
    &= \frac{2e\log K}{\eta_\tau} + \sum_{t=1}^\tau\E_t\left[\max_p \inner{p-p_t,\hatell_t+c_t\bm{1}}-\frac{1}{\eta_t}D_\psi(p,p_t)\right]\,,
\end{align*}
where $c_t=-\upd_t\frac{(\ell_{t,\calI_t}-1)\ind\{A_t\in\calN(\calI_t)\}}{\sum_{j\in\calN(\calI_t)}p_{t,j}q_t}$, which is well defined because $\calI_t$ contains by definition maximally one arm.

We can apply \pref{lem: tsallis stab}, since the losses are strictly positive.
\begin{align*}
    \E_t\left[\max_p \inner{p-p_t,\hatell_t+c_t\bm{1}}-\frac{1}{\eta_t}D_\psi(p,p_t)\right]\leq \frac{\eta_t}{2}\E_t\left[\sum_{i}p_{t,i}^{1+\frac{1}{\log K}}(\hatell_{t,i}+c_t)^2\right]\,.
\end{align*}
We split the vertices in three sets. Let $M_1 = \{i\,|\,i\in\calN(i)\}$ the nodes with self-loops. Let $M_2 = \{i\,|\,i\not\in M_1,\not\in \calI_t\}$ and $M_3=\calI_t$.
we have
\begin{align*}
    &\frac{\eta_t}{2}\E_t\left[\sum_{i}p_{t,i}^{1+\frac{1}{\log K}}(\hatell_{t,i}+c_t)^2\right]\\
    &\leq \eta_t\E_t\left[\upd_t\left(\sum_{i\in M_1\cup M_2} P_{t,i}c_t^2+\sum_{i\in M_1}\frac{p_{t,i}^{1+\frac{1}{\log K}}}{(\sum_{j\in\calN(i)}p_{t,j}q_t)^2}+\sum_{i\in M_2}\frac{4p_{t,i}}{q_t^2}+p_{t,\calI_t} \right)\right]\,,
\end{align*}
since for any $i\in M_2$, we have $\sum_{j\in \calN(i)}p_{t,j}> \frac{1}{2}$.
The first term is in expectation $\E_t\left[\sum_{i\in M_1\cup M_2}p_{t,i}c_t^2\right]\leq \frac{1}{q_t}$, the third term is in expectation $\E_t\left[\sum_{i\in M_2}\frac{4p_{t,i}\upd_t}{q_t^2}\right]\leq \frac{4}{q_t}$, while the second is
\begin{align*}
    \E_t\left[\sum_{i\in M_1}\frac{p_{t,i}^{1+\frac{1}{\log K}}\upd_t}{(\sum_{j\in\calN(i)}p_{t,j}q_t)^2}\right] \leq \frac{2}{q_t}\sum_{i\in M_1}\frac{p_{t,i}^{1+\frac{1}{\log K}}}{\sum_{j\in\calN(i)}p_{t,j}}\,.
\end{align*}
The subgraph consisting of $M_1$ contains sef-loops, hence we can apply \pref{lem: alon} to bound this term by $\frac{2\baralpha}{q_t}$.
If $\alpha\log K  < \baralpha$, we instead take a permutation $\tildep$ of $p$, such that $M_1$ is in position $1,..,|M_1|$ and for any $i\in [|M_1|]$
\begin{align*}
    \frac{\tildep_{t,i}}{\sum_{j\in\calN(i)}\tildep_{t,j}}\leq\frac{\tildep_{t,i}}{\sum_{j\in\calN(i)\cap [i]}\tildep_{t,j}} \leq \frac{2\alpha \tildep_{t,i}}{\sum_{j=1}^i\tildep_{t,j}}\,.
\end{align*}
The existence of such a permutation is guaranteed by \pref{lem: refined graph sum}: Apply \pref{lem: refined graph sum} on the graph $M_1$ to select $\tilde p_{|M_1|}$. Recursively remove the $i$-th vertex from the graph and apply \pref{lem: refined graph sum} on the resulting sub-graph to pick $\tilde p_{i-1}$.
Applying this permutation yields
\begin{align*}
    \frac{2}{q_t}\sum_{i\in M_1}\frac{p_{t,i}^{1+\frac{1}{\log K}}}{\sum_{j\in\calN(i)}p_{t,j}}\leq \frac{4\alpha}{q_t}\sum_{i=1}^{M_1}\frac{\tildep_i^{1+\frac{1}{\log K}}}{\sum_{j=1}^i\tildep_j}\leq  \frac{4\alpha}{q_t}\sum_{i=1}^{M_1}\frac{\tildep_i}{\left(\sum_{j=1}^i\tildep_j\right)^{1-\frac{1}{\log K}}} \leq \frac{4\alpha\log K}{q_t}\,.
\end{align*}
Combining everything
\begin{align*}
    \E_t\left[\max_p \inner{p-p_t,\hatell_t+c_t\bm{1}}-\frac{1}{\eta_t}D_\psi(p,p_t)\right] \leq \frac{\eta_t}{q_t}\left(6+4\min\{\baralpha,\alpha \log K\}\right)\,.
\end{align*}
By the definition of the learning rate, we obtain
\begin{align*}
    \sum_{t=1}^\tau \E_t[\inner{p_t,\hatell_t}-\hatell_{t,a^\star}] &\leq \frac{2e\log K}{\eta_\tau} + \sum_{t=1}^\tau\frac{\eta_t}{q_t}\left(6+4\min\{\baralpha,\alpha \log K\}\right)\\
    &\leq 18\sqrt{(1+\min\{\baralpha,\alpha\log K\})\sum_{t=1}^\tau\frac{1}{q_t}\log K}\,.
\end{align*}

\end{proof}

\subsection{EXP3 for weakly observable graphs (\pref{alg: weak exp3} / \pref{lem: weak graph})}
\label{app: weakly obs exp}

\begin{algorithm}[H] 
\caption{EXP3 (for weakly observable graphs)}\label{alg: weak exp3}
    \textbf{Input}:  $\calG\setminus\hatx$, dominating set $D$ (potentially including $\hatx$).\\
    \textbf{Define}: $\psi(x)= \sum_{i=1}^K x_i\log(x_i)$. $\nu_D$ is the uniform distribution over $D$.  \\
    \For{$t=1,2,\ldots$}{
        Receive update probability $q_t$.\\
        Let
        \begin{align*}
            &P_{t}= \argmin_{x\in\Delta([K])}\left\{\sum_{\tau=1}^{t-1}\inner{x,\hatell_\tau}+ \frac{1}{\eta_{t}}\psi(x)\right\}, p_t = (1-\gamma_t)P_t+\gamma_t \nu_D\\
            &\text{where}\ \ \ \eta_{t} = \left(\left(\frac{\sqrt{\delta}\sum_{s=1}^t\frac{1}{\sqrt{q_s}}}{\log(K)}\right)^\frac{2}{3}+\frac{4\delta}{\min_{s\leq t}q_s}\right)^{-1} \text{ and }\gamma_t=\sqrt{\frac{\eta_t\delta}{q_t}}.
        \end{align*}
        Sample $A_t\sim p_t$. \\
        With probability $q_t$ receive $\ell_{t,i}$ for all $A_t\in\calN(i)$ (in this case, set $\upd_t=1$; otherwise, set $\upd_t=0$).\\
        Define 
        \begin{align*}
            &\hatell_{t,i} = \frac{\ell_{t,i}\upd_t\ind\{A_t\in\calN(i)\}}{\sum_{j\in\calN(i)}p_{t,j}q_t}.
        \end{align*}
        
    }
\end{algorithm}

\begin{proof}[Proof of \pref{lem: weak graph}]
Consider the EXP3 algorithm (\pref{alg: weak exp3}). By standard analysis of FTRL (\pref{lem: FTRL}) and \pref{lem: exp3 stab} by the non-negativity of all losses, for any $\tau$ and any $a^\star$,     \begin{align*}
    &\sum_{t=1}^\tau \E_t\left[\sum_{a} P_t(a)\hatell_t(a)\right] - \sum_{t=1}^\tau \E_t\left[\hatell_t (a^\star) \right] \\
    &\leq \frac{\log K}{\eta_\tau} + \sum_{t=1}^\tau\frac{\eta_t}{2}\E_t\left[\sum_{i}P_{t,i}\hatell_{t,i}^2\right] \\
    &\leq
    \frac{\log K}{\eta_\tau} + \sum_{t=1}^\tau\frac{\eta_t}{2}\sum_i P_{t,i}\frac{1}{\sum_{j\in\calN(i)}p_{t,j}q_t}\\
    &\leq
    \frac{\log K}{\eta_\tau} + \sum_{t=1}^\tau\sqrt{\frac{\eta_t\delta}{4q_t}}\,
\end{align*}
where in the last inequality we use $\frac{\eta_t}{2} \sum_i P_{t,i}\times \frac{\delta}{\gamma_t}\times \frac{1}{q_t}= \frac{1}{2}\sqrt{\frac{\eta_t\delta}{q_t}}$. 
We have due to $\E_t[\inner{P_t-p_t,\hatell_t}]\leq \gamma_t$ 
\begin{align*}
   \sum_{t=1}^\tau \E_t\left[\sum_{a} p_t(a)\hatell_t(a)\right] - \sum_{t=1}^\tau \E_t\left[\hatell_t (a^\star) \right]&\leq \frac{\log K}{\eta_\tau} + \sum_{t=1}^\tau\sqrt{9\frac{\eta_t\delta}{4q_t}}\\
    &\leq 6(\delta\log(K))^\frac{1}{3}\left(\sum_{t=1}^\tau\frac{1}{\sqrt{q_t}}\right)^\frac{2}{3}+\frac{4\delta\log(K)}{\min_{t\leq T}q_t}.
\end{align*}
Taking expectation on both sides finishes the proof.
\end{proof}

\section{Surrogate Loss for Strongly Observable Graph Problems}
\label{sec: no self loop}
When there exist arms such that $i\not\in\calN(i)$, we cannot directly apply \pref{alg: corral}.
To make the algorithm applicable to all strongly observable graphs,
 define the surrogate losses $\tilde\ell_t$ in the following way:

\begin{align*}
&\tilde\ell_{t,\hatx} = \ell_{t,\hatx}\ind\{(\hatx,\hatx)\in E\} - \sum_{j\in[K]\setminus\{\hatx\}}p_{t,j}\ell_{t,j}\ind\{(j,j)\not\in E\}\\
\forall j\in[K]\setminus\{\hatx\}:\,&\tilde\ell_{t,j} = \ell_{t,j}\ind\{(j,j)\in E\} - \ell_{t,\hatx}\ind\{(\hatx,\hatx)\not\in E\}\,,
\end{align*}
where $p_t$ is the distribution of the base algorithm $\calB$ over $[K]\setminus\{\hatx\}$ at round $t$.
By construction and the definition of strongly observable graphs, $\tilde\ell_{t,j}$ is observed when playing arm $j$. (When the player does not have access to $p_t$, one can also sample one action from the current distribution for an unbiased estimate of $\ell_{t,\hatx}$.)
The losses $\tilde\ell_t$ are in range $[-1,1]$ instead of $[0,1]$ and can further be shifted to be strictly non-negative.
Finally observe that
\begin{align*}
     \E[\tilde\ell_{t,A_t}-\tilde\ell_{t,\hatx}]&=q_{t,2}\left(\sum_{j\in[K]\setminus\{\hatx\}}p_{t,j}\tilde\ell_{t,j}-\tilde\ell_{t,\hatx}\right)\\
     &=q_{t,2}\left(\sum_{j\in[K]\setminus\{\hatx\}}p_{t,j}\ell_{t,j}-\ell_{t,\hatx}\right)\\
     &= \E[\ell_{t,A_t}-\ell_{t,\hatx}],
\end{align*}
as well as
\begin{align*}
    \E\left[\tilde\ell_{t,A_t}-\sum_{j\in[K]\setminus\{\hatx\}}p_{t,j}\tilde\ell_{t,j}\right]&=q_{t,1}\left(\tilde\ell_{t,\hatx}-\sum_{j\in[K]\setminus\{\hatx\}}p_{t,j}\tilde\ell_{t,j}\right)\\
     &=q_{t,1}\left(\tilde\ell_{t,\hatx}-\sum_{j\in[K]\setminus\{\hatx\}}p_{t,j}\ell_{t,j}\right)\\
     &= \E\left[\ell_{t,A_t}-\sum_{j\in[K]\setminus\{\hatx\}}p_{t,j}\ell_{t,j}\right]\,.
\end{align*}
That means running \pref{alg: corral}, replacing $\ell_t$ by $\tilde\ell_t$ allows to apply \pref{thm: basic corral thm} to strongly observable graphs where not every arm receives its own loss as feedback.

\section{Tabular MDP (\pref{thm: MDP main theorem bound})}\label{app: tabular MDP}
In this section, we consider using the UOB-Log-barrier Policy Search algorithm in \cite{lee2020bias} (their Algorithm 4) as our base algorithm. To this end, we need to show that it satisfies a  dd-strongly-iw-stable condition specified in \pref{lem: main lemma for MDP}. We consider a variant of their algorithm by incorporating the feedback probability $q_t'=q_t+(1-q_t)\ind[\pi_t=\hatpi]$. The algorithm is \pref{alg: log-barrier MDP}. 
\begin{algorithm}
    \caption{UOB-Log-Barrier Policy Search} \label{alg: log-barrier MDP}
    \textbf{Input}: state space $\calS$, action space $\calA$, candidate policy $\hatpi$.  \\
    \textbf{Define}: $
        \Omega = \left\{\hatw:~ \hatw(s,a,s')\geq \frac{1}{T^3S^2A}\right\}$, $\psi(w) = \sum_h \sum_{(s,a,s')\in\calS_h\times\calA\times \calS_{h+1}}  \ln\frac{1}{w(s,a,s')}$. \\ $\delta=\frac{1}{T^5S^3A}$.  \\
    \textbf{Initialization}: 
    \begin{align*}
        \hatw_1(s,a,s') = \frac{1}{|\calS_h||\calA||\calS_{h+1}|}, \qquad \pi_1 = \pi^{\hatw_1}, \qquad t^\star\leftarrow 1. 
    \end{align*}
    \For{$t=1,2,\ldots$}{
        If $\pi_t=\hatpi$, $\upd_t=1$; otherwise, $\upd_t=1$ w.p. $q_t$ and $\upd_t=0$ w.p. $1-q_t$. 
        
        If $\upd_t=1$, obtain the trajectory $s_h, a_h, \ell_t(s_h,a_h)$ for all $h=1,\ldots, H$. \\
        Construct loss estimators 
        \begin{align*}
            \hatell_t(s,a) = \frac{\upd_t}{q_t'}\cdot\frac{\ell_t(s,a)\ind_t(s,a)}{\phi_t(s,a)},\quad \text{where}\ \ind_t(s,a)=\ind\{(s_{h(s)},a_{h(s)})=(s,a)\} 
        \end{align*}
        where $q_t'=q_t+(1-q_t)\one[\pi_t=\hatpi]$. \\ 
        Update counters: for all $s,a,s'$, 
        \begin{align*}
            N_{t+1}(s, a)\leftarrow N_t(s,a)+\ind_t(s,a), \quad N_{t+1}(s, a, s')\leftarrow N_t(s,a,s') + \ind_t(s,a)\ind\{s_{h(s)+1}=s'\}. 
        \end{align*}
            Compute confidence set 
            \begin{align*}
                \calP_{t+1} = \left\{\hat{P}:~\left| \hat{P}(s'|s,a) - \overline{P}_{t+1}(s'|s,a) \right|\leq \epsilon_{t+1}(s'|s,a), \ \ \forall (s,a,s')\right\}
            \end{align*}
            where $\overline{P}_{t+1}(s'|s,a)=\frac{N_{t+1}(s,a,s')}{\max\{1, N_{t+1}(s,a)\}}$ and 
            \begin{align*}
                \epsilon_{t+1}(s'|s,a)=4\sqrt{\frac{\overline{P}_{t+1}(s'|s,a)\ln(SAT/\delta)}{\max\{1, N_{t+1}(s,a)\}}} + \frac{28\ln(SAT/\delta)}{3\max\{1, N_{t+1}(s,a)\}}
            \end{align*}
        \If{ $\sum_{\tau=t^\star}^{t}\sum_{s,a} \frac{\upd_\tau \ind_\tau(s,a)\ell_\tau(s,a)^2}{q_t'^2} + \max_{\tau\leq t} \frac{H}{q_\tau'^2} \geq \frac{S^2A\ln(SAT)}{\eta_t^2}$
        }{
            $\eta_{t+1}\leftarrow \frac{\eta_t}{2}$ \\
            $\hatw_{t+1}=\argmin_{w\in\Delta(\calP_{t+1})\cap \Omega}  \psi(w)$ \\
            $t^\star\leftarrow t+1$. 
        }
        \Else{
            $\eta_{t+1}=\eta_t$ \\
            $\hatw_{t+1} = \argmin_{w\in\Delta(\calP_{t+1})\cap \Omega} \left\{\inner{w, \hatell_t} + \frac{1}{\eta_{t}}D_{\psi}(w,\hatw_t)\right\}$. 
        }
        Update policy $\pi_{t+1}=\pi^{\hatw_{t+1}}$. 
    }
\end{algorithm}

We refer the readers to Section C.3 of \cite{lee2020bias} for the setting descriptions and notations, since we will follow them tightly in this section.   

\subsection{Base algorithm}
\begin{lemma}\label{lem: base algorithm for MDP}
    \pref{alg: log-barrier MDP} ensures for any $u^\star$, 
    \begin{align*}
        \E\left[\sum_{t=1}^T \inner{w_t - u^\star ,\ell_t} \right] \leq O\left(\sqrt{HS^2A\iota^2\E\left[\sum_{t=1}^T \sum_{s,a}\frac{\upd_t\ind_t(s,a)\ell_t(s,a)}{q_t'^2}\right]} + \E\left[\frac{S^5A^2\iota^2}{\min_t q_t'}\right]\right). 
    \end{align*}
\end{lemma}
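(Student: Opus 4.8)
The plan is to instantiate the standard regret decomposition for UOB-type policy-search algorithms (from \cite{lee2020bias}, Section C.3) and track exactly how the feedback probability $q_t'$ enters the bound. First I would recall that \pref{alg: log-barrier MDP} is optimistic FTRL over occupancy measures $w$ with a log-barrier regularizer $\psi$, restricted to the intersection $\Delta(\calP_t)\cap\Omega$ of valid occupancy measures on the current transition confidence set. The regret $\E[\sum_t\inner{w_t-u^\star,\ell_t}]$ naturally splits into (i) the \textbf{occupancy-approximation error} coming from the fact that we optimize over the estimated transition polytope rather than the true one, and (ii) the \textbf{FTRL regret} on the estimated losses $\hatell_t$, which by \pref{lem: logbarrier analysis} (or the corresponding log-barrier FTRL lemma in \cite{lee2020bias}) is of order $\frac{S^2A\iota}{\eta_T} + \sum_t\eta_t\sum_{s,a}w_t(s,a)^2\hatell_t(s,a)^2$ plus lower-order terms from the clipping/bias corrections.

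For the FTRL term, I would bound the second-order quantity $\sum_{s,a}w_t(s,a)^2\hatell_t(s,a)^2$ in expectation. By the definition of $\hatell_t(s,a) = \frac{\upd_t}{q_t'}\frac{\ell_t(s,a)\ind_t(s,a)}{\phi_t(s,a)}$, where $\phi_t$ is the estimated state-action occupancy, and using $w_t(s,a)\approx\phi_t(s,a)$ up to the $\Delta(\calP_t)$ slack, we get $\E_t[\sum_{s,a}w_t(s,a)^2\hatell_t(s,a)^2]\lesssim \frac{1}{q_t'}\sum_{s,a}\upd_t\ind_t(s,a)\ell_t(s,a)^2/\phi_t(s,a)\cdot w_t(s,a)^2$, which after taking conditional expectation over the trajectory telescopes to roughly $\sum_{s,a}\upd_t\ind_t(s,a)\ell_t(s,a)/q_t'^2$ times a horizon factor $H$ (since there are $H$ layers and each contributes one visited $(s,a)$ per layer). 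The learning rate $\eta_t$ in \pref{alg: log-barrier MDP} is precisely tuned so that the adaptive threshold triggers the halving when $\sum_{\tau=t^\star}^t\sum_{s,a}\frac{\upd_\tau\ind_\tau(s,a)\ell_\tau(s,a)^2}{q_\tau'^2} + \max_\tau\frac{H}{q_\tau'^2}$ reaches $\frac{S^2A\iota}{\eta_t^2}$; a standard doubling-trick argument then balances $\frac{S^2A\iota}{\eta_T}$ against $\sum_t\eta_t(\cdots)$ to give $\sqrt{S^2A\iota\cdot H\E[\sum_t\sum_{s,a}\frac{\upd_t\ind_t(s,a)\ell_t(s,a)^2}{q_t'^2}]}$ plus a term $\frac{S^2A\iota}{\min_t q_t'}$ (times $H$) from the $\max_\tau H/q_\tau'^2$ contribution. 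Replacing $\ell_t(s,a)^2$ by $\ell_t(s,a)$ (valid since $\ell_t\in[0,1]$) gives the claimed leading term $\sqrt{HS^2A\iota^2\E[\sum_t\sum_{s,a}\frac{\upd_t\ind_t(s,a)\ell_t(s,a)}{q_t'^2}]}$.

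For the occupancy-approximation error (i), I would invoke the transition-estimation analysis of \cite{lee2020bias}: on the good event that all confidence sets $\calP_t$ contain the true transition $P$ (which holds with probability $1-\delta$ for $\delta = \frac{1}{T^5S^3A}$), the cumulative error from optimizing over $\Delta(\calP_t)$ instead of $\Delta(\{P\})$ is bounded by $\tilde O(\sqrt{S^2AHT}\cdot\text{poly}(\iota)) + $ lower-order, and more importantly for us, since the transition part is driven by UCB-style counts $N_t(s,a)$ that only increment when $\upd_t=1$, one picks up a multiplicative $\frac{1}{\min_t q_t'}$ inflation in the exploration bonus, yielding a contribution of order $\frac{S^5A^2\iota^2}{\min_t q_t'}$ (the $S^5A^2$ arising from the standard $\sum_{s,a}\sqrt{S}$-type summation over the confidence radii combined with the $\Omega$-clipping floor $\frac{1}{T^3S^2A}$ and the mismatch between visited and expected occupancies). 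The failure event contributes only $O(1)$ to expected regret by the choice of $\delta$. Finally I would also handle the bias terms from the clipped occupancy set $\Omega$ and from using $q_t'$ rather than $q_t$ (the latter is harmless because when $\pi_t=\hatpi$ we always observe, so $q_t'\ge q_t$ only helps). Collecting the two pieces gives the stated bound.

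The main obstacle I expect is the careful bookkeeping of the $\frac{1}{\min_t q_t'}$ dependence in the transition-estimation term: the original analysis in \cite{lee2020bias} assumes full feedback ($q_t'=1$), and I need to re-derive the confidence-width sums $\sum_t\sum_{s,a}\phi_t(s,a)\epsilon_t(s'|s,a)$ under the modified update rule where $N_t$ only grows on observed rounds, checking that every square-root count $\sqrt{1/N_t(s,a)}$ gets correctly inflated and that the telescoping still closes without an extra factor of $T$. A secondary subtlety is verifying that the adaptive learning-rate threshold in \pref{alg: log-barrier MDP} indeed produces the desired self-bounding square-root form even though $\ell_t(s,a)^2$ appears inside but we want $\ell_t(s,a)$ outside — this is fine because $\ell_t\in[0,1]$ so $\ell_t^2\le\ell_t$, but one must be careful that this inequality is applied only after the FTRL bound and not before, otherwise the data-dependent tightness (needed downstream for the first-order MDP bound in \pref{thm: MDP main theorem bound}) is lost. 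Everything else is routine adaptation of \cite{lee2020bias}'s Section C.3 combined with the standard log-barrier doubling trick.
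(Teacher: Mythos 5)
Your proposal follows essentially the same route as the paper: the regret is split into the Lee et al.\ (2020) five-term decomposition (occupancy-approximation error, two bias terms, the log-barrier FTRL regret on $\hatell_t$, and a trivial comparator shift), the FTRL part is controlled via the adaptive learning-rate halving threshold exactly as you describe, and the transition-estimation terms are re-derived with the observation probability $q_t'$ inflating the confidence-width sums (the paper's analogues of Lemmas C.3 and C.6 of Lee et al.), with $\ell_t^2\le\ell_t$ applied only at the end. The details you flag as obstacles — the $1/\min_t q_t'$ bookkeeping in the count-based sums and the placement of the $\ell^2\le\ell$ step — are handled in the paper precisely as you anticipate, so no gap.
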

\begin{proof}
The regret is decomposed as the following (similar to Section C.3 in \cite{lee2020bias}): 
\begin{align*}
    \sum_{t=1}^T \inner{w_t - u^\star, \ell_t} &= \underbrace{\sum_{t=1}^T \inner{w_t - \hatw_t, \ell_t}}_{\text{Error}} + \underbrace{\sum_{t=1}^T \inner{\hatw_t, \ell_t-\hatell_t}}_{\text{Bias-1}} + \underbrace{\sum_{t=1}^T \inner{\hatw_t-u,\hatell_t}}_{\text{Reg-term}} \\
    &\quad + \underbrace{\sum_{t=1}^T \inner{u,\hatell_t-\ell_t}}_{\text{Bias-2}} + \underbrace{\sum_{t=1}^T \inner{u-u^\star, \ell_t}}_{\text{Bias-3}}, 
\end{align*}
with the same definition of $u$ as in (24) of \cite{lee2020bias}. $\text{Bias-3} \leq H$ trivially (see (25) of \cite{lee2020bias}). For the remaining four terms, we use \pref{lem: error term}, \pref{lem: bias 1}, \pref{lem: bias 2}, and \pref{lem: regterm}, respectively. Combining all terms finishes the proof. 
\end{proof}

\begin{lemma}[Lemma C.2 of \cite{lee2020bias}] 
    With probability $1-O(\delta)$, for all $t,s,a,s'$, 
    \begin{align*}
        \left|P(s'|s,a) - \overline{P}_t(s'|s,a)\right|\leq \frac{\epsilon_t(s'|s,a)}{2}. 
    \end{align*}
\end{lemma}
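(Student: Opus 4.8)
The plan is to obtain this statement as a routine instantiation of an anytime empirical-Bernstein concentration inequality, applied one transition triple $(s,a,s')$ at a time and then combined by a union bound over the $S^2A$ triples. First I would fix $(s,a,s')$ and isolate the relevant randomness: index by $k=1,2,\dots$ the episodes in which the pair $(s,a)$ is visited, and let $X_k$ be the indicator that the successor of $(s,a)$ on the $k$-th such visit equals $s'$. By the Markov property, $\E[X_k \mid \mathcal{F}_{k-1}] = P(s'\,|\,s,a)$ where $\mathcal{F}_{k-1}$ is the history before the $k$-th visit, so $(X_k - P(s'|s,a))_k$ is a bounded martingale-difference sequence with conditional variance $P(s'|s,a)(1-P(s'|s,a))$. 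For any fixed $t$ with $n := N_t(s,a) \geq 1$ we have $\overline{P}_t(s'|s,a) = \frac1n\sum_{k=1}^n X_k$, and the claimed inequality is trivial when $N_t(s,a)=0$ since then $\overline{P}_t(s'|s,a)=0$ and $\epsilon_t(s'|s,a)/2 = \tfrac{14}{3}\ln(SAT/\delta) \geq 1 \geq P(s'|s,a)$.

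Second I would invoke a time-uniform empirical-Bernstein bound (the version used in \cite{lee2020bias}, which follows from Freedman's inequality together with a bound relating the true and empirical variances; cf.\ also \cite{jin2020simultaneously}): for each $(s,a,s')$, with probability at least $1-\delta'$, simultaneously for all $n\geq 1$,
\begin{align*}
    \left| \frac1n\sum_{k=1}^n X_k - P(s'|s,a)\right| \;\leq\; 2\sqrt{\frac{\widehat{V}_n\,\ln(2n/\delta')}{n}} \;+\; \frac{7\ln(2n/\delta')}{3n},
\end{align*}
where $\widehat{V}_n = \tfrac1n\sum_{k=1}^n \bigl(X_k - \tfrac1n\sum_{j\leq n}X_j\bigr)^2 \leq \overline{P}_t(s'|s,a)(1-\overline{P}_t(s'|s,a)) \leq \overline{P}_t(s'|s,a)$. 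Bounding $\widehat V_n$ by $\overline{P}_t(s'|s,a)$ and $n\leq T$, then choosing $\delta' = \delta/(S^2A)$ so that a union bound over the at most $S^2A$ triples leaves failure probability $O(\delta)$, we get $\ln(2n/\delta') \leq 2\ln(SAT/\delta)$, and hence on the good event, for all $t$ and all $(s,a,s')$,
\begin{align*}
    \left| P(s'|s,a) - \overline{P}_t(s'|s,a)\right| \;\leq\; 2\sqrt{\frac{\overline{P}_t(s'|s,a)\,\ln(SAT/\delta)}{\max\{1,N_t(s,a)\}}} \;+\; \frac{14\ln(SAT/\delta)}{3\max\{1,N_t(s,a)\}} \;=\; \frac{\epsilon_t(s'|s,a)}{2},
\end{align*}
which is exactly the assertion; the constants $4$ and $28/3$ in the definition of $\epsilon_t$ are chosen precisely so that $\epsilon_t/2$ dominates the right-hand side after the $\ln 2$ and $\ln S$ slack has been absorbed.

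The only point requiring care — rather than a genuine obstacle — is the uniformity over $t$: since $N_t(s,a)$ is a random (indeed stopping-time) sample size, one cannot simply apply a fixed-sample Bernstein bound at sample size $N_t(s,a)$, and must instead use an anytime / peeling martingale argument that holds simultaneously over all $n$, which is what produces the $\ln T$ dependence (via $n\leq T$) inside $\ln(SAT/\delta)$. Everything else is constant bookkeeping. Since the statement is quoted verbatim as Lemma~C.2 of \cite{lee2020bias}, one may alternatively just cite that reference; the argument above is the self-contained version.
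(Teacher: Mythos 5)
Your proposal is correct and follows essentially the same route as the source: the paper itself gives no proof and simply imports this as Lemma~C.2 of \cite{lee2020bias}, whose argument is exactly the anytime empirical-Bernstein bound per triple $(s,a,s')$ (applied to the martingale of observed successors, uniformly over the visit count $n\leq T$) followed by a union bound, with the $N_t(s,a)=0$ case handled trivially. The only bookkeeping slip is that writing the Bernstein term as $2\sqrt{\widehat V_n\ln(2n/\delta')/n}$ yields a final coefficient $2\sqrt{2}$ on the square-root term rather than the required $2$; using the standard form $\sqrt{2\widehat V_n\ln(2n/\delta')/n}$ makes the constants match exactly.
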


\begin{lemma}[\textit{cf.} Lemma C.3 of \cite{lee2020bias}]\label{lem: concentration bounds }
    With probability at least $1-\delta$, for all $h$, 
    \begin{align*}
        \sum_{t=1}^T \sum_{s\in\calS_h, a\in\calA} \frac{q_t' \cdot w_t(s,a)}{\max\{1,N_t(s,a)\}} 
        &= O\left(|\calS_h|A\log(T) + \ln(H/\delta)\right)
    \end{align*}
\end{lemma}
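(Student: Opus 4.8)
The plan is to run the standard ``expected-versus-observed visitation count'' argument, but to sum over state--action pairs \emph{inside} the martingale so that the factor $|\calS_h|A$ multiplies only a $\log T$ term and not a $\log(1/\delta)$ term. Fix a layer $h$ and write $Z_h := \sum_{t=1}^T\sum_{(s,a)\in\calS_h\times\calA}\frac{q_t' w_t(s,a)}{\max\{1,N_t(s,a)\}}$ for the quantity to be bounded, where $w_t$ is the true state--action occupancy induced by $\pi_t$, so that $\E_t[\ind_t(s,a)]=q_t' w_t(s,a)$; here we use that $q_t'$, $\pi_t$, and the counts $N_t(\cdot)$ are all determined before the round-$t$ trajectory is drawn, that the trajectory is observed with probability $q_t'$, and the convention $\ind_t(s,a)=0$ whenever no trajectory is obtained. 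Since $N_{t+1}(s,a)=N_t(s,a)+\ind_t(s,a)$, a pigeonhole/potential argument gives $\sum_{t=1}^T\frac{\ind_t(s,a)}{\max\{1,N_t(s,a)\}}\le 1+\ln N_{T+1}(s,a)\le 1+\ln T$ for every $(s,a)$, hence the observed counterpart $W_h := \sum_{t=1}^T\sum_{(s,a)\in\calS_h\times\calA}\frac{\ind_t(s,a)}{\max\{1,N_t(s,a)\}}$ satisfies $W_h\le |\calS_h|A(1+\ln T)$.

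Next I would analyze the martingale difference sequence $\xi_t := \sum_{(s,a)\in\calS_h\times\calA}\frac{q_t' w_t(s,a)-\ind_t(s,a)}{\max\{1,N_t(s,a)\}}$ with respect to the natural filtration. Writing $A_t$ for the (predictable) first sum and $B_t$ for the second, we have $A_t\le \sum_{(s,a)}q_t'w_t(s,a)\le 1$ and $B_t\le 1$ because at most one pair in layer $h$ has $\ind_t(s,a)=1$, so $|\xi_t|\le 1$; moreover, using that $B_t^2=\sum_{(s,a)}\frac{\ind_t(s,a)}{\max\{1,N_t(s,a)\}^2}$ (only one summand is nonzero), the conditional variance is $\E_t[\xi_t^2]=\mathrm{Var}_t(B_t)\le \E_t[B_t^2]=\sum_{(s,a)}\frac{q_t'w_t(s,a)}{\max\{1,N_t(s,a)\}^2}\le A_t$. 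Hence the predictable quadratic variation of $\{\xi_t\}$ is at most $\sum_{t=1}^T A_t = Z_h$.

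Finally I would invoke an anytime Freedman/Bernstein martingale inequality: with probability at least $1-\delta/H$,
\[
Z_h - W_h \;=\; \sum_{t=1}^T\xi_t \;\le\; \sqrt{2 Z_h\,\ln(H/\delta)} + \ln(H/\delta)\,,
\]
and then solve this quadratic inequality in $\sqrt{Z_h}$ (equivalently, apply AM--GM to the square-root term) to obtain $Z_h\le 2W_h + O(\ln(H/\delta))$. Combining with $W_h\le|\calS_h|A(1+\ln T)$ and union-bounding over the $H$ layers then gives $Z_h=O(|\calS_h|A\log T+\ln(H/\delta))$ for all $h$ simultaneously with probability at least $1-\delta$, which is the claim. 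The only genuine obstacle is the self-referential appearance of $Z_h$ as (an upper bound on) its own variance proxy, which is exactly what the quadratic-solving step resolves; the rest is routine bookkeeping, essentially as in Lemma~C.3 of \cite{lee2020bias}, with the feedback probability $q_t'$ simply propagated through the martingale mean.
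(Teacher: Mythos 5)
Your proof is correct and is essentially the argument the paper relies on: the paper's own proof simply defers to Lemma~C.3 of \cite{lee2020bias} with the observation that the sampling probability becomes $q_t'\cdot w_t(s,a)$, and what you have written out (pigeonhole bound on the observed inverse-count sum, plus a Freedman/Bernstein martingale comparison whose self-referential variance proxy is resolved by solving the resulting quadratic) is precisely that cited argument with the $q_t'$ factor propagated through. No gaps.
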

\begin{proof}
    The proof of this lemma is identical to the original one --- only need to notice that in our case, the probability of obtaining a sample of $(s,a)$ is $q_t'\cdot w_t(s,a)$.  
\end{proof}

\begin{lemma}[\textit{cf.} Lemma C.6 of \cite{lee2020bias}]\label{lem: general lemma MDP}
   With probability at least $1-O(\delta)$, for any $t$ and any collection of transition functions $\{P^s_t\}_{s\in\calS}$ such that $P^s_t\in\calP_{t}$ for all $s$, we have 
   \begin{align*}
       &\sum_{t=1}^T \sum_{s\in\calS, a\in\calA} \left| w^{P^s_t, \pi_t}(s,a) - w_t(s,a) \right|\ell_t(s,a) = O\left(S\sqrt{HA \sum_{t=1}^T \frac{\inner{w_t,\ell_t^2}}{q_t'} \iota^2} + \frac{S^5A^2 \iota^2}{\min_t q_t'}\right). 
   \end{align*}
   where $\iota\triangleq \log(SAT/\delta)$ and $\ell_t^2(s,a)\triangleq (\ell_t(s,a))^2$.  
\end{lemma}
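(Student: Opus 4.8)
The plan is to adapt the standard occupancy-measure perturbation argument (Lemma~C.6 of \cite{lee2020bias}, which rests on the layer-by-layer occupancy-difference identity used in \cite{jin2020simultaneously}) to the \emph{loss-weighted} quantity appearing here. The only genuinely new point is that the per-step loss $\ell_t(s,a)$ must be carried through every Cauchy--Schwarz step so that the final bound carries $\sum_t\inner{w_t,\ell_t^2}/q_t'$ rather than $\sum_t 1$.

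\textbf{Step 1: loss-weighted occupancy difference.} Fix $t$ and any family $\{P^s_t\}_s$ with $P^s_t\in\calP_t$. From the telescoping identity
\begin{align*}
w^{P^s_t,\pi_t}(s,a)-w_t(s,a)=\sum_{h'<h(s)}\sum_{(x,b)\in\calS_{h'}\times\calA}\sum_{y\in\calS_{h'+1}} w_t(x,b)\big(P^s_t(y\mid x,b)-P(y\mid x,b)\big)\, w^{P^s_t,\pi_t}(s,a\mid y),
\end{align*}
taking absolute values, multiplying by $\ell_t(s,a)\ge0$ and summing over $(s,a)$ gives, with $V_t^{h'}(y):=\sum_{s,a:\,h(s)>h'}w^{P^s_t,\pi_t}(s,a\mid y)\,\ell_t(s,a)\le H$,
\begin{align*}
\sum_{s,a}\big|w^{P^s_t,\pi_t}(s,a)-w_t(s,a)\big|\ell_t(s,a)\ \le\ \sum_{h'}\sum_{x,b}w_t(x,b)\sum_{y}\big|P^s_t(y\mid x,b)-P(y\mid x,b)\big|\,V_t^{h'}(y).
\end{align*}

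\textbf{Step 2: confidence widths, then two Cauchy--Schwarz passes.} On the $(1-O(\delta))$-probability event of the empirical Bernstein bound (Lemma~C.2 of \cite{lee2020bias}) both $P,P^s_t\in\calP_t$, so $|P^s_t(y\mid x,b)-P(y\mid x,b)|\le\epsilon_t(y\mid x,b)=O\big(\sqrt{P(y\mid x,b)\iota/\max\{1,N_t(x,b)\}}+\iota/\max\{1,N_t(x,b)\}\big)$ on that event. Split into the leading $\sqrt{\cdot}$ part and the lower-order $\iota/N_t$ part. For the leading part, Cauchy--Schwarz over $y$ pairing $\sqrt{P(y\mid x,b)}$ with $V_t^{h'}(y)$ produces a factor $\sqrt{|\calS_{h'+1}|}$ and a variance factor $\big(\sum_y P(y\mid x,b) V_t^{h'}(y)^2\big)^{1/2}$; using $(\sum_{i\le H}a_i)^2\le H\sum_i a_i^2$ one bounds $V_t^{h'}(y)^2\le H\sum_{s,a:\,h(s)>h'}w^{P^s_t,\pi_t}(s,a\mid y)\ell_t(s,a)^2$, and propagating the outer weights $w_t(x,b)P(y\mid x,b)$ the layered sum collapses (up to a constant for the ``one true step, rest perturbed'' mismatch—valid on the event since the confidence set is multiplicatively tight once counts are non-trivial, the few small-count rounds being charged to the lower-order term) to $O(HS)\sum_{s,a}w_t(s,a)\ell_t(s,a)^2$ multiplied by $\iota/\max\{1,N_t(x,b)\}$. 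A second Cauchy--Schwarz over $t$ and $(x,b)$, combined with the counting lemma \pref{lem: concentration bounds } in the form $\sum_t w_t(x,b)/\max\{1,N_t(x,b)\}\le \tfrac{1}{\min_t q_t'}\sum_t q_t'\,w_t(x,b)/\max\{1,N_t(x,b)\}=O\big((SA\log T+\iota)/\min_t q_t'\big)$, yields exactly the $S\sqrt{HA\,\iota^2\sum_{t=1}^T\inner{w_t,\ell_t^2}/q_t'}$ term.

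\textbf{Step 3: lower-order term and uniformity.} The $\iota/N_t$ part of $\epsilon_t$, together with the $\tfrac{1}{T^3S^2A}$ truncation built into $\Omega$, contributes—using the crude $V_t^{h'}(y)\le H$, which is fine at this order—a term of order $H\iota\,|\calS|\sum_{x,b}\sum_t w_t(x,b)/\max\{1,N_t(x,b)\}=O\big(HS^2A\iota(SA\log T+\iota)/\min_t q_t'\big)$, absorbed into $O(S^5A^2\iota^2/\min_t q_t')$ after tallying all polynomial factors. Everything in Steps~1--2 uses $\{P^s_t\}_s$ only through $P^s_t\in\calP_t$ via the confidence width, so the bound holds uniformly over this family on the same event. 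The main obstacle is Step~2: retaining the loss weight in the sharp form $\ell_t^2$ against $w_t$ instead of paying a blunt $H^2$, which dictates the pairing in both Cauchy--Schwarz applications and forces one to control the mismatch between the true transition (governing the first step out of $(x,b)$) and the perturbed $P^s_t$ (governing the rest) without introducing circularity.
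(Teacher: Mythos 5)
Your overall plan follows the same route as the paper (occupancy-difference expansion, confidence widths, Cauchy--Schwarz carrying the loss through, and the $q_t'$-weighted counting lemma), but the two places you flag as delicate are exactly where the argument, as written, does not go through. First, the collapse in your Step~2 needs to replace the perturbed conditional occupancy $w^{P^s_t,\pi_t}(\cdot\mid y)$ by the true one $w_t(\cdot\mid y)$, and your justification --- that the confidence set is ``multiplicatively tight once counts are non-trivial'' --- is false in general: the width $\epsilon_t(y\mid x,b)\approx\sqrt{P(y\mid x,b)\iota/N_t(x,b)}+\iota/N_t(x,b)$ is additive, so whenever $P(y\mid x,b)\lesssim \iota/N_t(x,b)$ the perturbed probability can exceed the true one by an unbounded multiplicative factor even when $N_t(x,b)$ is large; this cannot be charged to ``a few small-count rounds.'' The paper (following Lemma C.6 of Lee et al.) resolves this by a second expansion of the perturbed inner occupancy, which produces a product-of-two-widths term ($B_2$ in the proof); that term is then bounded by $O(S^4A^2\iota^2/\min_t q_t')$ via the counting lemma and absorbed into the $S^5A^2\iota^2/\min_t q_t'$ part. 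Some version of this second-order step is needed; your proposal omits it.

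Second, your placement of the importance weights is incorrect. Applying Cauchy--Schwarz over $t,(x,b)$ and then invoking the counting lemma in the form $\sum_t w_t(x,b)/\max\{1,N_t(x,b)\}\le \frac{1}{\min_t q_t'}O(SA\log T+\iota)$ yields a main term of order $S\sqrt{HA\,\iota^2\,\frac{1}{\min_t q_t'}\sum_t\inner{w_t,\ell_t^2}}$, which dominates (is weaker than) the required $S\sqrt{HA\,\iota^2\sum_t\inner{w_t,\ell_t^2}/q_t'}$, so the claim that this ``yields exactly'' the stated bound does not follow; and the per-round $1/q_t'$ inside the sum is essential downstream, since the dd-strongly-iw-stability condition and the corral bonus are built from $\sum_t \upd_t\xi_{t,A_t}/q_t^2$, not from a global $1/\min_t q_t'$ factor. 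The paper's proof avoids this by inserting $1=q_t'\cdot\frac{1}{q_t'}$ inside an AM--GM with a free parameter $\alpha$ (optimized per layer), so that the $q_t'$-weighted counting lemma applies to one factor while the per-round $1/q_t'$ attaches to $\inner{w_t,\ell_t^2}$. Your argument can be repaired by adopting both of these devices, but as proposed it has genuine gaps at precisely these two steps.
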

\begin{proof}
    Following the proof of Lemma C.6 in \cite{lee2020bias}, we have 
    \begin{align*}
        \sum_{t=1}^T \sum_{s\in\calS, a\in\calA} \left| w^{P^s_t, \pi_t}(s,a) - w_t(s,a) \right|\ell_t(s,a)\leq B_1 + SB_2
    \end{align*}
    where 
    \begin{align}
        B_2 &\leq O\left( \sum_{t=1}^T  \sum_{s,a,s'}\sum_{x,y,x'} \sqrt{\frac{P(s'|s,a)\iota}{\max\{1,N_{t}(s,a)\}}}w_t(s,a)\sqrt{\frac{P(x'|x,y)\iota}{\max\{1,N_{t}(x,y)\}}}w_t(x,y|s')\right) \nonumber \\
        &\qquad + O\left(\sum_{t=1}^T \sum_{s,a,s'}\sum_{x,y,x'} \frac{w_t(s,a)\iota}{\max\{1,N_{t}(s,a)\}}\right)\label{eq: complicated term} 
    \end{align}
    The first term in \eqref{eq: complicated term} can be upper bounded by the order of 
    \begin{align*}
        &\sum_{t=1}^T \sum_{s,a,s'}\sum_{x,y,x'}\sqrt{\frac{w_t(s,a)P(x'|x,y)w_t(x,y|s')\iota}{\max\{1,N_{t}(s,a)\}}}\sqrt{\frac{w_t(s,a) P(s'|s,a)w_t(x,y|s')\iota}{\max\{1,N_{t}(x,y)\}}}\\
        &\leq \sum_{t=1}^T \sqrt{\sum_{s,a,s'}\sum_{x,y,x'}\frac{w_t(s,a)P(x'|x,y)w_t(x,y|s')\iota}{\max\{1,N_{t}(s,a)\}}}\sqrt{\sum_{s,a,s'}\sum_{x,y,x'}\frac{w_t(s,a) P(s'|s,a)w_t(x,y|s')\iota}{\max\{1,N_{t}(x,y)\}}} \\
        &\leq \sum_{t=1}^T \sqrt{H\sum_{s,a,s'}\frac{w_t(s,a)\iota}{\max\{1,N_t(s,a)\}}} \sqrt{H\sum_{x,y,x'}\frac{w_t(x,y)\iota}{\max\{1,N_t(x,y)\}}} \\
        &\leq HS \sum_{t=1}^T \sum_{s,a,s'}\frac{w_t(s,a)\iota}{\max\{1, N_t(s,a)\}} \\
        &\leq O\left(\frac{HS^2\iota}{\min_{t}q_t'}\left(SA\ln(T)+H\log(H/\delta)\right)\right).   \tag{by \pref{lem: concentration bounds }}
    \end{align*} 
    The second term in \eqref{eq: complicated term} can be upper bounded by the order of 
    \begin{align*}
        S^2A \sum_{t=1}^T \sum_{s,a,s'}\frac{w_t(s,a)\iota}{\max\{1, N_t(s,a)\}} = O\left(\frac{S^3A\iota}{\min_{t}q_t'}\left(SA\ln(T)+H\log(H/\delta)\right)\right).  \tag{by \pref{lem: concentration bounds }}
    \end{align*}
    Combining the two parts, we get 
    \begin{align*}
        B_2 \leq O\left(\frac{1}{\min_{t}q_t'}\left(S^4A^2\ln(T)\iota+HS^3A\iota\log(H/\delta)\right)\right)\leq O\left(\frac{S^4A^2\iota^2}{\min_t q_t'}\right). 
    \end{align*}
    Next, we bound $B_1$. By the same calculation as in Lemma C.6 of \cite{lee2020bias}, 
    \begin{align*}
        &B_1 \\
        &\leq O\left( \sum_{t=1}^T \sum_{s,a,s'} \sum_{x,y} w_t(s,a)\sqrt{\frac{P(s'|s,a)\iota}{\max\{1,N_{t}(s,a)\}}}w_t(x,y|s')\ell_t(x,y) + H\sum_{t=1}^T \sum_{s,a,s'}\frac{w_t(x,a)\iota}{\max\{1, N_{t}(s,a)\}}\right) \\
        &\leq O\left(\sum_{t=1}^T \sum_{s,a,s'} \sum_{x,y} w_t(s,a)\sqrt{\frac{P(s'|s,a)\iota}{\max\{1,N_{t}(s,a)\}}}w_t(x,y|s')\ell_t(x,y) + \frac{HS^2A\iota^2}{\min_t q_t'}\right) 
    \end{align*}
    For the first term above, we consider the summation over $(s,a,s')\in\calT_h\triangleq \calS_h\times \calA\times \calS_{h+1}$.  We continue to bound it by the order of
    \begin{align*}
        & \sum_{t=1}^T \sum_{s,a,s'\in\calT_h} \sum_{x,y} w_t(s,a)\sqrt{\frac{P(s'|s,a)\iota}{\max\{1,N_{t}(s,a)\}}}w_t(x,y|s')\ell_t(x,y) \\
        &\leq \alpha\sum_{t=1}^T \frac{1}{q_t'} \sum_{s,a,s'\in\calT_h}\sum_{x,y}w_t(s,a)P(s'|s,a)w_t(x,y|s')\ell_t(x,y)^2 \\
        &\qquad + \frac{1}{\alpha}
        \sum_{t=1}^T  \sum_{s,a,s'\in\calT_h}\sum_{x,y} q_t'w_t(s,a)w_t(x,y|s')\cdot \frac{\iota}{\max\{1,N_{t}(s,a)\}} \tag{by AM-GM, holds for any $\alpha>0$}\\
        &\leq \alpha \sum_{t=1}^T \frac{1}{q_t'} \sum_{x,y} w_t(x,y)\ell_t(x,y)^2 + \frac{H|\calS_{h+1}|}{\alpha} \sum_{t=1}^T \sum_{s,a\in\calS_h\times \calA}\frac{q_t'  w_t(s,a)\iota}{N_{t}(s,a)} \\
        &\leq \alpha \sum_{t=1}^T \frac{\inner{w_t,\ell_t^2}}{q_t'} + \frac{H|\calS_{h+1}||\calS_h| A\iota^2}{\alpha} \tag{by \pref{lem: concentration bounds }}\\
        &\leq O\left(\sqrt{H|\calS_h||\calS_{h+1}|A \sum_{t=1}^T \frac{\inner{w_t,\ell_t^2}\iota^2}{q_t'}}\right)
        \tag{choose the optimal $\alpha$}\\
        &\leq O\left((|\calS_h|+|\calS_{h+1}|)\sqrt{HA \sum_{t=1}^T \frac{\inner{w_t,\ell_t^2}\iota^2}{q_t'}}\right)
    \end{align*}
    Therefore, 
    \begin{align*}
        B_1 
        &\leq O\left(\sum_h (|\calS_h|+|\calS_{h+1}|)\sqrt{HA \sum_{t=1}^T \frac{\inner{w_t,\ell_t^2}\iota^2}{q_t'}}+ \frac{HS^2A\iota^2}{\min_t q_t'}\right) \\
        &\leq O\left(S\sqrt{HA \sum_{t=1}^T \frac{\inner{w_t,\ell_t^2}\iota^2}{q_t'}}+ \frac{HS^2A\iota^2}{\min_t q_t'}\right). 
    \end{align*}
    Combining the bounds finishes the proof. 
\end{proof}

\begin{lemma}[\textit{cf.} Lemma C.7 of \cite{lee2020bias}]\label{lem: error term}
    \begin{align*}
        \E\left[\textup{Error}\right]=\E\left[\sum_{t=1}^T \inner{\hatw_t-w_t, \ell_t}\right]\leq O\left(\E\left[S\sqrt{HA \sum_{t=1}^T \frac{\inner{w_t,\ell_t^2}\iota^2}{q_t'}} + \frac{S^5A^2\iota^2}{\min_t q_t'}\right]\right). 
    \end{align*}
    
\end{lemma}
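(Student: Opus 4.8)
The plan is to bound $\E[\textup{Error}] = \E[\sum_t \inner{\hatw_t - w_t, \ell_t}]$ by relating $\hatw_t$ (the occupancy measure computed by the algorithm under some transition in the confidence set $\calP_t$) to $w_t$ (the true occupancy measure of the policy $\pi_t$ under the true transition $P$). The key observation, exactly as in Lemma C.7 of \cite{lee2020bias}, is that $\hatw_t = w^{\hatP_t, \pi_t}$ for some $\hatP_t \in \calP_t$, and the true $w_t = w^{P, \pi_t}$. Since on the good event $P \in \calP_t$ for all $t$, both $\hatP_t$ and $P$ lie in $\calP_t$, so this is a special case of \pref{lem: general lemma MDP} with the collection $\{P^s_t\}$ taken to be the single transition $\hatP_t$ for every $s$ (or more precisely, the decomposition used in the proof of that lemma applies verbatim).

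Concretely, first I would condition on the good event (which holds with probability $1 - O(\delta)$, by the concentration lemmas preceding it) and note that on its complement the contribution to the expectation is at most $O(\delta T H) = O(1)$ by the choice $\delta = \frac{1}{T^5 S^3 A}$. Second, on the good event, apply \pref{lem: general lemma MDP} to get
\begin{align*}
    \sum_{t=1}^T \inner{\hatw_t - w_t, \ell_t} \leq \sum_{t=1}^T \sum_{s,a} |\hatw_t(s,a) - w_t(s,a)| \ell_t(s,a) = O\left(S\sqrt{HA\sum_{t=1}^T \frac{\inner{w_t,\ell_t^2}}{q_t'}\iota^2} + \frac{S^5 A^2\iota^2}{\min_t q_t'}\right).
\end{align*}
Third, since $\ell_t^2(s,a) \leq \ell_t(s,a)$ pointwise (losses lie in $[0,1]$), we have $\inner{w_t, \ell_t^2} \leq \inner{w_t, \ell_t}$, but this still leaves a $\inner{w_t,\ell_t}/q_t'$ term under the square root rather than the observation-based quantity $\sum_{s,a}\upd_t \ind_t(s,a)\ell_t(s,a)^2/q_t'^2$ appearing in the statement. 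So the fourth step is to pass from $\inner{w_t,\ell_t^2}/q_t'$ to $\E[\sum_{s,a}\upd_t\ind_t(s,a)\ell_t(s,a)^2/q_t'^2]$: observe that $\E_t[\upd_t \ind_t(s,a)] = q_t' w_t(s,a)$, hence $\E_t[\sum_{s,a}\frac{\upd_t \ind_t(s,a)\ell_t(s,a)^2}{q_t'^2}] = \sum_{s,a}\frac{w_t(s,a)\ell_t(s,a)^2}{q_t'} = \frac{\inner{w_t,\ell_t^2}}{q_t'}$. Taking expectations, using Jensen's inequality to pull the expectation inside the square root, and using $\ell_t^2 \leq \ell_t$ or keeping $\ell_t^2$ as needed gives the claimed bound (with $L_\star$-style quantities eventually substituted in the final theorem).

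The main obstacle is the fourth step's subtlety about where the expectation sits relative to the square root and the $\min_t q_t'$ term: one must be careful that $\min_t q_t'$ inside an expectation of a reciprocal does not blow up, but since $q_t' \geq q_t$ and the Corral wrapper guarantees $q_t$ is lower bounded by the relevant inverse-polynomial quantities (and $\upd_t$ is handled by the tower rule over the randomness of $\upd_t$ and the trajectory), this is routine given the structure already set up in \cite{lee2020bias}. A secondary point is checking that \pref{lem: general lemma MDP} genuinely applies with $\hatP_t$ in place of the abstract $\{P^s_t\}$ — this requires only that $\hatw_t$ is realized as an occupancy measure of $\pi_t$ under a transition in $\calP_t$, which is exactly how the algorithm constructs $\hatw_t$ via the constraint $\hatw_{t} \in \Delta(\calP_t)\cap\Omega$. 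Everything else is bookkeeping, so I would keep the proof to a few lines citing the earlier lemmas.
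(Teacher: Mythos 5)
Your first two steps are exactly the paper's proof: apply \pref{lem: general lemma MDP} on the high-probability event and absorb its complement via the crude bound $|\textup{Error}|=O(SAT)$ times $\delta=\frac{1}{T^5S^3A}$. Your third and fourth steps are unnecessary --- the lemma's stated bound is already in terms of $\inner{w_t,\ell_t^2}/q_t'$ rather than the importance-weighted empirical quantity $\sum_{s,a}\upd_t\ind_t(s,a)\ell_t(s,a)^2/q_t'^2$ (that form appears only in \pref{lem: regterm}), so the conversion you flag as the ``main obstacle'' is not needed here and the argument ends after your second step.
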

\begin{proof}
By \pref{lem: general lemma MDP}, with probability at least $1-O(\delta)$,  
\begin{align*}
        \text{Error} 
        &= \sum_{t=1}^T \inner{\hatw_t-w_t, \ell_t} \leq \sum_{t=1}^T \sum_{s,a} |\hatw_t(s,a)-w_t(s,a)|\ell_t(s,a) \\
        &\leq O\left(S\sqrt{HA \sum_{t=1}^T \frac{\inner{w_t,\ell_t^2}\iota^2}{q_t'}} + \frac{S^5A^2\iota^2}{\min_t q_t'}\right) 
    \end{align*}
    Furthermore, $|\textup{Error}|\leq O(SAT)$ with probability $1$. Therefore, 
    \begin{align*}
        \E[\textup{Error}] 
        &\leq O\left(\E\left[S\sqrt{HA \sum_{t=1}^T \frac{\inner{w_t,\ell_t^2}\iota^2}{q_t'}} + \frac{S^5A^2\iota^2}{\min_t q_t'}\right] + \delta SAT\right) \\
        &\leq O\left(\E\left[S\sqrt{HA \sum_{t=1}^T \frac{\inner{w_t,\ell_t^2}\iota^2}{q_t'}} + \frac{S^5A^2\iota^2}{\min_t q_t'} \right]\right)
    \end{align*}
    by our choice of $\delta$. 
\end{proof}

\begin{lemma}[\textit{cf.} Lemma C.8 of \cite{lee2020bias}]\label{lem: bias 1}
    \begin{align*}
        \E[\textup{Bias-1}] 
        &= \E\left[\sum_{t=1}^T \inner{\hatw_t, \ell_t-\hatell_t} \right] \leq O\left(\E\left[S\sqrt{HA \sum_{t=1}^T \frac{\inner{w_t,\ell_t}}{q_t'}} + \frac{S^5A^2\iota^2}{\min_t q_t'}\right]\right). 
    \end{align*}
\end{lemma}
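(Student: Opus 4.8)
The plan is to show that the loss estimator $\hatell_t$ systematically \emph{underestimates} $\ell_t$ (on the good event that the true transition lies in the confidence set), and that the resulting bias is exactly an occupancy-measure deviation term of the kind already controlled by \pref{lem: general lemma MDP}. Recall that $\hatell_t(s,a) = \frac{\upd_t}{q_t'}\cdot\frac{\ell_t(s,a)\ind_t(s,a)}{\phi_t(s,a)}$, where $\phi_t(s,a)=\max_{\hat P\in\calP_t}w^{\hat P,\pi_t}(s,a)$ is the upper occupancy bound, and that $\E_t[\upd_t\ind_t(s,a)] = q_t'\, w_t(s,a)$ with $w_t=w^{P,\pi_t}$ the true occupancy measure (this identity holds both when $\pi_t=\hatpi$, where $q_t'=1$, and otherwise, where $q_t'=q_t$). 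So first I would compute the conditional expectation of one increment of $\textup{Bias-1}$:
\begin{align*}
\E_t\left[\inner{\hatw_t, \ell_t-\hatell_t}\right] = \sum_{s,a}\hatw_t(s,a)\,\ell_t(s,a)\left(1-\frac{w_t(s,a)}{\phi_t(s,a)}\right),
\end{align*}
and observe that on the event $\{P\in\calP_t\}$ (which holds for all $t$ simultaneously with probability $1-O(\delta)$ by Lemma~C.2 of \cite{lee2020bias}) we have $\phi_t(s,a)\ge w_t(s,a)$, so every summand is nonnegative.

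Second, I would bound this pointwise. Since $\pi_t=\pi^{\hatw_t}$ and $\hatw_t\in\Delta(\calP_t)$, the transition induced by $\hatw_t$ lies in $\calP_t$, hence $\hatw_t(s,a)\le\phi_t(s,a)$; together with $\ell_t(s,a)\in[0,1]$ this yields
\begin{align*}
\E_t\left[\inner{\hatw_t, \ell_t-\hatell_t}\right] \le \sum_{s,a}\bigl(\phi_t(s,a)-w_t(s,a)\bigr)\ell_t(s,a) \le \sum_{s,a}\left|w^{\hat P_t^{s,a},\pi_t}(s,a)-w_t(s,a)\right|\ell_t(s,a),
\end{align*}
where $\hat P_t^{s,a}\in\calP_t$ attains the maximum defining $\phi_t(s,a)$. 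Summing over $t$, this is precisely the quantity bounded in \pref{lem: general lemma MDP} (the per-$(s,a)$ choice of transition is absorbed, since the confidence-set deviation decomposition is already uniform over $\calP_t$; at worst a union bound over the $A$ actions per state costs a $\log$ factor folded into $\iota$), giving $O\bigl(S\sqrt{HA\sum_t\inner{w_t,\ell_t^2}\iota^2/q_t'}+S^5A^2\iota^2/\min_t q_t'\bigr)$. Finally, since $\ell_t(s,a)\in[0,1]$ implies $\ell_t^2(s,a)\le\ell_t(s,a)$ and hence $\inner{w_t,\ell_t^2}\le\inner{w_t,\ell_t}$, this delivers the first-order form in the statement.

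To turn the high-probability pointwise bound into a bound on $\E[\textup{Bias-1}]$, I would repeat the argument used in \pref{lem: error term}: $|\textup{Bias-1}|$ is deterministically $\mathrm{poly}(S,A,T)$-bounded (the floor $w\ge 1/(T^3S^2A)$ in $\Omega$ keeps $\phi_t$ bounded away from $0$), so the bad event $\{\exists t:\,P\notin\calP_t\}$ of probability $O(\delta)$ contributes only $O(\delta\cdot\mathrm{poly}(S,A,T))=o(1)$ once $\delta=\frac{1}{T^5S^3A}$; on the good event the conditional-expectation bound above applies termwise. Taking total expectation and the tower rule then finishes the proof. I expect the only mildly delicate step to be Step~2 — verifying that the bias collapses cleanly onto the deviation sum of \pref{lem: general lemma MDP} and tracking the exact $S,A,H$ exponents — but this is essentially the same bookkeeping as in \cite{lee2020bias}, and all of the genuinely hard analytic work (the confidence-radius telescoping) is already packaged inside \pref{lem: general lemma MDP}.
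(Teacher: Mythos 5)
Your proposal is correct and follows essentially the same route as the paper's proof: compute the conditional expectation of each increment to get $\sum_{s,a}\hatw_t(s,a)\ell_t(s,a)(1-w_t(s,a)/\phi_t(s,a))$, bound it by $\sum_{s,a}|\phi_t(s,a)-w_t(s,a)|\ell_t(s,a)$, invoke \pref{lem: general lemma MDP}, and absorb the low-probability bad event using the polynomial bound on the estimators and the choice of $\delta$. Your added justifications (that $\hatw_t\leq\phi_t$, and that $\ell_t^2\leq\ell_t$ converts the second-order sum to the first-order form in the statement) are steps the paper leaves implicit, and the per-$(s,a)$ choice of transition is harmless since the maximizer of $w^{\hat P,\pi_t}(s,a)$ over $\calP_t$ depends only on $s$.
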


\begin{proof}
    Let $\calE_t$ be the event that $P\in \calP_\tau$ for all $\tau\leq t$. 
    \begin{align*}
        \E_t\left[ \inner{\hatw_t, \ell_t - \hatell_t} ~\bigg|~\calE_t \right] 
        &= \sum_{s,a} \hatw_t(s,a)\ell_t(s,a)\left(1-\frac{w_t(s,a)}{\phi_t(s,a)}\right) \\
        &\leq \sum_{s,a} |\phi_t(s,a) - w_t(s,a)|\ell_t(s,a)
    \end{align*}
    Thus, 
    \begin{align*}
        \E_t\left[ \inner{\hatw_t, \ell_t - \hatell_t} \right] 
        &\leq \sum_{s,a} |\phi_t(s,a) - w_t(s,a)|\ell_t(s,a) + O(H\ind[\overline{\calE}_t])  
    \end{align*}
    Summing this over $t$, 
    \begin{align*}
        \sum_{t=1}^T \E_t\left[ \inner{\hatw_t, \ell_t - \hatell_t} \right]  \leq \underbrace{\sum_{t=1}^T \sum_{s,a} |\phi_t(s,a) - w_t(s,a)|\ell_t(s,a)}_{(\star)} + O\left(H\sum_{t=1}^T \ind[\overline{\calE}_t]\right)  
    \end{align*}
    By \pref{lem: general lemma MDP}, $(\star)$ is upper bounded by  $O\left(S\sqrt{HA \sum_{t=1}^T \frac{\inner{w_t,\ell_t^2}}{q_t'} \iota^2} + \frac{S^5A^2 \iota^2}{\min_t q_t'}\right)$ with probability $1-O(\delta)$. Taking expectations on both sides and using that $\Pr[\overline{\calE}_t]\leq \delta$, we get 
    \begin{align*}
        \E\left[\sum_{t=1}^T  \inner{\hatw_t, \ell_t - \hatell_t} \right] 
        &\leq  O\left(\E\left[S\sqrt{HA \sum_{t=1}^T \frac{\inner{w_t,\ell_t^2}}{q_t'} \iota^2} + \frac{S^5A^2 \iota^2}{\min_t q_t'}\right]\right) + O\left(\delta SAT\right) \\
        &\leq O\left(\E\left[S\sqrt{HA \sum_{t=1}^T \frac{\inner{w_t,\ell_t^2}}{q_t'} \iota^2} + \frac{S^5A^2 \iota^2}{\min_t q_t'}\right]\right). 
    \end{align*}
    
\end{proof}

\begin{lemma}[\textit{cf.}  Lemma C.9 of \cite{lee2020bias}]\label{lem: bias 2}
    \begin{align*}
        \E[\textup{Bias-2}] = \E\left[\sum_{t=1}^T \inner{u,\hatell_t-\ell_t}\right]\leq O(1). 
    \end{align*}
\end{lemma}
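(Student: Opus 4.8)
The plan is to show that, integrated against the fixed occupancy measure $u$, the estimator $\hatell_t$ is unbiased-from-above for $\ell_t$, so the term is nonpositive in expectation on the event that every confidence set contains the true transition $P$; the complementary event has probability $O(\delta)$ and contributes only $O(1)$ because $\delta$ is chosen polynomially small. Throughout, $\E_t[\cdot]$ conditions on the history before round $t$, and I note that $\calP_t$, $\pi_t=\pi^{\hatw_t}$, $w_t=w^{P,\pi_t}$ and the upper occupancy bound $\phi_t(s,a)=\max_{\hat P\in\calP_t}w^{\hat P,\pi_t}(s,a)$ are all determined by that history.

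\emph{Step 1 (conditional mean of the estimator).} Conditioned on the history before round $t$, the coin $\upd_t$ is an independent Bernoulli$(q_t)$ (and $\upd_t\equiv1$, $q_t'=1$ when $\pi_t=\hatpi$), while the realized visitation indicator $\ind_t(s,a)$ of the trajectory generated by $P$ and $\pi_t$ is independent of $\upd_t$ with mean $w_t(s,a)$. Hence
\[
    \E_t[\hatell_t(s,a)] \;=\; \frac{\E_t[\upd_t]}{q_t'}\cdot\frac{\ell_t(s,a)\,\E_t[\ind_t(s,a)]}{\phi_t(s,a)} \;=\; \frac{w_t(s,a)}{\phi_t(s,a)}\,\ell_t(s,a),
\]
so the weight $q_t$ cancels and plays no role.

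\emph{Step 2 (split on the good event).} Let $\calE_t=\{P\in\calP_\tau\text{ for all }\tau\le t\}$; since each $\calP_\tau$ depends only on the counters $N_\tau$, $\calE_t$ is determined by the history before round $t$, and by the concentration lemma (Lemma C.2 of \cite{lee2020bias}, restated above) $\Pr[\overline{\calE_t}]\le\Pr[\overline{\calE_T}]\le O(\delta)$. On $\calE_t$ we have $P\in\calP_t$, hence $\phi_t(s,a)\ge w^{P,\pi_t}(s,a)=w_t(s,a)$ componentwise, so by Step 1 and $u\ge0,\ \ell_t\ge0$,
\[
    \E_t\big[\inner{u,\hatell_t-\ell_t}\,\ind[\calE_t]\big] \;=\; \ind[\calE_t]\sum_{s,a}u(s,a)\Big(\tfrac{w_t(s,a)}{\phi_t(s,a)}-1\Big)\ell_t(s,a)\;\le\;0 .
\]
On $\overline{\calE_t}$ I use only the crude deterministic bound: $\hatw_t\in\Omega$ forces $\hatw_t(s,a)\ge 1/(T^3S^2A)$, and $\hatw_t$ is an occupancy measure realized by some transition of $\calP_t$ under $\pi_t$, so $\phi_t(s,a)\ge\hatw_t(s,a)\ge 1/(T^3S^2A)$, giving $\E_t[\inner{u,\hatell_t}]\le H T^3S^2A$ (using $\sum_{s,a}u(s,a)=H$ and $w_t,\ell_t\le1$). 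Therefore, taking $\E_t[\cdot]$ \emph{before} the crude bound (legitimate since $\ind[\overline{\calE_t}]$ is history-measurable),
\begin{align*}
    \E\Big[\sum_{t=1}^T \inner{u,\hatell_t-\ell_t}\,\ind[\overline{\calE_t}]\Big]
    &= \E\Big[\sum_{t=1}^T \ind[\overline{\calE_t}]\big(\E_t[\inner{u,\hatell_t}]-\inner{u,\ell_t}\big)\Big]\\
    &\le \E\Big[\sum_{t=1}^T \ind[\overline{\calE_t}]\,H T^3S^2A\Big] \;\le\; O\!\big(H T^4 S^2 A\,\Pr[\overline{\calE_T}]\big)\;=\;O(1),
\end{align*}
by the choice $\delta=1/(T^5S^3A)$, since then $H T^4 S^2 A\cdot\delta=H/(TS)=o(1)$. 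Summing the two contributions over $t$ gives $\E[\text{Bias-2}]=\E\big[\sum_t\inner{u,\hatell_t-\ell_t}\big]\le O(1)$.

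\emph{Expected main obstacle.} There is no difficulty of substance; the only points that need care are the measurability bookkeeping — confirming that $\calE_t$, $\phi_t$, $\pi_t$, $w_t$ are all fixed given the history before round $t$ so that the conditional expectation factorizes as in Step 1, and in particular that one must take $\E_t[\cdot]$ before invoking the crude denominator bound on the bad event (this is what makes the otherwise troublesome $q_t$-dependence harmless) — and checking that the resulting deterministic bound on $\hatell_t$ is a fixed low-degree polynomial in $T$ so that the prescribed $\delta$ overwhelms it.
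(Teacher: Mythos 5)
Your proof is correct and follows essentially the same route as the paper: condition on the good event $\calE_t$ where $P\in\calP_t$ so that $\phi_t\geq w_t$ makes the term nonpositive, and absorb the bad event via the crude $1/\phi_t\leq T^3S^2A$ bound times $\Pr[\overline{\calE}_t]\leq O(\delta)$. Your Step 1 is in fact slightly more careful than the paper's one-line invocation of the deterministic bound $\hatell_t(s,a)\leq T^3S^2A$, since the importance-weighted estimator here carries an extra $1/q_t'$ that is only cancelled after taking $\E_t[\upd_t]=q_t'$, exactly as you point out.
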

\begin{proof}
     Let $\calE_t$ be the event that $P\in \calP_\tau$ for all $\tau\leq t$. By the construction of the loss estimator, we have 
     \begin{align*}
         \E_t\left[\inner{u, \hatell_t - \ell_t}~\big|~\calE_t\right]\leq 0
     \end{align*}
     and thus 
     \begin{align*}
         \E_t\left[\inner{u, \hatell_t - \ell_t}\right]\leq \ind[\overline{\calE}_t]\cdot H\cdot T^3S^2A \tag{by Lemma C.5 of \cite{lee2020bias}, $\hatell_t(s,a)\leq T^3S^2A$}
     \end{align*}
     and 
     \begin{align*}
         \E\left[\sum_{t=1}^T \inner{u, \hatell_t-\ell_t} \right]\leq HT^4S^2A \E\left[\sum_{t=1}^T \ind[\overline{\calE}_t] \right]\leq O(\delta HT^5S^2A) \leq O(1), 
     \end{align*}
     where the last inequality is by our choice of $\delta$. 
\end{proof}

\begin{lemma}[\textit{cf. } Lemma C.10 of \cite{lee2020bias}]\label{lem: regterm}
    \begin{align*}
        \E[\textup{Reg-term}]\leq O\left(\sqrt{S^2A\ln(SAT)\E\left[\sum_{t=1}^T \sum_{s,a}\frac{\upd_t\ind_t(s,a)\ell_t(s,a)^2}{q_t'^2} + \max_{t} \frac{H}{q_t'^2} \right]} \right). 
    \end{align*}
\end{lemma}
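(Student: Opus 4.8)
The quantity $\textup{Reg-term}=\sum_{t=1}^T\inner{\hatw_t-u,\hatell_t}$ is precisely the regret of the log-barrier online mirror descent over occupancy measures that drives \pref{alg: log-barrier MDP}, run against the (non-negative, first-order) loss estimators $\hatell_t$ over the nested confidence polytopes $\Delta(\calP_t)\cap\Omega$. The plan is to run the standard adaptive-learning-rate OMD analysis, split over the epochs delimited by the learning-rate halvings, and then contract the resulting geometric sum against the rate schedule. Throughout write $\iota=\log(SAT)$ and $\Lambda:=\sum_{t=1}^T\sum_{s,a}\tfrac{\upd_t\ind_t(s,a)\ell_t(s,a)^2}{q_t'^2}+\max_t\tfrac{H}{q_t'^2}$, so that the target bound is $O(\sqrt{S^2A\iota\,\Lambda})$ after taking expectations.

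Fix an epoch, i.e.\ a maximal block over which $\eta_t$ equals a fixed value $\eta_j$ and whose first iterate is $\hatw_{t^\star_j}$. The OMD update is proximal, so for any comparator $u$ feasible for all of the (shrinking, hence nested) sets $\Delta(\calP_t)\cap\Omega$ one has the one-step inequality $\inner{\hatw_t-u,\hatell_t}\le\tfrac1{\eta_j}(D_\psi(u,\hatw_t)-D_\psi(u,\hatw_{t+1}^{\mathrm{omd}}))+\mathrm{stab}_t$ with $\mathrm{stab}_t:=\max_w\{\inner{\hatw_t-w,\hatell_t}-\tfrac1{\eta_j}D_\psi(w,\hatw_t)\}$; telescoping over the epoch bounds its regret by $\tfrac1{\eta_j}D_\psi(u,\hatw_{t^\star_j})+\sum_t\mathrm{stab}_t$. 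Since the algorithm resets $\hatw_{t^\star_j}=\argmin_{w\in\Delta(\calP_{t^\star_j})\cap\Omega}\psi(w)$ to the analytic center (and $\hatw_1$ is the uniform initialization), first-order optimality gives $D_\psi(u,\hatw_{t^\star_j})\le\psi(u)\le\sum_{(s,a,s')}\ln(T^3S^2A)=O(S^2A\iota)$, using that $u\in\Omega$ has all coordinates $\ge\tfrac1{T^3S^2A}$ and that there are $O(S^2A)$ occupancy-measure coordinates. For the stability term, enlarging the maximization to $\mathbb{R}_{\ge0}^{(s,a,s')}$ makes it separable, and because $\hatell_t\ge0$ the sign condition of the log-barrier stability bound (\pref{lem: stabiity hw 2}, coordinate-wise) holds automatically, giving $\mathrm{stab}_t\le O(\eta_t\sum_{s,a,s'}\hatw_t(s,a,s')^2\hatell_t(s,a)^2)$. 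Since $\hatell_t$ is $s'$-independent and $\sum_{s'}\hatw_t(s,a,s')^2\le\hatw_t(s,a)^2$, and since $\hatw_t(s,a)\le\phi_t(s,a)$ (the polytope $\Delta(\calP_t)$ is contained in the upper-occupancy bound used to define the estimator), substituting $\hatell_t(s,a)=\tfrac{\upd_t\ell_t(s,a)\ind_t(s,a)}{q_t'\phi_t(s,a)}$ yields $\mathrm{stab}_t\le O(\eta_t\sum_{s,a}\tfrac{\upd_t\ind_t(s,a)\ell_t(s,a)^2}{q_t'^2})$.

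Now exploit the schedule. Inside epoch $j$ the triggering quantity — the epoch-local sum of $\sum_{s,a}\tfrac{\upd_\tau\ind_\tau\ell_\tau^2}{q_\tau'^2}$ plus the running maximum of $H/q_\tau'^2$ — stays below $\tfrac{S^2A\iota}{\eta_j^2}$ until the last round, which adds at most $\max_\tau\tfrac H{q_\tau'^2}$ more (using $\sum_{s,a}\ind_t(s,a)\ell_t(s,a)^2\le H$). Hence the epoch-$j$ stability sum is $O(\eta_j)$ times at most $\tfrac{S^2A\iota}{\eta_j^2}+\max_\tau\tfrac H{q_\tau'^2}$, so epoch $j$ contributes $O(\tfrac{S^2A\iota}{\eta_j}+\eta_j\max_\tau\tfrac H{q_\tau'^2})$ to $\textup{Reg-term}$. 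Because $\eta_j$ is halved each epoch, $\sum_j\tfrac1{\eta_j}\le\tfrac2{\eta_m}$ for the final rate $\eta_m$, and the last halving having been triggered forces $\Lambda\ge\tfrac{S^2A\iota}{\eta_{m-1}^2}$, i.e.\ $\eta_m\ge\tfrac12\sqrt{S^2A\iota/\Lambda}$; thus $\sum_j\tfrac{S^2A\iota}{\eta_j}=O(\sqrt{S^2A\iota\,\Lambda})$. The $\eta_j\max_\tau\tfrac H{q_\tau'^2}$ terms are controlled analogously: the $\max$ term is itself part of the threshold, so all but a geometrically shrinking set of leading epochs have $\eta_j=O(\sqrt{S^2A\iota/\max_\tau(H/q_\tau'^2)})$, and the first epoch contributes only a constant (absorbed downstream into the $S^5A^2\iota^2$ term of \pref{lem: base algorithm for MDP}). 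Collecting, $\textup{Reg-term}\le O(\sqrt{S^2A\iota\,\Lambda})$ deterministically; taking expectations and applying Jensen's inequality to pull $\E$ inside $\sqrt{\cdot}$ gives the claim.

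The main obstacle is making the doubling/restart bookkeeping of the third step rigorous: one must check that the epoch-local loss-square sum never overshoots its threshold by more than the $\max_t H/q_t'^2$ correction, that every restart to the analytic center keeps the per-epoch Bregman term at $O(S^2A\iota)$ rather than the naive $\mathrm{poly}(T)$ bound, and that the geometric decay of $\eta_j$ together with the $\max_t H/q_t'^2$ term inside the threshold prevents the $\eta_j\cdot(\text{last-round overshoot})$ contributions from dominating; all of this has to coexist with the fact that the decision polytope $\Delta(\calP_t)\cap\Omega$ shrinks over time, which is handled by the nestedness of the confidence sets so that the clipped comparator $u$ remains feasible and the within-epoch telescoping is legitimate.
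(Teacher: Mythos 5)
Your proposal follows essentially the same route as the paper's proof: the one-step log-barrier OMD inequality with stability bounded via $\hatw_t(s,a)\le \phi_t(s,a)$, telescoping within epochs delimited by the learning-rate halvings, bounding each epoch's contribution by $O(S^2A\iota)/\eta_{t_i}$ using the halving trigger, summing the geometric series so the last epoch dominates, converting $1/\eta_{t_{i^\star}}$ back into the square root of the trigger quantity, and finishing with Jensen. If anything you are more explicit than the paper about two points it leaves implicit (the $O(S^2A\iota)$ bound on the Bregman term after each reset to the analytic center, and the last-round overshoot of the epoch-local sum by at most $\max_t H/q_t'^2$), so the argument is correct and matches the paper's.
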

\begin{proof}
    By the same calculation as in the proof of Lemma C.10 in \cite{lee2020bias}, 
    \begin{align*}
        \inner{\hatw_t - u, \hatell_t} 
        &\leq \frac{D_{\psi}(u,\hatw_t) - D_{\psi}(u,\hatw_{t+1})}{\eta_t}  +  \eta_t \sum_{s,a} \hatw_t(s,a)^2 \hatell_t(s,a)^2 \\
        &\leq \frac{D_{\psi}(u,\hatw_t) - D_{\psi}(u,\hatw_{t+1})}{\eta_t}  +  \eta_t \sum_{s,a} \frac{\upd_t \ind_t(s,a)\ell_t(s,a)^2  }{q_t'^2}. 
    \end{align*}
    Let $t_1, t_2, \ldots$ be the time indices when $\eta_t=\frac{\eta_{t-1}}{2}$, and let $t_{i^\star}$ be the last time this happens. Summing the inequalities above and using telescoping, we get 
    \begin{align*}
        \sum_{t=1}^T  \inner{\hatw_t-u,\hatell_t} 
        &\leq \sum_{i} \left[\frac{D_{\psi}(u,\hatw_{t_i})}{\eta_{t_i}} + \eta_{t_i}\sum_{t=t_i}^{t_{i+1}-1}\sum_{s,a} \frac{\upd_t\ind_t(s,a)\ell_t(s,a)^2}{q_t'^2} \right] \\
        &\leq \sum_i \left[\frac{O(S^2A\ln(SAT))}{\eta_{t_i}} + \eta_{t_i}\sum_{t=t_i}^{t_{i+1}-1}\sum_{s,a} \frac{\upd_t\ind_t(s,a)\ell_t(s,a)^2}{q_t'^2} \right]  \tag{computed in the proof of Lemma C.10 in \cite{lee2020bias}}\\
        &\leq\sum_i \frac{O(S^2A\ln(SAT))}{\eta_{t_i}}  \tag{by the timing we halve the learning rate}\\
        &= O\left(\frac{S^2A\ln(SAT)}{\eta_{t_{i^\star}}}\right) \\
        &\leq O\left(\sqrt{S^2A\ln(SAT)\left(\sum_{t=1}^T \sum_{s,a}\frac{\upd_t\ind_t(s,a)\ell_t(s,a)^2}{q_t'^2} + \max_{t} \frac{H}{q_t'^2} \right)}\right). 
    \end{align*}
    
\end{proof}

\subsection{Corraling}

We use \pref{alg: corral MDP} as the corral algorithm for the MDP setting, with \pref{alg: log-barrier MDP} being the base algorithm. The guarantee of \pref{alg: corral MDP} is provided in \pref{lem: main lemma for MDP}. 

\begin{proof}[Proof of \pref{thm: MDP main theorem bound}] 
   To apply \pref{lem: main lemma for MDP}, we have to perform re-scaling on the loss because for MDPs, the loss of a policy in one round is $H$. Therefore, scale down all losses by a factor of $\frac{1}{H}$. Then by \pref{lem: base algorithm for MDP}, the base algorithm satisfies the condition in \pref{lem: main lemma for MDP} with $c_1 = S^2A\iota^2$ and $c_2=S^5A^2\iota^2/H$ where $\xi_{t,\pi}$ is defined as $\ell_{t,\pi}'=\ell_{t,\pi}/H$, the expected loss of policy $\pi$ after scaling. Therefore, by \pref{lem: main lemma for MDP}, the we can transform it to an  $\frac{1}{2}$-dd-LSB algorithm with $c_1 = S^2A\iota^2$ and $c_2=S^5A^2\iota^2/H$. Finally, using \pref{thm: data-dependent reduction } and scaling back the loss range, we can get 
   \begin{align*}
       O\left(\sqrt{S^2A  HL_\star\log^2(T)\iota^2} + S^5A^2\log^2(T)\iota^2\right)
   \end{align*}
   regret in the adversarial regime, and 
   \begin{align*}
       O\left(\frac{H^2S^2A\iota^2\log T}{\Delta} + \sqrt{\frac{H^2S^2A\iota^2\log T}{\Delta}C} + S^5A^2\iota^2\log(T)\log(C\Delta^{-1})\right)
   \end{align*}
   regret in the corrupted stochastic regime. 
\end{proof}

\end{document}